\title{Learning GMMs with Nearly Optimal Robustness Guarantees}
\author{Allen Liu \thanks{Email: \texttt{cliu568@mit.edu}. This work was supported in part by an NSF Graduate Research Fellowship, a Fannie and John Hertz Foundation Fellowship and Ankur Moitra's NSF CAREER Award CCF-1453261 and NSF Large CCF1565235.}\and Ankur Moitra \thanks{Email: \texttt{moitra@mit.edu}. This work was
supported in part by a Microsoft Trustworthy AI Grant, NSF CAREER Award CCF-1453261, NSF Large CCF1565235, a David and Lucile Packard Fellowship and an ONR Young Investigator
Award.}}
\date{\today}
\begin{document}


\maketitle

\begin{abstract}
    In this work we solve the problem of robustly learning a high-dimensional Gaussian mixture model with $k$ components from $\epsilon$-corrupted samples up to accuracy $\widetilde{O}(\epsilon)$ in total variation distance for any constant $k$ and with mild assumptions on the mixture. This robustness guarantee is optimal up to polylogarithmic factors. The main challenge is that most earlier works rely on learning individual components in the mixture, but this is impossible in our setting, at least for the types of strong robustness guarantees we are aiming for. Instead we introduce a new framework which we call {\em strong observability} that gives us a route to circumvent this obstacle.

\end{abstract}

\newpage

\section{Introduction}

\subsection{Background}



Gaussian mixture models have a long and storied history. They were first introduced in the pioneering work of Karl Pearson \cite{Pea94} in 1894 and have found wide-ranging applications ever since, as a natural model for data believed to be coming from two or more heterogeneous sources. Early works focused on the statistical complexity \cite{Tei61}, namely bounding the number of samples needed to estimate the Gaussian mixture model to within some desired accuracy. More recently, these problems have been revisited with an emphasis on giving computationally efficient algorithms that work in high dimensions and with minimal assumptions \cite{Das99,kalai2010efficiently, moitra2010settling, BS10,ge2015learning}. 

There are different types of learning goals we could ask for:
\begin{enumerate}
    \item[(1)] In {\em parameter learning}, we want to estimate the mixture on a component-by-component basis. We ask that there is a matching between the components in our hypothesis and those of the true mixture so that across the matching we are close in total variation distance and get the mixing weights approximately correct. Alternatively we could ask to be close in an appropriate parameter distance instead. 
    
    \item[(2)]  In {\em proper density estimation}, we relax the goal of estimating the individual components. Rather, we want to output a hypothesis from the correct family (i.e. a Gaussian mixture model) and we require that it is statistically close as a distribution to the true mixture. 
    
    \item[(3)] Finally in {\em improper density estimation} the setup is the same as above except that we allow ourselves to output any hypothesis, even if it is not from the family we are trying to learn. 
    
\end{enumerate} 
\noindent The distinctions between these notions of learning will play a key role in this work. 

Recently, researchers have begun to revisit many of the key problems in high-dimensional learning from the perspective of robust statistics \cite{diakonikolas2019robust, diakonikolas2017being,lai2016agnostic, charikar2017learning, balakrishnan2017computationally, klivans2018efficient, diakonikolas2019sever, hopkins2018mixture, kothari2018robust, li2018principled, steinhardt2018robust, diakonikolas2019recent, bakshi2020robust, chen2020online}. In particular, we allow an adversary to arbitrarily corrupt an $\epsilon$-fraction of the samples. In this setting, it is no longer possible to learn the original distribution to any desired accuracy. In fact the algorithmic problems associated with working in high-dimensions become even more acute in the sense that many algorithms that work in the non-robust setting turn out to only be able to tolerate a fraction of corruptions that decays inverse polynomially with the dimension. 

For the most part, the emphasis in algorithmic robust statistics has been on getting some dimension-independent robustness guarantee. And only for simpler problems, like estimating a high-dimensional Gaussian \cite{diakonikolas2019robust, diakonikolas2018robustly} and linear regression \cite{bakshi2020robust} are nearly optimally robust algorithms known. In this work, we will take aim at the problem of giving algorithms with nearly optimal robustness guarantees for the challenging task of learning Gaussian mixture models. 
Most relevant to us are the recent works of  Liu and Moitra \cite{liu2020settling} and Bakshi et al. \cite{bakshi2020robustly} who gave the first robust algorithms for learning Gaussian mixture models that achieve dimension-independent robustness guarantees. Let $k$ be the number of components in the mixture.  These works achieve error rates of $\eps^{\Omega_k(1)}$. However in terms of the quantitative dependence on $\epsilon$, these works are far from optimal, and here we will ask for much more: 




\begin{quote}
{\em Given $\epsilon$-corrupted samples, is there an efficient algorithm for estimating the true mixture within $\widetilde{O}(\epsilon)$ in total variation distance for any constant $k$?}
\end{quote}

\noindent Such a bound would be optimal up to polylogarithmic factors. And even in the case of a single Gaussian it is known that there are fundamental tensions between robustness and computational efficiency, and there is evidence that it might not be possible to obtain $O(\epsilon)$ accuracy (at least in a subtractive model of noise) \cite{diakonikolas2017statistical}. As we will discuss below, in order to solve this problem we will need new frameworks and strategies that avoid trying to learn individual components. 

\subsection{Observability}

Our main conceptual contribution is a new framework, which we call {\em observability}, for framing high-dimensional density estimation problems.  We use this notion as a building block for how to design algorithms for robustly learning a mixture of Gaussians even when it is impossible to learn it on a component-by-component basis.  Observability involves having a set of test functions $f_1, \dots , f_n$ that are used to measure a distribution in a family $\mcl{F}$.  It has many parallels, but also important differences, with the commonly used notion of identifiability.  We begin with a definition.
\begin{definition}[Observability]
Given a family of distributions $\mcl{F}$ and a set of test functions $f_1, \dots , f_n$, we say that the family $\mcl{F}$ is observable through the test functions $f_1, \dots , f_n$ if any two distributions $\mcl{M}, \mcl{M}' \in \mcl{F}$ that produce identical measurements (i.e. $ \E_{\mcl{M}}[f_i] = \E_{\mcl{M}'}[f_i] $), they must be equivalent as distributions.
\end{definition}
In other words, $\mcl{F}$ is observable through a family of test functions $f_1, \dots , f_n$ if these test function measurements uniquely determine a distribution in $\mcl{F}$.  Of course we need a more algorithmically useful version of observability with quantitative guarantees.  Since our goal is to achieve nearly optimal robustness guarantees, we will need the test function discrepancy and the TV distance to be proxies for each other up to logarithmic factors.  We call this  \textit{strong observability}.
\begin{definition}[Strong Observability]\label{def:strong-observability}
Given a family of distributions $\mcl{F}$ and a set of test functions $f_1, \dots , f_n$, we say that the family $\mcl{F}$ is strongly observable through the test functions $f_1, \dots , f_n$ if for any two distributions $\mcl{M}, \mcl{M}' \in \mcl{F}$, we have
\[  
d_{TV}(\mcl{M}, \mcl{M}') \cong  \sum_i \left \lvert \E_{\mcl{M}}[f_i] - \E_{\mcl{M}'}[f_i]  \right \rvert 
\]
where the $\cong$ means that the two sides are equivalent up to logarithmic factors.
\end{definition}

Strong observability is a subtle property. There are related facts that are much easier to establish: For any two mixtures $\mcl{M}$ and $\mcl{M}'$ that are $\epsilon$-far in total variation distance, there is a test function $f$ (whose values are bounded between zero and one) where $|\E_{\mcl{M}}[f] - \E_{\mcl{M'}}[f]| \geq \epsilon$.  However the quantifiers are in the wrong order. In particular, the particular test function that distinguishes $\mcl{M}$ and $\mcl{M}'$ could vary arbitrarily and pathologically as we vary the two mixtures. In contrast, what makes our notion of observability algorithmically useful is that the family of test functions is fixed in advance and we will show that a polynomial number of them suffice.  Thus, for a density estimation algorithm, our problem is reduced to measuring the test functions on the true mixture and then computing any distribution in $\mcl{F}$ that matches these measurements. Strong observability implies that this strategy will obtain nearly optimal robustness guarantees, even in the face of adversarial corruptions.  The notion of observability seems natural and fundamental to high-dimensional learning, but as far as we are aware it has not appeared in the literature before. 

Our main result is in establishing that strong observability is possible for GMMs with a polynomial number of test functions, provided that the components are in regular form (see Definition \ref{def:regular-form}).  Roughly, a mixture is in regular form if all components are not too poorly conditioned and not too separated from each other.  Moreover we can reduce the general learning problem to the regular form case by invoking recent results on robust clustering.  Our result can be summarized as:
\begin{quote}
{ \em A mixture of a constant number of Gaussians  in regular form is strongly observable through constant degree  Hermite moments \footnote{Hermite moments will be defined more formally in the next section, but for now can be thought of modifications of standard moments that can be robustly estimated.}. }
\end{quote}
More specifically we prove:
\begin{theorem}[Informal, see Theorem \ref{thm:identifiability}]\label{thm:observability-informal}
For two mixtures $\mcl{M}, \mcl{M}'$ of a constant number of Gaussians in regular form, the distance between their first $O_k(1)$ Hermite moments and their TV distance are equivalent up to logarithmic factors.  
\end{theorem}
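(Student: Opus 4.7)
The plan is to prove both directions of the equivalence. First I would handle the easier direction, bounding $\sum_{\alpha: |\alpha| \leq D_k} |\E_{\mcl{M}}[H_\alpha] - \E_{\mcl{M}'}[H_\alpha]|$ by $\poly(\log(1/\eps)) \cdot d_{TV}(\mcl{M}, \mcl{M}')$, where $\eps = d_{TV}(\mcl{M}, \mcl{M}')$. For each multi-index $\alpha$ of constant degree, $H_\alpha$ is a polynomial of degree $O_k(1)$ which is bounded by $\poly(\log(1/\eps))$ on a ball $B$ of radius $R = O(\sqrt{\log(1/\eps)})$. Splitting the integral $\int H_\alpha \, d(\mcl{M} - \mcl{M}')$ into contributions from $B$ and $B^c$, the first is at most $\|H_\alpha\|_{L^\infty(B)} \cdot d_{TV}$, and the second is controlled via Cauchy-Schwarz together with the sub-Gaussian tail bounds implied by regular form (using $\E_{\mcl{M}}[H_\alpha^2] = O(1)$ and $\Pr_{\mcl{M}}[B^c] \leq \eps^2$ after calibrating $R$), yielding the desired polylog upper bound.

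The hard direction is the converse. Assuming $d_{TV}(\mcl{M}, \mcl{M}') = \eps$, the goal is to exhibit some $\alpha$ with $|\alpha| \leq D_k = O_k(1)$ such that $|\E_{\mcl{M}}[H_\alpha] - \E_{\mcl{M}'}[H_\alpha]| \geq \widetilde{\Omega}(\eps)$. Viewing $\nu = \mcl{M} - \mcl{M}'$ as a signed combination of at most $2k$ regular-form Gaussians with weights summing to zero, this reduces to showing that such a $\nu$ cannot have all low-degree Hermite moments of magnitude $o(\eps / \text{polylog})$ unless it is negligible as a signed measure.

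The core of the argument is a quantitative non-degeneracy lemma: the parameter-to-low-degree-Hermite-moment map, restricted to signed combinations of at most $2k$ regular-form Gaussians that are not within $o(\eps)$ of trivially cancelling, is well-conditioned in an appropriate sense. I would prove this by induction on $k$ with a clustering case analysis. If two components of $\nu$ have parameters within distance $\tau$, I replace the difference of the corresponding Gaussians by a derivative (Taylor-expanding in the parameter space), producing a signed combination over fewer ``effective'' components plus a controlled remainder absorbed by the inductive hypothesis. If every pair of components is $\tau$-separated, I apply a Vandermonde-style argument on the Hermite moments of degree $\leq D_k$, exploiting the algebraic structure of Hermite polynomials evaluated on shifted Gaussians to show that the low-degree moment map restricted to $\tau$-separated parameters is injective with polynomially bounded inverse.

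The main obstacle is ensuring the losses in both branches compound only polylogarithmically rather than polynomially in $1/\eps$. A naive Vandermonde argument loses a polynomial factor per component, which compounds to a $\poly(1/\eps)^{O(k)}$ blow-up, far worse than the target. Recovering polylog factors requires carefully tuning the clustering threshold $\tau$ at each recursion level so the Taylor remainders and the Vandermonde condition numbers at each scale telescope into a polylog overall factor, leveraging the regular form assumption to rule out the pathological near-degenerate configurations that would otherwise cause blow-up. The use of Hermite moments (rather than raw moments) is essential here: their orthogonality with respect to the standard Gaussian keeps the linear algebra of the Vandermonde step diagonal and clean, and it also makes the resulting observability statement directly compatible with the downstream robust estimation of these moments from $\eps$-corrupted samples.
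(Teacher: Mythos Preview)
Your easy direction is fine. The hard direction has a genuine gap at precisely the point you flag as ``the main obstacle.''

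The clustering/Taylor/Vandermonde recursion you describe is a parameter-based argument, and it cannot beat the $\eps^{O(1/k)}$ barrier. Concretely: in the close case you Taylor-expand a pair of components at parameter distance $\tau$, incurring an $L^1$ remainder of order $\tau^2$ (or $\tau^m$ if you expand to order $m$). For this remainder not to swamp the target $\delta$, you need $\tau \lesssim \delta^{1/2}$ (or $\delta^{1/m}$). In the separated case your Vandermonde inverse has condition number $\tau^{-\Omega(k)}$, so the bound you extract is $\delta \cdot \tau^{-\Omega(k)} \gtrsim \delta^{1 - \Omega(k)/m}$. No choice of $\tau$ (or telescoping schedule of $\tau$'s) makes both constraints polylogarithmic simultaneously; you are recovering exactly the $\eps^{\Omega(1/k)}$ loss that the paper's introduction explains is unavoidable for any argument that passes through component parameters. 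Your appeal to the regular-form assumption does not help here: regular form bounds the magnitude and conditioning of each component, but it places no lower bound on pairwise separation, so the near-degenerate configurations you hope to ``rule out'' are fully present.

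The paper's proof avoids parameters entirely. The key idea is that the Hermite moment polynomials of any $k$-component MPG satisfy a linear recurrence of order $O_k(1)$ with bounded coefficients (obtained by writing down a differential operator $\mcl{D} = \prod_j (\partial - \mu_j(X) - \Sigma_j(X)y)^{c_j}$ that annihilates the generating function $\sum_m h_m(X) y^m / m!$). When all components are within a small constant of isotropic, the recurrence coefficients are small, so a bound on the first $O_k(1)$ Hermite polynomials propagates with geometric decay to \emph{all} of them; summing via the inverse adjusted characteristic function then controls $\|g\|_1$. The general regular-form case is reduced to this one by a partition-and-isolate argument that again uses differential operators (not Vandermonde) to peel off well-separated clusters. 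At no point does the argument attempt to recover or compare individual component parameters, which is why it achieves polylog rather than $\eps^{O(1/k)}$ loss.
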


While our overall algorithm builds on the line of previous works on robustly learning GMMs \cite{diakonikolas2020robustly, bakshi2020outlier, liu2020settling}, our key contribution is the proof of strong observability. It leverages the recent generating function technology in \cite{liu2020settling} but in new ways that avoid using the sum-of-squares hierarchy. 


\subsubsection{Comparison to Identifiability}  It will be helpful to compare observability with the more familiar concept of identifiability, which is usually thought of as the crucial ingredient in parameter learning algorithms.  Recall the definition of identifiability.
\begin{definition}[Identifiability, informal]
Given a family of distributions $\mcl{F}$ parameterized by parameters $\theta$, we say that the family $\mcl{F}$ is identifiable if any two distributions $\mcl{F}(\theta), \mcl{F}(\theta') \in \mcl{F}$ that are close in TV they must also be close in terms of their parameters (for some appropriate parameter distance).
\end{definition}
\noindent In the case of parameter distance for GMMs, usually we require that there be a matching between the components in $\mcl{M}$ and those in $\mcl{M}'$ so that across the matching the components are close in TV and have similar mixing weights. 

However identifiability is just not the right notion to use for density estimation, at least when it comes to achieving nearly optimal robustness guarantees. The issue is that the relationship between component-wise distance and TV distance is quantitatively too weak. There are explicit constructions of GMMs $\mcl{M}$ and $\mcl{M}'$ that are $\epsilon$-close in TV distance but where all the components in both mixtures are all separated by at least $\epsilon^{O(1/k)}$ (see \cite{moitra2010settling, hardt2015tight}) in TV.  The key point is that the mixtures are so close to each other that we cannot distinguish between them in the setting where an $\epsilon$-fraction of our samples can be arbitrarily corrupted. Hence we have:

\begin{proposition}
When an $\epsilon$-fraction of the samples are arbitrarily corrupted, it is not information-theoretically possible to learn the components of a GMM to accuracy better than $\epsilon^{O(1/k)}$. 
\end{proposition}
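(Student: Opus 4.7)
The plan is to prove this as a standard indistinguishability argument leveraging the hard instances of \cite{moitra2010settling, hardt2015tight} that are already alluded to in the paragraph preceding the proposition. The starting point is that those works exhibit, for each small $\epsilon > 0$ and each $k$, a pair of $k$-component mixtures $\mcl{M}, \mcl{M}'$ such that $d_{TV}(\mcl{M}, \mcl{M}') \le \epsilon$, yet under every bijection between the components of $\mcl{M}$ and those of $\mcl{M}'$ the matched components are pairwise $\epsilon^{\Omega(1/k)}$-separated in TV (and the mixing weights differ by a comparable amount). I would simply quote this explicit construction as the combinatorial input; no new construction is needed.

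With the pair $(\mcl{M}, \mcl{M}')$ in hand, I would invoke the standard maximal coupling fact: since $d_{TV}(\mcl{M}, \mcl{M}') \le \epsilon$, there is a joint distribution on pairs $(X, X')$ with marginals $\mcl{M}, \mcl{M}'$ under which $\Pr[X \ne X'] \le \epsilon$. This lets me construct two adversaries in the $\epsilon$-corruption model that produce exactly the same distribution on samples. Specifically, adversary $A$ draws clean samples from $\mcl{M}$ and, on the (at most $\epsilon$-fraction of) coordinates where $X \ne X'$, replaces $X$ with $X'$; adversary $A'$ draws clean samples from $\mcl{M}'$ and does nothing. Both produce i.i.d. samples from $\mcl{M}'$, but in the first case the ``true'' mixture is $\mcl{M}$ and in the second it is $\mcl{M}'$. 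Hence no algorithm (deterministic or randomized, with any sample complexity) can distinguish these two scenarios.

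Finally, I would close the argument by contradiction. Suppose an algorithm $\mcl{A}$ were able, in the $\epsilon$-corruption model, to output a component matching within parameter distance $o(\epsilon^{1/k})$ of the true components. Then running $\mcl{A}$ on the common observed distribution would, by the assumed accuracy, identify the components of $\mcl{M}$ in the first scenario and of $\mcl{M}'$ in the second — yet by the separation guarantee of \cite{moitra2010settling, hardt2015tight} these two sets of components are at parameter distance $\epsilon^{\Omega(1/k)}$ apart, which is larger than $2 \cdot o(\epsilon^{1/k})$ and so cannot both be correct. This contradiction yields the claimed $\epsilon^{O(1/k)}$ lower bound.

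I do not expect a genuine technical obstacle: the nontrivial content sits inside the cited hard-instance constructions, and the step added here is a one-paragraph coupling argument. The only place to be careful is the precise notion of ``parameter distance'' used in the proposition statement — I would state the lower bound in whichever metric the hard instances are actually separated in (component-wise TV with approximately matched weights), so that the contradiction in the last step is clean and there is no mismatch between the separation guarantee and the accuracy being ruled out.
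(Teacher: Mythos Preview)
Your proposal is correct and matches the paper's approach: the paper does not give a standalone proof but states the proposition as an immediate consequence of the preceding paragraph, which cites the hard instances of \cite{moitra2010settling, hardt2015tight} and observes that $\epsilon$-closeness in TV implies indistinguishability under $\epsilon$-corruption. Your write-up simply makes explicit the coupling step that the paper leaves implicit; the only minor point to tidy is that the maximal coupling gives an $\epsilon$-fraction of disagreements \emph{in expectation}, so you should either invoke concentration over $n$ samples or allow the adversary a small constant-factor slack in the corruption budget, but this is routine.
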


This is a serious issue because it means that many of the standard techniques for learning mixture models that work by learning individual components are trying to do too much and will get stuck at the above barrier.  In particular, while we are using the same family of test functions as \cite{liu2020settling}, the techniques in \cite{liu2020settling} rely on trying to isolate the parameters of each of the components and thus will run into the $\epsilon^{O(1/k)}$ barrier.

\subsection{Our Results and Techniques}

Our main result is an algorithm whose robustness guarantees are optimal up to logarithmic factors for any constant $k$.  The formal statement can be found in Theorem \ref{thm:main}.
\begin{theorem}[Informal]
Let $k$ be a constant.  Let $\mcl{M} = w_1G_1 + \dots + w_kG_k$ be a mixture of Gaussians in $\R^d$ whose components have variances lower and upper bounded in all directions and such that the mixing weights are lower bounded (both of these bounds can be any function of $k$).  Given $\poly(d/\eps)$ samples from $\mcl{M}$ that are $\eps$-corrupted, there is an algorithm that runs in time $\poly(n)$ and with high probability outputs a distribution $f$ that is a mixture of $k$ polynomial Gaussians (see below for formal definition) such that
\[
d_{\TV}( \mcl{M}, f) \leq \wt{O}(\eps) \,.
\]
\end{theorem}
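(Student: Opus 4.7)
The plan is to convert the strong observability statement (Theorem \ref{thm:observability-informal}) into an algorithm by (i) reducing to a clusterable/regular setting, (ii) robustly estimating the low-degree Hermite moments of the true mixture from the $\eps$-corrupted sample, and (iii) searching over a convex relaxation of mixtures of $k$ Gaussians for one whose Hermite moments match the empirical estimates. Strong observability will then do the rest: any feasible answer is automatically $\wt{O}(\eps)$-close in total variation to $\mcl{M}$.

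First I would invoke the recent robust clustering results cited earlier to partition the corrupted sample into (at most $k$) clusters such that each cluster corresponds to a sub-mixture whose components are either ``close together'' or well-separated from the rest. This reduces to the case where every non-trivial sub-mixture being considered is in the regular form of Definition \ref{def:regular-form}: variances bounded above and below, mixing weights bounded below, and no two components exponentially separated. On each such regular cluster I can apply the main observability result; the pieces then recombine with only an $\wt{O}(\eps)$ loss because the clustering step is itself robust to that level.

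Next, on a regular cluster, I need to estimate the first $D = O_k(1)$ Hermite moments of $\mcl{M}$ to entrywise accuracy $\wt{O}(\eps)$. Because Hermite polynomials of bounded degree applied to an isotropic Gaussian have sub-exponential tails, the uniform concentration / filtering machinery already used in robust Gaussian mean and covariance estimation (e.g.\ the spectral-filter / iterative reweighting approach of \cite{diakonikolas2019robust}) extends to each low-degree Hermite polynomial with only polylogarithmic overhead. This is a black-box step and I expect it to be the routine part.

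The crux is the third step: find a mixture $f = \sum_{i=1}^k w_i' P_i$ of ``polynomial Gaussians'' whose degree-$\leq D$ Hermite moments are within $\wt{O}(\eps)$ of the estimates produced in step two. Polynomial Gaussians give a convex relaxation: instead of optimizing over actual Gaussian parameters $(\mu_i,\Sigma_i)$ (which makes the moment-matching problem nonconvex), each $P_i$ is represented by a bounded-degree generating polynomial, so the moment conditions become a polynomial system / SDP-style feasibility problem with only $\poly(d)$ variables and constraints, for which interior-point methods succeed in $\poly(d/\eps)$ time. Feasibility is guaranteed because the true mixture $\mcl{M}$ (lifted to the polynomial-Gaussian parameterization, using the generating-function technology of \cite{liu2020settling}) is itself a valid solution up to the estimation error. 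I anticipate that this moment-matching subroutine is the main obstacle: I need to argue both that the polynomial-Gaussian parameterization is expressive enough to contain a near-feasible solution and that any feasible $f$ is itself in (or TV-close to) the regular-form family, so that Theorem \ref{thm:observability-informal} applies directly to the pair $(\mcl{M},f)$.

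Finally, given such an $f$, strong observability concludes the argument: since the first $O_k(1)$ Hermite moments of $\mcl{M}$ and $f$ differ by at most $\wt{O}(\eps)$ in $\ell_1$, Theorem \ref{thm:observability-informal} gives $d_{\TV}(\mcl{M},f) \leq \wt{O}(\eps)$, which is exactly the claim. Putting the three steps together, the total sample complexity is $\poly(d/\eps)$ (dominated by robust moment estimation) and the runtime is $\poly(n)$ as asserted.
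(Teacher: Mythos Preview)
Your high-level architecture matches the paper's, but two of your three steps have real gaps that the paper has to work hard to close.

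First, the Hermite moment estimation is \emph{not} a black-box step. Standard filtering/reweighting applied to a sub-exponential distribution with unknown covariance gives accuracy $\wt{O}(\sqrt{\eps})$, not $\wt{O}(\eps)$; getting the optimal rate generically requires either known covariance or extra moment structure, and there is SQ evidence that the generic problem is hard. The paper exploits a special feature of mixtures of Gaussians: the covariance of $H_m(X,z)$ can be written in terms of $h_0,\dots,h_{2m}$, and the $h_j$ satisfy an order-$O_k(1)$ recurrence with bounded coefficients. This lets you (a) take crude $\wt{O}(\sqrt{\eps})$ estimates of $h_0,\dots,h_m$, (b) solve for the recurrence coefficients and extend to $h_{m+1},\dots,h_{2m}$, (c) compute an approximate covariance of $H_m(X,z)$, (d) whiten and re-estimate the mean, and iterate until the error is $\wt{O}(\eps)$. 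Your proposal elides exactly this bootstrapping, which is one of the paper's main technical contributions.

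Second, your moment-matching step as written is not tractable. If the Gaussian centers/covariances of the polynomial Gaussians are unknowns, the Hermite moments are highly nonconvex in those parameters, so ``SDP-style feasibility'' does not apply. The paper resolves this by first running a rough parameter-learning routine (from \cite{liu2020settling}) to obtain component estimates $\ovl{G}_1,\dots,\ovl{G}_k$ with $d_{\TV}(G_j,\ovl{G}_j)\le \eps^{\Omega_k(1)}$, and only \emph{then} solving for the polynomials $Q_j$ in $f=\sum_j Q_j\ovl{G}_j$. With the $\ovl{G}_j$ fixed, the Hermite moments of $f$ are linear in the coefficients of the $Q_j$, so matching them is a convex (in fact linear) feasibility problem. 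A further subtlety you do not address: feasibility requires showing that degree-$O_k(1)$ polynomials suffice to match $m$ Hermite moments for \emph{any} $m$, with the degree \emph{not} growing in $m$; otherwise the observability theorem (which needs $m$ large relative to the MPG degree) cannot be invoked. The paper proves this via a Taylor-truncation argument using $d_{\TV}(G_j,\ovl{G}_j)\le \eps^c$, and then adds a small nonnegativity correction to make $f$ an actual distribution.
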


\begin{remark}
While weaker than the familiar goal of parameter learning (which as we discussed is \textbf{impossible} for the quantitative robustness guarantees we are aiming for), improper density estimation still has important applications. For example, our estimate $f$ will have the property that we can efficiently sample from it, which means we can compute probabilities of arbitrary events and various statistics without needing new samples from $\mcl{M}$.
\end{remark}

As we discussed in the previous subsection, the main challenge is that in our setting parameter learning is not information-theoretically possible, at least not with the kinds of robustness guarantees that we are aiming for. Our algorithm and its analysis revolve around showing that a Gaussian mixture model is strongly observable through its Hermite moments. We now sketch the proof. 


First, we give some background. For a distribution $D$, the characteristic function is defined as $\wh{D}(X) = \E_{z \sim D}[ e^{iz \cdot X}]$ (where $i = \sqrt{-1}$) and can be expanded as a power series whose terms are the moments of $D$
\[
\wh{D}(X) = \sum_{j = 0}^{\infty} \frac{\E_{z \sim D}[ (z \cdot X)^n]}{n!} i^n  \,.
\]
We can also invert the characteristic function to translate from the moments back to the actual density function.  For our purposes we will want to work with the Hermite moments instead, because they can be robustly estimated using existing techniques \cite{kane2020robust, liu2020settling}. It turns out that there is an analog of the relationship between moments and characteristic functions, but for Hermite moments instead. Specifically we define the adjusted characteristic function (for details see Definition \ref{def:adj-char-func}) and show that the terms in the power series expansion of the adjusted characteristic function are exactly the Hermite moments (see Corollary \ref{coro:adj-char-motivation}).  The key is we can give quantitative estimates for inverting the adjusted characteristic function that allow us to relate the size of terms in the power series (which are Hermite moments) to the $L^1$ norm of the density function.

This gives us some relation between the Hermite moments and the TV distance but it is still far from strong observability.  In particular, the power series has infinitely many terms, but in order to prove strong observability, we must restrict to a constant ($O_k(1)$)  number of test functions.  The key is to prove that, for mixtures of Gaussians, the Hermite moments satisfy a recurrence relation of order $O_k(1)$ and further that this recurrence has bounded coefficients.  For context, the moments of a single Gaussian satisfy a simple recurrence so it is reasonable to expect that the moments of a mixture satisfy a higher-order recurrence.  This framework of working with Hermite moments through their recurrence relations (instead of through their parameters as in previous works) is crucial to circumventing the $\eps^{O(1/k)}$ barrier.  This is only a sketch and the full proof has additional complexities. 

 However strong observability alone does not get us anywhere because, after robustly estimating the Hermite moments, we would still need to solve a large system of polynomial equations to find a good (proper) estimate. In fact this system does not appear to have any useful structure that can be exploited algorithmically, in part because it has many disconnected solutions due to the failure of robust parameter learning. Instead we circumvent this obstacle by showing how to solve a relaxed version of the polynomial system that corresponds to allowing ourselves to output a Gaussian mixture model whose mixing weights are low degree polynomials \--- i.e. it has the form
\[
Q_1(x) G_1 + \dots + Q_k(x) G_k
\]
where $Q_1, \dots , Q_k$ are low-degree polynomials that are nonnegative everywhere. We call this a mixture of polynomial Gaussians (MPG). As it turns out, our strong observability result (Theorem \ref{thm:identifiability}), that being close in Hermite moments implies closeness in total variation distance, extends to MPGs as well. We emphasize that this is only a high-level description of the proof, and there are many subtleties such as the crucial fact that the degrees of the polynomials $Q_1, \dots , Q_k$ does not need to grow with $m$, the number of Hermite moments that we want to match, which is essential in order to make our strategy work. We give a detailed technical overview in Section \ref{sec:tech-overview}.

\subsection{Other Related Work}

There has also been a line of work \cite{ashtiani2018sample, ashtiani2020near} towards achieving optimal sample complexity for robustly learning mixtures of Gaussians.  However, these works are not algorithmic i.e. their algorithms are based on discretization and brute-force search and run in exponential time.  In our work, sample complexity is not our primary concern (as long as it is polynomial) and in order to develop a computationally efficient learning algorithm we will need entirely new techniques.

\section{Technical Overview}\label{sec:tech-overview}
\subsection{Problem Setup}
We use $N(\mu, \Sigma)$ to denote a Gaussian with mean $\mu$ and covariance $\Sigma$.  We use $d_{\TV}(\mcl{D}, \mcl{D'})$ to denote the total variation distance between two distributions $\mcl{D}, \mcl{D'}$.  When there is no ambiguity, we will slightly abuse notation and for a distribution $\mcl{D}$ on $\R^d$, we use $\mcl{D}(x)$ to denote the density function of $\mcl{D}$ at $x$.  

Throughout this paper, we use the following shorthand notation: 
\begin{itemize}
    \item $X$ denotes a $d$-tuple of variables $(X_1, \dots , X_d)$
    \item For a vector $\mu \in \R^d$ and matrix $\Sigma \in \R^{d \times d}$ we set $\mu(X) = \mu^T X$ and $\Sigma(X) =  X^T \Sigma X$
\end{itemize}

We begin by formally defining the problem that we will study.  First we define the contamination model.  This is a standard definition from robust learning (see e.g. \cite{diakonikolas2020robustly}).
\begin{definition}[Strong Contamination Model]\label{def:corruption}
We say that a set of vectors $Y_1, \dots , Y_n$ is an $\eps$-corrupted sample from a distribution $\mcl{D}$ over $\R^d$ if it is generated as follows.  First $X_1, \dots , X_n$ are sampled i.i.d. from $\mcl{D}$.  Then a (malicious, computationally unbounded) adversary observes $X_1, \dots , X_n$ and replaces up to $\eps n$ of them with any vectors it chooses.  The adversary may then reorder the vectors arbitrarily and output them as $Y_1, \dots , Y_n$
\end{definition}

\noindent In this paper, we study the following problem.  There is an unknown mixture of Gaussians 
\[
\mcl{M} = w_1G_1 + \dots + w_kG_k
\]
where $G_i = N(\mu_i, \Sigma_i)$.  We receive an $\eps$-corrupted sample $Y_1, \dots , Y_n $ from $\mcl{M}$ where $n = \poly(d/\eps)$ (we treat $k$ as a constant).  The goal is to output a density function of a distribution, say $f$, such that 
\[
d_{\TV}(f, \mcl{M}) \leq \wt{O}(\eps) \,.
\]
In our main result, Theorem \ref{thm:main}, we give an algorithm that computes such a function $f$ of the form
\[
f(x) = Q_1(x)\ovl{G}_1(x) + \dots +  Q_k(x)\ovl{G}_k(x)
\]
where $Q_1, \dots , Q_k$ are polynomials of constant (possibly depending on $k$) degree that are nonnegative everywhere and $\ovl{G}_1, \dots , \ovl{G}_k$ are Gaussians.  We call such functions mixtures of polynomial Gaussians (MPG) distributions for short (see Definition \ref{def:poly-gaussian-ditribution}).

Throughout our paper, we will assume that all of the Gaussians that we consider have variance at least $\poly(\eps/d)$ and at most $\poly(d/\eps)$ in all directions i.e. they are not too flat.  This implies that their covariance matrices are invertible so we  may write expressions such as $\Sigma_i^{-1}$.  
\begin{remark}
Our main results for nearly optimal density estimation require a stronger assumption that the variances are between $\poly(\log 1/\eps)^{-1}$ and $\poly(\log 1/\eps)$ in each direction.  However, en route to these results, we first prove a few simple generalizations of the the results in \cite{liu2020settling} and these results hold under the same assumptions as in \cite{liu2020settling} i.e. components have variance between  $\poly(\eps/d)$ and $\poly(d/\eps)$ in all directions.
\end{remark}

We will also  assume that the $w_i$ are at least $A^{-1}$ for some constant $A$.  While a lower bound on the mixing weights is not technically necessary for density estimation, we need such an assumption in our paper because we need to first run  a parameter estimation algorithm (see \cite{liu2020settling}) to obtain rough estimates for all of the components.

Throughout this paper, we treat $k,A$ as constants \--- i.e. $A$ could be any function of $k$ \--- and when we say polynomial, the exponent may depend on these parameters.  We are primarily interested in dependence on $\eps$ and $d$ (the dimension of the space).

\subsection{Proof Overview}\label{sec:proof-overview}
We now give an overview of the proof of our main result.

\subsubsection{Regular Form Mixtures}\label{sec:regular-form-overview}
We will first consider the case when the components of the mixture are in a convenient form, which we call regular form, meaning that all of the components are not too far from each other and not too poorly conditioned.  
\begin{definition}[Informal, see Definition \ref{def:regular-form})]
We say a mixture of Gaussians $\mcl{M} = w_1G_1 + \dots + w_kG_k$ is in regular-form if all of the components can be written in the form $G_j = N(\mu_j, I + \Sigma_j)$ where
\begin{align*}
&\norm{\mu_j}, \norm{\Sigma_j}_2 \leq \poly(\log 1/\eps) \\
&\poly(\log 1/\eps)^{-1}I \leq I + \Sigma_j \leq \poly(\log 1/\eps)I
\end{align*}
\end{definition}

In the next subsection (Section \ref{sec:reduce-to-regular}), we sketch how we reduce from a general mixture to a mixture in regular form.  As mentioned previously, the key ingredient in our algorithm is an observability statement, that we have a family of test functions such that if two mixtures are close on this family of test functions, then they must be close in TV distance.  Our learning algorithm will then work by measuring these test functions using the samples and then solving for a distribution that matches these test function measurements.

For many learning problems, such as learning mixtures of Gaussians in the non-robust setting \cite{moitra2010settling,kalai2010efficiently}, using low-degree moments as the set of test functions suffices.  However, for robustly learning mixtures of Gaussians, using the standard moments would lose factors of $\poly(d)$ in the error guarantee.  Similar to previous papers on robustly learning mixtures of Gaussians \cite{kane2020robust, liu2020settling}, we use the family of low-degree Hermite moments as our test functions.  Of course, as mentioned previously, there are still many additional obstacles to proving strong observability and circumventing the $\eps^{1/k}$ barrier to parameter learning.   
First, we make a few definitions.  See Section \ref{sec:hermite-basics} for more details.
\begin{definition}
Let $\mcl{H}_m(x)$ denote the univariate Hermite polynomials, e.g. $\mcl{H}_2(x) = x^2 - 1,\mcl{H}_3(x) = x^3 - 3x  $.  Let $\mcl{H}_m(x, y^2)$ be the homogenized Hermite polynomials e.g. $ \mcl{H}_2(x,y^2) = x^2 - y^2, \mcl{H}_3(x,y^2) = x^3 - 3xy^2$.
\end{definition}
\begin{definition}[Multivariate Hermite Polynomials]
Let $H_m(X,z)$ be a formal polynomial in variables $X = X_1, \dots , X_d$ whose coefficients are polynomials in $d$ variables $z_1, \dots , z_d$ that is given by
\[
H_m(X,z) = \mcl{H}_m( z_1X_1 + \dots + z_dX_d , X_1^2 + \dots + X_d^2) \,.
\]
\end{definition}

\begin{definition}[Hermite Moment Polynomials]
For a distribution $D$ on $\R^d$, we let
\[
h_{m,D}(X) = \E_{(z_1, \dots , z_d) \sim D}[H_m(X,z)] \,.
\]
We omit the subscript $D$ when it is clear from context.  We refer to $h_{m,D}(X)$ as the Hermite moment polynomials of $D$. 
\end{definition}
\begin{remark}
We will use the term Hermite moment polynomial (instead of just Hermite moment) to emphasize the fact that we view $h_{m,D}(X)$ as a polynomial in the formal variables $X$.  This polynomial representation will be useful for the machinery that we introduce later on for manipulating these Hermite moment polynomials.
 \end{remark}
\begin{remark}
If instead of $\E_{(z_1, \dots , z_d) \sim D}[H_m(X,z)]$ we had $\E_{(z_1, \dots , z_d) \sim D}[(z_1X_1 + \dots + z_dX_d)^m]$ then we would get the standard moments.
\end{remark}


Recall that a key step to getting optimal accuracy is proving strong observability, which involves relating distance between Hermite moment polynomials to TV distance directly.  The Hermite moment polynomials turn out to be a particularly nice object to work with because we can work with their $L^2$ norm (after reorganizing the coefficients into a vector) without losing dimension dependent factors \footnote{For, say, standard moments, we would instead need to work with the tensor injective norm.}.

The key observability result in our proof (Theorem \ref{thm:identifiability}) implies that for regular-form mixtures, the distance between Hermite moment polynomials (in terms of $L^2$ norm of coefficients) is equivalent to TV distance up to a $\poly(\log 1/\eps)$ factor.

\paragraph{Algorithm Summary:} We now summarize our algorithm for learning regular-form mixtures. The main parts are 
\begin{enumerate}
    \item \textbf{Strong Observability through Hermite Moment Polynomials:} we prove that for two mixtures of $k$ Gaussians, and more generally mixtures of $k$ polynomial Gaussians, if their first $O_k(1)$ Hermite moment polynomials are $\eps$-close in coefficient $L^2$ distance, then the two mixtures are  $\wt{O}(\eps)$-close in TV distance (Theorem \ref{thm:identifiability}). 
    
    \item \textbf{Estimate the Hermite Moment Polynomials Optimally:} we estimate the Hermite moment polynomials of the mixture to optimal accuracy (Theorem \ref{thm:estimate-hermite})
    \item \textbf{Compute Rough Component Estimates:} we compute $\eps^{\Omega_k(1)}$-accurate estimates for all of the components (Theorem \ref{thm:list-learning-poly-acc})
    \item \textbf{Estimate Density Function Optimally:} we bootstrap the rough component estimates using the Hermite moment polynomial estimates to compute the density function of the mixture to optimal accuracy (Theorem \ref{thm:estimate-regularform})
\end{enumerate}

The next figure shows how the parts fit together in our algorithm.  We will then explain each of the parts in more detail.  We focus on parts $1,2,4$ because part $3$ follows easily from the results in \cite{liu2020settling}.    

\begin{figure}[H]
\centering
\begin{tikzpicture}[node distance=2cm]
\tikzstyle{block} = [rectangle, draw=black, thick,  text width=15 em,align=center, rounded corners, minimum height=2em]
\node (input)[block] {\textbf{Input:} $\eps$-corrupted sample $X_1, \dots , X_n$ from mixture $\mcl{M} = w_1G_1 + \dots + w_kG_k$};
\node (est-poly-acc) [block, yshift = -3cm, xshift = 4cm]{Estimate components $\ovl{G}_1, \dots , \ovl{G}_k$ to $\eps^{\Omega(1)}$ accuracy};
\node (est-hermite) [block, yshift = -3cm, xshift = -4cm]{Estimate Hermite moment polynomials $h_{1, \mcl{M}}, \dots , h_{m,\mcl{M}}$ to $\wt{O}(\eps)$ accuracy};
\node (middle) [yshift = -5cm] {};
\node (est-poly-comb) [block, yshift = -8cm]{Compute mixture of polynomial Gaussians $f = Q_1\ovl{G}_1 + \dots + Q_k\ovl{G}_k$ that matches Hermite moment polynomials  $h_{1, \mcl{M}}, \dots , h_{m,\mcl{M}}$ to within $\wt{O}(\eps)$};
\node (est-density-function) [block, yshift = -11cm]{$f$ must be close to the density function of $\mcl{M}$ in $L^1$ distance};
\draw [thick , ->] (input) -- (est-hermite) node[midway, above left] { Theorem \ref{thm:estimate-hermite}};
\draw [thick , ->] (input) -- (est-poly-acc) node[midway, above right] {Theorem \ref{thm:list-learning-poly-acc}};
\draw [thick] (est-hermite) -- (middle);
\draw [thick] (est-poly-acc) -- (middle);
\draw [thick, ->] (middle) -- (est-poly-comb) node[midway, left] { Theorem \ref{thm:estimate-regularform}};
\draw [thick, ->]  (est-poly-comb) -- (est-density-function) node[midway, left] {Theorem \ref{thm:identifiability}};
\end{tikzpicture}
\caption{Overview of our algorithm for regular-form mixtures}
\end{figure}
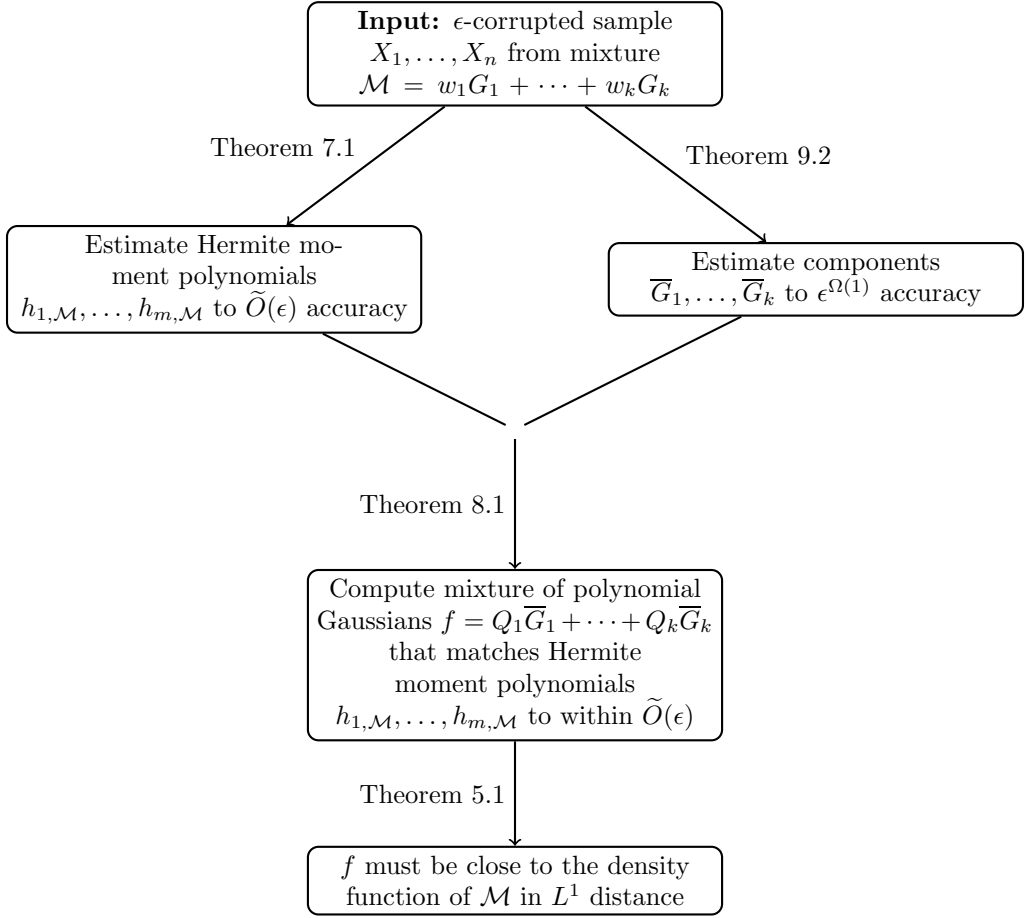

\paragraph{Strong Observability through Hermite Moment Polynomials:}
To help build intuition, here we will sketch a proof of observability via Hermite moment polynomials in the infinite limit, i.e. if two mixtures match exactly on their first $O_k(1)$ Hermite moment polynomials, then they must be exactly the same.  For simplicity, in this discussion, we will restrict ourselves to mixtures of Gaussians.  In our analysis later on, we will need observability for mixtures of polynomial Gaussians because the density function that we output is in this more general class.  

Here, we sketch a proof of the following (informal) theorem.  The full version is in Theorem \ref{thm:identifiability}.   
\begin{theorem}[Informal]\label{thm:informal-observability}
If two regular-form mixtures of Gaussians $\mcl{M} = w_1G_1 + \dots + w_kG_k$ and $\mcl{M}' = w_1'G_1' + \dots + w_k'G_k'$ match on their first $O_k(1)$ Hermite moment polynomials then $\mcl{M} = \mcl{M}'$.
\end{theorem}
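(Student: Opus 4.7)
The plan is to pass from the discrete sequence of Hermite moment polynomials to their exponential generating function (essentially the adjusted characteristic function), use a Prony/Wronskian--style uniqueness argument to upgrade ``matching on the first $O_k(1)$ moments'' to ``matching on all moments'', and then invert to recover the density.

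First, I would derive a closed form for the exponential generating function of the Hermite moment polynomials of a single Gaussian. Combining the generating-function identity $\mcl{H}_m(y, s^2) = m!\,[u^m]\exp(uy - \tfrac12 u^2 s^2)$ with the moment generating function of a Gaussian yields
\begin{equation*}
F_G(u,X) \;:=\; \sum_{m \ge 0} \frac{h_{m,G}(X)}{m!}\, u^m \;=\; \exp\!\bigl(u\,\mu^T X \,+\, \tfrac12 u^2\, X^T \tilde\Sigma\, X\bigr)
\end{equation*}
for a single component $G = N(\mu, I+\tilde\Sigma)$, and summing over $j$ gives
\begin{equation*}
F_{\mcl{M}}(u,X) \;=\; \sum_{j=1}^k w_j\,\exp\!\bigl(u\,\mu_j^T X \,+\, \tfrac12 u^2\, X^T \tilde\Sigma_j\, X\bigr).
\end{equation*}
Agreement of $\mcl{M}$ and $\mcl{M}'$ on the first $N$ Hermite moment polynomials is then exactly the statement that $\Delta(u, X) := F_{\mcl{M}}(u,X) - F_{\mcl{M}'}(u,X)$, viewed as a power series in $u$ with polynomial coefficients in $X$, vanishes to order at least $N$ at $u=0$.

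Second, I would show that for $N = O_k(1)$ sufficiently large this forces $\Delta(\cdot, X) \equiv 0$ for generic $X$. Fix such an $X$: the parameter pairs $(a_j, b_j) := (\mu_j^T X,\, X^T \tilde\Sigma_j X)$ contributed by the at most $2k$ components of $\mcl{M}$ and $\mcl{M}'$ are pairwise distinct, so $\Delta(\cdot, X)$ lies in the at-most-$2k$-dimensional space $V_X := \mathrm{span}\{e^{a_j u + b_j u^2/2}\}_j$. Each generator is annihilated by the first-order operator $D - (a_j + b_j u)$, and their right-lcm in the Weyl algebra is a monic linear ODE of some order $r \le 2k$ with polynomial-in-$u$ coefficients that annihilates all of $V_X$. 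By uniqueness for the initial value problem, any element of $V_X$ whose first $r$ Taylor coefficients vanish at $u = 0$ is identically zero; so any $N \ge 2k$ suffices. By continuity in $X$, we then conclude $\Delta \equiv 0$ for all $X$, so every Hermite moment polynomial of $\mcl{M}$ and $\mcl{M}'$ coincides.

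Finally, because all Hermite moments agree, so do the adjusted characteristic functions, and the inversion result (Corollary \ref{coro:adj-char-motivation}) recovers identical densities, giving $\mcl{M} = \mcl{M}'$. The main obstacle I anticipate is the ``generic $X$'' assumption in the Wronskian step: we must rule out the $X$ for which two different components $(\mu_j, \tilde\Sigma_j) \neq (\mu_{j'}, \tilde\Sigma_{j'})$ produce identical $(a, b)$ pairs. If $\mu_j^T X = \mu_{j'}^T X$ and $X^T \tilde\Sigma_j X = X^T \tilde\Sigma_{j'} X$ held as polynomial identities in $X$, the components would be equal; hence such collisions form a Zariski-closed proper subset of $\R^d$ and can be avoided by perturbation. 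Obtaining the $\poly(\log 1/\eps)$-tight quantitative version (Theorem \ref{thm:identifiability}) will require making this Wronskian argument effective by invoking the regular-form bounds on $\norm{\mu_j}$ and $\norm{\tilde\Sigma_j}_2$ in order to lower-bound the Wronskian determinant in terms of the separation between parameter pairs.
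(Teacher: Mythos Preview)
Your architecture is the same as the paper's: pass to the exponential generating function $F_{\mcl{M}}(u,X) = \sum_j w_j \exp(\mu_j(X)u + \tfrac12 \Sigma_j(X) u^2)$ (this is Corollary~\ref{coro:hermite-of-mixture}), annihilate the difference $F_{\mcl{M}} - F_{\mcl{M}'}$ by a linear ODE in $u$ to obtain a finite-depth recurrence on the $h_m$, and then invert via the adjusted characteristic function (Claim~\ref{claim:inv-char-intro}). The gap is in the middle step.

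You assert that the right-lcm of the $2k$ operators $D - (a_j + b_j u)$ is \emph{monic} of order $r \le 2k$ with \emph{polynomial} coefficients, and then invoke IVP uniqueness at $u=0$. But you cannot have both properties at once: the right-lcm in $\R(u)\langle D\rangle$ is monic with rational coefficients, and clearing denominators produces a polynomial-coefficient operator whose leading coefficient may vanish at $u=0$, making it a singular point where IVP uniqueness fails. Concretely, take $k=1$ with $\mcl{M} = N(0,(1+c)I)$ and $\mcl{M}' = N(0,(1-c)I)$ for small $c>0$ (both in regular form). Then for \emph{every} $X$ one gets $(a_1,b_1) = (0, c\lVert X\rVert^2)$ and $(a_2,b_2) = (0, -c\lVert X\rVert^2)$, and the minimal polynomial-coefficient annihilator of $\mathrm{span}\{e^{bu^2/2}, e^{-bu^2/2}\}$ is $u D^2 - D - b^2 u^3$, singular at $0$. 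The function $e^{bu^2/2} - e^{-bu^2/2}$ lies in $V_X$ and vanishes to order exactly $2$ at $u=0$, so $N = 2k = 2$ does not suffice. Your generic-$X$ escape hatch does not help here either: $a_1 = a_2$ holds identically in $X$, not merely on a proper subvariety.

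The paper's remedy is to give up on the minimal-order lcm and instead take the explicit composition $\mcl{D}_{2k}^{2^{2k-1}} \cdots \mcl{D}_2^{2}\, \mcl{D}_1$ (see the sketch around~\eqref{eq:intro-hermite-recurrence} and Claim~\ref{claim:diff-operator-expansion}). This has order $2^{2k}-1$ rather than $2k$, but it is \emph{monic in $\partial$} with polynomial coefficients, so the induced recurrence on the $h_m(X)$ has leading coefficient $1$ and can be solved forward; matching the first $2^{2k}-1$ Hermite moment polynomials then forces all of them to match. Working with $X$ as a tuple of formal indeterminates throughout (rather than pointwise) also eliminates the Zariski-density step and is what makes the coefficient bounds needed for the quantitative Theorem~\ref{thm:identifiability} tractable.
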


A key ingredient will be understanding recurrence relations between Hermite moment polynomials.  To obtain these recurrence relations, we write down a generating function  for the Hermite moment polynomials and then manipulate this generating function using differential operators.  By writing down a differential operator that annihilates the generating function, we then obtain the coefficients of a recurrence relation that the Hermite moment polynomials must satisfy.  First, we have the following identity.
\begin{claim}[See Corollary \ref{coro:hermite-of-mixture}]\label{claim:hermite-of-mixture-intro}
Let $\mcl{M} = w_1 G_1 + \dots w_kG_k$ be a mixture of Gaussians where $G_j = N(\mu_j, I + \Sigma_j)$.  Then
\begin{equation}\label{eq:hermite-identity-intro}
\sum_{m=0}^{\infty} \frac{1}{m!} \cdot h_{m}(X) y^m = w_1e^{\mu_1(X) + \frac{1}{2}\Sigma_1(X)y^2} + \dots + w_ke^{\mu_k(X) + \frac{1}{2}\Sigma_k(X)y^2}
\end{equation}
where $h_m(X)$ are the Hermite moment polynomials of the mixture $\mcl{M}$.
\end{claim}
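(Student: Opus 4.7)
The plan is a one-shot generating-function computation: push the infinite sum inside the expectation using the classical Hermite generating function, then recognize what is left over as a Gaussian moment generating function. Linearity of expectation across the mixture components does the rest.

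First I would recall (or verify from the three-term recurrence for $\mcl{H}_m$) the homogenized Hermite generating function
\[
\sum_{m \geq 0} \frac{\mcl{H}_m(u, v^2)}{m!} \, t^m \; = \; \exp\!\bigl(ut - \tfrac{1}{2} v^2 t^2 \bigr),
\]
which follows from the standard univariate identity $\sum_m \mcl{H}_m(x) t^m / m! = e^{xt - t^2/2}$ together with the fact that $\mcl{H}_m(\cdot,\cdot)$ is the homogenization of $\mcl{H}_m$ (equivalently $\mcl{H}_m(u, v^2) = v^m \mcl{H}_m(u/v)$). Specializing $u = z_1 X_1 + \dots + z_d X_d$, $v^2 = X_1^2 + \dots + X_d^2$, and $t = y$ and invoking the definition of $H_m(X,z)$ yields the pointwise-in-$z$ identity
\[
\sum_{m \geq 0} \frac{H_m(X,z)}{m!} \, y^m \; = \; \exp\!\bigl( (z \cdot X)\,y - \tfrac{1}{2} \|X\|^2 \, y^2 \bigr).
\]

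Next, fix a single component $G_j = N(\mu_j, I + \Sigma_j)$ and take expectation of both sides over $z \sim G_j$, swapping sum and expectation. This swap is legitimate either at the level of formal power series in $y$ with coefficients in $\R[X_1, \dots, X_d]$, or analytically, since the Gaussian MGF is entire in its argument. The $z$-independent factor $e^{-\frac{1}{2}\|X\|^2 y^2}$ pulls out of the expectation, and what remains is the Gaussian MGF of $G_j$ evaluated at the vector $yX$, namely
\[
\E_{z \sim G_j}\bigl[e^{y (z \cdot X)}\bigr] \; = \; \exp\!\bigl(y\,\mu_j(X) + \tfrac{1}{2} y^2 \, X^T (I + \Sigma_j) X\bigr).
\]
The $\tfrac{1}{2} y^2 \|X\|^2$ piece coming from the identity part of the covariance cancels exactly against the prefactor, leaving $\exp\!\bigl(y\,\mu_j(X) + \tfrac{1}{2} y^2 \Sigma_j(X)\bigr)$. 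This cancellation is precisely the reason the components are parameterized as $I + \Sigma_j$: the Hermite moments are engineered to quotient out the ambient standard-Gaussian contribution, which is exactly what makes them robustly estimable.

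Finally, by linearity of expectation across mixture components we have $h_{m,\mcl{M}}(X) = \sum_j w_j \, h_{m, G_j}(X)$, so summing the single-component identities with weights $w_j$ produces the claimed generating function. There is no real obstacle here — the whole argument is Hermite-MGF composed with Gaussian-MGF — and the only mildly fiddly step is the interchange of sum and expectation, handled uniformly by the formal power series viewpoint. (One side remark: the computation naturally produces $y\,\mu_j(X)$, not $\mu_j(X)$, in the exponent, so the displayed statement appears to be missing a factor of $y$ in front of $\mu_j(X)$; this is forced by matching degrees/parities of $y$ on the two sides.)
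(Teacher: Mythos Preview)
Your proposal is correct and follows essentially the same route as the paper: use the Hermite generating function identity to rewrite the left-hand side as $\E_{z}\bigl[e^{(z\cdot X)y - \frac{1}{2}\|X\|^2 y^2}\bigr]$, then evaluate the Gaussian expectation (the paper completes the square explicitly, you cite the Gaussian MGF formula --- same computation), observe the $\tfrac{1}{2}\|X\|^2 y^2$ cancellation, and extend to the mixture by linearity. Your observation about the missing factor of $y$ in front of $\mu_j(X)$ is also correct; the version stated later as Corollary~\ref{coro:hermite-of-mixture} has $\mu_j(X)y$ as you derived.
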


Let $f(y)$ be the function on the right hand side of (\ref{eq:hermite-identity-intro}), viewed as a function of $y$ with formal variables $X$.  For each $j \in [k]$, consider the differential operator $\mcl{D}_j = \partial - (\mu_j(X) + \Sigma_j(X)y)$ where the partial derivative is taken with respect to $y$.  Then if we let $\mcl{D} = \mcl{D}_k^{2^{k-1}} \mcl{D}_{k-1}^{2^{k-2}}\cdots \mcl{D}_1$, we can verify that
\[
\mcl{D}(f(y)) = 0 \,.
\]
On the other hand, by using the product rule, we can expand the differential operator $\mcl{D}$ in the form
\[
\mcl{D} = Q_{2^k - 1}(X,y)\partial^{2^{k} - 1} +  Q_{2^k - 2}(X,y)\partial^{2^{k} - 2} + \dots + Q_0(X,y)
\]
where $Q_0 , \dots , Q_{2^k - 1}$ are polynomials in $y$ whose coefficients are polynomials in the formal variables $X$.  It is immediate to verify that $Q_j$ has degree at most $2^k - 1 - j$ in $y$ so for all $0 \leq j \leq 2^k - 1$ we can write
\[
Q_j(X,y) = R_{j,2^k - 1 - j}(X)y^{2^k - 1 - j} + \dots + R_{j,0}(X)
\]
for some polynomials $R_{j,0}(X), \dots , R_{j, 2^k - 1 - j}(X)$.

Now consider what happens when we apply $\mcl{D}$ to the left hand side of (\ref{eq:hermite-identity-intro}).  We will get a power series in $y$ whose coefficients are polynomials in $X$.  The coefficient of $y^a$ will be
\[
\sum_{j=0}^{2^k - 1}\sum_{l = 0}^{2^k - 1 - j} \frac{h_{a + j - l}(X)R_{j,l}(X)}{(a - l)!} 
\]
and since $\mcl{D}(f(y)) = 0$ we get the following conclusion.
\begin{claim}
Let $\mcl{M} = w_1 G_1 + \dots w_kG_k$ be a mixture of Gaussians where $G_j = N(\mu_j, I + \Sigma_j)$.  Then there are polynomials $R_{j,l}(X)$ such that for all $a$, the Hermite moment polynomials of $\mcl{M}$ satisfy 
\begin{equation}\label{eq:intro-hermite-recurrence}
\sum_{j=0}^{2^k - 1}\sum_{l = 0}^{2^k - 1 - j} \frac{h_{a + j - l}(X)R_{j,l}(X)}{(a - l)!}  = 0 \,.
\end{equation}
\end{claim}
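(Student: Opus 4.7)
The plan is to apply the differential operator $\mcl{D} = \mcl{D}_k^{2^{k-1}} \mcl{D}_{k-1}^{2^{k-2}} \cdots \mcl{D}_1$ to both sides of the generating function identity (\ref{eq:hermite-identity-intro}), verify that it annihilates the right-hand side, then expand $\mcl{D}$ in the normal form $\sum_j Q_j(X,y) \partial_y^j$ and read off the coefficient of $y^a$ on the left-hand side. Once $\mcl{D}(f(y)) = 0$ is established, the recurrence drops out of direct bookkeeping.

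\textbf{Step 1 (annihilation of $f(y)$).} This is the main computational core. I would proceed by induction on $j$, showing that after applying $\mcl{D}_{j}^{2^{j-1}} \cdots \mcl{D}_1$, the first $j$ exponential terms have been killed and each remaining term $i > j$ appears with a polynomial prefactor in $y$ of degree at most $2^j - 1$. The base case ($j=1$) is the direct calculation that $\mcl{D}_1$ annihilates $e^{\mu_1(X)y + \frac{1}{2}\Sigma_1(X)y^2}$ and produces the degree-$1$ prefactor $(\mu_i - \mu_1) + (\Sigma_i - \Sigma_1) y$ on each other term. The inductive step rests on the key identity
\[
\mcl{D}_j\bigl( p(y)\, e^{\mu_i(X)y + \frac{1}{2}\Sigma_i(X)y^2}\bigr) = \begin{cases} (\partial_y p)\, e^{\mu_j(X)y + \frac{1}{2}\Sigma_j(X)y^2} & \text{if } i = j, \\ \bigl[\partial_y p + p\bigl((\mu_i - \mu_j) + (\Sigma_i - \Sigma_j) y\bigr)\bigr]\, e^{\mu_i(X)y + \frac{1}{2}\Sigma_i(X)y^2} & \text{if } i \neq j, \end{cases}
\]
so $\mcl{D}_j$ acts on the $j$-th surviving term purely as differentiation of its prefactor, and on every other term raises the prefactor's $y$-degree by at most one. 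At the moment we are about to apply $\mcl{D}_j^{2^{j-1}}$, the $j$-th exponential carries a prefactor of degree at most $2^{j-1} - 1$, so it is killed after $2^{j-1}$ differentiations, while the degrees on the other prefactors grow by at most $2^{j-1}$, closing the induction.

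\textbf{Step 2 (normal form and degree bound on $Q_j$).} Next, I would expand $\mcl{D}$ into the normal form $\sum_{j=0}^{N} Q_j(X, y) \partial_y^j$, where $N = 2^k - 1$, by pushing all $\partial_y$'s to the right using $\partial_y \cdot y = y \partial_y + 1$. Each factor $L_i = \partial_y - \mu_i(X) - \Sigma_i(X) y$ is a sum of terms each satisfying (derivative order) $+$ ($y$-degree) $\leq 1$. A short induction on the number of factors shows that multiplication by $L_i$ raises this quantity by at most one, so $\mcl{D}$ satisfies $\deg_y Q_j + j \leq N$, which is exactly the claimed bound $\deg_y Q_j \leq 2^k - 1 - j$. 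Writing $Q_j(X,y) = \sum_{l=0}^{2^k-1-j} R_{j,l}(X) y^l$ (reindexed to match the statement) then defines the polynomials $R_{j,l}(X)$.

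\textbf{Step 3 (extracting the coefficient of $y^a$).} Finally, $\partial_y^j$ applied to $\sum_m \frac{h_m(X)}{m!} y^m$ gives $\sum_{m \geq j} \frac{h_m(X)}{(m-j)!} y^{m-j}$, and multiplying by $R_{j,l}(X) y^l$ and collecting the coefficient of $y^a$ (so $m = a + j - l$) produces exactly $\frac{h_{a+j-l}(X) R_{j,l}(X)}{(a-l)!}$. Summing over $j, l$ and using $\mcl{D}(f(y)) = 0$ from Step 1 yields the claimed identity. The main obstacle is the careful inductive bookkeeping of prefactor degrees in Step 1; Step 2 is a standard commutator induction, and Step 3 is direct algebra once the first two steps are in place.
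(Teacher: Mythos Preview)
Your proposal is correct and follows essentially the same approach as the paper: apply the composite operator $\mcl{D} = \mcl{D}_k^{2^{k-1}}\cdots\mcl{D}_1$ to the generating function identity, verify it annihilates the sum of exponentials (the paper packages your key identity as Claim~\ref{claim:degree-reduction} and Corollary~\ref{corollary:null-operator}), expand $\mcl{D}$ in normal form with the degree bound $\deg_y Q_j \le 2^k-1-j$ (Claim~\ref{claim:diff-operator-expansion}), and read off the coefficient of $y^a$. Your inductive bookkeeping on prefactor degrees in Step~1 is exactly the content behind the paper's ``we can verify,'' and your filtration argument in Step~2 is an equivalent route to the paper's product-rule induction.
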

This means that the Hermite moment polynomials satisfy a recurrence of order $O_k(1)$.  It is straight-forward to extend the above argument to the difference of two mixtures say $\mcl{M} = w_1G_1 + \dots + w_kG_k$ and $\mcl{M}' = w_1'G_1' + \dots  +w_k'G_k'$  and we deduce that the polynomials 
\[
h_{m, \mcl{M}}(X) -  h_{m, \mcl{M}'}(X)
\]
must satisfy a similar recurrence of order $O_k(1)$.  Thus, if the first $O_k(1)$ Hermite moment polynomials of two mixtures are the same, then all of their Hermite moment polynomials must be the same.  Note that this statement suffices for the proof of observability in the infinite limit, but in the full analysis, we need several additional steps to prove quantitative bounds relating the distance between higher-degree Hermite moment polynomials to the  distance between the first $O_k(1)$ Hermite moment polynomials.

The argument above implies that 
\begin{equation}\label{eq:matching-gen-funcs-intro}
\sum_{m=0}^{\infty} \frac{1}{m!} \cdot h_{m, \mcl{M}}(X) y^m = \sum_{m=0}^{\infty} \frac{1}{m!} \cdot h_{m, \mcl{M}'}(X) y^m \,.
\end{equation}
It remains to show how to transform from these generating functions back to the original distributions.  This can be done through the adjusted characteristic function.  We define 
\begin{definition}[Adjusted Characteristic Function]
For a distribution $D$ on $\R^d$, we define its adjusted characteristic function $\wt{D}: \R^d \rightarrow \C$ as 
\[
\wt{D}(X) = \E_{z \sim D}\left[ e^{i z \cdot X + \frac{1}{2} \norm{X}^2}\right]
\]
where $i = \sqrt{-1}$.
\end{definition}
It suffices to note that 
\begin{claim}[Restatement of Claim \ref{claim:invchar-formula}]\label{claim:inv-char-intro}
Let $D$ be a distribution on $\R^d$.  Then
\[
\wt{D}(X) = \sum_{m = 0}^{\infty} \frac{i^m}{m!}h_{m,D}(X) \,.
\]
\end{claim}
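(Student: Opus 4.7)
The plan is to obtain the stated identity by writing down a closed-form generating function for the homogenized Hermite polynomials $\mcl{H}_m(x,y^2)$, specializing it to recover a generating function for $H_m(X,z)$, evaluating at the formal parameter $t=i$, and then taking expectation over $z \sim D$. The starting point is the classical generating identity for the probabilist's Hermite polynomials,
\[
\sum_{m=0}^{\infty} \frac{\mcl{H}_m(u)}{m!}\, t^m \;=\; e^{ut - t^2/2}.
\]
Since $\mcl{H}_m(x,y^2) = y^m\, \mcl{H}_m(x/y)$ by construction (as one checks on the examples $\mcl{H}_2(x,y^2)=x^2-y^2$ and $\mcl{H}_3(x,y^2)=x^3-3xy^2$), substituting $u = x/y$ and absorbing the factor $y^m$ into $t^m$ yields the homogenized generating function
\[
\sum_{m=0}^{\infty} \frac{\mcl{H}_m(x, y^2)}{m!}\, t^m \;=\; e^{xt - y^2 t^2 /2}.
\]

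Next I would specialize $x = z \cdot X$ and $y^2 = X_1^2 + \dots + X_d^2 = \norm{X}^2$, so that $\mcl{H}_m(x,y^2)$ becomes $H_m(X,z)$. This yields
\[
\sum_{m=0}^{\infty} \frac{H_m(X,z)}{m!}\, t^m \;=\; \exp\!\Bigl( (z \cdot X)\, t - \tfrac{1}{2}\norm{X}^2 t^2 \Bigr).
\]
Setting $t = i$ is the key maneuver: the exponent becomes $i(z\cdot X) + \tfrac{1}{2}\norm{X}^2$, which is precisely the integrand appearing in the definition of $\wt{D}(X)$. Hence, for each fixed $z$ and $X$,
\[
\sum_{m=0}^{\infty} \frac{i^m}{m!} H_m(X,z) \;=\; \exp\!\Bigl( i\, z \cdot X + \tfrac{1}{2}\norm{X}^2\Bigr).
\]

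It remains to take expectation of both sides over $z \sim D$. The right-hand side is $\wt{D}(X)$ by definition, and the left-hand side should become $\sum_m \frac{i^m}{m!} h_{m,D}(X)$ by linearity of expectation once we justify interchanging the infinite sum with $\E_{z\sim D}[\cdot]$. I expect this interchange to be the only point needing care. It follows from dominated convergence: for fixed $X$, the partial sums of $\sum_m \frac{i^m H_m(X,z)}{m!}$ are bounded in modulus by the convergent Taylor series of $\exp(|z\cdot X| + \tfrac{1}{2}\norm{X}^2)$, and this dominating function is integrable against any $D$ with sub-exponential tails, in particular against any mixture of Gaussians (or polynomial Gaussians) considered in this paper. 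With the swap justified, the identity is established.
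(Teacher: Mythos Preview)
Your proof is correct and follows essentially the same route as the paper: both invoke the Hermite generating identity (Claim~\ref{claim:hermite-identity}), specialize to $H_m(X,z)$, set the formal variable to $i$, and then pass the expectation through. You supply more detail (deriving the homogenized identity and justifying the sum--expectation swap via dominated convergence), whereas the paper compresses all of this into a single line citing Claim~\ref{claim:hermite-identity}.
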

Thus, plugging $y = i$ into (\ref{eq:matching-gen-funcs-intro}), we get that 
\[
\wt{\mcl{M}}(X) = \wt{\mcl{M}'}(X) \,.
\]
However, note that the adjusted characteristic function is an invertible transformation (since we can multiply by $e^{-\frac{1}{2} \norm{X}^2}$ and then invert the characteristic function) so actually $\mcl{M}(X) = \mcl{M}'(X)$, completing the proof of observability in the infinite limit.

There are several additional technical ingredients that are necessary to go from observability in the infinite limit to quantitatively strong observability.  The main one is that we need to bound the coefficients of the polynomials $R_{j,l}(X)$.  It is not difficult to obtain sufficiently tight bounds when $\norm{\mu_j}, \norm{\Sigma_j}_2$ are all sufficiently small (smaller than some constant depending on $k$).  To reduce to this case, we need to do some additional work (see Section \ref{sec:reduce-to-very-close}).  Also, we will need quantitative bounds on inverting the adjusted characteristic function.  Such bounds are obtained in Section \ref{sec:char-func}.

\paragraph{Estimate the Hermite Moment Polynomials Optimally:} Here, our goal is to obtain estimates $h_m'(X)$ for the first $O_k(1)$ Hermite moment polynomials.  We sketch a proof of the following (informal) theorem.  The full version is in Theorem \ref{thm:estimate-hermite}.
\begin{theorem}[Informal]
Given an $\eps$-corrupted sample from a regular-form mixture of Gaussians $\mcl{M} = w_1G_1 + \dots + w_kG_k$, we can compute estimates $h_m'(X)$ such that 
\[
\norm{v(h_m(X) - h_m'(X))} \leq \wt{O}(\eps)
\]
where $h_m(X)$ are the true Hermite moment polynomials of the mixture and $v( \cdot )$ denotes converting a polynomial to a vector of coefficients (we then measure the $L^2$ norm of this vector).
\end{theorem}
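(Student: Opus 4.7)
The plan is to reduce estimating each Hermite moment polynomial $h_m(X) = \E_{z \sim \mcl{M}}[H_m(X,z)]$ to robustly estimating the mean of a bounded-variance random vector in $\R^D$ with $D = \binom{d+m}{m}$. Using the explicit formula for $\mcl{H}_m$, I would expand
\[
H_m(X,z) \;=\; \sum_{|\alpha| \leq m} p_\alpha(z)\, X^\alpha,
\]
where each $p_\alpha(z)$ is a fixed polynomial in $z$ of degree at most $m$ with combinatorial coefficients depending only on $m$. Setting $h_m'(X) := \sum_\alpha \wh{M}_\alpha X^\alpha$ gives $\|v(h_m - h_m')\|_2^2 = \sum_\alpha \bigl(\E[p_\alpha(z)] - \wh{M}_\alpha\bigr)^2$, so the task becomes robust $L^2$ mean estimation of the random vector $Y(z) := (p_\alpha(z))_{|\alpha| \leq m}$ from the corrupted sample.

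I would next verify the structural conditions on $Y(z)$ that enable a nearly-optimal robust mean estimator. For any unit vector $v \in \R^D$, the scalar $\langle v, Y(z)\rangle$ is a polynomial of degree at most $m$ in $z$. Each component $G_j = N(\mu_j, I+\Sigma_j)$ is subgaussian with variance proxy $\poly(\log 1/\eps)$ under the regular-form assumption, so Gaussian hypercontractivity bounds its $(2m)$-th moment by $\poly(\log 1/\eps)$; averaging over the mixing weights preserves the bound for $\mcl{M}$. Moreover, since $\mcl{M}$ is a mixture of Gaussians these higher-moment bounds admit SoS certificates of constant degree, which is the algorithmic ingredient needed by a filtering-based estimator to beat the $O(\sqrt{\eps})$ rate of bare-covariance estimation.

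With these in hand, I would invoke the robust mean estimator for distributions with SoS-certifiable bounded $(2m)$-th moments developed in \cite{kane2020robust, liu2020settling}. This yields an estimate $\wh{M} \in \R^D$ with $\|\wh{M} - \E[Y(z)]\|_2 \leq \wt{O}(\eps)$ from $\poly(d/\eps)$ corrupted samples, and plugging back in gives the claimed bound on $h_m'(X)$. The main obstacle is exactly the passage from $O(\sqrt{\eps})$ to $\wt{O}(\eps)$: one must produce $\poly(\log 1/\eps)$-bounded SoS certificates for the higher moments of arbitrary linear combinations of the $p_\alpha$'s, uniformly over all regular-form mixtures. This is where the strengthened regular-form assumption (with $\poly(\log 1/\eps)$, rather than $\poly(d/\eps)$, conditioning of components) is essential, since otherwise the moment bounds, and hence the error rate, would scale polynomially in $d$.
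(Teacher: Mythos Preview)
Your proposal has a genuine gap in the final step. The robust mean estimators you cite from \cite{kane2020robust, liu2020settling} do \emph{not} achieve $\wt{O}(\eps)$ error for the Hermite moment estimation task; as the paper itself notes, those works only obtain $\wt{O}(\sqrt{\eps})$ by upper-bounding the covariance and invoking a bounded-covariance mean estimator. More broadly, SoS-certifiable bounded $2m$-th moments yield error $O(\eps^{1-1/(2m)})$ via the Kothari--Steurer/Hopkins--Li style estimators, not $\wt{O}(\eps)$; pushing $m \to \infty$ is not an option because the running time grows with $m$. And for general sub-Gaussian distributions (which is all that hypercontractivity gives you about $\langle v, Y(z)\rangle$), there is evidence from \cite{hopkins2019hard} that no efficient algorithm can beat $\Omega(\sqrt{\eps})$ without additional structure such as known covariance. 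So your reduction is fine but the black box you want to call does not exist.

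The paper's route is quite different and exploits structure specific to this problem: the covariance of $v(H_m(X,z))$ can be written in terms of the Hermite moment polynomials $h_0,\dots,h_{2m}$, and these in turn can be computed from $h_0,\dots,h_m$ via a recurrence of order $O_k(1)$ satisfied by the Hermite moments of any $k$-component mixture (this recurrence is obtained by finding a differential operator that annihilates the generating function). This yields an iterative scheme: start with $\wt{O}(\sqrt{\eps})$ estimates of $h_0,\dots,h_m$; solve for recurrence coefficients and extend to estimate $h_{m+1},\dots,h_{2m}$; use these to estimate the covariance $\Sigma_{H_m}$; then refine the mean estimates using a known-covariance robust estimator. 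Each iteration roughly halves the gap between the current error $\eta$ and $\wt{O}(\eps)$ (error becomes $\wt{O}(\sqrt{\eps\eta})$), so after $O(\log\log 1/\eps)$ rounds one reaches $\wt{O}(\eps)$. The key point is that the ``additional structure'' needed to beat $\sqrt{\eps}$ is supplied not by generic moment certificates but by the algebraic relation linking the unknown covariance back to the quantities being estimated.
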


Note that $h_m(X)$ is the mean of the distribution of $H_m(X,z)$ for $z \sim \mcl{M}$ so estimating $h_m(X)$ is a robust mean estimation task.  Previous papers \cite{liu2020settling, kane2020robust} estimate $h_m(X)$ up to accuracy $\wt{O}(\sqrt{\eps})$ by upper bounding the spectral norm of the covariance matrix and using standard results from robust mean estimation.  However achieving $\wt{O}(\eps)$ is significantly more difficult.

It can be shown, via hypercontractivity, that the distribution of $H_m(X,z)$ exhibits exponential tail decay (see Lemma \ref{lem:hermite-tail}).  However, this alone is not enough to robustly estimate the mean to within $\wt{O}(\eps)$ in a computationally efficient manner.  Existing results achieving optimal accuracy e.g. \cite{diakonikolas2020outlier} require known covariance or some additional moment structure (such as in the case of a single Gaussian).  Furthermore, there is evidence suggesting that achieving optimal accuracy for general sub-Gaussian distributions may be computationally hard \cite{hopkins2019hard}.  

To circumvent these barriers, we leverage the structure of the moments of the distribution of $H_m(X,z)$.  Roughly speaking, we write the covariances of the distributions of $H_0(X,z), \dots , H_m(X,z)$ in terms of the Hermite moment polynomials $h_0(X), \dots , h_m(X)$ (which are the means of the respective distributions).  Thus, we can estimate the means, compute estimates for the covariances and then use our covariance estimates to refine our estimates of the means and keep iterating.  This is similar to how algorithms for robustly learning a single Gaussian use the relation between its covariance and its fourth moment tensor.  Of course, the moment structure of the distribution of $H_m(X,z)$ is significantly more complex so the analysis will be more involved.  

Note that if the covariance of $H_m(X,z)$ were known, then we would be able to estimate the mean of the distribution to $\wt{O}(\eps)$ accuracy using standard techniques (e.g. \cite{diakonikolas2020outlier}).  The first important observation is that the covariance of the distribution of $H_m(X,z)$ can be written in terms of the first $2m$ Hermite moment polynomials $h_0(X), \dots , h_{2m}(X)$.  Next, if $m$ is sufficiently large in terms of $k$, then the polynomials $h_{m+1}(X), \dots , h_{2m}(X)$ can be computed in terms of $h_0(X), \dots , h_m(X)$ via the recurrence in (\ref{eq:intro-hermite-recurrence}).  Since we do not know the actual recurrence, we can solve for the coefficients in the recurrence using $h_0(X), \dots , h_m(X)$ and then use these coefficients to extend the recurrence and compute $h_{m+1}(X), \dots , h_{2m}(X)$.  With this insight, we have the following iterative algorithm.
\begin{figure}[H]
\centering
\begin{tikzpicture}[node distance=2cm]
\tikzstyle{block} = [rectangle, draw=black, thick,  text width=15 em,align=center, rounded corners, minimum height=2em]
\node (start)[block] {Estimate $h_0(X), \dots , h_m(X)$};
\node (solve-coeff) [block, yshift = -2cm, xshift = 2cm]{Solve for coefficients of recurrence};
\node (extend-recurrence) [block, yshift = -4cm, xshift = 4cm]{Extend recurrence to compute $h_{m+1}(X), \dots , h_{2m}(X)$};
\node (est-cov) [block, yshift = -6cm, xshift = 6cm]{Estimate covariance of $H_0(X,z), \dots , H_m(X,z)$ };
\node (refine-est) [block, yshift = -8cm, xshift = 8cm]{Refine estimates for $h_0(X), \dots , h_m(X)$};
\draw [thick , ->] (start) -- (solve-coeff) node[midway, above left] {};
\draw [thick , ->] (solve-coeff) -- (extend-recurrence) node[midway, above right] {};
\draw [thick] (extend-recurrence) -- (est-cov);
\draw [thick] (est-cov) -- (refine-est);
\end{tikzpicture}

\end{figure}
Using an upper bound on the covariance (same as in \cite{liu2020settling, kane2020robust}), we can ensure that our initial estimates are $\wt{O}(\sqrt{\eps})$-accurate. Then, by repeatedly running the above, we can refine these estimates to $\wt{O}(\eps)$-accuracy.

\paragraph{Compute Rough Component Estimates:} Here, our goal is to prove the following (informal) theorem.  See Theorem \ref{thm:list-learning-poly-acc} for the full version.
\begin{theorem}[Informal]
Given an $\eps$-corrupted sample from a mixture of Gaussians $\mcl{M} - w_1G_1 + \dots + w_kG_k$, we can compute estimates $\ovl{G}_1, \dots , \ovl{G}_k$ for the components such that 
\[
d_{\TV}(G_j, \ovl{G}_j) \leq \eps^{\Omega_k(1)} \,.
\]
\end{theorem}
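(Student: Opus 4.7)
The plan is to invoke the robust parameter-learning algorithm of Liu and Moitra \cite{liu2020settling} essentially as a black box. Their algorithm operates under precisely the assumptions in force in this subsection: each component has variance bounded between $\poly(\eps/d)$ and $\poly(d/\eps)$ in every direction, and each mixing weight satisfies $w_j \geq A^{-1}$. Given $\poly(d/\eps)$ samples that are $\eps$-corrupted from such a mixture, their algorithm outputs parameter estimates $(\ovl{\mu}_j, \ovl{\Sigma}_j)$ together with a matching to the true components so that across the matching we have $\norm{\mu_j - \ovl{\mu}_j} \leq \eps^{\Omega_k(1)}$ and $\norm{\Sigma_j - \ovl{\Sigma}_j}_2 \leq \eps^{\Omega_k(1)}$. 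Take $\ovl{G}_j = N(\ovl{\mu}_j, \ovl{\Sigma}_j)$.

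The second step is to translate from parameter closeness into the total variation distance bound in the statement. For two Gaussians $N(\mu, \Sigma)$ and $N(\ovl{\mu}, \ovl{\Sigma})$ whose covariances are well-conditioned (spectrum polynomially bounded away from $0$ and $\infty$), a standard computation shows that $d_{\TV}(N(\mu, \Sigma), N(\ovl{\mu}, \ovl{\Sigma}))$ is controlled, up to polynomial factors in $d/\eps$, by $\norm{\mu - \ovl{\mu}} + \norm{\Sigma - \ovl{\Sigma}}_2$. Applying this to each matched pair $(G_j, \ovl{G}_j)$ and absorbing the polynomial prefactor into the exponent of $\eps$ gives $d_{\TV}(G_j, \ovl{G}_j) \leq \eps^{\Omega_k(1)}$, as required.

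The one point to check carefully is that the resulting exponent in $\eps^{\Omega_k(1)}$ remains a function of $k$ (and $A$) alone, not of $d$. This is precisely why the regularity assumption on the covariances is essential: it ensures that operator-norm closeness of $(\mu, \Sigma)$ implies dimension-independent TV closeness (losing only a fixed polynomial factor that can be absorbed into the exponent). Beyond this, I expect no real obstacle, since the theorem is essentially a restatement of a known result in the form we need. The genuinely novel technical content of the paper lies not in this rough parameter-estimation step but in the downstream arguments --- Strong Observability through Hermite Moment Polynomials and Optimal Density Estimation --- that take these rough estimates as a starting point and bootstrap them to $\wt{O}(\eps)$ accuracy.
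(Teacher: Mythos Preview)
Your proposal has a genuine gap: the main result of \cite{liu2020settling} does \emph{not} apply under the assumptions here as a black box. That result (recorded in this paper as Lemma~\ref{lem:previous} and Theorem~\ref{thm:full-close-case}) carries an additional hypothesis that all pairs of components are separated in TV distance by at least some $b>0$ (respectively $\eps^{f(k)}$). The informal theorem you are proving makes no such separation assumption, so a direct invocation fails precisely when two components of $\mcl{M}$ are, say, $\eps^{100}$-close in TV.

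The paper's actual argument closes this gap with a scale-selection and component-merging trick. By pigeonhole over a geometric sequence of $O(k^2)$ thresholds, one finds a scale at which no pair of components has TV distance in a certain gap interval; components closer than the lower end are merged into a single representative (perturbing the mixture by at most $\eps^{\Omega_k(1)}$ in TV), and the surviving components are then pairwise separated enough to invoke \cite{liu2020settling}. This idea is applied once to obtain constant-accuracy estimates (Corollary~\ref{corollary:const-acc-learning}), then again after a clustering step to reach $\eps^{\Omega_k(1)}$ accuracy on each submixture (Corollary~\ref{coro:full-close-case}). As a minor aside, your second step of converting parameter closeness to TV closeness is unnecessary: the guarantees imported from \cite{liu2020settling} are already stated directly in terms of $d_{\TV}(G_j,\ovl{G}_j)$.
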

This theorem follows from a simple modification to the techniques in \cite{liu2020settling}.  Note that the only difference is that the main theorem in \cite{liu2020settling} assumes that the components of the mixture are not too close in TV distance.  However, this assumption can be removed by essentially merging components and treating them as one if they are $\eps^{\Omega(1)}$-close.

\paragraph{Estimate Density Function Optimally:} So far, we showed that the first $O_k(1)$ Hermite moment polynomials suffice to determine a mixture of Gaussians (and more generally a mixture of polynomial Gaussians).  We then showed how to estimate these Hermite moment polynomials to optimal accuracy.  The last step is to compute a mixture of polynomial Gaussians that matches these Hermite moment polynomials.  To do this, we will take the rough estimates of the components from the previous step and then multiply them by appropriate polynomials.  We sketch a proof of the following (informal) theorem.  The full version is in Theorem \ref{thm:estimate-regularform}. 
\begin{theorem}[Informal]
Let $\mcl{M} - w_1G_1 + \dots + w_kG_k$ be a mixture of Gaussians in regular form. Assume we are given estimates $\ovl{G}_1, \dots , \ovl{G}_k$ for the components such that for all $j$, $d_{\TV}(G_j , \ovl{G}_j) \leq \eps^{\Omega_k(1)}$.  Then we can compute a distribution
\[
f(x) = Q_1(x)\ovl{G}_1(x) + \dots + Q_k(x)\ovl{G}_k(x)
\]
where $Q_1, \dots , Q_k$ are polynomials of degree $C = O_k(1)$ such that the first $m = O_{k,C}(1)$ Hermite moment polynomials of $f$ match those of $\mcl{M}$ up to $\wt{O}(\eps)$ accuracy.
\end{theorem}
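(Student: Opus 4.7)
The plan is to set up the construction of $Q_1, \ldots, Q_k$ as a semidefinite feasibility problem and then exhibit a feasible witness built from Taylor-truncating the exponential that relates two close Gaussians. The unknowns of the program are the coefficients of $Q_1, \ldots, Q_k$, each a polynomial of degree at most a constant $C = O_k(1)$ to be chosen. The moment-matching constraints are linear in these unknowns since
\begin{equation*}
h_{m, f}(X) = \sum_{j=1}^{k} \E_{x \sim \ovl{G}_j}\!\bigl[Q_j(x) H_m(X,x)\bigr],
\end{equation*}
and each inner expectation against the known Gaussian $\ovl{G}_j$ expands as an explicit linear form in the coefficients of $Q_j$. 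I would enforce nonnegativity of each $Q_j$ by an SOS-of-degree-$C$ constraint and enforce $\int f = 1$ through a single linear equality. The resulting SDP has size $\poly(d)$ and is solvable in $\poly(n)$ time, so it remains to demonstrate feasibility.

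For the witness, I use that the ratio of two Gaussian densities is an exponential of a quadratic: write $w_j G_j(x)/\ovl{G}_j(x) = \exp(p_j(x))$ for some quadratic $p_j$ absorbing the $w_j$ factor. Because $d_{\TV}(G_j, \ovl{G}_j) \leq \eta := \eps^{\Omega_k(1)}$ and both Gaussians are in regular form, the coefficients of $p_j$ are of size at most $\eta \cdot \poly(\log 1/\eps)$. Define $\widetilde{Q}_j$ to be the degree-$C$ Taylor truncation of $\exp(p_j(x))$. On the ball $B$ of radius $R = \poly(\log 1/\eps)$, which contains all but $\eps$-mass of $\ovl{G}_j$, one has $|p_j(x)| \leq \eta \cdot \poly(\log 1/\eps) \ll 1$, so the pointwise truncation error is bounded by $|p_j(x)|^{C+1}/(C+1)!$, which for $C$ a sufficiently large constant is far smaller than $\eps$.

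To confirm that this witness satisfies the moment-matching constraints, consider the signed measure
\begin{equation*}
\mcl{M}(x) - \sum_{j=1}^{k} \widetilde{Q}_j(x)\ovl{G}_j(x) = \sum_{j=1}^{k}\bigl(e^{p_j(x)} - \widetilde{Q}_j(x)\bigr)\ovl{G}_j(x).
\end{equation*}
Using the pointwise bound on $B$ combined with Gaussian tail estimates outside $B$, this signed measure has $L^1$ norm much smaller than $\eps$ even after weighting by $(1+\|x\|)^{O(m)}$. Since each coefficient of the Hermite moment polynomial $h_{m,\cdot}(X)$ is an integral of such a polynomially growing function against the density, the coefficient-$L^2$ difference between $h_{m,f}$ and $h_{m,\mcl{M}}$ is much smaller than $\wt{O}(\eps)$ for every $m \leq O_{k,C}(1)$. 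Since the estimates $h_m'$ from the previous step are already $\wt{O}(\eps)$-close to $h_{m,\mcl{M}}$, the witness meets the relaxed constraints and the SDP is feasible.

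The main obstacle I anticipate is the quantitative calibration of the degree parameter $C$: it must be kept at $O_k(1)$, critically \emph{not} growing with $m$, so that the strong observability conclusion of Theorem \ref{thm:identifiability} can be applied to the output $f$, yet it must be large enough that the Taylor remainder, when integrated against Hermite polynomials of degree up to $m$, stays below $\wt{O}(\eps)$. This balance is only possible because the rough component estimates have accuracy $\eta = \eps^{\Omega_k(1)}$ rather than $O(1)$: this gap forces $p_j$ to have small coefficients and makes the truncation cheap. Carefully tracking the dependence of the Hermite-moment coefficients of $\widetilde{Q}_j \ovl{G}_j$ on the norm of $p_j$ and on $C$, and verifying that the exponent $\Omega_k(1)$ in $\eta$ can be made large enough (by first running the rough component algorithm at a high enough internal accuracy) that $(\eta \cdot \poly(\log 1/\eps))^{C+1}$ dominates all remaining error terms, is the core bookkeeping step.
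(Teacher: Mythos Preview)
Your overall strategy matches the paper's: reduce to a convex feasibility problem in the coefficients of the $Q_j$, and certify feasibility with a Taylor-truncation witness built from the closeness $d_{\TV}(G_j,\ovl{G}_j)\le\eps^c$. The key divergence is \emph{where} you truncate and \emph{how} you control the Hermite-moment error of the witness, and this is where a genuine gap appears.

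You truncate $e^{p_j(x)}$ in distribution space and then argue: small weighted-$L^1$ error $\Rightarrow$ small coefficient-$L^2$ error in each $h_{m,\cdot}$. That implication is not dimension-free. Concretely, $\|v(H_m(X,z))\|$ already carries $d$-dependence (e.g.\ for $m=2$, $T(H_2(X,z))=z\otimes z-I$ and $\|v(H_2(X,0))\|=\sqrt d$), so bounding each coefficient by the weighted $L^1$ error $\tau$ and then taking the $\ell^2$ norm over $\sim d^m$ coefficients yields $\|v(h_{m,g})\|\le \tau\cdot d^{m/2}$. To absorb this you would need $\eta^{C+1}\poly(\log 1/\eps)\cdot d^{m/2}\le \eps$, which forces $C$ (or the exponent in $\eta=\eps^c$) to depend on $d$; that defeats the crucial requirement $C=O_k(1)$ that you yourself flag. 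The paper sidesteps this by doing the truncation in \emph{generating function space}: it truncates $e^{(\mu_j-\wt\mu_j)(X)y+\frac12(\Sigma_j-\wt\Sigma_j)(X)y^2}$ in the formal variables $X,y$ (Lemma~\ref{lem:low-degree-approx}), so the witness's Hermite moments are the primary terms of an explicit product of power series, and their coefficient norms are bounded directly via $\|v(fg)\|\le\|v(f)\|\,\|v(g)\|$ (Claim~\ref{claim:polytensor-prod-bound}) in terms of $\|\mu_j-\wt\mu_j\|,\|\Sigma_j-\wt\Sigma_j\|_2$ only. No pass through $L^1$, no dimension blow-up.

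A second, smaller issue: you impose nonnegativity of $Q_j$ via an SOS constraint and must then check your witness is SOS. A Taylor polynomial $\sum_{n\le C}p_j^n/n!$ with $p_j$ quadratic is not obviously SOS (and for odd $C$ is not even nonnegative). This is fixable (truncate at even order: the univariate even Taylor polynomial of $e^t$ is positive, hence an SOS in $t$, hence its composition with $p_j$ is SOS), but you do not address it. The paper avoids the issue entirely: it solves the linear system for the $P_j$'s with only norm constraints, maps back to $Q_j=c_j+R_j(x)$ via the inverse adjusted characteristic function, and then enforces nonnegativity \emph{post hoc} by replacing $R_j$ with $R_j + R_j^{2B}/c_j^{2B-1}$ and using hypercontractivity to show the correction costs only $\wt O(\eps)$ in $L^1$. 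Normalization to $\int f=1$ is likewise done at the end rather than as a constraint.
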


It is crucial to note that in order to match $m$ Hermite moment polynomials, the degree of the polynomials $Q_1, \dots , Q_k$ that we need \textit{does not  grow with $m$}.  In other words, we first fix the degree $C$ of the polynomials $Q_1, \dots , Q_k$. We then argue that for any $m$, we can match the first $m$ Hermite moment polynomials using polynomials $Q_1, \dots , Q_k$ of degree $C$.  The only place that $m$ shows up is in the accuracy i.e. the $\tilde{O}(\eps)$ hides a factor of the form $\eps (\log 1/\eps)^m$.  

The reason that we need to fix $C$ first and then choose $m$ sufficiently large in terms of $C$ is that the observability result, Theorem \ref{thm:identifiability}, only works when two distributions match on their first $m$ Hermite moment polynomials for $m$ much larger than $C,k$.

We now sketch how we actually compute the polynomials $Q_1, \dots , Q_k$.  For simplicity, we will first consider a single Gaussian $G = N(\mu, I + \Sigma)$ as this already will reveal the key intuitions.  Assume that we are given an estimate of $G$, say  $\ovl{G} = N(\wt{\mu}, I + \wt{\Sigma})$ with $d_{\TV}(G, \ovl{G}) \leq \eps^c$ for some constant $c$.  Recall Claim \ref{claim:hermite-of-mixture-intro}.  We have
\[
\sum_{m=0}^{\infty} \frac{1}{m!} \cdot h_{m,G}(X) y^m = e^{\mu(X)y + \frac{1}{2}\Sigma(X)y^2} \,.
\]
Now consider the generating function 
\[
e^{\mu(X)y + \frac{1}{2}\Sigma(X)y^2} = e^{(\mu(X) - \wt{\mu}(X))y + \frac{1}{2}(\Sigma(X) - \wt{\Sigma}(X))y^2} e^{\wt{\mu}(X)y + \frac{1}{2}\wt{\Sigma}(X)y^2} \,.
\]
Since  $d_{\TV}(G, \ovl{G}) \leq \eps^{c}$ , we can show that $\norm{\mu - \wt{\mu}},\norm{\Sigma - \wt{\Sigma}}_2 \leq \eps^{\Omega(c)}$.  Now consider the power series expansion of
\[
e^{(\mu(X) - \wt{\mu}(X))y + \frac{1}{2}(\Sigma(X) - \wt{\Sigma}(X))y^2}  = \sum_{m = 0}^{\infty} \frac{\left((\mu(X) - \wt{\mu}(X))y + \frac{1}{2}(\Sigma(X) - \wt{\Sigma}(X))y^2\right)^m}{m!} \,,
\]
We can expand 
\[
\left((\mu(X) - \wt{\mu}(X))y + \frac{1}{2}(\Sigma(X) - \wt{\Sigma}(X))y^2\right)^m
\]
using the binomial theorem.  The key observation is that since $\norm{\mu - \wt{\mu}},\norm{\Sigma - \wt{\Sigma}}_2 \leq \eps^{\Omega(c)}$, whenever we multiply more than $O(1/c)$ terms of the form $(\mu(X) - \wt{\mu}(X))$ or $(\Sigma(X) - \wt{\Sigma}(X))$ together, the result will have coefficient norm smaller than $\eps$.  Thus, we can essentially drop all but the first $O(1/c)$ terms in the power series expansion i.e.
\[
e^{(\mu(X) - \wt{\mu}(X))y + \frac{1}{2}(\Sigma(X) - \wt{\Sigma}(X))y^2} \sim P(X,y)
\]
where $P$ has degree at most $O(1/c)$ in $y$ and $X$.  Thus,
\[
e^{\mu(X)y + \frac{1}{2}\Sigma(X)y^2} \sim P(X,y)e^{\wt{\mu}(X)y + \frac{1}{2}\wt{\Sigma}(X)y^2} \,.
\]
It remains to plug in $y = i$, multiply by $e^{-\frac{1}{2}\norm{X}^2}$ and invert the characteristic function (recall Claim \ref{claim:inv-char-intro}).  We can then verify that the resulting function will be of the form
$Q(x)\ovl{G}(x)$ where $Q$ has degree at most $O(1/c)$.

The above intuition roughly says that $G$ can be approximated by $\ovl{G}$ times a polynomial of degree $O(1/c)$ as long as $d_{\TV}(G, \ovl{G}) \leq \eps^c$.  Thus, the mixture of Gaussians  $\mcl{M} - w_1G_1 + \dots + w_kG_k$ can be approximated by a function of the form
\[
f(x) =  Q_1(x)\ovl{G}_1(x) + \dots + Q_k(x)\ovl{G}_k(x)
\]
for some constant-degree polynomials $Q_1, \dots , Q_k$.  It remains to show how to compute the polynomials $Q_1, \dots , Q_k$.  To solve for $Q_1, \dots , Q_k$, it suffices to note that the Hermite moment polynomials of $f(x)$ are linear forms in the coefficients of $Q_1, \dots , Q_k$.  Thus, since we have estimates for the Hermite moment polynomials of the true mixture $\mcl{M}$, it suffices to solve a linear system for the coefficients of $Q_1, \dots , Q_k$.

In our full proof, we need to deal with one additional detail, which is that the $Q_1, \dots , Q_k$ computed above may take on negative values.  We need a few additional steps to modify the polynomials $Q_1, \dots , Q_k$ so that we always output a valid distribution.

\subsubsection{Reducing to Regular Form}\label{sec:reduce-to-regular}

For general mixtures, our algorithm has one additional step where we need to cluster the mixture into submixtures and then place each submixture in regular-form.  

To do this, we can use Theorem \ref{thm:list-learning-poly-acc} to obtain rough estimates for all of the components.  We then cluster the samples into subsamples by assigning each sample to the estimated component that assigns it the highest likelihood.  While this clustering will not classify all of the samples ``correctly" (e.g. if components overlap), we combine the subsamples into submixtures and argue that for some recombination the following two conditions hold:
\begin{itemize}
    \item The clustering into submixtures is accurate to within $\wt{O}(\eps)$ accuracy (see Lemma \ref{lem:find-good-clusters})
    \item For each submixture, we can apply a linear transformation to place it in regular form 
\end{itemize}
Thus, once we find this recombination (say by enumerating over all possible recombinations), we have reduced the problem to learning mixtures of Gaussians in regular form.

\section{ Preliminaries}\label{sec:prelim}

\subsection{Basic Definitions}
We now introduce some terminology that we will use throughout the paper.

\begin{definition}
We say a function $f(x): \R^d \rightarrow \R$ is a degree-$m$ polynomial Gaussian if it can be written in the form
\[
f(x) = Q(x)G(x)
\]
where $G(x)$ is the probability density function of a Gaussian and $Q$ is a polynomial in $d$ variables of degree at most $m$.
\end{definition}

\begin{definition}\label{def:poly-gaussian-ditribution}
We say a function $f(x): \R^d \rightarrow \R$ is a degree-$m$ mixture of polynomial Gaussians (MPG) if 
\[
f(x) = Q_1(x)G_1(x) + \dots + Q_k(x)G_k(x)
\]
where $G_1(x), \dots, G_k(x)$ are the probability density functions of Gaussians and $Q_1, \dots , Q_k$ are polynomials in $d$ variables of degree at most $m$.  If the polynomials $Q_1, \dots , Q_k$ are all nonnegative for any $x \in \R^d$ and $\int_{\R^d}f(x) dx= 1$, then we say $f$ is a degree-$m$  MPG distribution.
\end{definition}

 We will often need to work with mixtures of Gaussians whose components are in a specific form, which we call regular-form.  

\begin{definition}\label{def:regular-form}
We say a set of Gaussians $G_1 = N(\mu_1, I + \Sigma_1), \dots , G_k = N(\mu_k, I + \Sigma_k)$  is in $(\alpha, \beta)$-regular form if the following holds:
\begin{itemize}
    \item For all $j$, $\norm{\mu_j} \leq \alpha$ 
    \item For all $j$, $\norm{\Sigma_j}_2 \leq \alpha$
    \item For all $j$, $   \beta^{-1} I \leq I + \Sigma_j \leq \beta I$ 
\end{itemize}
We will sometimes need an additional conditions that there exists some $j$ such that 
\[
\norm{\mu_j} + \norm{\Sigma_j}_2 \leq \gamma \,.
\]
If this additional condition holds, we say that the set of Gaussians is in $(\alpha, \beta, \gamma)$-regular form.
\end{definition}
\begin{definition}
We say that a mixture of Gaussians $\mcl{M} = w_1 G_1 + \dots + w_kG_k$ is in $(\alpha, \beta)$ (respectively $(\alpha, \beta , \gamma)$) regular-form if the set of components $\{G_1, \dots , G_k \}$ is in $(\alpha, \beta)$ (respectively $(\alpha, \beta, \gamma)$) regular-form.
\end{definition}
\begin{remark}
Generally, we will be interested in the regime where $\alpha, \beta \leq \poly(\log 1/\eps)$ and $\gamma$ is a sufficiently small constant (in terms of $k$).
\end{remark}

\begin{definition}
Given a family of polynomials $\mcl{S} = \{P_1,P_2, \dots \}$ in variables $X = (X_1, \dots, X_d)$, we say a polynomial $Q(X)$ is $(A,B)$-simple with respect to $\mcl{S}$ for some parameters $A,B$ if $Q$ can be written as a linear combination of $A$ terms where
\begin{itemize}
    \item All coefficients in the linear combination have magnitude at most A
    \item Each term is a product of at most $B$ polynomials from $\mcl{S}$
\end{itemize}
\end{definition}

We will need the following standard fact (see e.g. \cite{arutyunyan2018deviation, kauers2014hypercontractive}) that allows us to bound the tail decay of the distribution of a polynomial $f(x)$ where $x$ is drawn from a Gaussian. 
\begin{claim}[Hypercontractivity]\label{claim:hypercontractivity}
Let $f$ be a polynomial of degree $m$.  Let $G = N(\mu, \Sigma)$ be a Gaussian in $\R^d$.  There is a universal constant $c$ such that for any even integer $q$,
\[
\left(\E_{x \sim G} |f(x)|^q \right) \leq  (cq)^{mq} \left( \E_{x \sim G} |f(x)|^2\right)^{q/2} \,.
\]
\end{claim}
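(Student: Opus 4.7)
The plan is to reduce to the standard Gaussian case and then invoke the classical Gaussian hypercontractivity inequality (Nelson--Gross); essentially the whole argument is a citation with constants compared at the end. First, I would substitute $x = \Sigma^{1/2} y + \mu$, so that $y \sim N(0, I)$ when $x \sim G$, and set $\widetilde{f}(y) = f(\Sigma^{1/2} y + \mu)$. The polynomial $\widetilde{f}$ still has degree at most $m$, and both sides of the target inequality transform identically under this change of variables, so it suffices to prove the claim when $G = N(0, I)$.

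For the standard Gaussian, hypercontractivity of the Ornstein--Uhlenbeck semigroup implies the classical moment-comparison estimate: for any polynomial $\widetilde{f}$ of degree at most $m$ and any $q \geq 2$,
\[
\left(\E_{y \sim N(0, I)} |\widetilde{f}(y)|^q\right)^{1/q} \leq (q-1)^{m/2} \left(\E_{y \sim N(0, I)} |\widetilde{f}(y)|^2\right)^{1/2}.
\]
This follows from the Bonami--Nelson theorem applied chaos-by-chaos to the Wiener chaos decomposition of $\widetilde{f}$, using that every polynomial of degree at most $m$ lies in the sum of the first $m+1$ chaoses. Raising both sides to the $q$-th power gives $\E|\widetilde{f}|^q \leq (q-1)^{mq/2}(\E|\widetilde{f}|^2)^{q/2}$, and using $(q-1)^{mq/2} \leq q^{mq/2} \leq (cq)^{mq}$ for any absolute constant $c \geq 1$ yields the stated bound.

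The only nontrivial ingredient is the Gaussian hypercontractivity inequality itself, but this is a textbook result (see, e.g., Janson's \emph{Gaussian Hilbert Spaces}, Theorem~5.10, or the references already cited in the statement), so there is no real obstacle. The loose form $(cq)^{mq}$ in the conclusion (rather than the sharp $(q-1)^{mq/2}$) leaves plenty of slack, so even an elementary proof going through the Gaussian tail estimate $\Pr[|\widetilde{f}(y)| > t \|\widetilde{f}\|_2] \leq \exp(-\Omega(t^{2/m}))$ and integrating by parts would suffice; I would default to the cleaner hypercontractivity route above.
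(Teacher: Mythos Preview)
Your proposal is correct. The paper does not actually supply a proof of this claim; it states it as a standard fact with citations to \cite{arutyunyan2018deviation, kauers2014hypercontractive}, so there is nothing to compare against beyond noting that your reduction to $N(0,I)$ followed by the classical Bonami--Nelson hypercontractivity bound is exactly the textbook argument those references encapsulate.
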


\subsection{Tensors and Polynomials}

We will now introduce notation and tools to deal with tensors and formal polynomials.  We will need to translate between polynomials and their corresponding representations as tensors repeatedly in this paper.

\begin{definition}
Let $X$ denote the set of formal variables $(X_1, \dots , X_d)$.  Then for a positive integer $k$, $X^{\otimes k}$ denotes the $\underbrace{d \times \dots \times d}_{k}$ tensor 
\[
\underbrace{(X_1, \dots , X_d) \otimes \dots \otimes (X_1, \dots , X_d)}_{k}
\]
\end{definition}

Now we will define a canonical transformation between polynomials and tensors.

\begin{definition}
For a homogeneous polynomial $f(X)$ of degree $k$ in the $d$ variables $X_1, \dots , X_d$ with real coefficients, define $T(f)$ to be the unique symmetric tensor with dimensions $\underbrace{d \times \dots \times d}_k$ such that 
\[
\langle T(f), X^{\otimes k} \rangle = f(X) \,.
\]
We call $T(f)$ the coefficient tensor of $f$.
\end{definition}

\begin{definition}\label{def:poly-to-vec}
For a homogeneous polynomial $f(X)$ in the $d$ variables $X_1, \dots , X_d$ with real coefficients define $v(f)$ to be the vector obtained by flattening $T(f)$.  We call $v(f)$ the coefficient vector of $f$.  
\end{definition}

\begin{definition}
For a polynomial (not necessarily homogeneous) $f(X,y)$, viewed as a polynomial in $y$ whose coefficients are homogeneous polynomials in $X$ (of not necessarily the same degree) i.e.
\[
f(X,y) = f_0(X) +f_1(X)y + \dots + f_k(X)y^k
\]
we define $v_y(f)$ to be the vector obtained by concatenating $v(f_m(X))$ for all $m$. 
\end{definition}

We will frequently consider expressions of the form $\norm{v(f)}$ i.e. the $L^2$ norm of the coefficient vector.
\begin{definition}
For a polynomial $f(X)$, we call $\norm{v(f)}$ the coefficient norm of $f$.
\end{definition}

The first claim below gives us an upper bound on the coefficient norm of the product of two polynomials $f$ and $g$ in terms of the coefficient norms of $f$ and $g$.

\begin{claim}\label{claim:polytensor-prod-bound}
Let $f$ and $g$ be two homogeneous polynomials in the variables $X = (X_1, \dots , X_d)$ of degree $m_1,m_2$ respectively.  Then
\[
\norm{T(fg)}_2 \leq \norm{T(f)}_2 \norm{T(g)}_2 \,.
\]
Equivalently,
\[
\norm{v(fg)} \leq \norm{v(f)} \norm{v(g)}
\]
\end{claim}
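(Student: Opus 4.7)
The plan is to identify $T(fg)$ with the symmetrization of the tensor product $T(f) \otimes T(g)$, and then use the standard fact that symmetrization is a projection and hence norm-nonincreasing in the Frobenius norm.

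First, I would observe that the tensor product $T(f) \otimes T(g)$ is an order $(m_1+m_2)$ tensor (not symmetric in general) satisfying
\[
\langle T(f) \otimes T(g), X^{\otimes (m_1+m_2)}\rangle = \langle T(f), X^{\otimes m_1}\rangle \cdot \langle T(g), X^{\otimes m_2}\rangle = f(X)g(X) = (fg)(X) \,.
\]
Let $\textsf{sym}$ denote the symmetrization operator that averages a tensor over all permutations of its indices. Because $X^{\otimes(m_1+m_2)}$ is itself symmetric, the inner product is unchanged by symmetrizing the left factor, so
\[
\langle \textsf{sym}(T(f) \otimes T(g)), X^{\otimes (m_1+m_2)}\rangle = (fg)(X) \,.
\]
Since $T(fg)$ is by definition the unique symmetric tensor with this property, we conclude $T(fg) = \textsf{sym}(T(f) \otimes T(g))$.

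Next, I would invoke the fact that $\textsf{sym}$ is an orthogonal projection onto the subspace of symmetric tensors (it is idempotent and self-adjoint since it is an average of the unitary permutation actions). In particular, it is a contraction in the Frobenius norm: $\|\textsf{sym}(A)\|_F \le \|A\|_F$ for any $A$. Combining with the fact that the Frobenius norm of a tensor product is the product of Frobenius norms, this yields
\[
\|T(fg)\|_F = \|\textsf{sym}(T(f) \otimes T(g))\|_F \le \|T(f) \otimes T(g)\|_F = \|T(f)\|_F \cdot \|T(g)\|_F \,.
\]
Finally, since $v(f)$ is defined as the flattening of $T(f)$, we have $\|v(f)\| = \|T(f)\|_F$, so the inequality $\|v(fg)\| \le \|v(f)\|\,\|v(g)\|$ is just a restatement of the above.

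There is no real obstacle here; the only substantive ingredient is the elementary observation that symmetrization is an orthogonal projection, which could alternatively be verified directly by Cauchy–Schwarz over the permutation group: $\|\frac{1}{N}\sum_\pi \pi(A)\|_F \le \frac{1}{N}\sum_\pi \|\pi(A)\|_F = \|A\|_F$, where $\pi$ ranges over the $N = (m_1+m_2)!$ index permutations and each $\pi$ acts as an isometry on tensors.
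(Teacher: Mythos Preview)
Your proof is correct and is essentially the same as the paper's: both identify $T(fg)$ as an average of index-permuted copies of $T(f)\otimes T(g)$ and then apply the triangle inequality together with $\|T(f)\otimes T(g)\|_2 = \|T(f)\|_2\|T(g)\|_2$. The only cosmetic difference is that the paper averages over the $\binom{m_1+m_2}{m_1}$ ordered partitions of $[m_1+m_2]$ into blocks of sizes $m_1,m_2$ (which suffices since $T(f),T(g)$ are already symmetric), whereas you average over all $(m_1+m_2)!$ permutations and phrase the resulting contraction as symmetrization being an orthogonal projection.
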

\begin{proof}
Note that $T(fg)$ can be written as an average, over all partitions of $[m_1 + m_2]$ into two sets $S_1, S_2$ of size $m_1, m_2$, of $T(f)_{S_1} \otimes T(g)_{S_2} $ where $T(f)_{S_1} \otimes T(g)_{S_2} $ is a $d^{\otimes (m_1 + m_2)}$ tensor obtained by taking $T(f)$ in the dimensions indexed by $S_1$ and $T(g)$ in the dimensions indexed by $S_2$ and tensoring them together.  It is clear that
\[
\norm{T(f)\otimes T(g)}_2 = \norm{T(f)}_2\norm{T(g)}_2
\]
so using the triangle inequality, we get the desired conclusion.
\end{proof}

We also have a lower bound on $\norm{v(fg)}$ that follows immediately from the results  in \cite{liu2020settling}.
\begin{claim}[Claim 3.18 in \cite{liu2020settling}]\label{claim:prod-lower-bound}
Let $f,g$ be two homogeneous polynomials in the variables $X = (X_1, \dots , X_d)$ of degree at most $m$.  Then
\[
\norm{v(fg)} \geq \Omega_m(1) \norm{v(f)} \norm{v(g)} \,.
\]
\end{claim}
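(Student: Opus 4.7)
The strategy is to recognize $\|v(\cdot)\|$ as (a rescaling of) the classical Bombieri / apolar norm on homogeneous polynomials and then invoke Bombieri's inequality for products.  First I would unpack the definition of $T(f)$: writing $f = \sum_\alpha c_\alpha X^\alpha$ as a homogeneous polynomial of degree $m_1$, each monomial $X^\alpha$ distributes its coefficient uniformly across the $\binom{m_1}{\alpha}$ index tuples it labels, so $T(f)$ has value $c_\alpha/\binom{m_1}{\alpha}$ at each such position.  Summing squared entries gives $\|v(f)\|^2 = \sum_\alpha c_\alpha^2 / \binom{m_1}{\alpha} =: [f]^2$, the squared Bombieri norm; equivalently $m_1!\cdot\|v(f)\|^2 = \sum_\alpha \alpha!\, c_\alpha^2 = f(\partial)f|_{X=0} =: \{f\}^2$, the Fischer (apolar) norm.

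Rescaling, the claim $\|v(fg)\|^2 \geq \Omega_m(1)\,\|v(f)\|^2\|v(g)\|^2$ is equivalent (up to a factor $\binom{m_1+m_2}{m_1}^{-1} \geq \binom{2m}{m}^{-1} = \Omega_m(1)$) to the Fischer-norm Bombieri inequality
\[
\{fg\}^2 \;\geq\; \{f\}^2\{g\}^2.
\]
So it suffices to prove this last inequality.  The key observation is that under the Fischer inner product $\langle p,q\rangle_F = p(\partial)q|_{X=0}$, the multiplication operator $M_f: g\mapsto fg$ has adjoint $M_f^\ast = f(\partial)$, since partial derivatives commute: $\langle fg, h\rangle_F = (fg)(\partial)h|_0 = f(\partial)(g(\partial)h)|_0 = \langle g, f(\partial)h\rangle_F$.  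Hence $\{fg\}^2 = \langle g, f(\partial)(fg)\rangle_F$, and the inequality reduces to the operator bound $f(\partial)\circ M_f \succeq \{f\}^2\cdot I$ on each homogeneous subspace.

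I would establish this operator bound via a normal-ordered identity.  Expanding $f(\partial)(fg)$ by the multivariate Leibniz rule and regrouping around $\partial^\beta f$ yields
\[
f(\partial)\circ M_f \;=\; \sum_{\beta}\, \frac{1}{\beta!}\, M_{\partial^\beta f}\, M_{\partial^\beta f}^\ast,
\]
where $\beta$ ranges over multi-indices with $|\beta|\leq m_1$.  The terms with $|\beta|=m_1$ are scalars (since $\partial^\beta f$ is then a constant), and they sum to $\sum_{|\beta|=m_1} \beta!\, c_\beta^2\cdot I = \{f\}^2 I$; the remaining $|\beta| < m_1$ terms have the form $A A^\ast$ and are manifestly positive semi-definite.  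Pairing both sides with $g$ in $\langle\cdot,\cdot\rangle_F$ yields $\{fg\}^2 \geq \{f\}^2\{g\}^2$ and hence the claim.

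\textbf{Main obstacle.}  The one step demanding real care is the normal-ordered identity above, which amounts to a careful bookkeeping of the multivariate Leibniz expansion (the univariate case $d=1$ with $f = X^{m_1}$ is a useful sanity check).  A slicker alternative, if one wishes to avoid the combinatorics, is to pass to the Bargmann--Segal--Fock realization of the Fischer inner product as $\frac{1}{\pi^d}\int f\,\overline{g}\, e^{-\|z\|^2}\, dz$ on $\mathbb{C}^d$, under which multiplication by $f$ becomes pointwise multiplication and the inequality reduces to a Gaussian-integral statement provable via the reproducing-kernel structure.  Alternatively one may simply cite the classical Bombieri inequality as a black box.
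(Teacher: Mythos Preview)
Your argument is correct. The identification $\|v(f)\|^2 = \sum_\alpha c_\alpha^2/\binom{m_1}{\alpha}$ with the Bombieri norm is accurate, the adjoint computation $M_f^\ast = f(\partial)$ under the Fischer pairing is standard, and the normal-ordering identity
\[
f(\partial)\circ M_f \;=\; \sum_{\beta}\frac{1}{\beta!}\,M_{\partial^\beta f}\,(M_{\partial^\beta f})^\ast
\]
follows exactly as you outline from the multivariate Leibniz rule (the inner sum $\sum_{\gamma\ge\beta}c_\gamma\binom{\gamma}{\beta}\partial^{\gamma-\beta}$ indeed equals $\tfrac{1}{\beta!}(\partial^\beta f)(\partial)$). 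Isolating the $|\beta|=m_1$ terms gives the scalar $\{f\}^2$, and the rest is a sum of $AA^\ast$'s, yielding $\{fg\}^2\ge\{f\}^2\{g\}^2$ and hence the claim after the $\binom{m_1+m_2}{m_1}^{-1}$ rescaling.

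As for comparison: the present paper does not prove this claim at all; it simply imports it as Claim~3.18 from \cite{liu2020settling}. So you have supplied a genuine self-contained proof where the paper only gives a citation. Your route via the Fischer/Bombieri inequality is in fact the classical one (Beauzamy--Bombieri--Enflo--Montgomery), and is almost certainly what underlies the cited claim as well; the operator-positivity argument you sketch is clean and avoids any dimension dependence, which is exactly what is needed here. The Fock-space alternative you mention would also work and is essentially the same computation in analytic dress.
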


The next claim gives us an understanding of how linear transformations of the underlying variables $X = (X_1, \dots , X_d)$ affect the coefficient norm of a polynomial. 

\begin{claim}\label{claim:polytovec-linear-transform}
Let $f$ be a homogeneous polynomial in the variables $X = (X_1, \dots , X_d)$ of degree equal to $m$.  Let $\Sigma$ be a $d \times d$ matrix.  Then
\[
\norm{v(f(\Sigma X))} \leq \left(\norm{\Sigma}_{\textsf{op}}\right)^m \norm{v(f(X))} \,. 
\]
\end{claim}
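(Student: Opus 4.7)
The plan is to translate the substitution $X \mapsto \Sigma X$ into an action on the symmetric coefficient tensor $T(f)$, and then bound the Frobenius norm mode-by-mode. Concretely, since $f$ is homogeneous of degree $m$, we have the identity
\[
f(\Sigma X) \;=\; \langle T(f),\, (\Sigma X)^{\otimes m}\rangle \;=\; \langle T(f),\, \Sigma^{\otimes m} X^{\otimes m}\rangle,
\]
where $\Sigma^{\otimes m}$ is understood as the operator that applies $\Sigma$ in each of the $m$ tensor modes. By the usual adjointness of tensor-mode multiplication, this equals $\langle (\Sigma^T)^{\otimes m} T(f), X^{\otimes m}\rangle$, where now $(\Sigma^T)^{\otimes m}$ means applying $\Sigma^T$ in each of the $m$ modes.

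Next, I would observe that $(\Sigma^T)^{\otimes m} T(f)$ is itself symmetric: since $T(f)$ is symmetric, permuting the modes and then applying $\Sigma^T$ in each mode gives the same result as applying $\Sigma^T$ in each mode and then permuting. Hence $(\Sigma^T)^{\otimes m} T(f)$ is the (unique) symmetric coefficient tensor $T(f(\Sigma X))$ of the polynomial $f(\Sigma X)$, which gives
\[
\norm{v(f(\Sigma X))} \;=\; \bigl\lVert (\Sigma^T)^{\otimes m} T(f)\bigr\rVert_F.
\]

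The final step is the norm estimate. For any tensor $S$ of shape $d^{\otimes m}$ and any $d\times d$ matrix $A$, applying $A$ in a single mode yields a tensor whose Frobenius norm is at most $\norm{A}_{\op}\norm{S}_F$: this follows from the matrix case by flattening along the chosen mode and then using $\norm{AM}_F \leq \norm{A}_{\op}\norm{M}_F$. Iterating this inequality through each of the $m$ modes, using $\norm{\Sigma^T}_{\op} = \norm{\Sigma}_{\op}$, yields
\[
\bigl\lVert (\Sigma^T)^{\otimes m} T(f)\bigr\rVert_F \;\leq\; \norm{\Sigma}_{\op}^m \, \norm{T(f)}_F \;=\; \norm{\Sigma}_{\op}^m\, \norm{v(f)},
\]
which is the claim.

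There is no real obstacle here; the only thing worth being careful about is the symmetrization point above (making sure the tensor obtained after the mode-wise action is indeed the symmetric coefficient tensor of $f(\Sigma X)$, so that its Frobenius norm is exactly $\norm{v(f(\Sigma X))}$ rather than merely an upper bound after symmetrization). The argument could alternatively be done by expanding $f$ as a sum of monomials of degree $m$ and substituting, but the tensor-operator viewpoint is cleaner and makes the $\norm{\Sigma}_{\op}^m$ factor appear as an immediate consequence of mode-wise operator-norm bounds.
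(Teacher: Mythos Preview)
Your proposal is correct and takes essentially the same approach as the paper: the paper's proof simply asserts the identity $v(f(\Sigma X)) = (\Sigma \otimes \cdots \otimes \Sigma)\,v(f(X))$ via the Kronecker product and then uses $\norm{\Sigma^{\otimes m}}_{\op} = \norm{\Sigma}_{\op}^m$, which is exactly your mode-wise action on $T(f)$ rewritten in flattened form. Your extra care about symmetry (ensuring the resulting tensor is indeed $T(f(\Sigma X))$) and the explicit mode-by-mode Frobenius bound just unpacks what the paper states in one line.
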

\begin{proof}
Note that 
\[
v(f(\Sigma X)) = \left(\underbrace{\Sigma \otimes \dots \otimes \Sigma}_{m}\right)v(f(X)) \,. 
\]
where $\otimes $ in the above denotes the Kronecker product.  Also,
\[
\norm{\underbrace{\Sigma \otimes \dots \otimes \Sigma}_{m}}_{\textsf{op}} = \left(\norm{\Sigma}_{\textsf{op}}\right)^m
\]
and now we immediately get the desired inequality.

\end{proof}

\subsection{Tensors and Polynomials with Multiple Sets of Variables}
We will need a few additional definitions dealing with polynomials and tensors involving multiple sets of variables, say $X^{(1)} = (X_1^{(1)}, \dots ,X_d^{(1)}), \dots , X^{(k)} = (X_1^{(k)}, \dots ,X_d^{(k)}) $.  We first prove the following property.

\begin{claim}\label{claim:tensor-symmetrization}
Let $k$ be a positive integer and consider $k$ distinct sets of $d$ formal variables, say $X^{(1)} = (X_1^{(1)}, \dots ,X_d^{(1)}), \dots , X^{(k)} = (X_1^{(k)}, \dots ,X_d^{(k)}) $.  Let $A$ be the tensor of polynomials defined as follows:
\[
A = \fl\left(\left(X^{(1)}\right)^{\otimes m_1} \right) \otimes \dots \otimes  \fl\left(\left(X^{(k)}\right)^{\otimes m_k} \right) \,.
\]
Note that $A$ is an order-$k$ tensor with dimensions $d^{m_1}, \dots , d^{m_k}$.  For any polynomial $P\left( X^{(1)}, \dots , X^{(k)}\right)$ that is homogeneous with degree exactly $m_i$ in the set of variables $X^{(i)}$ for all $i$, there is a unique tensor $T$ such that
\begin{itemize}
    \item The entries of $T$ are real numbers
    \item $\langle T, A \rangle = P\left( X^{(1)}, \dots , X^{(k)}\right)$
    \item The tensorization of any $1$-dimensional slice of $T$ along the $i$\ts{th} axis into a $\underbrace{d \times \dots \times d}_{m_i}$ tensor is symmetric.
\end{itemize}
\end{claim}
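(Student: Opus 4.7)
The plan is to establish the claim as a multi-block extension of the standard symmetric coefficient tensor construction used in Definition \ref{def:poly-to-vec}. Existence will come from an explicit symmetrization across each axis, and uniqueness will come from the fact that the axis-wise symmetry constraints collapse each entry of $T$ to a value determined only by a tuple of multisets, leaving exactly one real degree of freedom per monomial of $P$.

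First I would set up the index-to-monomial correspondence. A single entry $T_{\vec{j}^{(1)}, \dots, \vec{j}^{(k)}}$ with $\vec{j}^{(i)} \in [d]^{m_i}$ multiplies the entry of $A$ equal to $\prod_{i=1}^{k} \prod_{l=1}^{m_i} X_{j_l^{(i)}}^{(i)}$. This monomial depends only on the multisets $S_i = \{\!\!\{ j_1^{(i)}, \dots, j_{m_i}^{(i)} \}\!\!\}$, so grouping terms gives
\[
\langle T, A \rangle \;=\; \sum_{(S_1, \dots, S_k)} \Bigg( \sum_{\vec{j}^{(i)} \text{ an ordering of } S_i \,\forall i} T_{\vec{j}^{(1)}, \dots, \vec{j}^{(k)}} \Bigg) \prod_{i=1}^{k} \prod_{s \in S_i} X_s^{(i)}.
\]
Writing $P = \sum c_{S_1, \dots, S_k} \prod_i \prod_{s \in S_i} X_s^{(i)}$, the requirement $\langle T, A \rangle = P$ is equivalent to matching, for each tuple $(S_1, \dots, S_k)$, the bracketed inner sum with $c_{S_1, \dots, S_k}$.

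For existence, I would define $T$ by uniform symmetrization: set
\[
T_{\vec{j}^{(1)}, \dots, \vec{j}^{(k)}} \;=\; \frac{c_{S_1, \dots, S_k}}{\prod_{i=1}^{k} N_i(S_i)},
\]
where $N_i(S_i) = m_i!/\prod_v (\mathrm{mult}_v(S_i))!$ is the number of orderings of $S_i$ and $(S_1, \dots, S_k)$ is the tuple of multisets attached to the index tuple. Since this value depends only on the $S_i$, permuting the entries of any $\vec{j}^{(i)}$ leaves it unchanged, which is exactly the claimed symmetry of the tensorized $i$-th axis slice. Summing over the $N_i(S_i)$ orderings then recovers $c_{S_1, \dots, S_k}$, so $\langle T, A \rangle = P$. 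For uniqueness, suppose $T'$ also satisfies both properties. The axis-wise symmetry, applied one axis at a time (fixing all other indices and using that the reshaped slice on the $i$-th axis is symmetric), forces $T'_{\vec{j}^{(1)}, \dots, \vec{j}^{(k)}}$ to be a function only of the tuple of multisets. Comparing coefficients of the linearly independent monomials of $P$ then pins down that common value on each orbit to be $c_{S_1, \dots, S_k}/\prod_i N_i(S_i)$, matching $T$ and proving uniqueness.

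The main (fairly mild) obstacle is not any deep mathematical step but the bookkeeping: one must verify cleanly that axis-by-axis symmetry, iterated over $i = 1, \dots, k$, forces each entry of $T$ to depend only on the tuple $(S_1, \dots, S_k)$, and that the multinomial factors $N_i(S_i)$ cancel correctly when summing over orderings. Both points follow from the standard argument for the single-block coefficient tensor applied axis by axis, but the multi-index notation makes the accounting slightly heavy; a careful choice of indexing (e.g., always ordering multiset representatives lexicographically) is what keeps the verification routine.
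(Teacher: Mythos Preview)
Your proposal is correct and follows essentially the same approach as the paper: index entries of $T$ by monomials of $A$, use the axis-wise symmetry to force all entries corresponding to the same monomial to coincide, and then construct $T$ by dividing each coefficient of $P$ evenly among those entries. Your write-up is simply a more explicit version of the paper's terse argument, with the multinomial factors $N_i(S_i)$ spelled out.
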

\begin{proof}
Note that each entry of $T$ may be indexed by the corresponding monomial of $A$.  The symmetry property implies that the entries of $T$ that are indexed by the same monomial must be the same.  Thus, the unique tensor $T$ is constructed by taking each monomial of $P$ and dividing its coefficient evenly among all of the entries of $T$ that are indexed by that monomial.
\end{proof}

In light of Claim \ref{claim:tensor-symmetrization} we may make the following definition:
\begin{definition}\label{def:symmetrictensorization}
Let $k$ be a positive integer and consider $k$ distinct sets of $d$ formal variables, say $X^{(1)} = (X_1^{(1)}, \dots ,X_d^{(1)}), \dots , X^{(k)} = (X_1^{(k)}, \dots ,X_d^{(k)}) $.  For a polynomial $P\left( X^{(1)}, \dots , X^{(k)}\right)$ that is homogeneous with degree $m_1, \dots , m_k$ in the sets of variables $X^{(1)}, \dots , X^{(k)}$ respectively, let $T_{\sym}(P)$ be the (unique) tensor constructed in Claim \ref{claim:tensor-symmetrization}.  We call $T_{\sym}(P)$ the symmetric tensorization of $P$. 
\end{definition}

We will need a few basic properties relating polynomials and their symmetric tensorizations.  We are mostly interested in the case when there are two sets of variables (i.e. $k = 2$).  In this case, the symmetric tensorizations will simply be matrices.  The first property is immediate from the definition.

\begin{claim}\label{claim:polyproducttosymmetrictensor}
Let $k$ be a positive integer and consider $k$ distinct sets of $d$ formal variables, say $X^{(1)} = (X_1^{(1)}, \dots ,X_d^{(1)}), \dots , X^{(k)} = (X_1^{(k)}, \dots ,X_d^{(k)}) $.  Let $P_1, \dots , P_k$ be homogeneous polynomials in $d$ variables.  Then
\[
T_{\sym}\left( P_1(X^{(1)}) \cdots P_k(X^{(k)}) \right) = v(P_1(X)) \otimes \dots \otimes v(P_k(X)) \,.
\]
\end{claim}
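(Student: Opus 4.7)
The plan is to verify the three defining properties of $T_{\sym}$ from Claim \ref{claim:tensor-symmetrization} for the candidate tensor
\[
T := v(P_1(X)) \otimes \cdots \otimes v(P_k(X)),
\]
and then invoke the uniqueness guaranteed by that claim. Since $T_{\sym}$ is characterized by (i) having real entries, (ii) contracting with $A = \fl((X^{(1)})^{\otimes m_1})\otimes\cdots\otimes\fl((X^{(k)})^{\otimes m_k})$ to produce the polynomial, and (iii) the symmetry condition on slices, it suffices to check each in turn for $T$.

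First I would dispatch property (i), which is immediate since every $v(P_i(X))$ is a real coefficient vector. For property (ii) I would use the standard multiplicativity of tensor contractions against tensor products: for vectors $u_i$ and tensors $A_i$ of matching dimension,
\[
\bigl\langle u_1 \otimes \cdots \otimes u_k,\; A_1 \otimes \cdots \otimes A_k \bigr\rangle = \prod_{i=1}^{k} \langle u_i, A_i \rangle.
\]
Applying this with $u_i = v(P_i(X))$ and $A_i = \fl((X^{(i)})^{\otimes m_i})$, and using the defining property of $v(\cdot)$ from Definition \ref{def:poly-to-vec} (namely $\langle v(P_i(X)), \fl((X^{(i)})^{\otimes m_i})\rangle = P_i(X^{(i)})$), gives
\[
\langle T, A \rangle = \prod_{i=1}^{k} P_i(X^{(i)}),
\]
which is exactly the polynomial in question.

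For property (iii), I would observe that any $1$-dimensional slice of $T$ along the $i$\ts{th} axis, obtained by fixing indices in all other axes, is simply a scalar (the product of the corresponding entries of the $v(P_j)$ for $j \neq i$) times the vector $v(P_i(X))$. Reshaping $v(P_i(X))$ into a $\underbrace{d\times\cdots\times d}_{m_i}$ tensor yields $T(P_i(X))$, which is symmetric by construction of the coefficient tensor. Hence every such slice, after tensorization, is symmetric. Having verified all three properties, the uniqueness statement in Claim \ref{claim:tensor-symmetrization} forces $T = T_{\sym}(P_1(X^{(1)}) \cdots P_k(X^{(k)}))$, completing the proof.

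The main (minor) obstacle is purely bookkeeping: ensuring that one applies the right conventions about which dimensions of $T$ correspond to which set of variables $X^{(i)}$, and that flattening and tensorization are inverse operations consistently across all $k$ factors. As the paper notes, once the definitions are unpacked, the identity is immediate and requires no substantive computation beyond the multiplicativity of contraction against a product tensor.
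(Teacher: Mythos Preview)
Your proposal is correct and follows essentially the same approach as the paper: verify that $v(P_1(X))\otimes\cdots\otimes v(P_k(X))$ satisfies the defining properties in Claim~\ref{claim:tensor-symmetrization} (it contracts against $A$ to give the product polynomial, and its one-dimensional slices tensorize to symmetric tensors), then invoke uniqueness. Your write-up is slightly more explicit about the multiplicativity of the contraction and about why the slices are scalar multiples of $v(P_i)$, but the argument is identical in structure and content.
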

\begin{proof}
Let the degrees of $P_1, \dots , P_k$ be $m_1, \dots, m_k$ respectively.  Let 
\[
A = \fl\left(\left(X^{(1)}\right)^{\otimes m_1} \right) \otimes \dots \otimes  \fl\left(\left(X^{(k)}\right)^{\otimes m_k} \right) \,.
\]
Note that 
\[
\langle A , v(P_1(X)) \otimes \dots \otimes v(P_k(X)) \rangle = P_1(X^{(1)}) \cdots P_k(X^{(k)}) \,.
\]
Also note that each of the one dimensional slices of $v(P_1(X)) \otimes \dots \otimes v(P_k(X))$ is symmetric when put into a $d \times \dots \times d$ tensor.  Thus, Claim \ref{claim:tensor-symmetrization} implies that $v(P_1(X)) \otimes \dots \otimes v(P_k(X))$ is exactly the symmetric tensorization of $P_1(X^{(1)}) \cdots P_k(X^{(k)})$.
\end{proof}

\begin{claim}\label{claim:polytensorinjnormbound}
Consider two sets of $d$ formal variables, say $X^{(1)} = (X_1^{(1)}, \dots ,X_d^{(1)})$ and $ X^{(2)} = (X_1^{(2)}, \dots ,X_d^{(2)}) $.  Let $P(X^{(1)},  X^{(2)}), Q(X^{(1)},  X^{(2)})$ be polynomials that are homogeneous in each of the sets of variables.  Then
\[
\norm{T_{\sym}\left( PQ\right)}_{\op} \leq \norm{T_{\sym}(P)}_{\op} \norm{T_{\sym}(Q)}_{\op} \,.
\]
\end{claim}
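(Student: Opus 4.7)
The plan is to realize $T_{\sym}(PQ)$ as the image of a Kronecker product $T_{\sym}(P) \otimes T_{\sym}(Q)$ under a pair of symmetrization projections, and then use the fact that orthogonal projections do not increase operator norm. Let $p_1,p_2$ be the degrees of $P$ in $X^{(1)},X^{(2)}$ respectively, and similarly $q_1,q_2$ for $Q$. Set $T_P = T_{\sym}(P)$ and $T_Q = T_{\sym}(Q)$, which are matrices of sizes $d^{p_1} \times d^{p_2}$ and $d^{q_1} \times d^{q_2}$.

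First I would form the Kronecker product $M = T_P \otimes T_Q$ and reshape it into a matrix of size $d^{p_1+q_1} \times d^{p_2+q_2}$ by the natural identification of $[d]^{p_1} \times [d]^{q_1}$ with $[d]^{p_1+q_1}$ (concatenation of index tuples), and likewise for columns. Under this identification, $\fl(X^{(i)\otimes(p_i+q_i)}) = \fl(X^{(i)\otimes p_i}) \otimes \fl(X^{(i)\otimes q_i})$ for $i \in \{1,2\}$, so the defining properties of $T_P$ and $T_Q$ give
\[
\langle M, \fl(X^{(1)\otimes(p_1+q_1)}) \otimes \fl(X^{(2)\otimes(p_2+q_2)}) \rangle = P(X^{(1)},X^{(2)})\,Q(X^{(1)},X^{(2)}) = (PQ)(X^{(1)},X^{(2)}).
\]
By the standard identity for operator norms of Kronecker products of matrices, $\|M\|_{\op} = \|T_P\|_{\op}\|T_Q\|_{\op}$.

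Next I would let $\Pi_1$ and $\Pi_2$ denote the orthogonal projections of $\R^{d^{p_1+q_1}}$ and $\R^{d^{p_2+q_2}}$ onto the subspaces whose reshapings into $\underbrace{d\times\cdots\times d}$ tensors are symmetric (i.e., averaging over the symmetric group action on indices). The matrix $\Pi_1 M \Pi_2$ has symmetric slices along both axes by construction. Moreover, since $\fl(X^{(i)\otimes k})$ already lies in the symmetric subspace, it is fixed by $\Pi_i$, so $\Pi_1 M \Pi_2$ continues to pair with $\fl(X^{(1)\otimes(p_1+q_1)}) \otimes \fl(X^{(2)\otimes(p_2+q_2)})$ to give $(PQ)(X^{(1)},X^{(2)})$. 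By the uniqueness part of Claim \ref{claim:tensor-symmetrization}, this forces $\Pi_1 M \Pi_2 = T_{\sym}(PQ)$.

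Finally, since orthogonal projections have operator norm at most $1$,
\[
\|T_{\sym}(PQ)\|_{\op} = \|\Pi_1 M \Pi_2\|_{\op} \leq \|M\|_{\op} = \|T_{\sym}(P)\|_{\op}\|T_{\sym}(Q)\|_{\op},
\]
which is the claim. The main content lies in identifying $T_{\sym}(PQ)$ with the symmetrized Kronecker product; once that is in hand the bound is a soft consequence of projection and tensor-product identities. I do not expect a real obstacle, though keeping the row/column bookkeeping straight when reshaping $T_P \otimes T_Q$ and verifying that the monomial vectors already lie in the symmetric subspaces demands a bit of care.
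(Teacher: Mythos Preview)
Your proposal is correct and is essentially the same argument as the paper's. The paper also forms the Kronecker product $T_{\sym}(P)\otimes T_{\sym}(Q)$, observes it pairs with the monomial tensor to give $PQ$, and then notes that $T_{\sym}(PQ)$ is the average of row/column permuted copies of this Kronecker product, concluding via the triangle inequality; your phrasing in terms of the orthogonal symmetrization projections $\Pi_1,\Pi_2$ (which are precisely these permutation averages) and the bound $\|\Pi_i\|_{\op}\le 1$ is an equivalent repackaging of that step.
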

\begin{proof}
Assume that $P$ has degrees $m_1, m_2$ and $Q$ has degrees $n_1, n_2$ in $X^{(1)},X^{(2)}$ respectively.  Let
\begin{align*}
A = \fl\left(\left(X^{(1)}\right)^{\otimes m_1 + n_1} \right) \otimes  \fl\left(\left(X^{(2)}\right)^{\otimes m_2 + n_2} \right) \,.
\end{align*}
Note that 
\[
\langle A, T_{\sym}(P) \otimes  T_{\sym}(Q) \rangle = P( X^{(1)} ,  X^{(2)})Q( X^{(1)} , X^{(2)})
\]
where $T_{\sym}(P) \otimes  T_{\sym}(Q)$ is the Kronecker product of the two matrices.  Thus, $T_{\sym}\left( PQ\right)$ can be written as an average of tensors that are equivalent to $T_{\sym}(P) \otimes  T_{\sym}(Q)$ up to permutations of the rows and columns.  Thus, by the triangle inequality 
\[
\norm{T_{\sym}\left( PQ\right)} _{\op}  \leq \norm{T_{\sym}(P) \otimes  T_{\sym}(Q)}_{\op} =  \norm{T_{\sym}(P)}_{\op} \norm{T_{\sym}(Q)}_{\op}\,.
\]
\end{proof}

\section{Hermite Polynomials, Generating Functions and Differential Operators}
In this section, we introduce the Hermite moment polynomials and their associated generating functions.  We then introduce several tools for manipulating generating functions using differential operators that will be crucial later on.  While some of these tools were introduced in \cite{liu2020settling}, we introduce many additional tools in this paper as we will need more precise characterizations and bounds on various quantities.

\subsection{Hermite Polynomials and their Generating Functions}\label{sec:hermite-basics}

Here we develop some basic machinery for working with Hermite polynomials and their generating functions. The first set of definitions and results mirrors the work in \cite{liu2020settling}.  We begin with a standard definition.

\begin{definition}
Let $\mcl{H}_m(x)$ be the univariate Hermite polynomials $\mcl{H}_0 = 1, \mcl{H}_1 = x, \mcl{H}_2 = x^2 - 1 \cdots $ defined by the recurrence
\[
\mcl{H}_m(x) = x\mcl{H}_{m-1}(x) - (m-1)\mcl{H}_{m-2}(x)
\]
\end{definition}

Note that in $\mcl{H}_m(x)$, the degree of each nonzero monomial has the same parity as $m$.  In light of this, we can write the following:
\begin{definition}
Let $\mcl{H}_m(x, y^2)$ be the homogenized Hermite polynomials e.g. $ \mcl{H}_2(x,y^2) = x^2 - y^2, \mcl{H}_3(x,y^2) = x^3 - 3xy^2$.
\end{definition}

It will be important to note the following fact:
\begin{claim}\label{claim:hermite-identity}
We have
\[
e^{xz - \frac{1}{2}y^2z^2} = \sum_{m = 0}^{\infty} \frac{1}{m!}\mcl{H}_m(x, y^2)z^m
\]
where the RHS is viewed as a formal power series in $z$ whose coefficients are polynomials in $x,y$.
\end{claim}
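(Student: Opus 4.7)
The plan is to reduce the identity to the classical univariate generating function for Hermite polynomials, exploiting the fact that $\mcl{H}_m(x, y^2)$ is simply the degree-$m$ homogenization of $\mcl{H}_m(x)$. First I would establish the univariate version $e^{xz - z^2/2} = \sum_m \mcl{H}_m(x) z^m/m!$ by a short induction using the recurrence in the definition. Let $F(x,z) = e^{xz - z^2/2}$; differentiating in $z$ gives $\partial_z F = (x - z) F$. Writing $F = \sum_m a_m(x) z^m$ and matching the coefficient of $z^{m-1}$ on both sides yields $m\, a_m(x) = x\, a_{m-1}(x) - a_{m-2}(x)$, so setting $a_m(x) = \mcl{H}_m(x)/m!$ reproduces exactly the recurrence $\mcl{H}_m(x) = x\mcl{H}_{m-1}(x) - (m-1)\mcl{H}_{m-2}(x)$ stated in the paper; the initial values $F(x,0)=1$ and $\partial_z F(x,0)=x$ match $\mcl{H}_0 = 1$ and $\mcl{H}_1 = x$, so the two power series agree term by term.

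Next I would promote this to the homogenized identity. The key observation is that $\mcl{H}_m(x)$ contains only monomials whose degree has the same parity as $m$, so the degree-$m$ homogenization $\mcl{H}_m(x, y^2)$ satisfies $\mcl{H}_m(x, y^2) = y^m\, \mcl{H}_m(x/y)$ as a polynomial identity (verify for $y \neq 0$ and extend by polynomial continuity). Substituting $x \mapsto x/y$ in the univariate identity and replacing the formal indeterminate $z$ by $yz$ gives
\[
e^{xz - \tfrac{1}{2} y^2 z^2} \;=\; \sum_{m=0}^{\infty} \frac{\mcl{H}_m(x/y)}{m!} (yz)^m \;=\; \sum_{m=0}^{\infty} \frac{y^m \mcl{H}_m(x/y)}{m!} z^m \;=\; \sum_{m=0}^{\infty} \frac{\mcl{H}_m(x, y^2)}{m!} z^m,
\]
which is exactly the claim.

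As a sanity check, one could also bypass the univariate step and perform the induction directly on $F(x,y,z) = e^{xz - y^2 z^2/2}$: the relation $\partial_z F = (x - y^2 z) F$ and coefficient matching immediately yield the two-variable recurrence $\mcl{H}_m(x, y^2) = x\,\mcl{H}_{m-1}(x, y^2) - (m-1)y^2\,\mcl{H}_{m-2}(x, y^2)$, which together with $\mcl{H}_0(x,y^2)=1$ and $\mcl{H}_1(x,y^2)=x$ characterizes the $\mcl{H}_m(x,y^2)$ uniquely. There is no real obstacle here; this is a standard generating function computation, and the only point requiring mild care is confirming that the paper's homogenization convention coincides with $y^m \mcl{H}_m(x/y)$, which is immediate from the parity remark.
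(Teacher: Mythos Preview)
Your argument is correct: the induction via $\partial_z F = (x-z)F$ recovers the defining recurrence with the right initial conditions, and the homogenization step $\mcl{H}_m(x,y^2) = y^m \mcl{H}_m(x/y)$ together with the substitution $x\mapsto x/y$, $z\mapsto yz$ cleanly yields the two-variable identity. The paper itself does not prove this claim at all; it is stated as a standard fact about Hermite polynomials and left without proof, so your write-up actually supplies more detail than the paper does.
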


Now we define a multivariate version of the Hermite polynomials. 
\begin{definition}[Multivariate Hermite Polynomials]\label{def:Hermite-two-var}
Let $H_m(X,z)$ be a formal polynomial in variables $X = (X_1, \dots , X_d)$ whose coefficients are polynomials in $d$ variables $z_1, \dots , z_d$ that is given by
\[
H_m(X,z) = \mcl{H}_m( z_1X_1 + \dots + z_dX_d , X_1^2 + \dots + X_d^2)
\]
We call $H_m(X,z)$ the multivariate Hermite polynomials.  Note that $H_m$ is homogeneous of degree $m$ as a polynomial in $X_1, \dots , X_d$
\end{definition}

\begin{definition}[Hermite Moment Polynomials]\label{def:Hermite-final}
For a distribution $D$ on $\R^d$, we let
\[
h_{m,D}(X) = \E_{(z_1, \dots , z_d) \sim D}[H_m(X,z)]
\]
where we take the expectation of $H_m$ over $(z_1, \dots , z_d)$ drawn from $D$.  Note that $h_{m,D}(X)$ is a polynomial in $d$ variables $(X_1, \dots , X_d)$. We will omit the $D$ in the subscript when it is clear from context. We refer to $h_{m,D}(X)$ as the Hermite moment polynomials of $D$.
\end{definition}
We can extend the above definition to any function $f: \R^d \rightarrow \R$ (that is not necessarily a distribution). 
\begin{definition}
 For any function $f: \R^d \rightarrow \R$, we define
\[
h_{m,f}(X) = \int_{\R^d} f(z)H_m(X,z) dz \,.
\]
\end{definition}
\begin{remark}
Note that there is no ambiguity because this definition agrees with the above when $f$ is a distribution.  The extended definition will mostly be used for working with MPG functions that may not be normalized and may take on negative values.
\end{remark}

The first important observation is that the Hermite moment polynomials for a Gaussian can be written in a simple closed form via generating functions.
\begin{claim}\label{claim:key-hermite-identity}
Let $D = N(\mu, I + \Sigma)$.  Then
\[
\sum_{m=0}^{\infty} \frac{1}{m!} \cdot h_{m,D}(X) y^m = e^{\mu(X)y + \frac{1}{2}\Sigma(X)y^2}
\]
where both sides are viewed as formal power series in $y$ whose coefficients are polynomials in $X$.  
\end{claim}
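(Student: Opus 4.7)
The plan is to derive the identity by starting from the univariate generating function identity (Claim~\ref{claim:hermite-identity}), substituting the appropriate linear/quadratic forms in $X$, and then taking expectation over $z \sim D$ using the moment generating function of a multivariate Gaussian. The key observation is that the homogenization of the Hermite polynomials is set up precisely so that after substitution the LHS becomes the multivariate Hermite polynomial $H_m(X,z)$, while the RHS is an exponential whose expectation over $z$ is computable in closed form.

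First I would apply Claim~\ref{claim:hermite-identity} with the substitution $x \mapsto z_1 X_1 + \cdots + z_d X_d$ and $y^2 \mapsto X_1^2 + \cdots + X_d^2$, and rename the formal series variable from $z$ to $y$. By Definition~\ref{def:Hermite-two-var}, this yields the identity of formal power series in $y$
\[
\sum_{m=0}^\infty \frac{1}{m!} H_m(X,z)\, y^m \;=\; \exp\!\Bigl( (z\cdot X)\,y \;-\; \tfrac{1}{2}\|X\|^2\, y^2 \Bigr),
\]
where both sides have coefficients that are polynomials in the combined variables $X$ and $z$. Taking expectation over $z \sim D$ on both sides, the coefficient of $y^m/m!$ on the LHS becomes $h_{m,D}(X)$ by Definition~\ref{def:Hermite-final}. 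Since each coefficient is a polynomial in $z$, whose expectation under a Gaussian $D$ is well defined, this interchange of sum and expectation is legitimate (alternatively: interpret both sides coefficient-wise as formal power series).

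Next I would evaluate the expected value of the exponential on the right. For fixed $X$ and $y$, the scalar random variable $(z\cdot X)\,y$ with $z \sim N(\mu, I+\Sigma)$ is Gaussian with mean $y\,\mu(X)$ and variance $y^2\bigl(\|X\|^2 + \Sigma(X)\bigr)$, using the notation $\mu(X) = \mu^T X$ and $\Sigma(X) = X^T \Sigma X$. The standard MGF formula for a scalar Gaussian evaluated at $1$ then gives
\[
\E_{z \sim D}\!\Bigl[\exp\bigl( (z\cdot X)\, y \bigr)\Bigr] \;=\; \exp\!\Bigl( y\,\mu(X) \;+\; \tfrac{1}{2}\,y^2 \bigl(\|X\|^2 + \Sigma(X)\bigr) \Bigr).
\]
Multiplying through by the deterministic factor $\exp(-\tfrac{1}{2}\|X\|^2 y^2)$ cancels the $\|X\|^2 y^2$ contribution, leaving exactly $\exp\bigl(\mu(X)\,y + \tfrac{1}{2}\Sigma(X)\,y^2\bigr)$, which matches the RHS of the claim.

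There is no real obstacle here — this is a bookkeeping argument that stitches together the univariate Hermite generating function with the Gaussian MGF, and the only subtle point is justifying the exchange of expectation with the infinite series, which is immediate since we are working at the level of formal power series with polynomial coefficients in $z$.
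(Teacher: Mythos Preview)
Your proof is correct and follows essentially the same approach as the paper: both start from Claim~\ref{claim:hermite-identity} to rewrite the LHS as $\E_{z\sim D}\bigl[\exp((z\cdot X)y-\tfrac12\|X\|^2y^2)\bigr]$ and then evaluate this Gaussian expectation. The only cosmetic difference is that the paper writes out the integral and completes the square explicitly, whereas you invoke the scalar Gaussian MGF formula directly; these are the same computation.
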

\begin{proof}
Using Claim \ref{claim:hermite-identity}, the LHS may be rewritten as 
\begin{align*}
&\E_{z \sim D}  \left[ \sum_{m=0}^{\infty} \frac{1}{m!} \cdot H_{m}(X,z) y^m\right] = \E_{z \sim D}  \left[ e^{(z_1X_1 + \dots + z_dX_d)y - \frac{1}{2}(X_1^2 + \dots + X_d^2)y^2 }\right] 
\\ &=  C \int \exp\left(-\frac{1}{2}(z - \mu)^T(I + \Sigma)^{-1}(z - \mu) + z^TXy- \frac{1}{2}X^TXy^2\right)  dz 
\\ &= C \int \exp\left(-\frac{1}{2}(z - \mu  - (I + \Sigma)Xy )^T(I + \Sigma)^{-1}(z - \mu  - (I + \Sigma)Xy) + \mu^T Xy+ \frac{1}{2}X^T\Sigma Xy^2\right)  dz
\\&  = \exp\left(\mu(X)y + \frac{1}{2}\Sigma(X)y^2\right)\,.
\end{align*}
where in the above, $C$ is the normalization constant for a normal distribution with covariance $I + \Sigma$.  Note that for the last step, we used the fact that 
\[
\int \exp\left(\frac{1}{2}(z - \mu)^T(I + \Sigma)^{-1}(z - \mu)\right)dz = \int \exp\left(\frac{1}{2}(z - \mu  - (I + \Sigma)Xy )^T(I + \Sigma)^{-1}(z - \mu - (I + \Sigma)Xy)\right)dz \,.
\]
\end{proof}

By slightly modifying the proof of Claim \ref{claim:key-hermite-identity}, we can prove a more general result when we have a function given by a polynomial Gaussian.

\begin{claim}
Let $f(x): \R^d \rightarrow \R$ be given  by $f(x) = Q(x)G(x)$ where $G = N(\mu, I + \Sigma)$ is a Gaussian and $Q$ is a polynomial of degree $c$.  Then
\[
\sum_{m=0}^{\infty} \frac{1}{m!} \cdot h_{m,f}(X) y^m = P(Xy)e^{\mu(X)y + \frac{1}{2}\Sigma(X)y^2} 
\]
where $P$ is a polynomial in $d$ variables of degree at most $c$ and $Xy$ denotes the $d$-tuple of formal variables $(X_1y, \dots , X_dy)$.
\end{claim}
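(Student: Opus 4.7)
The plan is to mimic the proof of Claim~\ref{claim:key-hermite-identity}, tracking the extra $Q(z)$ factor through the computation. Starting from the extended definition, we have
\[
\sum_{m=0}^{\infty} \frac{1}{m!} h_{m,f}(X) y^m = \int_{\R^d} Q(z) G(z) \sum_{m=0}^{\infty} \frac{1}{m!} H_m(X,z) y^m \, dz,
\]
and by Claim~\ref{claim:hermite-identity} the inner series equals $\exp\bigl((z\cdot X)y - \tfrac{1}{2}\|X\|^2 y^2\bigr)$. Thus up to the normalization constant the right-hand side is
\[
C \int Q(z) \exp\!\Bigl(-\tfrac{1}{2}(z-\mu)^T(I+\Sigma)^{-1}(z-\mu) + z^T X y - \tfrac{1}{2} X^T X y^2\Bigr) dz.
\]

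Next I would complete the square exactly as in Claim~\ref{claim:key-hermite-identity}, introducing the shift $w = z - \mu - (I+\Sigma)Xy$. The quadratic form in the exponential becomes $-\tfrac{1}{2} w^T (I+\Sigma)^{-1} w$, together with the leftover cross term $\mu(X)y + \tfrac{1}{2}\Sigma(X)y^2$. This yields
\[
\sum_{m=0}^\infty \frac{1}{m!} h_{m,f}(X)y^m \;=\; \E_{w \sim N(0, I+\Sigma)}\!\bigl[\,Q\bigl(w + \mu + (I+\Sigma)Xy\bigr)\bigr] \cdot e^{\mu(X)y + \tfrac{1}{2}\Sigma(X)y^2}.
\]

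It remains to show that the expectation factor has the form $P(Xy)$ for a polynomial $P$ of degree at most $c$, and this is the only nonroutine step. I would introduce the auxiliary vector of formal variables $u = (I+\Sigma)X$ and expand $Q(w + \mu + uy)$ as a polynomial in $w$ whose coefficients are polynomials of total degree at most $c$ in the single product expression $uy$ (since $u$ and $y$ enter the argument only through $uy$). Taking the expectation in $w$ kills the $w$ dependence but preserves the property that $u$ and $y$ appear only through $uy$; hence the expectation is a polynomial $R(uy)$ of degree at most $c$. Substituting back $u = (I+\Sigma)X$, and using the fact that $(I+\Sigma)Xy$ is a linear function of the $d$-tuple $Xy$, we can write $R((I+\Sigma)Xy) = P(Xy)$ for a polynomial $P$ of degree at most $c$, which completes the proof.

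The main obstacle is articulating the ``polynomial in $Xy$'' structure cleanly, i.e.\ that after the Gaussian shift the only way $X$ and $y$ enter the expectation is through the product $(I+\Sigma)Xy$. Once that invariance is made explicit, the degree bound on $P$ is immediate from the degree bound on $Q$, and the rest of the argument is an unmodified repetition of the calculation in Claim~\ref{claim:key-hermite-identity}.
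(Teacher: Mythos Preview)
Your proposal is correct and follows essentially the same route as the paper's proof: both complete the square exactly as in Claim~\ref{claim:key-hermite-identity}, then observe that the remaining Gaussian integral $\E\bigl[Q(w+\mu+(I+\Sigma)Xy)\bigr]$ is a polynomial of degree at most $c$ in the shift $(I+\Sigma)Xy$, hence in $Xy$. The only cosmetic difference is that the paper changes variables to a standard normal before taking the expectation, while you keep $w\sim N(0,I+\Sigma)$; the substance is identical.
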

\begin{proof}
Using Claim \ref{claim:hermite-identity}, the LHS may be rewritten as 
\begin{align*}
&\int_{\R^d} f(z)  \left[ \sum_{m=0}^{\infty} \frac{1}{m!} \cdot H_{m}(X,z) y^m\right] dz= \int_{\R^d} f(z)  \left[ e^{(z_1X_1 + \dots + z_dX_d)y - \frac{1}{2}(X_1^2 + \dots + X_d^2)y^2 }\right] dz
\\ &=  C \int Q(z) \exp\left(-\frac{1}{2}(z - \mu)^T(I + \Sigma)^{-1}(z - \mu) + z^TXy- \frac{1}{2}X^TXy^2\right)  dz 
\\ &= C \int Q(z) \exp\left(-\frac{1}{2}(z - \mu  - (I + \Sigma)Xy )^T(I + \Sigma)^{-1}(z - \mu  - (I + \Sigma)Xy) + \mu^T Xy+ \frac{1}{2}X^T\Sigma Xy^2\right)  dz 
\\ &= C e^{\mu(X)y + \frac{1}{2}\Sigma(X)y^2}   \int Q(z) \exp\left(-\frac{1}{2}(z - \mu  - (I + \Sigma)Xy )^T(I + \Sigma)^{-1}(z - \mu  - (I + \Sigma)Xy)\right) dz\,.
\end{align*}
where in the above, $C$ is a constant.  Now we can make the change of variables $z = (I + \Sigma)^{1/2}z' + \mu + (I + \Sigma)Xy$ and deduce that 
\begin{align*}
\int Q(z) \exp\left(-\frac{1}{2}(z - \mu  - (I + \Sigma)Xy )^T(I + \Sigma)^{-1}(z - \mu  - (I + \Sigma)Xy)\right) dz \\ = \det(I + \Sigma)^{1/2} \int   Q\left((I + \Sigma)^{1/2}z + \mu + (I + \Sigma)Xy \right) \exp \left( -\frac{1}{2}\norm{z}^2 \right)dz \\ = P(\mu + (I + \Sigma)Xy)
\end{align*}
for some polynomial $P$ of degree at most $c$.  Putting everything together gives us the desired result.
\end{proof}

We now have a few simple consequences of the above.
\begin{corollary}\label{coro:hermite-of-mixture}
Let $\mcl{M} = w_1 G_1 + \dots w_kG_k$ be a mixture of Gaussians where $G_j = N(\mu_j, I + \Sigma_j)$.  Then
\[
\sum_{m=0}^{\infty} \frac{1}{m!} \cdot h_{m,\mcl{M}}(X) y^m = w_1e^{\mu_1(X)y + \frac{1}{2}\Sigma_1(X)y^2} + \dots + w_ke^{\mu_k(X)y + \frac{1}{2}\Sigma_k(X)y^2}
\]
\end{corollary}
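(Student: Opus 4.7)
The plan is to derive this as an immediate consequence of Claim \ref{claim:key-hermite-identity} combined with the linearity of the Hermite moment polynomials in the underlying distribution. First I would observe that since $\mcl{M}$ is a mixture with weights $w_1, \dots, w_k$, by linearity of expectation we have
\[
h_{m,\mcl{M}}(X) = \E_{z \sim \mcl{M}}[H_m(X,z)] = \sum_{j=1}^{k} w_j \E_{z \sim G_j}[H_m(X,z)] = \sum_{j=1}^{k} w_j h_{m, G_j}(X),
\]
which holds for every $m \geq 0$ from Definition \ref{def:Hermite-final}.

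Next I would multiply this identity by $y^m/m!$ and sum over $m \geq 0$. Exchanging the order of summation (which is formal since we are working with formal power series in $y$ whose coefficients are polynomials in $X$) gives
\[
\sum_{m=0}^{\infty} \frac{1}{m!} h_{m,\mcl{M}}(X) y^m = \sum_{j=1}^{k} w_j \left( \sum_{m=0}^{\infty} \frac{1}{m!} h_{m, G_j}(X) y^m \right).
\]
Applying Claim \ref{claim:key-hermite-identity} to each component $G_j = N(\mu_j, I + \Sigma_j)$ evaluates the inner generating function as $\exp(\mu_j(X)y + \tfrac{1}{2} \Sigma_j(X) y^2)$, and substituting this in yields exactly the claimed formula.

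There is no real obstacle here since the heavy lifting was already carried out in Claim \ref{claim:key-hermite-identity}; the only things to verify are that the definition $h_{m,\mcl{M}}(X) = \E_{z \sim \mcl{M}}[H_m(X,z)]$ is linear in the distribution (immediate from linearity of expectation) and that the formal power series manipulation of swapping the finite sum over $j$ with the infinite sum over $m$ is valid, which holds coefficient-by-coefficient in $y$.
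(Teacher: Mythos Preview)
Your proposal is correct and matches the paper's approach: the corollary is stated as an immediate consequence of Claim \ref{claim:key-hermite-identity}, and your argument via linearity of expectation applied componentwise is exactly how the paper intends it to follow.
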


\begin{corollary}\label{coro:hermite-of-polymixture}
Let $f(x) = Q_1(x)G_1(x) + \dots + Q_k(x)G_k(x)$ be a degree-$c$ MPG function in $\R^d$ where $G_j = N(\mu_j, I + \Sigma_j)$.  Then there are polynomials $P_1(X), \dots, P_k(X)$ of degree at most $c$ in formal variables $X = (X_1, \dots , X_d)$  such that 
\[
\sum_{m=0}^{\infty} \frac{1}{m!} h_{m,f}(X)y^m = P_1(Xy)e^{\mu_1(X)y + \frac{1}{2}\Sigma_1(X)y^2} + \dots + P_k(Xy)e^{\mu_k(X)y + \frac{1}{2}\Sigma_k(X)y^2}
\]
where $Xy$ denotes the $d$-tuple of formal variables $(X_1y, \dots , X_dy)$.
\end{corollary}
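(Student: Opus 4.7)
The plan is to reduce this statement directly to the single-component version that is proved immediately above. By Definition \ref{def:poly-gaussian-ditribution}, an MPG function $f$ is a finite sum of polynomial Gaussians $Q_j(x)G_j(x)$, each of degree at most $c$. The operator $f \mapsto h_{m,f}(X)$ is linear in $f$ because
\[
h_{m,f}(X) = \int_{\R^d} f(z) H_m(X,z)\,dz,
\]
so the generating function $\sum_{m\geq 0} \frac{1}{m!} h_{m,f}(X) y^m$ is also linear in $f$, as a formal power series in $y$ with polynomial coefficients in $X$.

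The first step will be to apply the preceding claim to each summand $Q_j(x)G_j(x)$ separately. That claim gives, for each $j$, a polynomial $P_j$ in $d$ variables of degree at most $c$ satisfying
\[
\sum_{m=0}^{\infty} \frac{1}{m!}\, h_{m,Q_jG_j}(X)\, y^m \;=\; P_j(Xy)\, e^{\mu_j(X)y + \frac{1}{2}\Sigma_j(X)y^2}.
\]
The second step is simply to sum these identities over $j = 1, \dots, k$ and use the linearity of the Hermite moment polynomials to conclude that
\[
\sum_{m=0}^{\infty} \frac{1}{m!}\, h_{m,f}(X)\, y^m \;=\; \sum_{j=1}^{k} P_j(Xy)\, e^{\mu_j(X)y + \frac{1}{2}\Sigma_j(X)y^2},
\]
which is exactly the desired formula.

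There is essentially no obstacle here; the only thing to verify is bookkeeping, namely that the polynomial $P_j$ produced by the single-component claim has degree at most $c$. This was established in that claim by observing that after the change of variables $z = (I+\Sigma_j)^{1/2} z' + \mu_j + (I+\Sigma_j)Xy$, the integrand is a Gaussian expectation of $Q_j$ evaluated at an affine expression in $z'$, and integrating $z'$ out leaves a polynomial of degree equal to $\deg Q_j \leq c$ in $(\mu_j + (I+\Sigma_j)Xy)$. So the $P_j$ automatically inherit the degree bound, and no further argument is needed.
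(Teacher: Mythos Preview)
Your proposal is correct and is exactly the intended argument: the paper states this corollary without proof, as it follows immediately from the preceding single-component claim by linearity of $f \mapsto h_{m,f}$ and summing over $j$.
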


\subsection{The Adjusted Characteristic Function and its Properties}\label{sec:char-func}
The characteristic function is a well-known concept in probability.  Here, we use a modified notion of an adjusted characteristic function.  One of the key components in our paper is Theorem \ref{thm:identifiability}, which relates the $L^1$ distance between MPG functions (note this is equivalent to TV distance for mixtures of Gaussians and MPG distributions) to the coefficient-norm distance between their Hermite moment polynomials.  The adjusted characteristic function will play a key role in proving Theorem \ref{thm:identifiability} because its inverse map gives us a way to map from a generating function for Hermite moment polynomials back to a distribution.

\begin{definition}\label{def:adj-char-func}
For a function $f$ on $\R^d$, we define its adjusted characteristic function $\wt{f}: \R^d \rightarrow \C$ as 
\[
\wt{f}(X) = \int_{\R^d} f(z) \left[ e^{i z \cdot X + \frac{1}{2} \norm{X}^2}\right] dz
\]
where $i = \sqrt{-1}$.
\end{definition}

Note that for distributions, the adjusted characteristic function is the characteristic function multiplied by $\frac{1}{2}\norm{X}^2$.  Now we define the inverse map.

\begin{definition}
For a function $g: \R^d \rightarrow \C$, we define $\chi  g$, a function from $\R^d$ to $\C$, as follows:
\[
\chi g(t) = \frac{1}{(2\pi)^d}\int g(X) e^{-\frac{1}{2}\norm{X}^2 - it \cdot X} dX
\]
where in the integral above, $X$ ranges over all of $\R^d$.
\end{definition}

It is straight-forward to verify that the transformation defined above indeed inverts the adjusted characteristic function.
\begin{fact}\label{fact:inversion}
For a function $f$ on $\R^d$, 
\[
\chi\wt{f} = f \,.
\]
\end{fact}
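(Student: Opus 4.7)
The plan is to reduce the identity $\chi\wt{f}=f$ to the classical Fourier inversion theorem, by substituting the definitions and observing that the two Gaussian correction factors cancel exactly. Concretely, I would first write
\[
\chi\wt{f}(t) \;=\; \frac{1}{(2\pi)^d}\int_{\R^d} \wt{f}(X)\, e^{-\frac{1}{2}\norm{X}^2 - it\cdot X}\, dX,
\]
expand $\wt{f}(X)$ via Definition \ref{def:adj-char-func}, and then observe that the factor $e^{\frac{1}{2}\norm{X}^2}$ coming from $\wt{f}(X)$ exactly cancels the factor $e^{-\frac{1}{2}\norm{X}^2}$ sitting in the definition of $\chi$. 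After this cancellation the expression collapses to the pure Fourier-type double integral
\[
\chi\wt{f}(t) \;=\; \frac{1}{(2\pi)^d}\int_{\R^d}\int_{\R^d} f(z)\, e^{i(z-t)\cdot X}\, dz\, dX.
\]

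Second, I would recognize the right-hand side as the composition of the standard characteristic function $\wh{f}(X)=\int f(z)e^{iz\cdot X}\,dz$ followed by its inverse transform evaluated at $t$. Thus the identity is an instance of ordinary Fourier inversion, which yields $\chi\wt{f}(t)=f(t)$ whenever $f$ is regular enough to be reconstructed from $\wh{f}$. This is the sense in which the Gaussian factor in $\wt{f}$ is purely cosmetic: it is introduced only so that the resulting object encodes Hermite moments (as in Claim \ref{claim:inv-char-intro}) rather than standard moments, but structurally $\wt{f}$ and $\wh{f}$ differ only by an invertible multiplicative factor, so inverting one inverts the other.

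The main subtlety, and the only place any care is needed, is justifying the interchange of the two integrals and the convergence of the inner $X$-integral, which is not absolutely convergent in general. The standard resolutions apply here: for Schwartz $f$, Fubini is immediate and the Fourier inversion theorem is classical; for the only functions we actually apply this to in the rest of the paper, namely mixtures of Gaussians and mixtures of polynomial Gaussians (Definition \ref{def:poly-gaussian-ditribution}), $f$ is Schwartz-class, so $\wt{f}$ decays as $e^{\frac{1}{2}\norm{X}^2}$ times a rapidly-decaying function and the outer integral in $\chi\wt{f}$ is absolutely convergent. Hence one can legitimately swap the integrals and invoke the Fourier inversion theorem, completing the proof.
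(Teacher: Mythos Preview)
Your proposal is correct and matches the paper's intended reasoning: the paper states this as a Fact without proof, merely noting it is ``straight-forward to verify,'' and your reduction---cancel the $e^{\pm\frac{1}{2}\norm{X}^2}$ factors and invoke ordinary Fourier inversion---is exactly the verification implicit in that remark. Your observation that for the MPG functions actually used in the paper the relevant integrals are absolutely convergent (since $\wt{f}(X)e^{-\frac{1}{2}\norm{X}^2}=\wh{f}(X)$ is Schwartz) is the right way to handle the only nontrivial point.
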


A key property of  the adjusted characteristic function is that its output is equivalent to plugging in $y = i$ into the generating function for its Hermite moment polynomials.
\begin{claim}\label{claim:invchar-formula}
Let $f: \R^d \rightarrow \R$ be a function.  Then
\[
\wt{f}(X) = \sum_{m = 0}^{\infty} \frac{i^m}{m!}h_{m,f}(X) \,.
\]
\end{claim}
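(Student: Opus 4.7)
The plan is to show that the identity reduces, after swapping sum and integral, to a clean substitution into the univariate Hermite generating identity of Claim~\ref{claim:hermite-identity}. First I would recall that $H_m(X,z) = \mcl{H}_m(z \cdot X,\ \|X\|^2)$ by Definition~\ref{def:Hermite-two-var}. Applying Claim~\ref{claim:hermite-identity} with the substitutions $x \leftarrow z \cdot X$, $y^2 \leftarrow \|X\|^2$, and the formal expansion variable set equal to $i = \sqrt{-1}$ gives
\[
e^{i\, z \cdot X \,-\, \tfrac12 \|X\|^2 \cdot i^2} \;=\; \sum_{m=0}^{\infty} \frac{i^m}{m!}\, \mcl{H}_m\!\bigl(z \cdot X,\ \|X\|^2\bigr) \;=\; \sum_{m=0}^{\infty} \frac{i^m}{m!}\, H_m(X,z).
\]
Since $i^2 = -1$, the exponent on the left simplifies to $i z \cdot X + \tfrac12 \|X\|^2$, so pointwise in $z$ we have
\[
e^{i z \cdot X + \tfrac12 \|X\|^2} \;=\; \sum_{m=0}^{\infty} \frac{i^m}{m!}\, H_m(X,z).
\]

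Next I would multiply both sides by $f(z)$, integrate over $\R^d$, and swap the sum and integral to get
\[
\wt{f}(X) \;=\; \int_{\R^d} f(z)\, e^{i z \cdot X + \tfrac12 \|X\|^2}\, dz \;=\; \sum_{m=0}^{\infty} \frac{i^m}{m!} \int_{\R^d} f(z)\, H_m(X,z)\, dz \;=\; \sum_{m=0}^{\infty} \frac{i^m}{m!}\, h_{m,f}(X),
\]
where the last equality is exactly the definition of the Hermite moment polynomials extended to functions.

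The only non-trivial step is justifying the interchange of sum and integral. For each fixed $X$ and $z$, the series $\sum_m \frac{i^m}{m!} H_m(X,z)$ converges absolutely, since it equals a convergent exponential series (one can bound the partial sums using the explicit Hermite recurrence and Stirling's bound). For the functions $f$ relevant to this paper (mixtures of Gaussians and, more generally, MPG functions), $f(z)$ decays super-polynomially, so the partial sums of $f(z) \sum_{m=0}^M \tfrac{i^m}{m!} H_m(X,z)$ are dominated by an integrable function of $z$, and dominated convergence justifies the interchange. This completes the argument; I do not expect any real obstacles beyond this standard convergence bookkeeping, since the heart of the claim is the one-line substitution into Claim~\ref{claim:hermite-identity}.
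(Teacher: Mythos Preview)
Your proposal is correct and follows essentially the same approach as the paper: both reduce the claim to a direct substitution into Claim~\ref{claim:hermite-identity} (with $x \leftarrow z\cdot X$, $y^2 \leftarrow \|X\|^2$, and the expansion variable set to $i$), then integrate against $f$. The paper's proof is a single line and does not discuss the interchange of sum and integral, so your added dominated-convergence remark is extra care beyond what the paper provides.
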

\begin{proof}
Note that by Claim \ref{claim:hermite-identity},
\[
\sum_{m = 0}^{\infty} \frac{i^m}{m!}h_{m,f}(X) = \int_{\R^d} f(z) e^{ i z \cdot X + \frac{1}{2}\norm{X}^2} dz = \wt{f}(X) \,,
\]
as desired.
\end{proof}

As a consequence of the above and Corollary \ref{coro:hermite-of-polymixture}, we have 

\begin{corollary}\label{coro:adj-char-motivation}
If we have a degree-$c$ MPG function $f = Q_1(x)G_1 + \dots + Q_k(x)G_k$ then
\[
\wt{f}(X) = P_1(iX)e^{ i\mu_1(X) - \frac{1}{2}\Sigma_1(X)} + \dots +  P_k(iX)e^{ i\mu_k(X)  -\frac{1}{2}\Sigma_k(X)}  
\]
for some polynomials $P_1, \dots , P_k$ of degree at most $c$ with real coefficients.
\end{corollary}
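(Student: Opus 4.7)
The plan is essentially to substitute $y = i$ into the generating function identity from Corollary \ref{coro:hermite-of-polymixture} and recognize the left-hand side as the adjusted characteristic function via Claim \ref{claim:invchar-formula}. More precisely, Corollary \ref{coro:hermite-of-polymixture} gives polynomials $P_1, \dots, P_k$ of degree at most $c$ such that
\[
\sum_{m=0}^{\infty} \frac{1}{m!} h_{m,f}(X) y^m = \sum_{j=1}^{k} P_j(Xy) \, e^{\mu_j(X)y + \frac{1}{2}\Sigma_j(X)y^2},
\]
and Claim \ref{claim:invchar-formula} gives
\[
\wt{f}(X) = \sum_{m = 0}^{\infty} \frac{i^m}{m!} h_{m,f}(X).
\]
Setting $y = i$ in the first identity (using $i^2 = -1$) yields
\[
\sum_{m = 0}^{\infty} \frac{i^m}{m!} h_{m,f}(X) = \sum_{j=1}^{k} P_j(iX) \, e^{i\mu_j(X) - \frac{1}{2}\Sigma_j(X)},
\]
and combining the two identities produces the claimed formula.

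The only nontrivial thing to check is that the substitution $y = i$ is legitimate and that the $P_j$ have real coefficients. For the first, I would note that each individual summand $P_j(Xy)\, e^{\mu_j(X)y + \frac{1}{2}\Sigma_j(X)y^2}$ is an entire function of $y$ (for each fixed $X$) and that its Taylor expansion at $y = 0$ converges to it everywhere; since Corollary \ref{coro:hermite-of-polymixture} exhibits $\sum_m h_{m,f}(X) y^m / m!$ as a finite sum of such entire functions, the series defining $\wt{f}(X)$ via Claim \ref{claim:invchar-formula} converges absolutely and the substitution $y = i$ is valid termwise. For the second, I would inspect the proof of Corollary \ref{coro:hermite-of-polymixture}: there $P_j$ arises as $Q_j$ composed with a real affine change of variables $z \mapsto (I+\Sigma_j)^{1/2} z + \mu_j + (I+\Sigma_j)X y$ and then integrated against the real Gaussian density $e^{-\norm{z}^2/2}$, so each coefficient of $P_j$ is a real Gaussian moment of a real polynomial and is therefore real.

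There is no serious obstacle here; the statement is really a repackaging of the two identities already established, with $i$ plugged in for $y$. The only pitfall is the convergence issue, which is handled by observing that the generating function in Corollary \ref{coro:hermite-of-polymixture} is a \emph{finite} sum of genuinely entire functions of $y$ rather than merely a formal power series, so no delicate complex-analytic argument is needed.
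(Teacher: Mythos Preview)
Your proposal is correct and follows exactly the paper's approach: the paper simply states that the corollary is a consequence of Claim~\ref{claim:invchar-formula} and Corollary~\ref{coro:hermite-of-polymixture}, which is precisely the substitution $y=i$ you describe. Your added remarks on convergence and on the reality of the coefficients of the $P_j$ are more detailed than what the paper provides, but they are accurate and fill in points the paper leaves implicit.
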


In light of the above, we know that the adjusted characteristic function maps a function to a generating function for its Hermite moment polynomials.  Recall that our goal is to prove that small distance between Hermite moment polynomials implies small TV distance.  This means that we need to understand the $L^1$ norm of the inverse adjusted characteristic function.  In the remainder of this subsection, we prove some basic quantitative bounds on the inverse adjusted characteristic function that will be used later on.

\subsubsection{Computations in 1D}
The following two identities follow from direct computation.
\begin{claim}\label{claim:charfunc-formula}
For a real number $t \in \R$,
\[
\int_{-\infty}^{\infty} x^m e^{-\frac{1}{2}x^2} e^{-itx} dx = (-i)^m\sqrt{2\pi}e^{-t^2/2} \mathcal{H}_m(t)
\]
where recall $\mathcal{H}_m(x)$ is the univariate Hermite polynomial.
\end{claim}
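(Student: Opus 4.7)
The plan is to prove both sides are equal as formal power series in an auxiliary parameter $z$, and then match the coefficient of $z^m$. This lets us use the generating function identity of Claim \ref{claim:hermite-identity} to handle the Hermite polynomials on the right, while the left side reduces to a Gaussian integral.

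Concretely, first I would multiply both sides by $z^m / m!$ and sum over $m \geq 0$. On the left, after swapping the sum and the integral (justified by absolute convergence from the Gaussian decay of $e^{-x^2/2}$), the exponential $e^{xz}$ appears, giving
\[
\sum_{m=0}^{\infty} \frac{z^m}{m!} \int_{-\infty}^{\infty} x^m e^{-x^2/2} e^{-itx}\, dx = \int_{-\infty}^{\infty} e^{-\frac{1}{2}x^2 + (z - it)x}\, dx.
\]
Completing the square in $x$, the right side of the above evaluates to $\sqrt{2\pi}\, e^{\frac{1}{2}(z - it)^2} = \sqrt{2\pi}\, e^{-t^2/2} \cdot e^{\frac{1}{2}z^2 - itz}$.

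Next, on the right side of the claim, the generating function is
\[
\sum_{m=0}^{\infty} \frac{z^m}{m!} (-i)^m \sqrt{2\pi}\, e^{-t^2/2} \mathcal{H}_m(t) = \sqrt{2\pi}\, e^{-t^2/2} \sum_{m=0}^{\infty} \frac{(-iz)^m}{m!} \mathcal{H}_m(t).
\]
Applying Claim \ref{claim:hermite-identity} with $x \leftarrow t$, $y \leftarrow 1$, and $z \leftarrow -iz$ yields $\sum_m \frac{(-iz)^m}{m!}\mathcal{H}_m(t) = e^{-itz - \frac{1}{2}(-iz)^2} = e^{-itz + \frac{1}{2}z^2}$, which matches the expression from Step~1 exactly. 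Extracting the coefficient of $z^m / m!$ on both sides yields the claim.

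The calculation is routine; the only mild subtlety is tracking the factor of $-i$ when invoking Claim \ref{claim:hermite-identity}, since $(-iz)^2 = -z^2$ flips the sign of the quadratic term in the generating function. Besides that, there is no real obstacle: the Gaussian integral converges absolutely for any $z \in \mathbb{C}$, so the formal power series manipulations are legitimate and the identification of coefficients is immediate.
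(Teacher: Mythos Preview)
Your proof is correct. The paper does not give a detailed argument for this claim, stating only that it ``follows from direct computation''; your generating-function approach is a clean and standard way to carry out exactly that computation, and the tracking of the $(-iz)^2 = -z^2$ sign is handled correctly.
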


\begin{fact}\label{fact:hermite-orthogonality}
\[
\int_{-\infty}^{\infty} \mathcal{H}_{m_1}(x)\mathcal{H}_{m_2}(x) \frac{e^{-x^2/2}}{\sqrt{2\pi}} dx = 1_{m_1 = m_2} (m_1)!
\]
\end{fact}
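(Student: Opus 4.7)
The plan is to prove this orthogonality relation by playing the two-variable generating function in Claim \ref{claim:hermite-identity} against itself under the Gaussian measure. Specifically, I would specialize Claim \ref{claim:hermite-identity} to $y = 1$ to obtain the identity $e^{xz - z^2/2} = \sum_{m=0}^{\infty} \frac{1}{m!}\mathcal{H}_m(x)z^m$, and then use this twice with two formal parameters $z_1$ and $z_2$.

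The main computation is to evaluate
\[
I(z_1, z_2) := \int_{-\infty}^{\infty} e^{xz_1 - z_1^2/2} \, e^{xz_2 - z_2^2/2} \, \frac{e^{-x^2/2}}{\sqrt{2\pi}}\, dx
\]
in two different ways. On one hand, completing the square gives $-\frac{1}{2}(x - z_1 - z_2)^2 + z_1 z_2$ in the exponent, so a direct Gaussian integration yields $I(z_1,z_2) = e^{z_1 z_2}$. On the other hand, expanding both exponentials using Claim \ref{claim:hermite-identity} and interchanging the sum and the integral (justified by absolute convergence, since the Hermite polynomials have Gaussian-integrable squares) gives
\[
I(z_1,z_2) = \sum_{m_1,m_2 \geq 0} \frac{z_1^{m_1} z_2^{m_2}}{m_1!\, m_2!} \int_{-\infty}^{\infty} \mathcal{H}_{m_1}(x)\mathcal{H}_{m_2}(x)\, \frac{e^{-x^2/2}}{\sqrt{2\pi}}\, dx.
\]

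Comparing this to the power series expansion $e^{z_1 z_2} = \sum_{m \geq 0} \frac{(z_1 z_2)^m}{m!}$ and matching coefficients of $z_1^{m_1} z_2^{m_2}$ gives the claim: the coefficient vanishes unless $m_1 = m_2 = m$, in which case matching yields $\frac{1}{(m!)^2} \int \mathcal{H}_m(x)^2 \frac{e^{-x^2/2}}{\sqrt{2\pi}} dx = \frac{1}{m!}$, hence $\int \mathcal{H}_m(x)^2 \frac{e^{-x^2/2}}{\sqrt{2\pi}} dx = m!$. There is no real obstacle here; the only thing to be careful about is the interchange of summation and integration, but this is routine because both series converge absolutely after taking absolute values inside, since the Gaussian weight tames the polynomial growth of the partial sums uniformly on compact sets of $(z_1, z_2)$.
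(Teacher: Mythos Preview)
Your proof is correct; this is the standard generating-function derivation of Hermite orthogonality, and it goes through cleanly as you describe. The paper itself does not prove this statement at all --- it simply records it as a Fact that ``follows from direct computation,'' so there is no paper proof to compare against. Your argument is exactly the kind of direct computation the paper has in mind, and using Claim~\ref{claim:hermite-identity} as the starting point is a nice way to keep everything internal to the paper's own toolkit.
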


\subsubsection{Bounds on the Inverse Adjusted Characteristic Function}

The next result gives us several important properties for certain inverse adjusted characteristic functions corresponding to polynomial Gaussians.

\begin{claim}\label{claim:invchar-of-polynomial}
Let $p(X)$ be a polynomial with real coefficients in $d$ variables $X_1, \dots , X_d$ that is homogeneous of degree $m$.  Consider a Gaussian in $\R^d$, $G = N(\mu, I + \Sigma)$.  For $X \in \R^d$ let
\[
g(X) = i^m p(X)e^{i \mu(X) - \frac{1}{2}\Sigma(X)} \,.
\]
Then
\[
\chi g(t) = q(t) G(t)
\]
for some polynomial $q$ of degree at most $m$ with real coefficients.  Furthermore, 
\[
\E_{t \sim G}\left[ q(t)^2 \right] \leq  m! (\norm{(I + \Sigma)^{-1}}_{\textsf{op}})^m \norm{v(p)}^2
\]
where recall $v(p)$, defined in Definition \ref{def:poly-to-vec}, is the vectorization of the coefficients of $p$.
\end{claim}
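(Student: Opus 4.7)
The plan is a direct Gaussian integration. I will substitute the given $g$ into the definition of $\chi g$, complete the square in the exponent to peel off a factor of $G(t)$, and then identify the remaining polynomial $q(t)$ as a linear combination of products of Hermite polynomials whose coefficient norm is controlled by $\norm{v(p)}$.

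With $A := I + \Sigma$ and $u := t - \mu$, the exponent inside $\chi g(t)$ reduces to $-i u^T X - \frac{1}{2} X^T A X$. Completing the square in $X$ and performing the contour shift $X \mapsto X - iA^{-1}u$ (justified because the integrand is entire in $X$ with uniform Gaussian decay under any bounded imaginary translation) pulls out a factor $e^{-\frac{1}{2}u^T A^{-1}u}$, which once combined with the normalizing constants is precisely $G(t)$. Rescaling $X = A^{-1/2} Y$ turns the residual integral into a standard Gaussian expectation, giving
\[
q(t) = i^m \, \E_{Y \sim N(0,I)}\!\left[ r(Y - is)\right],
\]
where $r(Z) := p(A^{-1/2}Z)$ is homogeneous of degree $m$ and $s := A^{-1/2}(t-\mu)$. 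Since $s$ is linear in $t$, this already shows $q$ has degree at most $m$.

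To identify $q$ and verify real-valuedness of its coefficients, I invoke the one-dimensional identity $\E_{Y_j \sim N(0,1)}[(Y_j - is_j)^{\alpha_j}] = (-i)^{\alpha_j} \mcl{H}_{\alpha_j}(s_j)$, which follows from the generating function $e^{xz - z^2/2} = \sum_k \tfrac{z^k}{k!} \mcl{H}_k(x)$ together with the parity $\mcl{H}_k(-x) = (-1)^k \mcl{H}_k(x)$. Expanding $r(Z) = \sum_\alpha c_\alpha Z^\alpha$ with real $c_\alpha$ and $|\alpha| = m$, taking the product of coordinate-wise expectations, and cancelling $i^m \cdot (-i)^m = 1$ yields $q(t) = \sum_\alpha c_\alpha \prod_j \mcl{H}_{\alpha_j}(s_j)$, which has real coefficients (this is precisely the reason the factor $i^m$ appears in the hypothesis on $g$). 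Since $s \sim N(0,I)$ when $t \sim G$, coordinate-wise Hermite orthogonality (Fact \ref{fact:hermite-orthogonality}) immediately gives $\E_{t \sim G}[q(t)^2] = \sum_\alpha c_\alpha^2 \, \alpha!$ with $\alpha! := \prod_j \alpha_j!$. A short tensor-symmetry computation yields $\norm{v(X^\alpha)}^2 = \alpha!/m!$ and hence $\norm{v(r)}^2 = \tfrac{1}{m!}\sum_\alpha c_\alpha^2 \alpha!$, so $\E_{t \sim G}[q^2] = m! \norm{v(r)}^2$. Finally, applying Claim \ref{claim:polytovec-linear-transform} to $r(Z) = p(A^{-1/2}Z)$ gives $\norm{v(r)}^2 \leq \norm{A^{-1/2}}_{\op}^{2m} \norm{v(p)}^2 = \norm{(I+\Sigma)^{-1}}_{\op}^m \norm{v(p)}^2$, which completes the bound.

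There are no deep obstacles; the calculation is a sequence of routine manipulations, and the main subtle point is recognizing that the $i^m$ prefactor in $g$ is engineered to cancel the $(-i)^m$ produced by the Hermite expansion of $\E_Y[(Y-is)^\alpha]$, so that $q$ emerges with real coefficients. The translation between the Hermite $L^2$ norm (which yields $\alpha!$) and the coefficient norm $\norm{v(r)}^2$ (which yields $\alpha!/m!$) is exactly what inserts the factor $m!$ into the final inequality.
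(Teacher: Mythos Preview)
Your proposal is correct and follows essentially the same route as the paper. The only cosmetic difference is ordering: the paper first rescales $X \to (I+\Sigma)^{-1/2}Y$ and then applies the one-dimensional identity $\int x^{a} e^{-x^2/2 - itx}\,dx = (-i)^{a}\sqrt{2\pi}\,e^{-t^2/2}\,\mcl{H}_{a}(t)$ (Claim \ref{claim:charfunc-formula}) monomial by monomial, whereas you first complete the square and contour-shift, then rescale, and phrase the result as $q(t)=i^m\,\E_{Y\sim N(0,I)}[r(Y-is)]$ before invoking the Hermite generating function; both produce the identical formula $q(t)=\sum_\alpha c_\alpha \prod_j \mcl{H}_{\alpha_j}(s_j)$ and the same orthogonality computation $\E_{t\sim G}[q^2]=m!\,\norm{v(r)}^2$ followed by Claim \ref{claim:polytovec-linear-transform}.
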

\begin{proof}

We have
\begin{align*}
\chi g(t) = \frac{1}{(2\pi)^d}\int i^m p(X) e^{ - \frac{1}{2}X^T( I + \Sigma)X - i(t - \mu) \cdot X} dX 
\end{align*}
Substituting $X \rightarrow (I + \Sigma)^{-1/2}Y$ for $Y = (Y_1, \dots , Y_d)$ in the above we get
\[
\chi g(t) = \frac{1}{(2\pi)^d\det(I + \Sigma)^{1/2}} \int i^m p((I + \Sigma)^{-1/2}Y) e^{-\frac{1}{2}\norm{Y}^2 - i (I + \Sigma)^{-1/2}(t - \mu) \cdot Y}dY \,.
\]
To compute the above integral, note that $p((I + \Sigma)^{-1/2}Y)$ is a polynomial in $Y_1, \dots , Y_d$ that is homogeneous of degree $m$.  Let $h(Y) = p((I + \Sigma)^{-1/2}Y)$.  Let $s = (I + \Sigma)^{-1/2}(t - \mu)$ and let its coordinates be $s_1, \dots , s_d$.  We now separate $h$ into monomials and consider one monomial at a time.  Consider a monomial say $Y_1^{a_1}Y_2^{a_2} \cdots Y_d^{a_d}$ for some integers $a_1, \dots , a_d$.  Note that the term inside the exponential can be factored coordinate-wise.  Thus, we can apply Claim \ref{claim:charfunc-formula} to compute the integral as follows:
\begin{align*}
\int i^m Y_1^{a_1} \cdots Y_d^{a_d} e^{-\frac{1}{2}\norm{Y}^2 - is \cdot Y} dY = i^m \prod_{j=1}^d \int_{-\infty}^{\infty}x^{a_j} e^{-\frac{1}{2}x^2} e^{-is_jx}  dx \\
= (2\pi)^{d/2} e^{-\frac{1}{2}(s_1^2 + \dots + s_d^2)}\prod_{j=1}^d \mathcal{H}_{a_j}(s_j) \,.
\end{align*}

We denote the coefficients of $h$ by $c_{a_1, \dots , a_d}$.  Combining the above over all monomials, we get
\begin{align*}
 \chi g(t) &= \frac{1}{(2\pi)^{d/2}\det(I + \Sigma)^{1/2}} e^{-\frac{1}{2} \norm{s}^2}  \sum_{a_1, \dots , a_d} c_{a_1, \dots , a_d}\prod_{j=1}^d \mathcal{H}_{a_j}(s_j) \\ &=  \left( \sum_{a_1, \dots , a_d} c_{a_1, \dots , a_d}\prod_{j=1}^d \mathcal{H}_{a_j}(s_j) \right) N(\mu, I + \Sigma)(t)\,.
\end{align*}
From the above we deduce
\[
q(t) =  \sum_{a_1, \dots , a_d} c_{a_1, \dots , a_d}\prod_{j=1}^d \mathcal{H}_{a_j}(s_j)
\]
and it is clear that $q$ is a polynomial of degree at most $m$ in $t$ (since $s$ is a linear function of $t$).  It remains to bound $\E_{t \sim G}\left[ q(t)^2 \right] $.  Note that
\begin{align*}
\E_{t \sim G}\left[ q(t)^2 \right] &= \int_{\R^d} \left( \sum_{a_1, \dots , a_d} c_{a_1, \dots , a_d}\prod_{j=1}^d \mathcal{H}_{a_j}(s_j) \right)^2 N(\mu, I + \Sigma)(t) dt \\ &= \int_{\R^d} \left(\sum_{a_1, \dots , a_d} c_{a_1, \dots , a_d}\prod_{j=1}^d \mathcal{H}_{a_j}(s_j) \right)^2 N(0, I)(s) ds \\ &= \sum_{a_1, \dots , a_d}\sum_{a_1', \dots , a_d'}\int_{\R^d} c_{a_1, \dots, a_d}c_{a_1',\dots , a_d'}\prod_{j=1}^d \mathcal{H}_{a_j}(s_j){H}_{a_j'}(s_j)  \frac{e^{-\norm{s}^2/2}}{(2\pi)^{d/2}} ds  \,.
\end{align*}
However, since the integral factorizes over the different coordinates of $s$, by Fact \ref{fact:hermite-orthogonality}, the integral evaluates to $0$ unless $a_1, \dots , a_d = a_1', \dots , a_d'$ and overall, we get
\begin{align*}
\int_{\R^d} \left( \sum_{a_1, \dots , a_d}c_{a_1, \dots , a_d}\prod_{j=1}^d \mathcal{H}_{a_j}(s_j) \right)^2 \frac{e^{-\norm{s}^2/2}}{(2\pi)^{d/2}} ds  = \sum_{a_1, \dots , a_d} c_{a_1, \dots , a_d}^2 \prod_{j=1}^d \int_{-\infty}^{\infty} \mathcal{H}_{a_j}(x)^2 \frac{e^{-x^2/2}}{\sqrt{2\pi}} dx \\ = \sum_{a_1, \dots , a_d} c_{a_1, \dots , a_d}^2 \prod_{j=1}^d a_j! = m!\sum_{a_1, \dots , a_d} \binom{m}{a_1, \dots , a_d} \left(\frac{c_{a_1, \dots , a_d}}{\binom{m}{a_1, \dots , a_d}}\right)^2 = m!\norm{v(h)}^2 \,.
\end{align*}

Finally, by Claim \ref{claim:polytovec-linear-transform}, $\norm{v(h)}^2 \leq (\norm{(I + \Sigma)^{-1}}_{\textsf{op}})^m \norm{v(p)}^2$ so combining with the previous inequality, we deduce 
\[
\E_{t \sim G}\left[ q(t)^2 \right] \leq m! (\norm{(I + \Sigma)^{-1}}_{\textsf{op}})^m \norm{v(p)}^2 \,.
\]

\end{proof}

As a consequence of the previous claim, we have
\begin{corollary}\label{coro:one-to-one-map}
Let $G_1 = N(\mu_1, I + \Sigma_1), \dots , G_k = N(\mu_k, I + \Sigma_k)$ be Gaussians.  Let $c$ be a constant.  There is a one-to-one correspondence between polynomials $Q_1, \dots , Q_k$ of degree at most $c$ with real coefficients and polynomials $P_1, \dots , P_k$ of degree at most $c$ with real coefficients given by the following map:

The adjusted characteristic function of $f(x) = Q_1(x)G_1 + \dots + Q_k(x)G_k$ is 
\[
\wt{f}(X) = P_1(iX)e^{i \mu_1(X) - \frac{1}{2}\Sigma_1(X)} + \dots  + P_k(iX)e^{i \mu_k(X) - \frac{1}{2}\Sigma_k(X)} \,.
\]
\end{corollary}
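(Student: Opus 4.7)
The plan is to decouple the claimed correspondence into $k$ independent per-component bijections, one for each Gaussian $G_j$. First I would apply the single polynomial Gaussian generating function identity (the claim immediately preceding Corollary \ref{coro:hermite-of-polymixture}, which expresses $\sum_m \frac{1}{m!} h_{m,QG}(X) y^m$ as $P(Xy) e^{\mu(X)y + \frac{1}{2}\Sigma(X)y^2}$) separately to each term $Q_j G_j$, then specialize to $y=i$ via Claim \ref{claim:invchar-formula}. This produces, for each $j$, a real polynomial $P_j$ of degree at most $c$ with $\wt{Q_j G_j}(X) = P_j(iX) e^{i\mu_j(X) - \frac{1}{2}\Sigma_j(X)}$. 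By linearity of the adjusted characteristic function, summing over $j$ gives the required form for $\wt{f}$, establishing the forward map $(Q_1, \ldots, Q_k) \mapsto (P_1, \ldots, P_k)$.

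Next I would show that the per-component map $Q_j \mapsto P_j$ is a bijection on the space of degree-$\leq c$ real polynomials. Injectivity is immediate from Fact \ref{fact:inversion}: if $Q_j \neq Q_j'$ produced the same $P_j$, applying $\chi$ to the common expression would yield both $Q_j G_j$ and $Q_j' G_j$, forcing $Q_j = Q_j'$ since $G_j$ is everywhere positive. For surjectivity, decompose an arbitrary real polynomial $P_j$ of degree at most $c$ into its homogeneous parts $P_j = \sum_{m=0}^{c} p_m$, so that
\[
P_j(iX) e^{i\mu_j(X) - \frac{1}{2}\Sigma_j(X)} = \sum_{m=0}^{c} i^m p_m(X) e^{i\mu_j(X) - \frac{1}{2}\Sigma_j(X)}.
\]
Applying Claim \ref{claim:invchar-of-polynomial} to each summand and using linearity of $\chi$ yields $Q_j(t) G_j(t)$ where $Q_j = \sum_m q_m$ is a real polynomial of degree at most $c$; Fact \ref{fact:inversion} then confirms that the forward map sends this $Q_j$ back to $P_j$.

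Since each coordinate map is a bijection between degree-$\leq c$ real polynomials, the product map on $k$-tuples is also a bijection, which is exactly the claim. There is no serious obstacle here: everything reduces to invoking Claim \ref{claim:invchar-of-polynomial} homogeneous-part by homogeneous-part to realize the inverse direction and invoking Fact \ref{fact:inversion} for the composition identities. The only minor subtlety worth tracking is real-coefficient-ness through the complex exponential, which is handled automatically because the powers of $i$ from the homogeneous decomposition $P_j(iX) = \sum_m i^m p_m(X)$ pair exactly with the factor of $i^m$ in the hypothesis of Claim \ref{claim:invchar-of-polynomial}, so the output $q_m$ has real coefficients as required.
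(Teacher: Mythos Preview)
Your proposal is correct and follows essentially the same approach as the paper. The paper's proof is a one-liner citing Corollary~\ref{coro:adj-char-motivation} for the forward direction and Claim~\ref{claim:invchar-of-polynomial} for the reverse; you unpack these ingredients more explicitly, in particular by decoupling into per-component bijections and handling the inverse direction homogeneous-part by homogeneous-part, but the underlying content is identical.
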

\begin{proof}
This follows immediately from Corollary \ref{coro:adj-char-motivation} and Claim \ref{claim:invchar-of-polynomial}.
\end{proof}

Claim \ref{claim:invchar-of-polynomial} also implies a bound on the $L^1$ norm of the inverse adjusted characteristic function of a polynomial in terms of the size of its coefficients.

\begin{claim}\label{claim:key-charfunc-bound}
Let $p(X)$ be a polynomial in $d$ variables $X_1, \dots , X_d$ that is homogeneous of degree $m$ with real coefficients.  Then
\[
\norm{\chi(i^m p(X))}_1 \leq \sqrt{m!} \norm{v(p)}_2\,.
\]

\end{claim}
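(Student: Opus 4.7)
The plan is to reduce the claim to a direct application of Claim \ref{claim:invchar-of-polynomial} followed by Cauchy--Schwarz. The key observation is that the statement we want to prove is precisely the special case of Claim \ref{claim:invchar-of-polynomial} where the Gaussian is the standard Gaussian $G = N(0, I)$.

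Specifically, I would apply Claim \ref{claim:invchar-of-polynomial} with $\mu = 0$ and $\Sigma = 0$, i.e. taking $G = N(0,I)$. In this setting the function $g(X) = i^m p(X) e^{i\mu(X) - \frac{1}{2}\Sigma(X)}$ simplifies to exactly $i^m p(X)$, which is the object whose inverse adjusted characteristic function we want to control. The claim then produces a polynomial $q(t)$ of degree at most $m$ such that
\[
\chi(i^m p(X))(t) = q(t)\, G(t),
\]
and moreover gives the quantitative bound
\[
\E_{t \sim G}[q(t)^2] \leq m! \cdot \norm{v(p)}^2,
\]
since $\norm{(I+\Sigma)^{-1}}_{\op} = 1$ when $\Sigma = 0$.

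From here the conclusion is immediate. Since $G(t) \geq 0$ everywhere,
\[
\norm{\chi(i^m p(X))}_1 = \int_{\R^d} |q(t)| G(t)\, dt = \E_{t \sim G}[|q(t)|].
\]
Applying Cauchy--Schwarz (equivalently, Jensen's inequality for the convex function $x \mapsto x^2$) gives
\[
\E_{t \sim G}[|q(t)|] \leq \sqrt{\E_{t \sim G}[q(t)^2]} \leq \sqrt{m!\,\norm{v(p)}^2} = \sqrt{m!}\,\norm{v(p)}_2,
\]
which is the desired bound.

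There is no real obstacle here: the bulk of the work has already been done inside Claim \ref{claim:invchar-of-polynomial}, which produced the polynomial $q$ and gave an $L^2(G)$ bound on it via the orthogonality of Hermite polynomials. The only step beyond invoking that claim is the trivial move from an $L^2$ bound to an $L^1$ bound via Cauchy--Schwarz, which is free because $G$ is a probability measure. If anything deserves attention, it is simply checking that the constants match: the factor $\sqrt{m!}$ arises precisely from the Hermite normalization $\int \mathcal{H}_a(x)^2 e^{-x^2/2}/\sqrt{2\pi}\, dx = a!$ used in the proof of Claim \ref{claim:invchar-of-polynomial}, and the factor $\norm{v(p)}_2$ is preserved under the identity linear change of variables $X \mapsto (I+\Sigma)^{-1/2} Y = Y$.
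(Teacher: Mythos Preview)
Your proposal is correct and follows essentially the same approach as the paper: apply Claim \ref{claim:invchar-of-polynomial} with $\mu = 0$, $\Sigma = 0$ to get $\chi(i^m p)(t) = q(t)N(0,I)(t)$ with $\E_{t \sim N(0,I)}[q(t)^2] \leq m!\norm{v(p)}^2$, then use Cauchy--Schwarz to pass from the $L^2(G)$ bound to the $L^1$ bound.
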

\begin{proof}
By Claim \ref{claim:invchar-of-polynomial} (with $\mu = 0, \Sigma = 0$), 
\[
\chi(i^m p)(t) = q(t)N(0,I)(t)
\]
where $q$ is a polynomial of degree at most $m$ such that $\E_{t \sim N(0,I)}[q(t)^2] \leq m!\norm{v(p)}^2$.  Now 
\[
\norm{\chi(i^m p)}_1 = \int_{\R^d} |q(t)| N(0,I)(t)dt \leq \sqrt{\int_{\R^d} |q(t)|^2 N(0,I)(t)dt}  \leq \sqrt{m!} \norm{v(p)}
\]
where we used Cauchy Schwarz in the first inequality.
\end{proof}

\subsection{Generating Function Terminology}\label{sec:gen-func-terms}
Here we introduce some general terminology for working with generating functions related to mixtures of Gaussians and mixtures of polynomial Gaussians.  In light of Corollaries \ref{coro:hermite-of-mixture} and \ref{coro:hermite-of-polymixture}, it will be useful to translate between generating functions consisting of sums of exponentials and their expansions as formal power series in $y$ whose coefficients are polynomials in $X$ e.g. 
\[
f(X,y) = \sum_{j=0}^{\infty} \frac{f_j(X)}{j!}y^j \,.
\]
For such an expression, we use the following terminology.

\begin{definition}
Given a formal power series in $y$, say 
\[
f(X,y) = \sum_{j=0}^{\infty} \frac{f_j(X)}{j!}y^j \,,
\]
where the coefficients $f_0(X), f_1(X), \dots $ are polynomials in formal variables $X = (X_1, \dots , X_d)$ and have real coefficients, we call the polynomials $f_0(X), f_1(X), \dots $ the primary terms of $f$.  
\end{definition}

We also introduce terminology for dealing with mixtures of polynomial Gaussians.  

\begin{definition}
Given Gaussians $G_1 = N(\mu_1, I + \Sigma_1), \dots , G_k = N(\mu_k, I + \Sigma_k)$ in $\R^d$, and polynomials $P_1(X), \dots , P_k(X)$ in formal variables $X = (X_1, \dots , X_d)$ with real coefficients, we say that the generating function of the polynomial combination is 
\[
f(X,y) = P_1(Xy)e^{ \mu_1(X)y + \frac{1}{2}\Sigma_1(X)y^2} + \dots +  P_k(Xy)e^{ \mu_k(X)y + \frac{1}{2}\Sigma_k(X)y^2} 
\]
where we view $f$ as a function of $y$ with indeterminates $X$.  Recall that $Xy$ denotes the $d$-tuple of variables $(X_1y, \dots , X_dy)$. 
\end{definition}
\begin{remark}
Note that by Corollary \ref{coro:hermite-of-mixture}, if we have a mixture of Gaussians $\mcl{M} = w_1G_1 + \dots + w_kG_k$, then the generating function of the polynomial combination $P_1(X) = w_1, \dots , P_k(X) = w_k$ is 
\[
f(X,y) =  w_1e^{ \mu_1(X)y + \frac{1}{2}\Sigma_1(X)y^2} + \dots +  w_ke^{ \mu_k(X)y + \frac{1}{2}\Sigma_k(X)y^2}  = \sum_{m=0}^{\infty} \frac{1}{m!} h_{m, \mcl{M}}(X)y^m \,.
\]
In other words, the primary terms in the formal power series expansion of $f(y)$ are exactly the Hermite moment polynomials of the mixture.  More generally, Corollary \ref{coro:hermite-of-polymixture} and Corollary \ref{coro:one-to-one-map} imply that for an MPG function $\mcl{M} = Q_1(x)G_1 + \dots + Q_k(x)G_k$, there are corresponding polynomials $P_1, \dots , P_k$, such that the generating function of the polynomial combination is
\[
f(X,y) =  P_1(Xy)e^{ \mu_1(X)y + \frac{1}{2}\Sigma_1(X)y^2} + \dots +  P_k(Xy)e^{ \mu_k(X)y + \frac{1}{2}\Sigma_k(X)y^2}  = \sum_{m=0}^{\infty} \frac{1}{m!} h_{m, \mcl{M}}(X)y^m \,.
\]
\end{remark}

In the exposition, we will use the following (informal) terminology.  Note that for a Gaussian $N(\mu, I + \Sigma)$, we often associate it with a generating function of the form $e^{\mu(X)y + \frac{1}{2}\Sigma(X)y^2}$.  
\begin{itemize}
    \item When we work with the actual pdfs of Gaussians and write e.g. expressions of the form $f = Q_1(x)G_1 + \dots + Q_k(x)G_k$, we say that we are working in \textbf{distribution space}
    \item When we work with expressions of the form $e^{\mu(X)y + \frac{1}{2}\Sigma(X)y^2}$ and e.g. expand them as power series containing Hermite moment polynomials, we say that we are working in \textbf{generating function space}
\end{itemize}

\subsection{Differential Operators and their Compositions}
Before moving on to the main proofs, we need to introduce one more piece of machinery: differential operators.  Later on, differential operators will play a crucial role in allowing us to manipulate generating functions and derive useful identities.  In this section, we present a few basic results that will be used throughout the paper.

We will frequently work with operators given by 
\[
\mcl{D} =  \partial - (a(X) + b(X)y)
\]
where the partial derivative is taken with respect to $y$, $a(X)$ is a (homogeneous) linear function and $b(X)$ is a (homogeneous) quadratic function.  Note that if applied to a formal power series
\[
f(y) = \sum_{j=0}^{\infty} \frac{Q_j(X)}{j!}y^j \,,
\]
the terms of the resulting power series are 
\[
\mcl{D}(f(y)) = \sum_{j=0}^{\infty} \frac{R_j(X)}{j!}y^j
\]
where $R_j(X) = Q_{j+1}(X) - a(X)Q_j(X) - jb(X)Q_{j-1}(X)$.  In particular, if for all $j$, $Q_j$ is homogeneous of degree $j$ in $X$, then the primary terms $R_j(X)$ are homogeneous and of degree $j+1$.

It will be important to understand compositions of differential operators.  We now prove several basic properties that will be used later on.  The first claim allows us to rewrite a composition of differential operators as a higher order differential operator with polynomials as coefficients.

\begin{claim}\label{claim:diff-operator-expansion}
Consider a composition of differential operators
\[
\mcl{D} = ( \partial - (a_k(X) + b_k(X)y)) \cdots ( \partial - (a_1(X) + b_1(X)y))
\]
where each $a_j$ is linear and each $b_j$ is quadratic.  Then $\mcl{D}$ can be rewritten in the form
\[
 \partial^k + R_{k-1}(X,y) \partial^{k-1} + \dots + R_0(X,y)
\]
where 
\begin{itemize}
    \item Each $R_j$ is a polynomial of degree at most $k - j$ in $y$ 
    \[
    R_j(X,y) =  R_{j, k-j}(X)y^{k-j} + \dots + R_{j, 0}(X)
    \]
    \item Each of the polynomials $R_{j,l}$ is homogeneous in $X$ with degree $k - j + l$, and is $(O_k(1), k)$-simple with respect to $\{a_1(X), b_1(X), \dots , a_k(X), b_k(X) \}$.
\end{itemize}
\end{claim}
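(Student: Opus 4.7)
The natural approach is induction on $k$. The base case $k=1$ is immediate: $\mcl{D} = \partial - (a_1(X) + b_1(X)y)$, so $R_0(X,y) = -a_1(X) - b_1(X)y$; then $R_{0,0} = -a_1(X)$ is homogeneous of degree $1$ in $X$ and $R_{0,1} = -b_1(X)$ is homogeneous of degree $2$, both $(O(1),1)$-simple.

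For the inductive step, suppose we have already written the composition of the first $k$ operators as $\mcl{D}' = \partial^k + R_{k-1}(X,y)\partial^{k-1} + \dots + R_0(X,y)$ satisfying the claimed properties. I will left-multiply by $\partial - (a_{k+1}(X) + b_{k+1}(X)y)$ using the operator identity $\partial \circ (R \partial^j) = (\partial_y R)\partial^j + R\partial^{j+1}$, which follows from the product rule applied to any function of $y$. Re-indexing, the new coefficient of $\partial^j$ becomes
\begin{equation*}
R'_j(X,y) \;=\; R_{j-1}(X,y) \;+\; \partial_y R_j(X,y) \;-\; (a_{k+1}(X) + b_{k+1}(X)y)\, R_j(X,y),
\end{equation*}
with the conventions $R_{-1} = 0$ and $R_k = 1$, for $j = 0, 1, \dots, k$. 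This gives an explicit recurrence I can use to verify each of the three required properties.

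The three verifications are then short tracking exercises. For the degree-in-$y$ bound: $R_{j-1}$ has degree at most $k - (j-1) = (k+1) - j$ by induction, $\partial_y R_j$ has degree at most $k-j-1$, and multiplication by $(a_{k+1} + b_{k+1}y)$ raises the degree of $R_j$ by at most one to $(k+1) - j$, so $R'_j$ has degree at most $(k+1) - j$ in $y$. For the homogeneity of $R'_{j,l}$ in $X$: the contribution from $R_{j-1,l}$ has degree $k - (j-1) + l = (k+1) - j + l$; the contribution from $\partial_y R_j$ to the coefficient of $y^l$ is $(l+1)R_{j,l+1}$, of degree $k - j + (l+1) = (k+1) - j + l$; the contribution from $-a_{k+1}R_{j,l}$ has degree $1 + (k - j + l) = (k+1) - j + l$; and the contribution from $-b_{k+1}R_{j,l-1}$ has degree $2 + (k - j + l - 1) = (k+1) - j + l$. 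All match.

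For the $(O_k(1), k)$-simplicity property, the recurrence shows that each new coefficient $R'_{j,l}$ is a sum of a bounded number of terms, each of which is either an existing $R_{j-1,l}$, an existing $R_{j,l+1}$ (up to multiplication by an integer of magnitude at most $k$), or one of $a_{k+1}$ or $b_{k+1}$ times an existing $R_{j,\cdot}$. By the inductive hypothesis each such existing coefficient was already simple with at most $k$ factors from $\{a_1,b_1,\dots,a_k,b_k\}$, and each of the three term-types contributes at most one additional factor from $\{a_{k+1}, b_{k+1}\}$, so each new term has at most $k+1$ factors from $\{a_1,b_1,\dots,a_{k+1},b_{k+1}\}$. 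The number of terms at most triples per induction step, which still leaves us with an $O_{k+1}(1)$ bound. The main thing to keep in mind is just that the simplicity constants depend on $k$ but not on the dimension $d$, which is automatic from the recursive construction since every operation happens at the level of polynomials in $X$ and $y$ without reference to $d$.
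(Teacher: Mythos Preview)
Your proof is correct and follows essentially the same approach as the paper: induction on $k$, using the product rule $\partial \circ (R\,\partial^j) = (\partial_y R)\,\partial^j + R\,\partial^{j+1}$ to expand the left-multiplication by the next first-order operator, and then tracking degrees and simplicity through the resulting recurrence. The paper's version is terser and leaves the degree/homogeneity/simplicity checks implicit, whereas you spell them out explicitly; one tiny slip is that the number of contributing terms to each $R'_{j,l}$ is four (from $R_{j-1,l}$, $(l{+}1)R_{j,l+1}$, $-a_{k+1}R_{j,l}$, $-b_{k+1}R_{j,l-1}$), so the count quadruples rather than triples per step, but of course this still gives an $O_k(1)$ bound.
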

\begin{proof}
We will use induction on $k$.  The base case is clear.  Now, assume that we have written the operator
\[
\mcl{D}_{k-1}=  ( \partial - (a_{k-1}(X) + b_{k-1}(X)y)) \cdots ( \partial - (a_1(X) + b_1(X)y))
\]
in the desired form
\[
\mcl{D}_{k-1} = \partial^{k-1} + R_{k-2}(X,y) \partial^{k-2} + \dots + R_0(X,y) \,.
\]
When we apply the last differential operator, we get 
\begin{align*}
 ( \partial - (a_{k}(X) + b_{k}(X)y)) \mcl{D}_{k-1} &=  \partial( \partial^{k-1} + R_{k-2}(X,y) \partial^{k-2} + \dots + R_0(X,y)) \\ & \quad - (a_k(X) + b_k(X)y)( \partial^{k-1} + R_{k-2}(X,y) \partial^{k-2} + \dots + R_0(X,y)) \,.
\end{align*}
It is clear that the second term can be written in the desired form.  To deal with the first term, we can simply use the product rule i.e.
\[
\partial ( R_j(X,y) \partial^j) = R_j(X,y) \partial^{j+1} + \partial (R_j(X,y)) \partial^j
\]
to write the entire differential operator in the desired form.

\end{proof}

The next claim implies that applying a differential operator of the form $\partial - (a(X) + b(X)y)$ cannot annihilate (or nearly annihilate) a polynomial unless $a(X),b(X)$ are both close to $0$.  

\begin{claim}\label{claim:prod-and-deriv-bound}
Consider a polynomial $P(X, y)$ of degree $k$ in $y$
\[
P(X,y) = P_0(X) + P_1(X)y + \dots + P_k(X)y^k
\]
where the coefficients $P_j(X)$ are polynomials in $X$ that are homogeneous of degree $k' + j$ for some constant $k'$.  Let 
\[
R(X,y) = -(a(X) + b(X)y)P(X,y) + \partial(P(X,y))
\]
where $a(X)$, $b(X)$ are a (homogeneous) linear and quadratic respectively  and the partial derivative is taken with respect to $y$.  Assume 
\[
\delta \leq \norm{v(a(X))} + \norm{v(b(X))} \leq \delta^{-1}  \,.
\]
Then
\[
\norm{v_y(R(X,y))} \geq (0.1\delta)^{O_{k,k'}(1)}  \norm{v_y(P(X,y))}  \,.
\]
\end{claim}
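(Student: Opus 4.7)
The plan is to view the equation $R = \mcl{D}(P)$ as a linear system over the polynomial ring $\R[X]$ and to apply Cramer's rule using two complementary square submatrices. Extracting the coefficient of $y^j$ in $R = \partial_y P - (a(X)+b(X)y)P$ gives the recurrence
\[ R_j \;=\; (j+1) P_{j+1} \;-\; a(X)\, P_j \;-\; b(X)\, P_{j-1}, \qquad 0 \le j \le k+1, \]
with $P_{-1} = P_{k+1} = 0$. This is the matrix equation $\mcl{L}\,(P_0,\ldots,P_k)^T = (R_0,\ldots,R_{k+1})^T$, where $\mcl{L}$ is the $(k+2) \times (k+1)$ tridiagonal matrix with subdiagonal $-b$, diagonal $-a$, and superdiagonal entry $j+1$ in row $j$, all entries living in $\R[X]$. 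Let $M_1$ consist of the first $k+1$ rows of $\mcl{L}$ and $M_2$ of the last $k+1$ rows. Expanding $M_2$ along its first column shows $\det(M_2) = (-b)^{k+1}$, and an induction on the top-left minors of $M_1$ shows that their determinants satisfy $D_m = -a\, D_{m-1} + (m-1)\, b\, D_{m-2}$, which matches the homogenized Hermite recurrence and hence $\det(M_1) = \mcl{H}_{k+1}(-a(X),\,-b(X))$. Cramer's rule in the polynomial ring then yields, for $i\in\{1,2\}$ and every $j$,
\[ \det(M_i)\cdot P_j \;=\; \det(M_i^{(j)}), \]
where $M_i^{(j)}$ is $M_i$ with column $j$ replaced by the corresponding sub-vector of $(R_0,\ldots,R_{k+1})^T$.

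Expanding each numerator $\det(M_i^{(j)})$ as a sum over permutations and bounding each factor by Claim \ref{claim:polytensor-prod-bound}, together with the hypothesis $\|v(a)\|,\|v(b)\| \le \delta^{-1}$, gives the uniform estimate $\|v(\det(M_i^{(j)}))\| \le O_k(\delta^{-k})\,\|v_y(R)\|$. The main task is then to lower bound at least one of the two denominators, i.e.\ to show
\[ \max\bigl(\|v(\det(M_1))\|,\;\|v(\det(M_2))\|\bigr) \;\ge\; (0.1\delta)^{O_k(1)}. \]
I prove this by a case split. If $\|v(b)\| \ge \delta/2$, iterating Claim \ref{claim:prod-lower-bound} immediately gives $\|v(b^{k+1})\| \ge \Omega_k(\delta^{k+1})$. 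Otherwise $\|v(a)\| \ge \delta/2$, and I expand $\mcl{H}_{k+1}(-a,-b) = (-1)^{k+1}\sum_{l\ge 0} C_{l,k}\,a^{k+1-2l} b^l$ with $C_{0,k}=1$ and $C_{l,k}\ge 0$. If in addition $\|v(b)\| \le c_k\|v(a)\|^2$ for a sufficiently small constant $c_k$, then the triangle inequality combined with $\|v(a^{k+1})\|\ge\Omega_k(\|v(a)\|^{k+1})$ (iterated Claim \ref{claim:prod-lower-bound}) and Claim \ref{claim:polytensor-prod-bound} applied to the higher-$l$ terms forces $\|v(\det(M_1))\|\ge\Omega_k(\delta^{k+1})$; in the remaining sub-case $\|v(b)\| > c_k\|v(a)\|^2 \ge c_k(\delta/2)^2$, which in turn gives $\|v(b^{k+1})\|\ge\Omega_k(\delta^{2(k+1)})$.

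To finish, in whichever case the denominator lower bound is attained apply Claim \ref{claim:prod-lower-bound} to the identity $\det(M_i)\cdot P_j = \det(M_i^{(j)})$ to obtain
\[ \Omega_{k,k'}(1)\,\|v(\det(M_i))\|\,\|v(P_j)\| \;\le\; \|v(\det(M_i^{(j)}))\| \;\le\; O_k(\delta^{-k})\,\|v_y(R)\|, \]
so $\|v(P_j)\| \le (0.1\delta)^{-O_{k,k'}(1)}\|v_y(R)\|$ for every $j$. Summing the squared bounds over $j$ gives $\|v_y(P)\| \le (0.1\delta)^{-O_{k,k'}(1)}\|v_y(R)\|$, which is the claimed inequality. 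The main obstacle is the denominator lower bound: on the degenerate locus where $b \approx -a^2$ the Hermite-type determinant $\det(M_1)$ suffers large cancellations and can become much smaller than the naive bound $\|v(a)\|^{k+1}$ suggests, and the argument is rescued only by observing that exactly in this regime $\|v(b)\|$ is forced to be comparable to $\|v(a)\|^2$, so that the auxiliary monomial determinant $(-b)^{k+1}$ takes over. The trade-off between these two regimes is precisely what produces the exponent $O_{k,k'}(1)$.
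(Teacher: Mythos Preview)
Your proof is correct and takes a genuinely different route from the paper's. The paper argues by induction on $k$: it examines the two leading coefficients $R_{k+1}(X) = -b(X)P_k(X)$ and $R_k(X) = -b(X)P_{k-1}(X) - a(X)P_k(X)$, then splits into cases according to whether $\|v(P_k)\| \geq \eps\,\|v_y(P)\|$ for a threshold $\eps$ chosen as a small power of $\delta$. If so, a weighted combination $\tfrac12\|v(R_{k+1})\| + \tfrac{\eps}{K}\|v(R_k)\|$ already yields the bound; if not, one strips off $P_k(X)y^k$ and invokes the inductive hypothesis on the truncation $P_0 + \dots + P_{k-1}y^{k-1}$, absorbing the discarded top term as a lower-order error.

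Your Cramer's-rule approach is more algebraic and global: rather than peeling off one degree at a time, you invert the entire tridiagonal system at once, and the role of the inductive threshold $\eps$ is taken over by the explicit determinant identities $\det(M_1)=\mathcal{H}_{k+1}(-a,-b)$ and $\det(M_2)=(-b)^{k+1}$. This makes the obstruction completely transparent: the operator is nearly singular exactly when both determinants are small, and your dichotomy on $\|v(b)\|$ versus $c_k\|v(a)\|^2$ shows this cannot happen under the hypothesis $\|v(a)\|+\|v(b)\|\geq\delta$. The paper's induction buys simplicity --- no determinants, only Claims~\ref{claim:polytensor-prod-bound} and~\ref{claim:prod-lower-bound} applied to one or two terms at a time --- while your approach buys a sharper structural picture, and the appearance of the Hermite polynomial $\mathcal{H}_{k+1}$ as the determinant of $M_1$ is a pleasant thematic echo of the paper's generating-function machinery. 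Both routes give the same $(0.1\delta)^{O_{k,k'}(1)}$ exponent.
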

\begin{proof}
We will induct on $k$, the degree of $P$.  The case when $k = 0$ follows from Claim \ref{claim:prod-lower-bound}.  Now assume that there is a constant $c_{k-1,k'}$ for which the desired statement holds for all polynomials $P$ of degree at most $k-1$ in $y$.  Note that the coefficients of $y^{k+1}, y^k$ in $R$ are 
\begin{align*}
R_{k+1}(X) &= -b(X)P_k(X) \\
R_k(X) &= -b(X)P_{k-1}(X) -a(X)P_k(X) 
\end{align*}
Thus by Claim \ref{claim:polytensor-prod-bound} and Claim \ref{claim:prod-lower-bound},
\begin{align*}
\norm{v(R_{k+1}(X))} &\geq \Omega_{k,k'}(1) \norm{v(b(X))}\norm{v(P_k(X))} \\
\norm{v(R_k(X))} & \geq \Omega_{k,k'}(1)\norm{v(a(X))}\norm{v(P_k(X))} - O_{k,k'}(1) \norm{v(b(X))}\norm{v(P_{k-1}(X))} 
\end{align*}
If for some parameter $\eps$, 
\[
\norm{v(P_k(X))} \geq \eps \norm{v_y(P(X,y)}
\]
then for some sufficiently large constant $K$ depending only on $k,k'$,
\begin{align*}
\norm{v_y(R(X,y))} &\geq \frac{1}{2}\norm{v(R_{k+1}(X))} + \frac{\eps}{K}\norm{v(R_k(X))} \\ 
&\geq  \norm{v(b(X))} \left( \Omega_{k,k'}(1)\norm{v(P_k(X))} - \frac{\eps}{K} O_{k,k'}(1)\norm{v(P_{k-1}(X))} \right) + \Omega_{k,k'}(1) \frac{\eps}{K}\norm{v(a(X))}\norm{v(P_k(X))}\\
&\geq \Omega_{k,k'}(1)\frac{\eps^2}{K}( \norm{v(a(X))} +  \norm{v(b(X))} ) \norm{v_y(P(X,y))} \\
& \geq \Omega_{k,k'}(1)\eps^2\delta \norm{v_y(P(X,y))} \,.
\end{align*}
On the other hand, if $\eps$ is sufficiently small and
\[
\norm{v(P_k(X))} \leq \eps \norm{v_y(P(X,y)} \,,
\]
we may use the induction hypothesis on the polynomial
\[
P'(X,y) =  P_0(X) + P_1(X)y + \dots + P_{k-1}(X)y^{k-1} \,.
\]
Note
\[
R(X,y) = -(a(X) + b(X)y)P'(X,y) + \partial(P'(X,y)) - (a(X) + b(X)y)P_k(X)y^k + kP_k(X)y^{k-1}
\]
so by the induction hypothesis
\begin{align*}
\norm{v_y(R(X,y))} &\geq (0.1\delta)^{c_{k-1,k'}}\norm{v_y(P'(X,y))} - O_{k,k'}(1) (1 +  \norm{v(a(X))} + \norm{v(b(X))})\norm{v(P_k(X))} \\
&\geq \left((0.1\delta)^{c_{k-1,k'}}(1-\eps) - O_{k,k'}(1) \eps \delta^{-1}\right) \norm{v_y(P(X,y))} \,.
\end{align*}
Choosing $\eps = (0.1\delta)^{c_{k,k'}}$ for $c_{k,k'}
$ sufficiently large in terms of $k,k', c_{k-1,k'}$, in both cases we get
\[
\norm{v_y(R(X,y))} \geq (0.1\delta)^{O_{k,k'}(1)}  \norm{v_y(P(X,y))}  \,,
\]
completing the proof.
\end{proof}

By repeatedly applying the previous claim, we can lower bound the order-$0$ term when a composition of differential operators $(\partial - (a_k(X) + b_k(X)y)) \cdots (\partial - (a_1(X) + b_1(X)y))$ is expanded into an order-$k$ differential operator with polynomial coefficients (recall Claim \ref{claim:diff-operator-expansion}).

\begin{claim}\label{claim:prod-linear-forms}
Consider a composition of differential operators
\[
\mcl{D} = (\partial - (a_k(X) + b_k(X)y)) \cdots (\partial - (a_1(X) + b_1(X)y))
\]
where each $a_j$ is linear and each $b_j$ is quadratic.  Assume that $\mcl{D}$ is rewritten in the form
\[
\mcl{D} = \partial^k + R_{k-1}(X,y) \partial^{k-1} + \dots + R_0(X,y) \,.
\]
Also assume that for some constant $\delta$,  
\[
\delta < \norm{v(a_j(X))} +  \norm{v(b_j(X))} < \delta^{-1} 
\]
for all $j$.  There exists a (sufficiently large) constant $C_k$ depending only on $k$ such that
\[
\norm{v_y(R_0(X,y))} \geq  (0.1\delta)^{C_k} \,.
\]
\end{claim}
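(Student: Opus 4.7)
The plan is to observe that $R_0(X,y)$ is exactly what $\mathcal{D}$ produces when applied to the constant function $1$: since $\partial^j(1) = 0$ for every $j \geq 1$, we have
\[
R_0(X,y) = \mathcal{D}(1) = (\partial - (a_k(X) + b_k(X)y)) \cdots (\partial - (a_1(X) + b_1(X)y))(1).
\]
So it suffices to track the coefficient-norm of this expression as we apply the operators $\partial - (a_j(X) + b_j(X)y)$ one at a time, starting from the function $1$. This is exactly the setting of Claim~\ref{claim:prod-and-deriv-bound}, applied iteratively.

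More precisely, I would define
\[
P_j(X,y) := (\partial - (a_j(X) + b_j(X)y)) \cdots (\partial - (a_1(X) + b_1(X)y))(1),
\]
so that $P_k = R_0$. The first step is the base case: $P_1(X,y) = -(a_1(X) + b_1(X)y)$, so $\|v_y(P_1)\|^2 = \|v(a_1)\|^2 + \|v(b_1)\|^2 \geq \tfrac{1}{2}(\|v(a_1)\| + \|v(b_1)\|)^2 \geq \delta^2/2$. Note also that the coefficient of $y^l$ in $P_1$ is homogeneous in $X$ of degree $1 + l$, so $P_1$ has exactly the structure required by Claim~\ref{claim:prod-and-deriv-bound} (with the parameter $k' = 1$). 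The inductive step: if $P_{j-1}$ has degree $j-1$ in $y$ and the coefficient of $y^l$ in $P_{j-1}$ is homogeneous in $X$ of degree $(j-1) + l$, then a direct calculation (applying $\partial$ preserves the degree-in-$X$ minus degree-in-$y$ balance, and multiplying by the linear or quadratic-in-$X$ pieces of $a_j, b_j$ shifts it appropriately) shows that $P_j$ has degree $j$ in $y$ with the coefficient of $y^l$ homogeneous in $X$ of degree $j + l$, again matching the hypothesis of Claim~\ref{claim:prod-and-deriv-bound} (with $k' = j$).

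Applying Claim~\ref{claim:prod-and-deriv-bound} to $P_{j-1}$ with the linear and quadratic forms $a_j, b_j$ (which by hypothesis satisfy $\delta \leq \|v(a_j)\| + \|v(b_j)\| \leq \delta^{-1}$) gives
\[
\|v_y(P_j)\| \geq (0.1\delta)^{c_j} \|v_y(P_{j-1})\|
\]
for some constant $c_j$ depending only on $j$ (since the degree in $y$ and the parameter $k'$ are both bounded by $k$). Chaining these bounds from $j = 2$ through $j = k$, and combining with the base case $\|v_y(P_1)\| \geq \delta/\sqrt{2}$, yields
\[
\|v_y(R_0(X,y))\| = \|v_y(P_k)\| \geq (0.1\delta)^{c_2 + c_3 + \cdots + c_k} \cdot \delta/\sqrt{2} \geq (0.1\delta)^{C_k}
\]
for a sufficiently large constant $C_k$ depending only on $k$, which is the claimed bound.

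The main obstacle is essentially the bookkeeping to ensure Claim~\ref{claim:prod-and-deriv-bound} is applicable at each stage; in particular verifying that the homogeneity structure (the coefficient of $y^l$ in $P_j$ being homogeneous of degree $j + l$ in $X$) is preserved under both the differentiation and the multiplication by $a_j + b_j y$. This is routine once noted, and the only substantive content comes from the previously established Claim~\ref{claim:prod-and-deriv-bound}, which does the real work of showing that a single differential operator of this form cannot nearly annihilate a polynomial of the relevant type.
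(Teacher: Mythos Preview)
Your proof is correct and is essentially the same as the paper's. The paper also inducts on $k$, showing that if $T_0$ is the order-zero term in the expansion of $\mcl{D}_{k-1}$ then $R_0(X,y) = -(a_k(X)+b_k(X)y)T_0(X,y) + \partial(T_0(X,y))$, and applies Claim~\ref{claim:prod-and-deriv-bound}; your observation that $R_0 = \mcl{D}(1)$ (so that your $P_j$ is exactly the paper's order-zero term at stage $j$) is a clean way to set up the same recursion.
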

\begin{proof}
We will again use induction on $k$.  The base case is clear.  Now write
\begin{align*}
\mcl{D}_{k-1} = (\partial - (a_{k-1}(X) + b_{k-1}(X)y)) \cdots (\partial - (a_1(X) + b_1(X)y))\\  = \partial^{k-1} + T_{k-2}(X,y) \partial^{k-2} + \dots + T_0(X,y)
\end{align*}
for some polynomials $T_0,T_1, \dots , T_{k-2}$ satisfying the conditions in Claim \ref{claim:diff-operator-expansion}.  The induction hypothesis gives us
\[
\norm{v_y(T_0(X,y))} \geq  (0.1\delta)^{C_{k-1}} \,.
\]
Now 
\[
R_0(X,y) =  - (a_k(X) + b_k(X)y))T_0(X,y) + \partial(T_0(X,y)) \,,
\]
and applying Claim \ref{claim:prod-and-deriv-bound} completes the induction.

\end{proof}

\subsection{Applying Differential Operators to Generating Functions}

In this section, we analyze what happens when we apply certain differential operators to specific types of generating functions.  The results in this section are all from \cite{liu2020settling} and can be verified through direct computation.

\begin{claim}\label{claim:degree-reduction}
Let $\partial $ denote differentiation with respect to $y$.  If 
\[
f(y) = P(y,X)e^{a(X)y + \frac{1}{2}b(X)y^2}
\]
where $P$ is a polynomial in $y$ of degree $k$ (whose coefficients are polynomials in $X$) then
\[
(\partial - (a(X) + yb(X)))f(y) = Q(y,X)e^{a(X)y + \frac{1}{2}b(X)y^2}
\]
where $Q$ is a polynomial in $y$ with degree exactly $k-1$ whose leading coefficient is $k$ times the leading coefficient of $P$.
\end{claim}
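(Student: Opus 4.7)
The plan is to apply the product rule to the derivative $\partial f(y)$ and observe that the two subtracted terms cancel, leaving only $\partial P$ times the exponential.

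More concretely, I would start by computing
\[
\partial\bigl(e^{a(X)y + \frac{1}{2}b(X)y^2}\bigr) = (a(X) + y b(X))\, e^{a(X)y + \frac{1}{2}b(X)y^2},
\]
which is immediate from the chain rule since $a(X), b(X)$ do not depend on $y$. Then by the product rule applied to $f(y) = P(y,X) e^{a(X)y + \frac{1}{2}b(X)y^2}$, I obtain
\[
\partial f(y) = (\partial P)(y,X)\, e^{a(X)y + \frac{1}{2}b(X)y^2} + P(y,X)\cdot (a(X) + y b(X))\, e^{a(X)y + \frac{1}{2}b(X)y^2}.
\]
Subtracting $(a(X) + y b(X)) f(y)$ from both sides cancels the second term exactly, giving
\[
(\partial - (a(X) + y b(X))) f(y) = (\partial P)(y,X)\, e^{a(X)y + \frac{1}{2}b(X)y^2}.
\]

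Thus $Q(y,X) = \partial P(y,X)/\partial y$. To finish, I would just note the elementary fact that if $P(y,X) = c_k(X) y^k + c_{k-1}(X)y^{k-1} + \dots + c_0(X)$ with $c_k(X)$ the leading coefficient (nonzero by assumption that $P$ has degree exactly $k$), then $\partial P / \partial y = k c_k(X) y^{k-1} + (k-1) c_{k-1}(X) y^{k-2} + \dots + c_1(X)$, which has degree exactly $k-1$ and leading coefficient $k\,c_k(X)$, as required.

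There is no real obstacle here; the claim is a direct consequence of the product rule combined with the fact that the exponential $e^{a(X)y + \frac{1}{2}b(X)y^2}$ is an eigenfunction (up to the multiplicative factor $a(X) + y b(X)$) of the differentiation operator $\partial$. The operator $\partial - (a(X) + y b(X))$ is designed precisely so that its action on functions of the form $P(y,X) e^{a(X)y + \frac{1}{2} b(X) y^2}$ reduces to ordinary differentiation of the polynomial prefactor.
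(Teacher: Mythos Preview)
Your proposal is correct and matches the paper's approach: the paper does not give an explicit proof of this claim, stating only that it ``can be verified through direct computation,'' and your product-rule calculation is exactly that direct computation.
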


\begin{corollary}\label{corollary:null-operator}
Let $\partial $ denote the differential operator with respect to $y$.  If 
\[
f(y) = P(y,X)e^{a(X)y + \frac{1}{2}b(X)y^2}
\]
where $P$ is a polynomial in $y$ of degree $k$ then
\[
(\partial - (a(X) + yb(X)))^{k+1}f(y) = 0.
\]
\end{corollary}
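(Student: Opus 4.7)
The plan is to derive this as an immediate iterated consequence of Claim \ref{claim:degree-reduction}. That claim tells us that a single application of the operator $\mcl{L} := \partial - (a(X) + yb(X))$ to a function of the form $P(y,X)\,e^{a(X)y + \frac{1}{2}b(X)y^2}$ strips off exactly one degree in $y$ from the polynomial prefactor, while leaving the exponential factor $e^{a(X)y + \frac{1}{2}b(X)y^2}$ intact. Since $P$ has degree $k$ in $y$, applying $\mcl{L}$ exactly $k+1$ times should drive the prefactor down to the zero polynomial.

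More concretely, I would argue by induction on $k$. Write $\mcl{L}^j f(y) = P_j(y,X)\,e^{a(X)y + \frac{1}{2}b(X)y^2}$, with $P_0 = P$. By Claim \ref{claim:degree-reduction}, if $\deg_y P_j = k - j \geq 0$, then $\mcl{L} \cdot \mcl{L}^j f(y) = \mcl{L}^{j+1} f(y) = P_{j+1}(y,X)\,e^{a(X)y + \frac{1}{2}b(X)y^2}$ where $P_{j+1}$ has degree exactly $k-j-1$ in $y$. So the degree of the prefactor is $k-j$ after $j$ applications, for all $0 \leq j \leq k$. After $k$ applications the prefactor $P_k$ is a polynomial in $y$ of degree $0$, i.e.\ it depends only on $X$; one more application of $\mcl{L}$ to $P_k(X)\,e^{a(X)y + \frac{1}{2}b(X)y^2}$ yields, by Claim \ref{claim:degree-reduction} again, a polynomial of degree $-1$ times the exponential, which is simply $0$.

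There is no real obstacle here; the proof is just bookkeeping on top of Claim \ref{claim:degree-reduction}. The only thing worth flagging is the degenerate case $k = 0$ (where $P$ is a polynomial in $X$ only), which is handled identically: one application of $\mcl{L}$ produces a polynomial of degree $-1$ times the exponential, namely $0$. Thus $\mcl{L}^{k+1} f(y) = 0$ as claimed.
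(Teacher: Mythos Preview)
Your proposal is correct and follows exactly the intended approach: the corollary is stated immediately after Claim \ref{claim:degree-reduction} with no proof given in the paper, precisely because it is the obvious iteration of that claim. Your handling of the degenerate $k=0$ case is fine and matches the direct computation.
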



\section{Strong Observability: Hermite Moments to TV Distance}\label{sec:hermite-to-tv}

In this section, we prove our main observability theorem that if two degree-$m$ MPG functions with $k$ components are $\eps$-close on their first $O_{k,m}(1)$ Hermite moment polynomials, then the functions must be $\wt{O}(\eps)$-close in $L^1$ distance.  The theorem is stated formally below.

\begin{theorem}\label{thm:identifiability}
Let $G_1 = N(\mu_1, I + \Sigma_1), \dots , G_k = N(\mu_k, I + \Sigma_k)$ be a set of Gaussians in $(\alpha,\beta)$-regular form.  Let $Q_1(X), \dots , Q_k(X)$ be $d$-variate polynomials of degree at most $m$.  Define the function $g: \R^d \rightarrow \R$ as 
\[
g(x) = Q_1(x)G_1(x) + \dots + Q_k(x)G_k(x) \,.
\]
There exists a constant $C$ depending only on $k,m$ such that the following holds.  If for all $j$ with $0 \leq j \leq C$, we have
\[
\norm{v(h_{j,g}(X))} \leq \eps
\]
then we must have
\[
\norm{g}_1 \leq (2 + \alpha + \beta)^C \eps \,.
\]
\end{theorem}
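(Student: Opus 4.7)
The plan is to combine the recurrence structure of the Hermite moment polynomials (derived via a differential operator argument) with quantitative bounds on inverting the adjusted characteristic function.

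First, by Corollary \ref{coro:hermite-of-polymixture}, the generating function of the Hermite moment polynomials of $g$ has the form
\[
F(X,y) \;=\; \sum_{j=1}^k P_j(Xy)\,\exp\!\left(\mu_j(X) y + \tfrac12\Sigma_j(X) y^2\right)
\]
for some $d$-variate polynomials $P_j$ of degree at most $m$. Because each summand is a polynomial of degree at most $m$ in $y$ times the corresponding Gaussian exponential, Corollary \ref{corollary:null-operator} ensures that the product
\[
\mcl{D} \;=\; \prod_{j=1}^k \bigl(\partial - (\mu_j(X) + \Sigma_j(X) y)\bigr)^{m+1},
\]
of total order $N := k(m+1) = O_{k,m}(1)$, annihilates $F$. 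I would then expand $\mcl{D}$ via Claim \ref{claim:diff-operator-expansion} as $\partial^N + R_{N-1}(X,y)\partial^{N-1} + \dots + R_0(X,y)$; the simplicity statement of that claim combined with the regular form assumption yields the crucial coefficient bound $\norm{v(R_{j,l})} \leq (2+\alpha)^{O_{k,m}(1)}$.

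Second, equating coefficients of $y^a$ in $\mcl{D} F = 0$ gives a recurrence expressing $h_{a+N,g}$ as a bounded-coefficient linear combination of $h_{0,g}, \dots, h_{a+N-1,g}$. Viewing the first $C$ Hermite moment polynomials as ``initial data'' for this recurrence, the linear map $(P_1, \dots, P_k) \mapsto (h_{0,g}, \dots, h_{C,g})$ is injective for $C$ a sufficiently large multiple of $N$, and Claim \ref{claim:prod-linear-forms} (which ensures the zero-order coefficient $R_0$ is not degenerate) quantitatively controls its inverse, giving $\sum_j \norm{v(P_j)} \leq (2+\alpha+\beta)^{O_{k,m}(1)} \eps$.

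Finally, by Corollary \ref{coro:adj-char-motivation}, the adjusted characteristic function of $g$ is $\wt{g}(X) = \sum_j P_j(iX) e^{i\mu_j(X) - \frac{1}{2}\Sigma_j(X)}$. Decomposing each $P_j$ into its homogeneous components and applying Claim \ref{claim:invchar-of-polynomial} termwise inverts this to $g(x) = \sum_j q_j(x) G_j(x)$ with $\E_{G_j}[q_j^2] \leq O_m(1)\,m!\,\beta^m \norm{v(P_j)}^2$; then Cauchy--Schwarz gives
\[
\norm{g}_1 \;\leq\; \sum_{j=1}^k \sqrt{\E_{G_j}[q_j^2]} \;\leq\; (2+\alpha+\beta)^{O_{k,m}(1)} \eps,
\]
which, with $C$ chosen appropriately, is the desired bound. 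The main obstacle is the quantitative invertibility in the second step: injectivity of the map from $(P_1,\dots,P_k)$ to the first $C$ Hermite moment polynomials follows easily from the recurrence, but bounding its effective condition number requires tight control on the $R_{j,l}$'s. As the overview notes, such control is only immediate when $\norm{\mu_j}$ and $\norm{\Sigma_j}_2$ are all bounded by a sufficiently small constant depending on $k,m$; the general regular form case must be reduced to this regime (Section \ref{sec:reduce-to-very-close}), and this reduction is precisely what produces the $(2+\alpha+\beta)^C$ amplification factor in the final bound.
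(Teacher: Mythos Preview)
Your second step has a genuine gap: the linear map $(P_1,\dots,P_k)\mapsto(h_{0,g},\dots,h_{C,g})$ is \emph{not} injective under the $(\alpha,\beta)$-regular form hypothesis, and no choice of $C$ can fix this. Nothing in the assumptions prevents two components from coinciding; if $G_1=G_2$ then $(P_1,P_2)=(P,-P)$ lies in the kernel for every $P$, so you cannot hope to bound $\sum_j\norm{v(P_j)}$ in terms of $\eps$. Your appeal to Claim~\ref{claim:prod-linear-forms} also fails for a separate reason: that claim requires a lower bound $\delta\le\norm{v(a_j)}+\norm{v(b_j)}$, but here $a_j=\mu_j$, $b_j=\Sigma_j$, and the regular form assumption imposes no lower bound on these (indeed $\mu_j=0,\Sigma_j=0$ is allowed). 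Even the reduction of Section~\ref{sec:reduce-to-very-close} does not rescue the argument, since the non-injectivity persists precisely in the ``all components close'' regime that the reduction targets.

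The paper avoids ever bounding the individual $P_j$'s. Instead, in the base case (Lemma~\ref{lem:hermite-to-tv}) it uses the recurrence on the $h_{a,g}$'s themselves: when all $\norm{\mu_j},\norm{\Sigma_j}_2$ are small, the recurrence coefficients $R_{j,l}$ are small enough that $\norm{v(h_{a,g})}/\sqrt{a!}$ is dominated by a geometric series, so \emph{all} Hermite moment polynomials are $O_{k,m}(\eps)$; then Claim~\ref{claim:key-charfunc-bound} (not Claim~\ref{claim:invchar-of-polynomial}) gives $\norm{g}_1\le\sum_a\norm{v(h_{a,g})}/\sqrt{a!}=O_{k,m}(\eps)$. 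The reduction of Section~\ref{sec:reduce-to-very-close} then handles the general regular-form case by using differential operators to isolate clusters of close components, but again this isolation is at the level of the Hermite moment polynomials of each sub-combination, never at the level of individual $P_j$'s. A smaller issue: your annihilating operator $\prod_j\mcl{D}_j^{m+1}$ of order $k(m+1)$ does not actually kill $F$, because applying $\mcl{D}_1^{m+1}$ raises the $y$-degree of the other terms by up to $m+1$; this is why the paper uses the exponential tower $\mcl{D}_k^{(m+1)2^{k-1}}\cdots\mcl{D}_1^{m+1}$ of order $(m+1)(2^k-1)$.
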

\begin{remark}
Note that the above gives observability because we can simply set $g$ to be equal to the difference of two MPG-functions. 
\end{remark}

\noindent In this section, we will use the following conventions.
\begin{itemize}
    \item We have Gaussians $G_1 = N(\mu_1, I + \Sigma_1), \dots , G_k = N(\mu_k, I + \Sigma_k)$
    \item $\mcl{D}_j$ denotes the differential operator $(\partial - (\mu_j(X) + \Sigma_j(X)y))$ where the partial derivative is taken with respect to $y$.
\end{itemize}

\subsection{Recurrence for Hermite Moment Polynomials}

One of the key ingredients in the proof of Theorem \ref{thm:identifiability} is writing down a recurrence for the Hermite moment polynomials of an MPG function.  This is done in the following lemma.

\begin{lemma}\label{lem:hermite-moment-recurrence}
Let $G_1 = N(\mu_1, I + \Sigma_1), \dots , G_k = N(\mu_k, I + \Sigma_k)$ be Gaussians and $Q_1(X), \dots , Q_k(X)$ be $d$-variate polynomials of degree at most $m$.  Let 
\[
g(x) = Q_1(x)G_1(x) + \dots + Q_k(x)G_k(x) \,.
\]
Let $\kappa = (m+1)(2^k - 1)$.  Then there are $d$-variate polynomials $R_{j,l}(X)$ for $0 \leq j \leq \kappa$ and $0 \leq  l \leq \kappa - j$ such that
\begin{itemize}
    \item $R_{j,l}(X)$ is homogeneous of degree $\kappa - j + l$, $R_{\kappa,0}(X) = 1$
    \item $R_{j,l}(X)$ is $\left(O_{k,m}(1), O_{k,m}(1)\right)$-simple with respect to $\{\mu_1(X), \Sigma_1(X), \dots , \mu_k(X), \Sigma_k(X) \}$
    \item For all integers $a \geq \kappa$,
    \[
    \sum_{j=0}^{\kappa}\sum_{l = 0}^{\kappa - j} \frac{h_{a - \kappa + j - l,g}(X)R_{j,l}(X)}{(a-\kappa - l)!} = 0 \,,
    \]
    where undefined terms (i.e. negative factorials in the denominator) are treated as $0$.
\end{itemize}
\end{lemma}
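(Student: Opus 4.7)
The plan is to derive the recurrence as the coefficient-extraction identities of an equation $\mcl{D}(f) = 0$, where $f$ is the Hermite moment generating function of $g$ supplied by Corollary \ref{coro:hermite-of-polymixture} and $\mcl{D}$ is an order-$\kappa$ composition of the first-order operators $\mcl{D}_i = \partial - (\mu_i(X) + \Sigma_i(X) y)$ chosen to annihilate the entire generating function. By Corollary \ref{coro:hermite-of-polymixture},
\[
f(X, y) \;=\; \sum_{n = 0}^\infty \frac{h_{n, g}(X)}{n!}\, y^n \;=\; \sum_{i=1}^k \wt{P}_i(X, y)\, e^{\mu_i(X) y + \frac{1}{2}\Sigma_i(X) y^2},
\]
where each prefactor $\wt{P}_i(X, y) = P_i(Xy)$ has degree at most $m$ in $y$.

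The two ingredients I need are (i) Claim \ref{claim:degree-reduction} and Corollary \ref{corollary:null-operator}, which together say that $\mcl{D}_i^{m+1}$ annihilates the $i$-th summand because $\mcl{D}_i$ strictly lowers the $y$-degree of its prefactor, and (ii) a direct product-rule calculation showing that each application of $\mcl{D}_i$ to the $j$-th summand ($j \neq i$) keeps it in the form (prefactor)$\,\cdot\, e^{\mu_j(X) y + \frac{1}{2}\Sigma_j(X) y^2}$ and raises the prefactor's $y$-degree by at most one (via the $(\Sigma_j - \Sigma_i)(X)\, y$ term). I will therefore take
\[
\mcl{D} \;=\; \mcl{D}_k^{2^{k-1}(m+1)} \cdots\, \mcl{D}_2^{2(m+1)}\, \mcl{D}_1^{m+1}
\]
and prove by induction on $i$ that after the first $i$ blocks have been applied, the summands $1, \dots, i$ are killed and each remaining summand has prefactor of $y$-degree at most $2^i(m+1) - 1$. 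The block $\mcl{D}_{i+1}^{2^i(m+1)}$ is then exactly what is needed to annihilate summand $i+1$ (whose prefactor has degree $2^i(m+1) - 1$, so $2^i(m+1)$ applications suffice by Corollary \ref{corollary:null-operator}), and the total order sums to $\sum_{i=1}^{k} 2^{i-1}(m+1) = (m+1)(2^k - 1) = \kappa$. In particular, $\mcl{D}(f) = 0$.

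To finish, I apply Claim \ref{claim:diff-operator-expansion} to rewrite
\[
\mcl{D} \;=\; \partial^\kappa + R_{\kappa - 1}(X, y)\,\partial^{\kappa - 1} + \dots + R_0(X, y), \quad R_j(X, y) = \sum_{l = 0}^{\kappa - j} R_{j, l}(X)\, y^l,
\]
which directly hands over the homogeneity degree of $R_{j,l}$ in $X$ equal to $\kappa - j + l$, the identity $R_{\kappa, 0}(X) = 1$ coming from the leading $\partial^\kappa$, and $(O_\kappa(1), \kappa)$-simplicity of each $R_{j, l}$ with respect to $\{\mu_1(X), \Sigma_1(X), \dots, \mu_k(X), \Sigma_k(X)\}$; since $\kappa = (m+1)(2^k - 1) = O_{k, m}(1)$, this is equivalently $(O_{k, m}(1), O_{k, m}(1))$-simple. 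Reading off the coefficient of $y^{a - \kappa}$ in $\mcl{D}(f) = 0$ via $\partial^j(y^n / n!) = y^{n - j}/(n - j)!$ then yields precisely the stated identity, with the ``negative factorials are zero'' convention reflecting the fact that $\partial^j(y^s) = 0$ whenever $s < j$.

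The one substantive design choice I expect to matter is the doubling pattern $2^{i-1}(m+1)$ in the exponents of the $\mcl{D}_j$: this is exactly what is required to absorb the $y$-degree inflation the earlier blocks inflict on the still-live summands, so that each subsequent block is just large enough to annihilate its target without wasted order. Once that combinatorial bookkeeping is in place, the remaining structural properties of the $R_{j, l}$ are handed to us by Claim \ref{claim:diff-operator-expansion} without further work.
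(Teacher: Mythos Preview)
Your proposal is correct and follows essentially the same route as the paper: write the Hermite generating function via Corollary~\ref{coro:hermite-of-polymixture}, apply the composite operator $\mcl{D} = \mcl{D}_k^{(m+1)2^{k-1}}\cdots\mcl{D}_1^{m+1}$ to annihilate it, expand $\mcl{D}$ via Claim~\ref{claim:diff-operator-expansion}, and read off the recurrence from the coefficient of $y^{a-\kappa}$. If anything, you are slightly more careful than the paper, which simply cites Claim~\ref{claim:degree-reduction} for $\mcl{D}(f)=0$ without spelling out the degree-growth bookkeeping on the non-target summands that justifies the doubling exponents.
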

\begin{proof}
By Corollary \ref{coro:hermite-of-polymixture}, we can write
\[
f(y) = \sum_{j=0}^{\infty} \frac{1}{j!} h_{j,g}(X)y^j = P_1(Xy)e^{\mu_1(X)y + \frac{1}{2}\Sigma_1(X)y^2} + \dots + P_k(Xy)e^{\mu_k(X)y + \frac{1}{2}\Sigma_k(X)y^2} \,,
\]
where $P_1, \dots , P_k$ are polynomials of degree at most $m$.  Now consider the differential operator
\[
\mcl{D} =\mcl{D}_k^{(m+1)2^{k-1}} \mcl{D}_{k-1}^{(m+1)2^{k-2}} \dots \mcl{D}_1^{m+1} \,.
\]

By Claim \ref{claim:degree-reduction}, we know $\mcl{D}(f) = 0$.  However, we can expand out the formula for $\mcl{D}$ using  Claim \ref{claim:diff-operator-expansion} and write it in the following form
\[
\mcl{D} = \partial^{\kappa} + R_{\kappa  - 1}(X, y)\partial^{\kappa - 1} + \dots + R_{1}(X, y)\partial + R_0(X,y) \,.
\]
Note that for each $0 \leq j \leq \kappa$, $R_j(X,y)$ is a polynomial of degree at most $\kappa - j$ in $y$.  Furthermore, by Claim \ref{claim:diff-operator-expansion}, it can be written in the form
\[
R_j(X,y) = \sum_{l=0}^{\kappa - j} R_{j,l}(X)y^l \,,
\]
where each of the polynomials $R_{j,l}$ is homogeneous of degree $\kappa - j + l$ in $X$ and is $(O_{k,m}(1), O_{k,m}(1))$- simple with respect to  $\{\mu_1(X),\Sigma_1(X) \dots , \mu_k(X), \Sigma_k(X) \}$.  Furthermore, it is obvious that $R_{\kappa,0} = 1$

Using the fact that $\mcl{D}(f) = 0$, we get that the polynomials $h_{j,g}$ in the generating function satisfy a recurrence relation of depth $O_{k,m}(1)$.  For any integer $a$, by looking at the coefficient of $y^{a - \kappa}$ in the power series expansion of $\mcl{D}(f)$, we deduce
\[
\sum_{j=0}^{\kappa}\sum_{l = 0}^{\kappa - j} \frac{f_{a - \kappa + j - l}(X)R_{j,l}(X)}{(a-\kappa - l)!} = 0 \,.
\]
This completes the proof.
\end{proof}

For the proof of Theorem \ref{thm:identifiability}, it will be useful to work only with generating functions and only translate back to distribution space at the end.  We have the following equivalent result to the previous lemma.  It can be proven in the exactly the same way.
\begin{lemma}\label{lem:hermite-recurrence-gen-funcs}
Let $N(\mu_1, I + \Sigma_1), \dots , N(\mu_k, I + \Sigma_k)$ be Gaussians.  Let $P_1(X), P_2(X), \dots , P_k(X)$ be polynomials in $X = (X_1, \dots , X_d)$ of degree at most $m$.  Consider the generating function of the polynomial combination i.e.
\[
f(X,y) = P_1(Xy)e^{ \mu_1(X)y+ \frac{1}{2}\Sigma_1(X) y^2} + \dots +  P_k(Xy)e^{ \mu_k(X)y + \frac{1}{2} \Sigma_k (X) y^2} 
\]
and let $f_0(X), f_1(X), \dots $ be the primary terms in the formal power series expansion of $f(y)$  i.e.
\[
f(X,y) = \sum_{j=0}^{\infty} \frac{f_j(X)}{j!}y^j \,.
\]
Let $\kappa = (m+1)(2^k - 1)$.  Then there are $d$-variate polynomials $R_{j,l}(X)$ for $0 \leq j \leq \kappa$ and $0 \leq  l \leq \kappa - j$ such that
\begin{itemize}
    \item $R_{j,l}(X)$ is homogeneous of degree $\kappa - j + l$, $R_{\kappa,0}(X) = 1$
    \item $R_{j,l}(X)$ is $\left(O_{k,m}(1), O_{k,m}(1)\right)$-simple with respect to $\{\mu_1(X), \Sigma_1(X), \dots , \mu_k(X), \Sigma_k(X) \}$
    \item For all integers $a \geq \kappa$,
    \[
    \sum_{j=0}^{\kappa}\sum_{l = 0}^{\kappa - j} \frac{f_{a - \kappa + j - l}(X)R_{j,l}(X)}{(a-\kappa - l)!} = 0 \,,
    \]
    where undefined terms (i.e. negative factorials in the denominator) are treated as $0$.
\end{itemize}
\end{lemma}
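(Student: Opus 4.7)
The plan is to mirror the proof of Lemma \ref{lem:hermite-moment-recurrence}, working directly in generating function space rather than translating back from distribution space. Since the function $f(X,y)$ is already presented in the explicit form $\sum_j P_j(Xy) e^{\mu_j(X)y + \frac{1}{2}\Sigma_j(X)y^2}$, we can skip the application of Corollary \ref{coro:hermite-of-polymixture} and go straight to constructing an annihilating differential operator.

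First I would apply the composite operator
\[
\mcl{D} = \mcl{D}_k^{(m+1)2^{k-1}} \mcl{D}_{k-1}^{(m+1)2^{k-2}} \cdots \mcl{D}_1^{m+1}
\]
to $f(X,y)$ and argue $\mcl{D}(f) = 0$. Consider the $j$-th summand $P_j(Xy) e^{\mu_j(X)y + \frac{1}{2}\Sigma_j(X)y^2}$, which has polynomial degree at most $m$ in $y$. A direct computation shows that each $\mcl{D}_i$ with $i \neq j$ preserves the exponential factor and raises the polynomial factor's degree in $y$ by one (the term $-(\Sigma_i(X) - \Sigma_j(X))y$ from the product is responsible for the increase), while $\mcl{D}_j$ lowers it by one by Claim \ref{claim:degree-reduction}. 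Thus after applying $\mcl{D}_{j-1}^{(m+1)2^{j-2}} \cdots \mcl{D}_1^{m+1}$, the summand still has exponent $\mu_j(X)y + \frac{1}{2}\Sigma_j(X)y^2$ multiplied by a polynomial of degree at most $m + (m+1)(2^{j-1}-1) = (m+1)2^{j-1} - 1$ in $y$, which is then annihilated by $\mcl{D}_j^{(m+1)2^{j-1}}$ via Corollary \ref{corollary:null-operator}. The remaining operators in $\mcl{D}$ act on zero. The total order of $\mcl{D}$ is $\sum_{j=1}^{k} (m+1)2^{j-1} = (m+1)(2^k-1) = \kappa$, matching the statement.

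Next I would invoke Claim \ref{claim:diff-operator-expansion} to rewrite
\[
\mcl{D} = \partial^{\kappa} + R_{\kappa-1}(X,y)\partial^{\kappa-1} + \cdots + R_0(X,y),
\]
where each $R_j$ has degree at most $\kappa - j$ in $y$ and expands as $R_j(X,y) = \sum_{l=0}^{\kappa-j} R_{j,l}(X) y^l$, with $R_{j,l}$ homogeneous of degree $\kappa - j + l$ in $X$ and $(O_{k,m}(1), O_{k,m}(1))$-simple with respect to $\{\mu_i(X), \Sigma_i(X)\}_{i \in [k]}$, and $R_{\kappa,0} = 1$. Writing $f(X,y) = \sum_j f_j(X) y^j / j!$ and extracting the coefficient of $y^{a-\kappa}$ in $\mcl{D}(f) = 0$ produces the stated recurrence
\[
\sum_{j=0}^{\kappa}\sum_{l=0}^{\kappa-j} \frac{f_{a-\kappa+j-l}(X)\, R_{j,l}(X)}{(a - \kappa - l)!} = 0
\]
for every $a \geq \kappa$, completing the argument.

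The only part requiring care is the degree-counting step that establishes $\mcl{D}(f) = 0$: one has to verify that operators $\mcl{D}_i$ with $i \neq j$ do indeed raise (rather than shuffle) the polynomial degree in $y$, so that the exponent on $\mcl{D}_j$ is chosen large enough to reach Corollary \ref{corollary:null-operator}'s hypothesis. Once this bookkeeping is in place, the remainder is a direct application of Claim \ref{claim:diff-operator-expansion} and matching of power series coefficients, identical to the end of the proof of Lemma \ref{lem:hermite-moment-recurrence}.
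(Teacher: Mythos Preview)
Your proposal is correct and follows essentially the same approach as the paper, which simply states that Lemma \ref{lem:hermite-recurrence-gen-funcs} ``can be proven in exactly the same way'' as Lemma \ref{lem:hermite-moment-recurrence}. You have in fact filled in the degree-counting bookkeeping (that $\mcl{D}_i$ with $i\neq j$ increases the $y$-degree by at most one, so the accumulated degree before $\mcl{D}_j^{(m+1)2^{j-1}}$ is at most $(m+1)2^{j-1}-1$) more explicitly than the paper's own proof of Lemma \ref{lem:hermite-moment-recurrence}, which just cites Claim \ref{claim:degree-reduction} for the annihilation step.
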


\subsection{Observability When Components are All Very Close}

Here, we deal with the special case when all pairs of components are close (within some small constant) in TV distance.  In the next subsection, we will show how to reduce to this case.  

We will first prove observability in an even simpler case where all of the components are within some small constant of isotropic.

\begin{lemma}\label{lem:hermite-to-tv}
Let $N(\mu_1, I + \Sigma_1), \dots , N(\mu_k, I + \Sigma_k)$ be Gaussians.  Let $P_1(X), P_2(X), \dots , P_k(X)$ be polynomials of degree at most $m$.  Let $\eps >0$ be some parameter.  Let $f(X,y)$ be the generating function of the polynomial combination and let $f_0(X), f_1(X), \dots $ be the primary terms in the formal power series expansion of $f(X,y)$.  Then there exist constants $c$, $C$ depending only on $k$ and $m$ with the following property.  If $\norm{\mu_j} \leq c$, $\norm{\Sigma_j}_2 \leq c$ for all $j \in [k]$ and $\norm{v(f_j(X))} \leq \eps$ for all $0 \leq j \leq C$, then
\[
\norm{\chi(f(X,i))}_1 \leq O_{m,k}(\eps)
\]
where $\chi$ denotes the inverse adjusted characteristic function and $i = \sqrt{-1}$.
\end{lemma}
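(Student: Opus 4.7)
The plan is to combine the recurrence for primary terms (Lemma~\ref{lem:hermite-recurrence-gen-funcs}) with the $L^1$ bound on inverse adjusted characteristic functions (Claim~\ref{claim:key-charfunc-bound}): first, extend the hypothesis from the first $C$ primary terms to \emph{all} primary terms using the recurrence; second, sum the term-by-term $L^1$ bound given by Claim~\ref{claim:key-charfunc-bound}.

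I would begin by invoking Lemma~\ref{lem:hermite-recurrence-gen-funcs} with $\kappa = (m+1)(2^k - 1)$ to obtain polynomials $R_{j,l}(X)$ with $R_{\kappa,0}=1$ and the identity
\[
\sum_{j=0}^{\kappa}\sum_{l=0}^{\kappa-j}\frac{f_{a-\kappa+j-l}(X)\,R_{j,l}(X)}{(a-\kappa-l)!} \;=\; 0 \qquad (a \geq \kappa).
\]
The key structural fact is that $R_{j,l}$ is $(O_{k,m}(1),O_{k,m}(1))$-simple with respect to $\{\mu_i(X),\Sigma_i(X)\}_{i \in [k]}$. By tracking which side of each differential operator $\partial - (\mu_i(X)+\Sigma_i(X)y)$ contributes the extra $y$-powers, one sees that each monomial of $R_{j,l}$ is a product of exactly $\kappa-j-l$ factors from $\{\mu_i(X)\}$ and $l$ factors from $\{\Sigma_i(X)\}$. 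Combined with $\|\mu_i\|,\|\Sigma_i\|_2 \leq c$ and the sub-multiplicativity of coefficient norms (Claim~\ref{claim:polytensor-prod-bound}), this gives the crucial bound $\|v(R_{j,l})\| \leq O_{k,m}(c^{\kappa-j})$.

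Next, I would solve the recurrence for $f_a$ (the leading coefficient is $R_{\kappa,0}=1$) and, choosing $C$ larger than $\kappa$, prove by strong induction on $a$ that $\|v(f_a(X))\| \leq M^a \sqrt{a!}\,\epsilon$ for all $a$, where $M = M(k,m,c) \in (0,1)$. Applying Claim~\ref{claim:polytensor-prod-bound} to the recurrence and invoking the inductive hypothesis gives
\[
\|v(f_a)\| \;\leq\; \sum_{(j,l) \neq (\kappa,0)} \frac{(a-\kappa)!}{(a-\kappa-l)!}\,\|v(R_{j,l})\|\,M^{a-\kappa+j-l}\sqrt{(a-\kappa+j-l)!}\,\epsilon.
\]
Since $(a-\kappa)!/(a-\kappa-l)! \leq a^l$ and $\sqrt{(a-\kappa+j-l)!/a!} = O(a^{-(\kappa-j+l)/2})$, after dividing through by $M^a\sqrt{a!}\,\epsilon$ the $a$-exponent in each term is $(l+j-\kappa)/2 \leq 0$. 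Terms with $l+j < \kappa$ vanish as $a \to \infty$, while the boundary terms $l+j=\kappa$ contribute a constant of order $(c/M^2)^{\kappa-j}$. Choosing $c$ small enough in terms of $k,m$ (and then $M$ slightly larger than $\sqrt{c}$ so $M<1$) makes the total multiplier $\leq 1$ for all sufficiently large $a$, and choosing $C$ large enough absorbs the finitely many small-$a$ exceptions into the base case.

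Finally I would combine the bound using Claim~\ref{claim:key-charfunc-bound}. Each $f_m(X)$ is homogeneous of degree $m$ (by matching $X$- and $y$-degrees in the generating function), so
\[
\|\chi(i^m f_m(X))\|_1 \leq \sqrt{m!}\,\|v(f_m)\| \leq \sqrt{m!}\cdot M^m\sqrt{m!}\,\epsilon = m!\,M^m\epsilon.
\]
Using $f(X,i) = \sum_m \frac{i^m f_m(X)}{m!}$, linearity of $\chi$, and the triangle inequality for $\|\cdot\|_1$,
\[
\|\chi(f(X,i))\|_1 \;\leq\; \sum_{m=0}^{\infty} \frac{\|\chi(i^m f_m)\|_1}{m!} \;\leq\; \epsilon\sum_{m=0}^\infty M^m \;=\; O_{k,m}(\epsilon).
\]

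The main obstacle is the coefficient bookkeeping in the inductive step. The polynomial factor $a^l$ in the recurrence grows with $a$, so only the smallness of $c$ (via $\|v(R_{j,l})\| \leq O_{k,m}(c^{\kappa-j})$) can offset it. The worst case is $l+j=\kappa$, where no vanishing negative power of $a$ is available and the entire gain must come from the factor $(c/M^2)^{\kappa-j}$; this is precisely why $c$ must be a constant depending on $k,m$, and why one obtains a bound $M<1$ rather than an a-priori-exponential factor that would prevent the final series from converging.
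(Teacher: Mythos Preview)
Your overall approach matches the paper's: invoke Lemma~\ref{lem:hermite-recurrence-gen-funcs}, propagate smallness through the recurrence, and sum via Claim~\ref{claim:key-charfunc-bound}. But one intermediate claim is false and the induction needs patching.

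The assertion that every monomial of $R_{j,l}$ consists of exactly $\kappa-j-l$ factors of $\mu_i(X)$ and $l$ factors of $\Sigma_i(X)$ is incorrect: the product rule lets a $\partial$ hit a $\Sigma_i(X)y$ factor to its right, producing a bare $\Sigma_i(X)$ with no accompanying power of $y$, so $\Sigma$-factors can land in lower $l$-slots. Concretely, for $\kappa=2$ one computes $R_{0,0}(X)=-\Sigma_1(X)+\mu_1(X)\mu_2(X)$, so $\|v(R_{0,0})\|=O(c)$ rather than the $O(c^{2})$ your claim would give. Hence $\|v(R_{j,l})\|\leq O_{k,m}(c^{\kappa-j})$ fails for non-boundary pairs. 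Fortunately, at the boundary $l=\kappa-j$ (the only place you actually need the sharp factor count) the claim \emph{is} correct: the top $y$-coefficient of $R_j$ can only arise by selecting the $\Sigma_i(X)y$ part at each non-$\partial$ step, so $R_{j,\kappa-j}$ really is a combination of $(\kappa-j)$-fold products of $\Sigma$'s. For $l+j<\kappa$ you should instead just use $\|v(R_{j,l})\|\leq O_{k,m}(1)$ together with $a^{(l+j-\kappa)/2}\to 0$ and the fact that $M^{j-l-\kappa}=O_{k,m}(1)$ is a fixed constant.

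Separately, the inductive hypothesis $\|v(f_a)\|\leq M^{a}\sqrt{a!}\,\epsilon$ does not follow from the given $\|v(f_a)\|\leq\epsilon$ for small $a$, since $M^{a}\sqrt{a!}<1$ whenever $0<a\lesssim 1/M^{2}$; you need to carry an extra $O_{k,m}(1)$ constant through the induction. The paper sidesteps both issues by working directly with $G_a:=\|v(f_a)\|/\sqrt{a!}$: it observes that the factorial ratio $\tfrac{(a-\kappa)!\sqrt{(a-\kappa+j-l)!}}{(a-\kappa-l)!\sqrt{a!}}$ is $O_{k,m}(1)$ uniformly in $a$, so choosing $c$ small enough that every $\|v(R_{j,l})\|$ (for $j<\kappa$) is below a fixed small constant already yields $G_a\leq 0.1\max_{b<a}G_b$ for $a\geq 2\kappa$, and $\sum_a G_a$ converges without any power-counting in $c$ or the auxiliary geometric factor $M^{a}$.
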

\begin{proof}
By Lemma \ref{lem:hermite-recurrence-gen-funcs}, we can write
\[
\sum_{j=0}^{\kappa}\sum_{l = 0}^{\kappa - j} \frac{f_{a - \kappa + j - l}(X)R_{j,l}(X)}{(a-\kappa - l)!} = 0 \,.
\]
for all $a \geq 2\kappa$ where the polynomials $R_{j,l}$ satisfy the properties in the statement of the lemma.  This rearranges into
\[
f_{a}(X) = -(a - \kappa)!\sum_{j=0}^{\kappa - 1}\sum_{l = 0}^{\kappa - j} \frac{f_{a - \kappa + j - l}(X)R_{j,l}(X)}{(a-\kappa - l)!} \,.
\]
Using the triangle inequality and Claim \ref{claim:polytensor-prod-bound},
\[
\norm{v(f_{a}(X)} \leq \sum_{j=0}^{\kappa - 1}\sum_{l = 0}^{\kappa - j} \frac{(a- \kappa)!}{(a - \kappa - l)!} \norm{v(f_{a - \kappa + j - l}(X))} \norm{v(R_{j,l}(X)} \,.
\]
Let the sequence $G_a$ be defined by $G_a = \norm{v(f_{a}(X)}/\sqrt{a!}$.  The above inequality implies
\[
G_a \leq \sum_{j=0}^{\kappa - 1}\sum_{l = 0}^{\kappa - j} \frac{(a- \kappa)!\sqrt{(a - \kappa + j - l)!}}{(a - \kappa - l)!\sqrt{a!}} \norm{G_{a - \kappa + j - l}} \norm{v(R_{j,l}(X)} \,.
\]
Note that 
\[
\frac{(a- \kappa)!\sqrt{(a - \kappa + j - l)!}}{(a - \kappa - l)!\sqrt{a!}} \leq \frac{(a- \kappa)!\sqrt{(a - 2l)!}}{(a - \kappa - l)!\sqrt{a!}} = O_{m,k}(1) \,.
\]
Also, note that by choosing $c$ (the upper bound on $\norm{\mu_1}, \dots , \norm{\mu_k}, \norm{\Sigma_1}_2, \dots , \norm{\Sigma_k}_2$) sufficiently small, we can ensure that $\norm{v(R_{j,l}(X)}$ is sufficiently small as a function of $k,m$ for all $j < \kappa$.  This is because the polynomials $R_{j,l}$ are homogeneous with positive degree and  $(O_{m,k}(1), O_{m,k}(1))$-simple with respect to $\mu_1(X),\Sigma_1(X), \dots , \mu_k(X), \Sigma_k(X)$.  Thus, as long as $c$ is sufficiently small, we can ensure 
\[
G_a \leq 0.1\max(G_{a-1}, \dots , G_0)
\]
for all $a \geq 2\kappa$.  If we choose the constant $C > 2\kappa$, we may assume $G_0, G_1, \dots G_{\kappa}$ are all $O_{m,k}(\eps)$.  Now we may apply Claim \ref{claim:key-charfunc-bound} to get
\[
\norm{\chi (f(X,i))}_1 \leq \sum_{j = 0}^{\infty} \frac{\sqrt{j!} \norm{v(f_j)}_2}{j!} = \sum_{j = 0}^{\infty} G_j = O_{m,k}(\eps)
\]
which completes the proof.

\end{proof}

Now, by taking a suitable linear transformation, we can generalize the above result to when the components are in regular form and all pairs of components are sufficiently close in TV.  There is some additional work to do because it is not immediately clear how taking a linear transformation affects things in generating function space.  We prove the following result.

\begin{corollary}\label{corollary:hermite-to-tv-close}
Let $N(\mu_1, I + \Sigma_1), \dots , N(\mu_k, I + \Sigma_k)$ be a set of Gaussians in $(\alpha, \beta)$-regular form.  Let $P_1(X), P_2(X), \dots , P_k(X)$ be polynomials of degree at most $m$.  Let $\eps >0$ be some parameter. Let $f(X,y)$ be the generating function of the polynomial combination and let $f_0(X), f_1(X), \dots $ be the primary terms in the formal power series expansion of $f(X,y)$.

Then there exists a (sufficiently large) constant $K$ depending only on $k$ and $m$ with the following property.  If we have
\begin{align*}
    \norm{\mu_j - \mu_{j'}}  \leq \frac{\beta^{-1}}{K} \\
    \norm{\Sigma_j - \Sigma_j'} \leq \frac{\beta^{-1}}{K}
\end{align*}
for all $j,j' \in [k]$ and $\norm{v(f_j(X))} \leq \eps$ for all $0 \leq j \leq K$, then
\[
\norm{\chi (f(X,i))}_1 \leq  (2 + \alpha + \beta)^{K} \eps \,.
\]
\end{corollary}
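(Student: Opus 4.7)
The plan is to reduce Corollary \ref{corollary:hermite-to-tv-close} to Lemma \ref{lem:hermite-to-tv} via an affine change of variables that simultaneously renders $G_1$ standard and, thanks to the pairwise-closeness hypothesis, leaves every other component within a small constant of isotropic. Concretely, I would set $A = (I+\Sigma_1)^{1/2}$ and $b = \mu_1$, and define the transformed MPG function $g'(x') = \lvert\det A\rvert \, g(Ax'+b)$, whose components are Gaussians $N(\mu_j', I+\Sigma_j')$ with $\mu_j' = A^{-1}(\mu_j - \mu_1)$ and $I+\Sigma_j' = A^{-1}(I+\Sigma_j)A^{-T}$. Using $\norm{A^{-1}}_{\op} \leq \sqrt{\beta}$ together with the pairwise-closeness bounds $\beta^{-1}/K$, one verifies $\norm{\mu_j'}, \norm{\Sigma_j'}_2 \leq 1/K$, so for $K$ sufficiently large in terms of $k,m$ the components of $g'$ satisfy the hypothesis of Lemma \ref{lem:hermite-to-tv}. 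By Claim \ref{claim:invchar-formula} and Fact \ref{fact:inversion}, $\chi(f(X,i))$ is precisely the distribution-space representation of the input MPG function, call it $g$; by change of variables $\norm{g'}_1 = \norm{g}_1 = \norm{\chi(f(X,i))}_1$, so it suffices to bound $\norm{g'}_1$.

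The next step is to transfer the coefficient-norm hypothesis on the primary terms from $g$ to $g'$. Starting from $h_{m,g'}(X) = \int g'(x') H_m(X,x') \, dx'$, changing variables back to $z = Ax' + b$, and recognizing the generating function of the multivariate Hermite polynomials from Claim \ref{claim:hermite-identity}, I would derive the identity
\[
\sum_m \frac{h_{m,g'}(X)}{m!} y^m = e^{-\wt{\mu}_1(X) y - \frac{1}{2}\wt{\Sigma}_1(X) y^2} \sum_m \frac{h_{m,g}(A^{-T}X)}{m!} y^m,
\]
where $\wt{\mu}_1(X) = \mu_1(A^{-T}X)$ and $\wt{\Sigma}_1(X) = \Sigma_1(A^{-T}X)$. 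Expanding the exponential prefactor as a power series in $y$ expresses each $h_{m,g'}(X)$ as a linear combination of $O_m(1)$ products of the form $\wt{\mu}_1(X)^a \wt{\Sigma}_1(X)^b \, h_{j,g}(A^{-T}X)$ with $a + 2b + j = m$, with combinatorial coefficients bounded by $O_m(1)$.

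Finally, I bound coefficient norms using Claim \ref{claim:polytovec-linear-transform} (with $\norm{A^{-T}}_{\op} \leq \sqrt{\beta}$) and Claim \ref{claim:polytensor-prod-bound} for products. The regular-form bounds $\norm{\mu_1} \leq \alpha$, $\norm{\Sigma_1}_2 \leq \alpha$ yield $\norm{v(\wt{\mu}_1)} \leq \alpha\sqrt{\beta}$, $\norm{v(\wt{\Sigma}_1)} \leq O(\alpha\beta)$, and $\norm{v(h_{j,g}(A^{-T}X))} \leq \beta^{j/2}\eps$ for $j \leq K$. Combining these, every term in the expansion contributes at most $(2+\alpha+\beta)^{O(m)}\eps$, so $\norm{v(h_{m,g'}(X))} \leq (2+\alpha+\beta)^{O(m)}\eps$ for all $m$ up to the constant $C'$ from Lemma \ref{lem:hermite-to-tv}, provided we chose $K \geq C'$. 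Lemma \ref{lem:hermite-to-tv} applied to $g'$ then gives $\norm{g'}_1 \leq O_{m,k}(1) \cdot (2+\alpha+\beta)^{O(m)}\eps$, and the bound $(2+\alpha+\beta)^K\eps$ follows upon absorbing the $O_{m,k}(1)$ prefactor into the exponent by taking $K$ large. The main obstacle will be this final bookkeeping: the exponential prefactor introduces factors that scale as $\alpha^a \beta^{a/2+b}$ per term, and the argument must ensure they aggregate into a bound of the clean form $(2+\alpha+\beta)^K\eps$, with $K$ large but still purely a function of $k,m$, rather than inflating into something growing polynomially in $\alpha,\beta$ through uncontrolled interactions with the recurrence from Lemma \ref{lem:hermite-recurrence-gen-funcs}.
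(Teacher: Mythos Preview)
Your proposal is correct and follows essentially the same approach as the paper: center on $G_1$ via the affine map $x\mapsto (I+\Sigma_1)^{1/2}x+\mu_1$, show the $L^1$ norm is preserved, derive the identity relating the new primary terms to the old ones through the factor $e^{-\mu_1(X)y-\frac12\Sigma_1(X)y^2}$ and the substitution $X\to(I+\Sigma_1)^{-1/2}X$, bound the resulting primary terms using Claims~\ref{claim:polytensor-prod-bound} and~\ref{claim:polytovec-linear-transform}, and invoke Lemma~\ref{lem:hermite-to-tv}. The paper carries out the same computation directly in the adjusted-characteristic-function integral rather than via a distribution-space change of variables, but the algebra is identical (your $A^{-T}=(I+\Sigma_1)^{-1/2}$ is exactly the paper's substitution). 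Your closing worry about ``interactions with the recurrence from Lemma~\ref{lem:hermite-recurrence-gen-funcs}'' is unfounded: that recurrence lives entirely inside the proof of Lemma~\ref{lem:hermite-to-tv}, which you invoke as a black box, so the only bookkeeping needed here is the product-of-power-series estimate you already outlined.
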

\begin{proof}
We define $\mcl{F} = \chi \left(f(X,i) \right)$.  Let $\mu = \mu_1, \Sigma = \Sigma_1$.  Let $L: \R^d \rightarrow \R^d$ denote the linear transformation $L(X) =(I + \Sigma)^{1/2}X + \mu$ . Now write
\begin{align*}
\mcl{F}(t) = \frac{1}{(2\pi)^d}\int_{\R^d} f(X,i) e^{-\frac{1}{2}\norm{X}^2 - it \cdot X} dX \,.
\end{align*}
Let $t = L(t')$ for some $t' \in \R^d$.  Then
\begin{align*}
\mcl{F}(t) &= \frac{1}{(2\pi)^d}\int_{\R^d} f(X,i) e^{-\frac{1}{2}\norm{X}^2 - it' \cdot (I + \Sigma)^{1/2} X - i \mu \cdot X} dX \\ &= \frac{1}{(2\pi)^d}\int_{\R^d} \left(\sum_{j=1}^k P_j(iX)e^{i\mu_j(X) - \frac{1}{2}\Sigma_j(X)}\right) e^{-\frac{1}{2}\norm{X}^2 - it' \cdot (I + \Sigma)^{1/2} X - i \mu \cdot X} dX \\
&= \frac{1}{(2\pi)^d}\int_{\R^d} \left(\sum_{j=1}^k P_j(iX)e^{i(\mu_j - \mu) \cdot X -  \frac{1}{2}X^T(\Sigma_j - \Sigma)X }\right) e^{-\frac{1}{2}X^T(I + \Sigma)X - it' \cdot (I + \Sigma)^{1/2} X } dX
\end{align*}

Let
\[
g(X,y) = \left(\sum_{j=1}^k P_j(Xy)e^{(\mu_j - \mu) \cdot Xy +  \frac{1}{2}X^T(\Sigma_j - \Sigma)Xy^2 }\right) \,.
\]
Then we can substitute $X = (I + \Sigma)^{-1/2}Z$ and rewrite 
\begin{align*}
\mcl{F}(t) &= \frac{1}{(2\pi)^d}\int_{\R^d} g(X,i)e^{-\frac{1}{2}X^T(I + \Sigma)X - it' \cdot (I + \Sigma)^{1/2} X } dX \\
&= \frac{\det(I + \Sigma)^{-1/2}}{(2\pi)^d}\int_{\R^d} g((I + \Sigma)^{-1/2}Z, i) e^{-\frac{1}{2}\norm{Z}^2 - it' \cdot Z } dZ \\&= \det(I + \Sigma)^{-1/2}\chi\left( g((I + \Sigma)^{-1/2}Z, i)\right)(t') \,.
\end{align*}
Recall that $t = L(t') = (I + \Sigma)^{1/2}t' + \mu$.  Thus, we conclude 
\begin{equation}\label{eq:equivalence}
\norm{\mcl{F}}_1 = \norm{\chi\left( g((I + \Sigma)^{-1/2}Z, i)\right)}_1 \,.
\end{equation}
We will now bound the RHS by applying Lemma \ref{lem:hermite-to-tv}.
\\\\
Let $C$ be the parameter from Lemma \ref{lem:hermite-to-tv} (set in terms of $k,m$).  Choosing $K$ sufficiently large, we may assume that 
\begin{equation}\label{eq:normbound1}
\norm{v(f_j(X))} \leq \eps
\end{equation}
for all $j \leq C$. 
Note that 
\[
g(X,y) = f(X,y)e^{-\mu (X)y-\frac{1}{2}\Sigma(X)y^2} \,.
\]
Let the primary terms of $g$ be $g_0(X), g_1(X), \dots , $.  We can also write the power series expansion of
\[
e^{-\mu (X)y-\frac{1}{2}\Sigma(X)y^2} = \sum_{j=0}^{\infty} \frac{\left( -\mu (X)y-\frac{1}{2}\Sigma(X)y^2\right)^j}{j!} = \sum_{j=0}^{\infty}\frac{u_j(X)y^j}{j!}
\]
for some polynomials $u_j$.  Note that by Claim \ref{claim:polytensor-prod-bound} and the fact that $\norm{\mu},\norm{\Sigma}_2 \leq \alpha$, we have 
\begin{equation}\label{eq:normbound2}
\norm{v(u_j(X))} \leq (2 + \alpha)^{O_{m,k}(1)}
\end{equation}
for $j \leq C$ (since $C = O_{m,k}(1)$). Now we have
\[
\sum_{j=0}^{\infty}\frac{g_j(X)y^j}{j!} = \left(\sum_{j=0}^{\infty}\frac{f_j(X)y^j}{j!}\right)\left(\sum_{j=0}^{\infty}\frac{u_j(X)y^j}{j!}\right) \,.
\]
Thus, after expanding and truncating to the first $C+1$ terms, we can use Claim \ref{claim:polytensor-prod-bound} and  equations (\ref{eq:normbound1}) and (\ref{eq:normbound2}) to deduce that for all $j$ with $0 \leq j \leq C$,
\[
\norm{v(g_j(X))} \leq (2 + \alpha)^{O_{m,k}(1)}\eps \,.
\]
Now note that the primary terms of the function $g((I + \Sigma)^{-1/2}X, y)$ are exactly $g_j((I + \Sigma)^{-1/2}X)$.  By Claim \ref{claim:polytovec-linear-transform}  we get for all $j$ with $0 \leq j \leq C$,
\[
 \norm{v\left(g_j\left((I + \Sigma)^{-1/2}X\right)\right)} \leq (2 + \alpha + \beta)^{O_{m,k}(1)}\eps \,.
\]
Now we need to check the remaining conditions of Lemma \ref{lem:hermite-to-tv}.  Let 
\[
G_j' = N(\mu_j', I + \Sigma_j') = N\left((I + \Sigma)^{-1/2}(\mu_j - \mu), I + (I + \Sigma)^{-1/2}( \Sigma_j - \Sigma)(I + \Sigma)^{-1/2}\right)
\]
for all $j \in [k]$.  Note that we can write 
\[
g((I + \Sigma)^{-1/2}X, y) = \sum_{j \in [k]} P_j'(Xy)e^{\mu_j'(X)y + \frac{1}{2}\Sigma_j'(X)y^2}
\]
for polynomials $P_1', \dots , P_k'$ of degree at most $m$.  By choosing $K$ sufficiently large, we can ensure that the Gaussians $G_1', \dots , G_k'$ satisfy the conditions of Lemma \ref{lem:hermite-to-tv} (because $\norm{\mu_j - \mu}, \norm{\Sigma_j - \Sigma}_2$ will be sufficiently small). Thus, applying Lemma \ref{lem:hermite-to-tv} and using (\ref{eq:equivalence}) completes the proof.
\end{proof}

\subsection{Reducing to When Components are All Very Close}\label{sec:reduce-to-very-close}
We will now deal with the case when the components are not necessarily all very close to each other.  We will still assume that the components are in $(\alpha,\beta)$-regular form for $\alpha, \beta \leq \poly(\log 1/\eps)$.  

The way we will reduce to the case where the components are all very close is as follows.  We show that we can partition the components $G_1, \dots , G_k$ into submixtures say $S_1, \dots , S_a \subset [k]$ such that
\begin{itemize}
    \item Components in the same submixture are sufficiently close to apply Corollary \ref{corollary:hermite-to-tv-close}
    
    \item Components in different submixtures are not too close
\end{itemize}
We then use differential operators to isolate each of these submixtures (relying on the second condition above) and deduce that if the Hermite moment polynomials of the entire mixture are close to $0$, then the Hermite moment polynomials of each submixture are close to $0$.  We can then apply Corollary \ref{corollary:hermite-to-tv-close} on each submixture to complete the proof.

We will need a few additional definitions.

\begin{definition}
Given Gaussians $G_1 = N(\mu_1, I + \Sigma_1), \dots , G_k = N(\mu_k, I  + \Sigma_k)$, we say a partition $S_1, \dots , S_a$ of $[k]$ is $\delta$-separated if for any pair of components in different parts of the partition, say $i_1 \in S_{j_1}, i_2 \in S_{j_2}, j_1 \neq j_2$,  
\[
\norm{\mu_{i_1} - \mu_{i_2}}_2 + \norm{\Sigma_{i_1} - \Sigma_{i_2}}_ 2 \geq \delta \,.
\]
\end{definition}
\begin{definition}
Given Gaussians $G_1 = N(\mu_1, I + \Sigma_1), \dots , G_k = N(\mu_k, I  + \Sigma_k)$, we say a partition $S_1, \dots , S_a$ of $[k]$ is $(\delta_1,\delta_2)$-good for some parameters $\delta_1, \delta_2$ if for any pair of components in different parts of the partition, say $i_1 \in S_{j_1}, i_2 \in S_{j_2}, j_1 \neq j_2$,  
\[
\norm{\mu_{i_1} - \mu_{i_2}}_2 + \norm{\Sigma_{i_1} - \Sigma_{i_2}}_ 2 \geq \delta_1 \,.
\]
and for any pair of components in the same part of the partition, say $i_1, i_2 \in S_{j_1}$,
\[
\norm{\mu_{i_1} - \mu_{i_2}}_2 + \norm{\Sigma_{i_1} - \Sigma_{i_2}}_ 2 \leq \delta_2 \,.
\]
\end{definition}
\begin{claim}\label{claim:good-partition}
Given Gaussians $G_1 = N(\mu_1, I + \Sigma_1), \dots , G_k = N(\mu_k, I  + \Sigma_k)$ and any parameter $\theta > 0$, there exists a $(\theta, k\theta)$-good partition of $[k]$.
\end{claim}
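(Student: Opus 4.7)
The plan is to use single-linkage clustering with respect to the natural pseudo-metric appearing in the definition. Define
\[
d(i,j) \;:=\; \norm{\mu_i - \mu_j} + \norm{\Sigma_i - \Sigma_j}_2
\]
on $[k]$. Since each summand is a norm of a difference, $d$ satisfies the triangle inequality.

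Next, I would build the graph $H$ on vertex set $[k]$ where $\{i,j\}$ is an edge iff $d(i,j) < \theta$, and take $S_1, \dots, S_a$ to be the connected components of $H$. By construction, any two indices $i_1, i_2$ lying in different components of $H$ satisfy $d(i_1,i_2) \geq \theta$, which gives the separation condition with $\delta_1 = \theta$.

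For the in-cluster bound, fix $i_1, i_2$ in the same component $S_j$. There is a path $i_1 = v_0, v_1, \dots, v_t = i_2$ in $H$ with $t \leq |S_j| - 1 \leq k - 1$ and $d(v_{s-1}, v_s) < \theta$ for every $s$. By the triangle inequality applied to $d$,
\[
d(i_1, i_2) \;\leq\; \sum_{s=1}^{t} d(v_{s-1}, v_s) \;<\; t\,\theta \;\leq\; (k-1)\theta \;\leq\; k\theta,
\]
which gives the within-cluster bound with $\delta_2 = k\theta$. Combined, this shows the partition is $(\theta, k\theta)$-good.

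There is really no obstacle here beyond observing that the quantity being controlled is a genuine pseudo-metric on $[k]$, so that single-linkage clustering at threshold $\theta$ simultaneously produces separation at scale $\theta$ between clusters and (by the triangle inequality and the fact that a component has at most $k$ vertices, hence diameter at most $k-1$ edges) diameter at most $(k-1)\theta$ within each cluster. The only mildly delicate point is the sharp constant in $\delta_2$: because the worst case is a chain of $k-1$ edges each just below $\theta$, we cannot improve the bound below $(k-1)\theta$, so choosing $\delta_2 = k\theta$ is the natural slack. The argument makes no use of the Gaussian structure beyond extracting a pseudo-metric from the parameters.
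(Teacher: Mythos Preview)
Your proof is correct and essentially identical to the paper's: both build the threshold graph at scale $\theta$, take connected components as the partition, and use the triangle inequality along a path of at most $k-1$ edges to bound the in-cluster diameter by $k\theta$. The only cosmetic difference is that the paper uses $\leq \theta$ rather than $< \theta$ for the edge condition, which does not affect the argument.
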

\begin{proof}
Consider a graph on nodes $1,2, \dots , k$ where two nodes $i, j$ are connected if and only if
\[
\norm{\mu_{i} - \mu_{j}}_2 + \norm{\Sigma_{i} - \Sigma_{j}}_ 2 \leq \theta \,.
\]
Now let $S_1, \dots, S_a$ be the connected components in this graph.  We claim this forms a $(\theta, k\theta)$-good partition.  It is clear that any pair $i, j$ in different parts of the partition satisfies 
\[
\norm{\mu_{i} - \mu_{j}}_2 + \norm{\Sigma_{i} - \Sigma_{j}}_2 \geq \theta \,.
\]
On the other hand, for any pair in the same part of the partition, we can find a path between them in the graph, say $i, l_1, \dots , l_c, j$ and use the triangle inequality summed along the path to deduce
\[
\norm{\mu_{i} - \mu_{j}}_2 + \norm{\Sigma_{i} - \Sigma_{j}}_2 \leq k\theta 
\]
which completes the proof.
\end{proof}

Now we begin the main technical part.  We first explain the intuition for using differential operators to isolate parts of the mixture.  Let $\mcl{D} = \partial - \mu(X) - \Sigma(X)y$ for some $\mu, \Sigma$.  Consider a Gaussian $N(\mu', I + \Sigma')$ and consider
\[
\mcl{D}\left(e^{\mu'(X) y+ \frac{1}{2}\Sigma'(X)y^2}\right) \,.
\]
Note that if $\mu, \Sigma$ are close to $\mu', \Sigma'$, then the result will be essentially $0$.  On the other hand, we can verify that if $\mu,\Sigma$ are not close to $\mu', \Sigma'$, then the result will be bounded away from $0$.  Thus, we can apply differential operators to essentially remove all components that are far away from $N(\mu', I + \Sigma')$, leaving only a submixture of components that are sufficiently close to each other.  Over the next two claims, we will formalize this intuition by showing that given a submixture of close components, if we repeatedly apply far-away differential operators, we cannot accidentally zero-out the submixture.

\begin{claim}\label{claim:operator-does-not-zero}
Consider a set of Gaussians $G_1 = N(\mu_1, I + \Sigma_1), \dots , G_k = N(\mu_k, I + \Sigma_k)$.  Let $P_1(X), \dots , P_k(X)$ be polynomials of degree at most $m$.  Let $f(X,y)$ be the generating function of the polynomial combination and let $f_0, f_1, \dots $ be the primary terms in the formal power series expansion of $f$.
\\\\
Consider two parameters $\mu, \Sigma$ and assume that for all $j \in [k]$,
\[
\delta \leq \norm{\mu_j - \mu}_2 + \norm{\Sigma_j - \Sigma}_2 \leq \delta^{-1}  \,.
\]
Let $g(X,y) = (\partial -( \mu(X) + \Sigma(X)y))f(X,y)$ where the partial derivative is taken with respect to $y$ and let $g_0, g_1, \dots $ be the primary terms of $g$.  Let $K$ be a constant that is sufficiently large in terms of $k,m$.  For any parameter $\eps > 0$, if for all $j \leq K$
\[
\norm{v(g_j(X))}_2 \leq \eps
\]
then for all $j \leq K$,
\[
\norm{v(f_j(X))}_2 \leq (2 +\norm{\mu} + \norm{\Sigma}_2 + \delta^{-1})^{O_{k,m,K}(1)} \eps \,.
\]
\end{claim}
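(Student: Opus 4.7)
The plan is to introduce the integrating factor $\phi(X, y) = e^{-\mu(X) y - \frac{1}{2}\Sigma(X) y^2}$ (as a formal power series in $y$) and apply Lemma \ref{lem:hermite-recurrence-gen-funcs} to $\phi \cdot f$. The identity $\mcl{D}[\phi^{-1} h] = \phi^{-1} \partial_y h$ converts $g = \mcl{D} f$ into $\partial_y(\phi f) = \phi g$, so if $\{F_j\}, \{H_j\}$ denote the primary terms of $\phi f$ and $\phi g$, matching coefficients of $y^j$ gives $F_0 = f_0$ (a constant in $X$) and $F_j = H_{j-1}$ for $j \geq 1$. Moreover,
\[
(\phi f)(X, y) \;=\; \sum_{j=1}^k P_j(Xy) \, e^{(\mu_j - \mu)(X) y + \frac{1}{2}(\Sigma_j - \Sigma)(X) y^2}
\]
exhibits $\phi f$ as the generating function of a polynomial combination with shifted Gaussian parameters, so Lemma \ref{lem:hermite-recurrence-gen-funcs} gives a recurrence of length $\kappa = (m+1)(2^k - 1)$ on $\{F_j\}$ with coefficient polynomials $\tilde R_{j, l}(X)$ that are $(O_{k, m}(1), O_{k, m}(1))$-simple in the shifted linear/quadratic forms $(\mu_j - \mu)(X)$, $(\Sigma_j - \Sigma)(X)$.

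Specializing the recurrence at $a = \kappa + b$ for each $b \in \{0, 1, \ldots, \kappa\}$, the condition $a - \kappa + j - l = 0$ together with $l \leq b$ (for $(b-l)!$ to contribute) and $j \geq 0$ forces $(j, l) = (0, b)$, so $F_0 = f_0$ enters the $b$-th equation only through this single term. Substituting $F_m = H_{m-1}$ for $m \geq 1$, the $b$-th equation rearranges to
\[
f_0 \cdot \tilde R_{0, b}(X) \;=\; -\!\!\!\sum_{(j, l) \neq (0, b)} \frac{H_{b + j - l - 1}(X) \cdot \tilde R_{j, l}(X)}{(b - l)!}.
\]
Bounding the right-hand side is straightforward: simplicity combined with $\|\mu_j - \mu\|_2 + \|\Sigma_j - \Sigma\|_2 \leq \delta^{-1}$ gives $\|v(\tilde R_{j, l})\| \leq \delta^{-O_{k, m}(1)}$, and expanding $\phi g$ as a Cauchy product with $\|v(\phi_a)\| \leq (2 + \|\mu\| + \|\Sigma\|_2)^{O_K(1)}$ and $\|v(g_l)\| \leq \epsilon$ yields $\|v(H_l)\| \leq (2 + \|\mu\| + \|\Sigma\|_2)^{O_K(1)} \epsilon$ for all $l \leq 2\kappa$, provided $K \geq 2\kappa$. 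Thus for every $b$,
\[
|f_0| \cdot \|v(\tilde R_{0, b})\| \;\leq\; (2 + \|\mu\| + \|\Sigma\|_2 + \delta^{-1})^{O_{k, m, K}(1)} \epsilon.
\]

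The main obstacle is extracting a usable lower bound on some individual $\|v(\tilde R_{0, b^*})\|$: Claim \ref{claim:prod-linear-forms}, applied to the composition $(\mcl{D}_k^*)^{(m+1) 2^{k-1}} \cdots (\mcl{D}_1^*)^{m+1}$ where $\mcl{D}_j^* = \partial - ((\mu_j - \mu)(X) + (\Sigma_j - \Sigma)(X) y)$ (each satisfying the two-sided norm hypothesis by assumption), only provides the aggregate bound $\|v_y(\tilde R_0)\| \geq (0.1\delta)^{C_\kappa}$. We handle this by averaging over the $\kappa+1$ equations: since $\|v_y(\tilde R_0)\|^2 = \sum_{b=0}^\kappa \|v(\tilde R_{0, b})\|^2$, some $b^*$ satisfies $\|v(\tilde R_{0, b^*})\| \geq (0.1\delta)^{C_\kappa}/\sqrt{\kappa+1}$, and feeding this into the $b^*$-equation yields $|f_0| \leq (2 + \|\mu\| + \|\Sigma\|_2 + \delta^{-1})^{O_{k, m, K}(1)} \epsilon$. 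Finally, matching coefficients of $y^j$ in $g = \partial_y f - (\mu(X) + \Sigma(X) y) f$ produces the three-term recurrence $f_{j+1}(X) = g_j(X) + \mu(X) f_j(X) + j \Sigma(X) f_{j-1}(X)$, and a routine induction on $j$ starting from the $f_0$ bound propagates to $\|v(f_j)\| \leq (2 + \|\mu\| + \|\Sigma\|_2 + \delta^{-1})^{O_{k, m, K}(1)} \epsilon$ for all $j \leq K$.
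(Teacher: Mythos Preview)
Your proof is correct and follows essentially the same approach as the paper's: both multiply by the integrating factor $\phi = e^{-\mu(X)y - \frac{1}{2}\Sigma(X)y^2}$, observe $\partial_y(\phi f) = \phi g$ so that $F_j = H_{j-1}$ for $j\ge 1$, bound these via the Cauchy product, and then isolate the constant $F_0 = f_0$ using the annihilating operator built from the shifted $\Delta_j = \partial - ((\mu_j-\mu)(X) + (\Sigma_j-\Sigma)(X)y)$ together with the lower bound from Claim~\ref{claim:prod-linear-forms}. The only cosmetic differences are that the paper works with the polynomial identity $R_0(X,y)F_0 = \text{RHS}$ and bounds $\|v_y(\cdot)\|$ directly (rather than selecting a single $b^*$), and it recovers the $f_j$ by multiplying $F$ back by $\phi^{-1}$ rather than via your three-term recurrence; both variants are equivalent and your final induction step is a slightly cleaner way to finish.
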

\begin{proof}
Consider 
\[
h(X,y) = g(X,y)e^{-\mu(X)y - \frac{1}{2}\Sigma(X)y^2} \,.
\]
Note that 
\[
h(X,y) = \partial_y\left( f(X,y)e^{-\mu(X)y - \frac{1}{2}\Sigma(X)y^2} \right) \,.
\]
Let $F(X,y) = f(X,y)e^{-\mu(X)y - \frac{1}{2}\Sigma(X)y^2}$ and let $F_0,F_1, \dots $ be its primary terms.  The primary terms of $h$ are $F_1, F_2, \dots $ (i.e. the same but shifted down by one).
\\\\
First note that since $h(X,y) = g(X,y)e^{-\mu(X)y - \frac{1}{2}\Sigma(X)y^2}$, we have for all $1 \leq j \leq K + 1$
\begin{equation}\label{eq:normbound3}
\norm{v(F_j(X))}_2 \leq ( 2 + \norm{\mu} + \norm{\Sigma}_2)^{O_K(1)} \eps \,.
\end{equation}
To see this, it suffices to consider the product of the power series for $g(X,y)$ and $e^{-\mu(X)y - \frac{1}{2}\Sigma(X)y^2}$ after truncating both series at $y^K$ (dropping all terms with higher powers of $y$).  By Claim \ref{claim:polytensor-prod-bound}, the first $K +1$ primary terms in the power series expansion of $e^{-\mu(X)y - \frac{1}{2}\Sigma(X)y^2}$ have coefficient norm at most $( 2 + \norm{\mu} + \norm{\Sigma}_2)^{O_K(1)}$.
Thus, when we expand out the product of $g(X,y)$ and $e^{-\mu(X)y - \frac{1}{2}\Sigma(X)y^2}$, we use Claim \ref{claim:polytensor-prod-bound} again to deduce that the resulting primary terms will all have coefficient norm at most $( 2 + \norm{\mu} + \norm{\Sigma}_2)^{O_K(1)} \eps$.
\\\\
Now we will argue about $F_0(X)$ (which is just a constant).  Define the following differential operators for $j \in [k]$,
\[
\Delta_j = \partial - ((\mu_j(X) - \mu(X)) + (\Sigma_j(X) - \Sigma(X))y) \,.
\]
Now consider the differential operator 
\[
\Delta =\Delta_k^{(m+1)2^{k-1}} \Delta_{k-1}^{(m+1)2^{k-2}} \dots \Delta_1^{m+1} \,.
\]
We know by Claim \ref{claim:degree-reduction} that $\Delta (F(X,y) ) = \Delta (f(X,y) e^{-\mu(X)y - \frac{1}{2}\Sigma(X)y^2} ) = 0$.  On the other hand, we may use Claim \ref{claim:diff-operator-expansion} to expand $\Delta$ in the form
\[
\Delta = \partial^{(m+1)(2^{k} - 1)} + R_{(m+1)(2^{k} - 1) - 1}(X, y)\partial^{(m+1)(2^{k} - 1) - 1} + \dots + R_{1}(X, y)\partial + R_0(X,y) \,.
\]
for some polynomials $R_0, R_1, \dots $.  Let $\kappa = (m+1)(2^{k} - 1)$.  We have
\[
R_0(X,y) F(X,y) = - \left( \sum_{j=1}^{\kappa}R_j(X,y)\partial^{\kappa- j}(F(X,y)) \right)
\]
which is equivalent to
\[
R_0(X,y) \sum_{l=0}^{\infty} \frac{F_l(X)y^l}{l!} = - \left( \sum_{j=1}^{\kappa}R_j(X,y)\left(\sum_{l=0}^{\infty} \frac{F_{l + j}(X)y^l}{l!}\right) \right) \,.
\]
The key observation is that $F_0(X)$ appears on the LHS but not the RHS i.e. we may write
\begin{equation}\label{eq:isolate-term}
R_0(X,y) F_0(X) = - \left( \sum_{j=1}^{\kappa}R_j(X,y)\left(\sum_{l=0}^{\infty} \frac{F_{l + j}(X)y^l}{l!}\right) \right) - R_0(X,y)\sum_{l=1}^{\infty} \frac{F_l(X)y^l}{l!} \,.
\end{equation}

Now by Claim \ref{claim:prod-linear-forms}, we have
\[
\norm{v_y(R_0(X,y))} \geq (0.1\delta)^{O_{k,m}(1)} \,.
\]
Note that Claim \ref{claim:diff-operator-expansion} and Claim \ref{claim:polytensor-prod-bound} give an upper bound on the coefficient norm of $R_j(X,y)$ for all $0 \leq j \leq \kappa$.  In particular, we get
\[
\norm{v_y(R_j(X,y))} \leq (2\delta^{-1})^{O_{k,m}(1)} \,.
\]
Now, by combining with (\ref{eq:normbound3}) we can upper bound the coefficient norm of the first $\kappa$ terms in the power series of the RHS of (\ref{eq:isolate-term}).  Since $R_0(X,y)$ has degree at most $\kappa$,  as long as $K > 10\kappa$, we get
\[
\norm{v(F_0(X))} \leq (2 + \norm{\mu} + \norm{\Sigma}_2 + \delta^{-1})^{O_{K,k,m}(1)}\eps \,.
\]

Now we have an upper bound on the coefficient norm of all of $F_0(X), \dots , F_K(X)$.  Note that 
\[
f(X,y) = F(X,y)e^{\mu(X)y + \frac{1}{2}\Sigma(X)y^2}
\]
and we can expand out the product of the power series of $F(X,y)$ and $e^{\mu(X)y + \frac{1}{2}\Sigma(X)y^2}$, using the same argument as before, to get that for all $j$ with $0 \leq j \leq K$,
\[
\norm{v(f_j(X))} \leq (2 + \norm{\mu} + \norm{\Sigma}_2 + \delta^{-1})^{O_{K,k,m}(1)}\eps \,.
\]
\end{proof}

By repeatedly applying the previous claim, we get the following.

\begin{claim}\label{claim:cluster-hermite-polys}
Consider a set of Gaussians $G_1 = N(\mu_1, I + \Sigma_1), \dots , G_k = N(\mu_k, I + \Sigma_k)$ in $(\alpha,\beta)$-regular form and let $P_1(X), \dots , P_k(X)$ be polynomials of degree at most $m$.  Let $f(X,y)$ be the generating function of the polynomial combination and let $f_0, f_1, \dots $ be the primary terms in the formal power series expansion of $f$.

Let $S_1, \dots , S_a$  be a $\delta$-separated partition of $[k]$ for some constant $\delta < 1$.  For each $l \in [a]$, let $f^{(l)}(X,y)$ be the generating function of the polynomial combination of only the Gaussians in $S_l$ and let $f_0^{(l)}, f_1^{(l)}, \dots $ be the primary terms in the formal power series expansion of $f^{(l)}$.

For any constant $K$ that is sufficiently large in terms of $k,m$, There exists a constant $C_{k,K,m}$ depending on $k,K,m$ such that the following holds.  If for all $j \leq K$ we have
\[
\norm{v(f_j(X))} \leq \eps
\]
then for all $l \in [a], j \leq K - (m+1)2^k$,
\[
\norm{v(f^{(l)}_j(X))} \leq \eps \left(2 + \alpha + \delta^{-1}\right)^{C_{k,K,m}} \,.
\]
\end{claim}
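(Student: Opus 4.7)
The plan is to use composite differential operators, with exponentially growing exponents, to isolate each submixture $f^{(l)}$. For each $l \in [a]$, write $[k] \setminus S_l = \{j_1, \ldots, j_r\}$ (with $r \leq k$) and define
\[
\Delta_l = \mcl{D}_{j_r}^{(m+1)2^{r-1}} \mcl{D}_{j_{r-1}}^{(m+1)2^{r-2}} \cdots \mcl{D}_{j_2}^{(m+1) \cdot 2}\, \mcl{D}_{j_1}^{m+1}.
\]
The total order of $\Delta_l$ is at most $(m+1)(2^k - 1)$, matching the index loss in the statement. The first task is to verify that $\Delta_l(f^{(l')}) = 0$ for every $l' \neq l$. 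Each term of $f^{(l')}$ has the form $P_{j_\ell}(Xy)\,e^{\mu_{j_\ell}(X)y + \frac{1}{2}\Sigma_{j_\ell}(X)y^2}$ with $j_\ell \in [k] \setminus S_l$ (because $S_{l'} \cap S_l = \emptyset$). By Claim \ref{claim:degree-reduction}, applying a "non-matching" operator $\mcl{D}_{j_s}$ with $s \neq \ell$ raises the polynomial degree by one, while a "matching" application of $\mcl{D}_{j_\ell}$ drops it by one. A direct computation then shows that after $\mcl{D}_{j_1}^{m+1}, \mcl{D}_{j_2}^{2(m+1)}, \ldots, \mcl{D}_{j_{\ell-1}}^{(m+1)2^{\ell-2}}$ have been applied, the $j_\ell$-term's polynomial has degree $(m+1)2^{\ell-1} - 1$, which is then killed by $\mcl{D}_{j_\ell}^{(m+1)2^{\ell-1}}$. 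So $\Delta_l(f) = \Delta_l(f^{(l)})$.

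Next I would bound the first $K - (m+1)2^k$ primary terms of $\Delta_l(f)$ in terms of those of $f$. Expanding $\Delta_l$ via Claim \ref{claim:diff-operator-expansion} writes each primary term of $\Delta_l(f)$ as a linear combination of primary terms of $f$ at nearby indices, with coefficients that are $(O_{k,m}(1), O_{k,m}(1))$-simple in $\{\mu_j(X), \Sigma_j(X)\}$. Using $\norm{v(\mu_j)}, \norm{v(\Sigma_j)} \leq \alpha$ and Claim \ref{claim:polytensor-prod-bound}, the first $K - (m+1)2^k$ primary terms of $\Delta_l(f^{(l)}) = \Delta_l(f)$ have coefficient norm at most $(2+\alpha)^{O_{k,m,K}(1)} \eps$.

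The heart of the proof is then to "undo" $\Delta_l$ by iterating Claim \ref{claim:operator-does-not-zero}, peeling off one $\mcl{D}_{j_\ell}$ at a time. The crucial structural observation is that every intermediate generating function along the way remains a polynomial combination of exactly the Gaussians $\{G_{j'} : j' \in S_l\}$: by Claim \ref{claim:degree-reduction}, applying $\mcl{D}_{j_\ell}$ with $j_\ell \notin S_l$ does not create a new Gaussian exponent but only modifies the polynomial factors (raising their degrees). Therefore at each peel, Claim \ref{claim:operator-does-not-zero} is applied with $(\mu, \Sigma) = (\mu_{j_\ell}, \Sigma_{j_\ell})$ acting on a polynomial combination indexed by $S_l$. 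The hypothesis is satisfied with $\delta' := \min(\delta, (4\alpha)^{-1})$: the $\delta$-separation of the partition gives the lower bound $\norm{\mu_{j_\ell} - \mu_{j'}} + \norm{\Sigma_{j_\ell} - \Sigma_{j'}}_2 \geq \delta$ for $j' \in S_l$, while the $(\alpha,\beta)$-regular-form gives the upper bound $\leq 4\alpha \leq {\delta'}^{-1}$. Since ${\delta'}^{-1} = O(\alpha + \delta^{-1})$, each peel multiplies the bound by $(2 + \alpha + \delta^{-1})^{O_{k,m,K}(1)}$, and at most $(m+1)(2^k - 1)$ peels are performed, giving the advertised final bound $(2 + \alpha + \delta^{-1})^{C_{k,K,m}} \eps$.

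The main obstacle is the bookkeeping for the exponentially-growing exponents in Step 1: one has to carefully check that the sequence $(m+1), 2(m+1), 4(m+1), \ldots$ is not only sufficient to annihilate each $j_\ell$-term despite its polynomial degree being inflated by the preceding non-matching operators, but also that no residual tail survives on subsequent terms (which it does not, because once a term hits degree $-1$ it is gone forever). A secondary subtlety is that the intermediate generating functions in Step 3 have polynomials of degree up to $(m+1)2^k$, so Claim \ref{claim:operator-does-not-zero} must be invoked with its "$m$" replaced by this larger value — this only affects the implicit constant $C_{k,K,m}$ and does not change the structure of the argument.
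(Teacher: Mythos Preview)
Your proposal is correct and follows essentially the same approach as the paper: construct an annihilating operator from the $\mcl{D}_j$ with $j \notin S_l$ using exponentially growing exponents, observe it kills all other submixtures so that $\Delta_l(f) = \Delta_l(f^{(l)})$, bound the primary terms of $\Delta_l(f)$ via Claim~\ref{claim:diff-operator-expansion}, and then peel off the operators one at a time using Claim~\ref{claim:operator-does-not-zero} with the $\delta$-separation hypothesis. Your explicit introduction of $\delta' = \min(\delta, (4\alpha)^{-1})$ to handle the two-sided hypothesis of Claim~\ref{claim:operator-does-not-zero}, and your remark that the intermediate polynomial degrees grow to $O_{k,m}(1)$ (so Claim~\ref{claim:operator-does-not-zero} must be invoked with that larger degree parameter), are both details the paper leaves implicit but which you handle correctly.
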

\begin{proof}
Without loss of generality $S_1 =\{1,2, \dots , t \}$ for some $t \leq k$.  Recall that for each $j \in [k]$, we use $\mcl{D}_j$ to denote the differential operator $\partial - (\mu_j(X) + \Sigma_j(X)y)$.  Now consider the differential operator
\[
\mcl{D}^{(1)} =  \mcl{D}_k^{2^{k- t - 1}(m+1)} \cdots  \mcl{D}_{t+2}^{2(m+1)}\mcl{D}_{t+1}^{m+1} \,.
\]
Note that by Claim \ref{claim:diff-operator-expansion} and Claim \ref{claim:polytensor-prod-bound}, we can rewrite
\[
\mcl{D}^{(1)} = \partial^{\kappa} + R_{\kappa - 1}(X,y)\partial^{\kappa - 1} + \dots  +R_0(X,y) 
\]
where $\kappa = (m+1)(2^{k-t} - 1)$ and for all $j$,
\[
\norm{v_y(R_j(X,y))} \leq \left(2 + \alpha \right)^{O_{k,m}(1)} \,.
\]
Now consider the power series expansion of $\mcl{D}^{(1)} f$ and let $d_0(X), d_1(X), \dots$ be its primary terms.  Since the differential operator $\mcl{D}^{(1)}$ has degree at most $(m+1)2^k$, the assumption in the statement of the claim implies that for all $j \leq K - (m+1)2^k$,
\[
\norm{v(d_j(X))} \leq \left(2 + \alpha \right)^{O_{k,m, K}(1)}\eps \,.
\]
On the other hand, note that 
\[
\mcl{D}^{(1)} f = \mcl{D}^{(1)} f^{(1)}
\]
since by Claim \ref{claim:degree-reduction}, the operator $\mcl{D}^{(1)}$ zeros out all of the other components.  We may now repeatedly apply Claim \ref{claim:operator-does-not-zero} (since $\mcl{D}^{(1)}$ factors as a composition of linear differential operators) and use the fact that the partition $S_1, \dots , S_a$ is $\delta$-separated to deduce that as long as $K$ is sufficiently large in terms of $k,m$, we have for all $j \leq K - (m+1)2^k$ ,
\[
\norm{v(f^{(1)}_j(X))} \leq \left(2 + \alpha + \delta^{-1}\right)^{O_{k,m, K}(1)} \eps \,.
\]
Since the initial choice of $l = 1$ was arbitrary, this completes the proof.
\end{proof}

We can now prove the main theorem of this section.  The statement below is stated in terms of generating functions.  We will translate it into distribution space and prove Theorem \ref{thm:identifiability} immediately afterwards.

\begin{theorem}\label{thm:hermite-to-tv-full}
Consider a set of Gaussians $G_1 = N(\mu_1, I + \Sigma_1), \dots , G_k = N(\mu_k, I + \Sigma_k)$ in $(\alpha,\beta)$-regular form and let $P_1(X), \dots , P_k(X)$ be polynomials of degree at most $m$.  Let $f(X,y)$ be the generating function of the polynomial combination and let $f_0, f_1, \dots $ be the primary terms in the formal power series expansion of $f$.  

There exists a constant $C$ depending only on $k,m$ such that the following holds.  If for all $j$ with $0 \leq j \leq C$, we have
\[
\norm{v(f_j(X))} \leq \eps
\]
then we must have
\[
\norm{\chi(f(X,i))}_1 \leq (2 + \alpha + \beta)^C \eps \,.
\]
\end{theorem}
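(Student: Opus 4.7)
The plan is to assemble three pieces that have already been proved: Claim \ref{claim:good-partition}, which produces a partition of the components into well-separated clusters of close components; Claim \ref{claim:cluster-hermite-polys}, which shows that smallness of the primary terms of $f$ transfers to smallness of the primary terms of each sub-generating-function; and Corollary \ref{corollary:hermite-to-tv-close}, which handles the all-pairs-close case. The only real content is choosing the various parameters consistently.

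First I would fix the constant $K'$ promised by Corollary \ref{corollary:hermite-to-tv-close} (this depends only on $k$ and $m$), and set the separation parameter $\theta = \beta^{-1}/(kK')$. Applying Claim \ref{claim:good-partition} with this $\theta$ gives a $(\theta, k\theta)$-good partition $S_1,\ldots,S_a$ of $[k]$. By construction, components in the same part differ by at most $k\theta \leq \beta^{-1}/K'$ (the hypothesis of Corollary \ref{corollary:hermite-to-tv-close}), while the partition is $\theta$-separated in the sense required by Claim \ref{claim:cluster-hermite-polys}.

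Next, I would choose $C$ large enough that $C - (m+1)2^k \geq K'$ (for instance $C = K' + (m+1)2^k$, possibly inflated to absorb the constants below) and apply Claim \ref{claim:cluster-hermite-polys}. This shows that for each $l \in [a]$ and each $j \leq K'$, the primary terms of the sub-generating-function $f^{(l)}$ satisfy
\[
\norm{v(f^{(l)}_j(X))} \leq \eps (2 + \alpha + \theta^{-1})^{C_{k,C,m}} = \eps (2 + \alpha + kK'\beta)^{C_{k,C,m}} \,.
\]
Since the components in $S_l$ satisfy the regularity and closeness hypotheses of Corollary \ref{corollary:hermite-to-tv-close}, I can apply that corollary to each submixture to conclude
\[
\norm{\chi(f^{(l)}(X,i))}_1 \leq (2 + \alpha + \beta)^{K'} \cdot \eps (2 + \alpha + kK'\beta)^{C_{k,C,m}} \,.
\]
Finally, because $f(X,y) = \sum_l f^{(l)}(X,y)$ and $\chi$ is linear, the triangle inequality applied to the at most $k$ submixtures yields a bound of the form $(2 + \alpha + \beta)^{O_{k,m}(1)}\eps$, and by enlarging $C$ if necessary we absorb every constant exponent into a single factor $(2 + \alpha + \beta)^C$.

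The only nontrivial part of the argument is verifying that the constants can indeed be chosen consistently: the separation $\delta^{-1} = \theta^{-1} = kK'\beta$ used when invoking Claim \ref{claim:cluster-hermite-polys} grows with $\beta$, so the resulting blowup factor $(2 + \alpha + kK'\beta)^{C_{k,C,m}}$ from that claim and the factor $(2 + \alpha + \beta)^{K'}$ from Corollary \ref{corollary:hermite-to-tv-close} must both be dominated by $(2 + \alpha + \beta)^C$; this is fine since $K'$ and $k$ are constants in $k,m$, so both are at most $(2+\alpha+\beta)^{O_{k,m}(1)}$. Everything else is bookkeeping, since the separation parameter $\theta$ and the number of primary terms needed are all constants depending only on $k$ and $m$.
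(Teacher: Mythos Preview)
Your proposal is correct and follows essentially the same approach as the paper: fix the constant from Corollary~\ref{corollary:hermite-to-tv-close}, choose $\theta$ so that $k\theta \leq \beta^{-1}/K'$, apply Claim~\ref{claim:good-partition} to get a $(\theta,k\theta)$-good partition, use Claim~\ref{claim:cluster-hermite-polys} to transfer the primary-term bounds to each submixture, apply Corollary~\ref{corollary:hermite-to-tv-close} within each part, and sum via linearity of $\chi$. Your parameter bookkeeping (in particular the observation that $\theta^{-1} = O_{k,m}(\beta)$ so the blowup is absorbed into $(2+\alpha+\beta)^{O_{k,m}(1)}$) matches the paper's treatment.
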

\begin{proof}
Let $K$ be the constant in Corollary \ref{corollary:hermite-to-tv-close}.  Recall that $K$ is set as a function of $k,m$.  Let $\theta = \frac{\beta^{-1}}{2kK}$. By Claim \ref{claim:good-partition}, there is a  $(\theta, k\theta)$-good partition of $[k]$, say $S_1, \dots , S_a$.  For each $l \in [a]$, let 
\[
f^{(l)}(y) = \sum_{j \in S_l}P_j(Xy)e^{\mu_j(X)y + \frac{1}{2}\Sigma_j(X)y^2} \,.
\]
and let $f_0^{(l)},f_1^{(l)}, \dots  $ be the primary terms in the expansion of $f^{(l)}$ as a power series in $y$.

By Claim \ref{claim:cluster-hermite-polys}, as long as $C$ is sufficiently large in terms of $k,m$, we have for all $0 \leq j \leq K$
\[
\norm{v(f_j^{(l)}(X))} \leq \eps (2 + \alpha + \beta)^{O_{m,k}(1)} \,.
\]

Now by Corollary \ref{corollary:hermite-to-tv-close}, since the partition is such that all pairs of components in the same part are close, we get that 
\[
\norm{\chi\left( f^{(l)}(X,i)\right) }_1 \leq \eps (2 + \alpha + \beta)^{O_{m,k}(1)} \,.
\]
Finally, since 
\[
f = f^{(1)} + \dots + f^{(a)}
\]
we get 
\[
\norm{\chi\left(f(X,i)\right)}_1 \leq (2 + \alpha + \beta)^{O_{m,k}(1)} \eps
\]
which completes the proof.
\end{proof}

To prove Theorem \ref{thm:identifiability}, it suffices to translate the above theorem from generating functions back to distributions.
\begin{proof}[Proof of Theorem \ref{thm:identifiability}]
By Corollary \ref{coro:hermite-of-polymixture}, there are polynomials $P_1, \dots , P_k$ of degree at most $m$ such that 
\[
f(X,y) = \sum_{j=0}^{\infty} \frac{1}{j!} h_{j,g}(X)y^j = P_1(Xy)e^{\mu_1(X)y + \frac{1}{2}\Sigma_1(X)y^2} + \dots + P_k(Xy)e^{\mu_k(X)y + \frac{1}{2}\Sigma_k(X)y^2} \,.
\]
Now, applying Theorem \ref{thm:hermite-to-tv-full} to the expression on the RHS, we have
\[
\norm{\chi(f(X,i))}_1 \leq (2 + \alpha + \beta)^{O_{m,k}(1)} \eps \,.
\]
Finally, Fact \ref{fact:inversion} and Claim \ref{claim:invchar-formula} imply that $g = \chi(f(X,i)) $ so we are done.
\end{proof}

\section{Hermite Moment Polynomials of a Single Gaussian: Tail Bounds and other Properties}\label{sec:tailbounds}

Note that the Hermite moment polynomials of a distribution are given by $\E_z[H_m(X,z)]$ for $z$ drawn from that distribution.  The way we estimate these Hermite moment polynomials from samples will be by robustly estimating the mean of the distribution $H_m(X,z)$.  In this section, our goal is to understand properties of the distribution of $H_m(X,z)$ for $z$ drawn from a single Gaussian of the form $N(\mu, I + \Sigma)$.  Later, in Section \ref{sec:est-hermite}, we will use this to deduce that we can estimate $H_m(X,z)$ to within nearly optimal error even when $z$ is drawn from a regular-form mixture of Gaussians.


We will need a few definitions.
\begin{definition}
Given two sets of $d$ variables $X^{(1)} = (X^{(1)}_1, \dots , X^{(1)}_d) $ and $X^{(2)} = (X^{(2)}_1, \dots , X^{(2)}_d) $ and a polynomial $P(X^{(1)}, X^{(2)})$, for integers $m_1, m_2$, the degree-$(m_1,m_2)$-part of $P$ consists of the monomials of $P$ that have total degree $m_1$ in $X^{(1)}$ and total degree $m_2$ in $X^{(2)}$.
\end{definition}

\begin{definition}
Consider two sets of formal variables, say $X^{(1)} = (X_1^{(1)}, \dots ,X_d^{(1)}), X^{(2)} = (X_1^{(2)}, \dots ,X_d^{(2)}) $.  Let $G = N(\mu, I + \Sigma)$ be a Gaussian.  We say the fundamental polynomials (with respect to $G$) are
\[
\Big\{\mu\left(X^{(1)}\right),\mu \left( X^{(2)} \right), \Sigma \left(X^{(1)}\right), \Sigma \left(X^{(2)}\right), \left(X^{(1)}\right)^T (I + \Sigma)X^{(2)} \Big\}
\]

\end{definition}

\subsection{Covariance of Multivariate Hermite Polynomials}\label{sec:hermite-cov-basic}
First, we analyze the covariance of $H_m(X,z)$ for $z$ drawn from a Gaussian $N(\mu, I + \Sigma)$.

\begin{claim}\label{claim:hermite-covariance-basic}
Let $G = N(\mu, I + \Sigma)$ be a Gaussian.  Let $m$ be a positive integer.  Consider two sets of $d$ variables $X^{(1)} = (X^{(1)}_1, \dots , X^{(1)}_d) $ and $X^{(2)} = (X^{(2)}_1, \dots , X^{(2)}_d) $.  Then 
 the expression 
 \[
 \E_{z \sim G}[ H_m(X^{(1)},z) H_m(X^{(2)},z)]
 \]
 (which is a formal polynomial in $2d$ variables) can be computed as follows:
 \begin{enumerate}
     \item Consider the power series expansion of
     \[
     \exp \left(  y \mu(X^{(1)} + X^{(2)}) + \frac{1}{2}y^2\Sigma(X^{(1)} + X^{(2)}) + y^2   X^{(1)} \cdot X^{(2)} \right)  = \sum_{j=0}^{\infty} \frac{Q_j(X^{(1)}, X^{(2)})y^j}{j!}
     \]
     \item Take $\binom{2m}{m}^{-1}$ times the degree $(m,m)$ part of $Q_{2m}(X^{(1)}, X^{(2)})$ 
 \end{enumerate}
\end{claim}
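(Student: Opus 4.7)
The plan is to compute the claimed expectation as the diagonal coefficient of a bivariate generating function, then argue via bidegree that setting the two auxiliary variables equal loses no information about the particular monomial we want.

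First I would set up the generating function. By Claim \ref{claim:hermite-identity} applied to $H_m(X,z) = \mcl{H}_m(z \cdot X, \|X\|^2)$, we have
\[
\sum_{m=0}^\infty \frac{H_m(X,z)}{m!} s^m = \exp\bigl( (z \cdot X) s - \tfrac{1}{2}\|X\|^2 s^2 \bigr).
\]
Applying this to both $X^{(1)}$ (with variable $s$) and $X^{(2)}$ (with variable $t$), multiplying, and taking the expectation under $z \sim G$, I obtain
\[
F(s,t) := \E_{z \sim G}\!\left[ e^{(z \cdot X^{(1)}) s + (z \cdot X^{(2)}) t} \right] \cdot e^{-\frac{1}{2}\|X^{(1)}\|^2 s^2 - \frac{1}{2}\|X^{(2)}\|^2 t^2} = \sum_{a,b \geq 0} \frac{\E[H_a(X^{(1)},z) H_b(X^{(2)},z)]}{a!\, b!} s^a t^b.
\]
The inner expectation is just the MGF of the Gaussian $N(\mu, I+\Sigma)$ evaluated at $sX^{(1)} + tX^{(2)}$, which is $\exp\bigl( \mu \cdot (sX^{(1)} + tX^{(2)}) + \tfrac{1}{2}(sX^{(1)}+tX^{(2)})^T(I+\Sigma)(sX^{(1)}+tX^{(2)}) \bigr)$. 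Expanding the quadratic form and cancelling the $\|X^{(1)}\|^2 s^2$ and $\|X^{(2)}\|^2 t^2$ terms against the prefactor leaves
\[
F(s,t) = \exp\bigl( s\,\mu(X^{(1)}) + t\,\mu(X^{(2)}) + \tfrac{1}{2}s^2\Sigma(X^{(1)}) + \tfrac{1}{2}t^2\Sigma(X^{(2)}) + st\,(X^{(1)})^T(I+\Sigma)X^{(2)} \bigr).
\]

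Next I would specialize $s = t = y$. Using the symmetry $\Sigma(X^{(1)}) + \Sigma(X^{(2)}) + 2(X^{(1)})^T\Sigma X^{(2)} = \Sigma(X^{(1)} + X^{(2)})$, the exponent collapses to exactly
\[
y\,\mu(X^{(1)} + X^{(2)}) + \tfrac{1}{2}y^2 \Sigma(X^{(1)} + X^{(2)}) + y^2\, X^{(1)} \cdot X^{(2)},
\]
which matches the expression in the claim. Hence $F(y,y) = \sum_n \frac{Q_n(X^{(1)}, X^{(2)})}{n!}\, y^n$.

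The final step is the bidegree extraction, which I expect to be the only subtle point. Because $H_a(X^{(1)},z)$ is homogeneous of degree $a$ in $X^{(1)}$, the coefficient $[s^a t^b]F(s,t)$ is homogeneous of bidegree $(a,b)$ in $(X^{(1)}, X^{(2)})$. Therefore, among all pairs $(a,b)$ with $a+b = 2m$ contributing to $Q_{2m}$, only $(a,b)=(m,m)$ contributes to the degree-$(m,m)$ part. Equating coefficients gives
\[
\text{deg-}(m,m)\text{ part of } Q_{2m} = (2m)! \cdot [s^m t^m]F(s,t) = \binom{2m}{m}\, \E_{z \sim G}\bigl[H_m(X^{(1)},z)H_m(X^{(2)},z)\bigr],
\]
and multiplying by $\binom{2m}{m}^{-1}$ yields the claim. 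The main thing to be careful with is keeping track of which variable the homogeneity lives in so that the $(m,m)$ slice uniquely picks out the desired coefficient from the collapsed univariate series.
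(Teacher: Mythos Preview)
Your proposal is correct and follows essentially the same approach as the paper: compute the Gaussian expectation of the Hermite generating function product, obtain the closed-form exponential, and use homogeneity of $H_a(X^{(i)},z)$ in $X^{(i)}$ to isolate the $(m,m)$ term from the coefficient of $y^{2m}$. The only cosmetic difference is that you introduce two auxiliary variables $s,t$ and then specialize to $s=t=y$, whereas the paper works with a single $y$ from the start and invokes the same homogeneity observation directly; both arrive at the identical computation and extraction.
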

\begin{proof}
We will evaluate $\E_{z \sim G}[ H_m(X^{(1)},z)  H_m(X^{(2)},z)]$ using generating functions.  Let 
\[
F = \left( \sum_{m = 0}^{\infty} \frac{1}{m!}H_m(X^{(1)}, z)y^m\right)\left( \sum_{m = 0}^{\infty} \frac{1}{m!}H_m(X^{(2)}, z)y^m\right) \,.
\]
Note that $H_m(X,z)$ is homogeneous and of degree $m$ in $X$.  Thus, to compute $H_m(X^{(1)},z)  H_m(X^{(2)},z)$, it suffices to extract the degree-$(m,m)$ part of the coefficient of  $y^{2m}$ in the power series expansion of $F$.

Now by Claim \ref{claim:hermite-identity} we may write
\begin{align*}
\E_{z \sim G}[F] &= \E_{z \sim G} \left[ \exp \left( yz \cdot (X^{(1)} + X^{(2)}) - \frac{1}{2}y^2 (X^{(1)} \cdot X^{(1)} + X^{(2)} \cdot X^{(2)}) \right)\right]\\ &=  C \int \exp \left(-\frac{1}{2}(z - \mu)^T(I + \Sigma)^{-1}(z - \mu) + yz \cdot (X^{(1)} + X^{(2)}) - \frac{1}{2}y^2 (X^{(1)} \cdot X^{(1)} + X^{(2)} \cdot X^{(2)}) \right)\\ &=  C \int \exp \Bigg(-\frac{1}{2}\left(z - \mu - y(I + \Sigma)(X^{(1)} + X^{(2)})\right)^T(I + \Sigma)^{-1}\left(z - \mu - y(I + \Sigma)(X^{(1)} + X^{(2)})\right) \\  & \quad + y\mu \cdot (X^{(1)} + X^{(2)}) +  \frac{1}{2}y^2 \left(\left(X^{(1)}\right)^T \Sigma X^{(1)} + \left(X^{(2)}\right)^T \Sigma X^{(2)}  + 2\left(X^{(1)}\right)^T (I + \Sigma)X^{(2)}\right) \Bigg) \\ &= \exp \left(  y \mu(X^{(1)} + X^{(2)}) + \frac{1}{2}y^2\Sigma(X^{(1)} + X^{(2)}) + y^2   X^{(1)} \cdot X^{(2)} \right)  \,. 
\end{align*}
In the above, $C$ denotes the normalization constant for a Gaussian with covariance $I  + \Sigma$.  Next, expanding the above as a power series, we know that the degree $(m,m)$ part of $Q_{2m}(X^{(1)}, X^{(2)})$ is equal to 
\[
\E \left[\frac{(2m)!}{m! \cdot m!}H_m(X^{(1)}, z)H_m(X^{(2)}, z) \right]
\]
from which we immediately get the desired conclusion.
\end{proof}

As a corollary to the above, we get the following upper bound on the covariance of $v(H_m(X,z))$ for a single Gaussian.  To do this, we rely on the symmetric tensorization (recall definition \ref{def:symmetrictensorization}) and its properties to relate the expression $\E_{z \sim G}[ H_m(X^{(1)},z) H_m(X^{(2)},z)]$ to the covariance of the vector $v(H_m(X,z))$.
\begin{claim}\label{claim:single-gaussian-hermite-covariance}
Let $G = N(\mu, I + \Sigma)$ be a Gaussian.  Let $m$ be a positive integer.  Let $\Sigma_{H_m}$ be the covariance of $ v(H_m(X,z))$ for $z$ drawn from $G$.  Then 
\[
 \Sigma_{H_m} \leq E_{z \sim G}\left[ v(H_m(X,z)) \otimes v(H_m(X,z)) \right] \leq (m(1 + \norm{\mu}_2 + \norm{\Sigma}_2))^{O(m)}I 
\]
where $I$ on the RHS denotes the identity matrix of the appropriate size.
\end{claim}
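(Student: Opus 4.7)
The first inequality $\Sigma_{H_m} \leq \E[v(H_m(X,z)) \otimes v(H_m(X,z))]$ is immediate from $\Cov(Y) = \E[YY^T] - \E[Y]\E[Y]^T \leq \E[YY^T]$, so the content is entirely in bounding the second moment matrix.

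The plan is to translate the bound into an operator-norm bound on a symmetric tensorization, and then use the submultiplicativity of $T_{\sym}$ to reduce to bounding op norms of a few ``fundamental'' polynomials. First, applying Claim \ref{claim:polyproducttosymmetrictensor} pointwise in $z$ and using linearity of $T_{\sym}$ gives
\[
\E_{z \sim G}\bigl[v(H_m(X,z)) \otimes v(H_m(X,z))\bigr] = T_{\sym}\bigl(\E_{z \sim G}[H_m(X^{(1)},z) H_m(X^{(2)},z)]\bigr),
\]
so it suffices to bound the operator norm of the symmetric tensorization of the polynomial inside. By Claim \ref{claim:hermite-covariance-basic}, this polynomial equals $\binom{2m}{m}^{-1}$ times the degree-$(m,m)$ part of the coefficient of $y^{2m}$ in the power series expansion of $\exp(yB_1 + y^2 B_2)$, where $B_1 = \mu(X^{(1)}) + \mu(X^{(2)})$ and $B_2 = \tfrac{1}{2}\Sigma(X^{(1)}) + \tfrac{1}{2}\Sigma(X^{(2)}) + (X^{(1)})^T(I+\Sigma) X^{(2)}$.

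Extracting this coefficient gives a finite sum $\sum_{k=m}^{2m} \tfrac{1}{k!}\binom{k}{2m-k} B_1^{2k-2m} B_2^{2m-k}$, and expanding each $B_1^a B_2^b$ via the multinomial theorem writes it as a sum of products of the fundamental polynomials $\mu(X^{(i)})$, $\Sigma(X^{(i)})$, and $(X^{(1)})^T(I+\Sigma)X^{(2)}$. For each such fundamental $P$, the quantity $\norm{T_{\sym}(P)}_{\op}$ can be read off directly: $\norm{T_{\sym}(\mu(X^{(i)}))}_{\op} = \norm{\mu}$, $\norm{T_{\sym}(\Sigma(X^{(i)}))}_{\op} \leq \norm{\Sigma}_{2}$, and $\norm{T_{\sym}((X^{(1)})^T(I+\Sigma)X^{(2)})}_{\op} \leq 1 + \norm{\Sigma}_2$, since in each case $T_{\sym}(P)$ is simply the vector or (symmetric) matrix defining the corresponding linear or bilinear form. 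Iterating Claim \ref{claim:polytensorinjnormbound} then bounds $\norm{T_{\sym}(B_1^a B_2^b)}_{\op}$ by the product of the op norms of its at most $O(m)$ factors, yielding $(1 + \norm{\mu} + \norm{\Sigma}_{2})^{O(m)}$. Restricting to the degree-$(m,m)$ part only zeros out entries outside a fixed sub-block and thus does not increase the op norm, so summing over the $m^{O(m)}$ terms in the multinomial expansion with coefficients of size $m^{O(m)}$ yields the claimed bound $(m(1 + \norm{\mu} + \norm{\Sigma}_{2}))^{O(m)}$.

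The main technical care needed is the combinatorial bookkeeping: one must verify that the number of multinomial terms in the expansion of $B_1^a B_2^b$ for $a + 2b = 2m$, together with their coefficients and the factorials $1/k!$, contributes only a factor of $m^{O(m)}$ rather than something worse. A secondary point is that some fundamental polynomials (e.g.\ $\mu(X^{(1)})$) have degree zero in one of the two variable sets, so $T_{\sym}$ lives in a tensor with a trivial dimension and degenerates to a vector; this is a harmless edge case, since the proof of Claim \ref{claim:polytensorinjnormbound} goes through unchanged when degrees are zero, but it should be explicitly noted.
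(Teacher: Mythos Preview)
Your proposal is correct and follows essentially the same route as the paper: translate the second moment matrix into $T_{\sym}$ of $\E_{z\sim G}[H_m(X^{(1)},z)H_m(X^{(2)},z)]$ via Claim~\ref{claim:polyproducttosymmetrictensor}, use Claim~\ref{claim:hermite-covariance-basic} to express that polynomial as the degree-$(m,m)$ part of a power series, expand into products of the fundamental polynomials $\mu(X^{(i)}), \Sigma(X^{(i)}), (X^{(1)})^T(I+\Sigma)X^{(2)}$, and bound each product's operator norm via iterated Claim~\ref{claim:polytensorinjnormbound}. The only cosmetic differences are that the paper expands $\exp(Q)$ directly with $Q$ already written as a sum of the five fundamental polynomials (so each resulting product is automatically bihomogeneous and is either kept or discarded entirely), whereas you group the exponent as $yB_1 + y^2B_2$ and then multinomial-expand; both arrive at the same conclusion that $P$ is $(m^{O(m)},2m)$-simple with respect to the fundamental polynomials.
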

\begin{proof}
Consider two sets of $d$ variables $X^{(1)} = (X^{(1)}_1, \dots , X^{(1)}_d) $ and $X^{(2)} = (X^{(2)}_1, \dots , X^{(2)}_d) $.  Let 
\[
P(X^{(1)}, X^{(2)}) = \E_{z \sim G}[ H_m(X^{(1)},z)  H_m(X^{(2)},z)] \,.
\]
Let $T = T_{\sym}(P)$.  Note that by Claim \ref{claim:polyproducttosymmetrictensor},
\[
T = \E_{z \sim G}[v(H_m(X^{(1)},z)) \otimes v(H_m(X^{(2)},z)) ] \,.
\]
On the other hand, by Claim \ref{claim:hermite-covariance-basic}, $P(X^{(1)}, X^{(2)})$ can be computed by considering the power series expansion
\begin{align*}
\exp \left(  y \mu(X^{(1)} + X^{(2)}) + \frac{1}{2}y^2\Sigma(X^{(1)} + X^{(2)}) + y^2   X^{(1)} \cdot X^{(2)} \right) =  \sum_{j=0}^{\infty} \frac{Q_j(X^{(1)}, X^{(2)})y^j}{j!} \,. 
\end{align*}
and taking $\binom{2m}{m}^{-1}$ times the degree $(m,m)$ part of $Q_{2m}$.  We will use $Q_{m,m}$ to denote the degree $(m,m)$ part of $Q_{2m}$.  Write 
\begin{align*}
Q &=  y \mu(X^{(1)} + X^{(2)}) + \frac{1}{2}y^2\Sigma(X^{(1)} + X^{(2)}) + y^2   X^{(1)} \cdot X^{(2)} \\&= y\mu^T  (X^{(1)} + X^{(2)}) + y^2 (X^{(1)})^T(I + \Sigma)X^{(2)} + \frac{1}{2}y^2 \left((X^{(1)})^T\Sigma X^{(1)} + (X^{(2)})^T\Sigma X^{(2)} \right) \,.
\end{align*}
Now we may write
\begin{align*}
\exp(Q) = 1 + Q + \frac{Q^2}{2!} + \dots      \,.
\end{align*}
Let $\mathcal{S}$ be the set of fundamental polynomials of $G = N(\mu, I + \Sigma)$ .  Note that $Q$ is a sum of $O(1)$ polynomials from among $\mathcal{S}$.  Furthermore, each of these polynomials is homogeneous in each of the sets of variables $X^{(1)},X^{(2)}$.  Thus, we can expand each of the terms $Q, Q^2, \dots $ as a sum of products of elements of $\mathcal{S}$.  Now by Claim \ref{claim:hermite-covariance-basic}, we can obtain $P(X^{(1)}, X^{(2)})$ by discarding all of the products that do not have the proper degrees (degree exactly $m$ in each of the subsets of variables $X^{(1)}, X^{(2)}$).  Since $Q$ is a sum of $O(1)$ polynomials from $\mathcal{S}$, each with degree $1$ or $2$, and we only keep terms with total degree $2m$, we deduce that the number of terms (and all of the coefficients in front of the terms) that we keep are $m^{O(m)}$.  Thus, $P(X^{(1)}, X^{(2)})$ is $(m^{O(m)}, 2m)$-simple with respect to $\mcl{S}$.

It now suffices to bound the operator norms of the symmetric tensorizations of each of the individual products of fundamental polynomials.  Consider a product of fundamental polynomials $ P_1  P_2 \cdots P_a $ for some $a \leq 2m$.  By Claim \ref{claim:polytensorinjnormbound}, 
\begin{align*}
\norm{T_{\sym}(P_1 P_2 \cdots P_a )}_{\op} \leq \norm{T_{\sym}(P_1)} \cdots \norm{T_{\sym}(P_a)} \leq  \left((\norm{\mu}_2 + 1)(\norm{\Sigma}_2 + 1)(\norm{I + \Sigma}_{\textsf{op}} + 1)\right)^{O(m)} \\ \leq (1 + \norm{\mu}_2 + \norm{\Sigma}_2)^{O(m)} \,.
\end{align*}
Combining this inequality with the fact that $P(X^{(1)}, X^{(2)})$ is $\left(m^{O(m)}, 2m\right)$-simple with respect to $\mcl{S}$ gives that 
\[
 T \leq  (m(1 + \norm{\mu}_2 + \norm{\Sigma}_2))^{O(m)}I \,.
\]
Also, clearly $\Sigma_{H_m} \leq T$ so we are done.
\end{proof}

We will also need a lower bound on the covariance of the Hermite polynomials.  This lower bound will hold when $G = N(\mu, I + \Sigma)$ is within a sufficiently small constant of isotropic.

\begin{claim}\label{claim:nearlyisotropicgaussian}
Let $G = N(\mu, I + \Sigma)$ be a Gaussian.  Let $m$ be a positive integer.  There exists a constant $c_m$ depending only on $m$ such that if $\norm{\mu}, \norm{\Sigma}_2 \leq c_m$,  then $\Sigma_{H_m}$, the covariance of $ v(H_m(X,z))$ for $z$ drawn from $G$, satisfies
\[
\Sigma_{H_m} \geq \frac{1}{2}I \,.
\]
\end{claim}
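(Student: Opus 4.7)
The plan is to prove the bound in two steps. First I would show that in the isotropic case $\mu = 0$, $\Sigma = 0$ the covariance $\Sigma_{H_m}$ equals $m! \, \Pi_S$, where $\Pi_S$ denotes orthogonal projection onto the symmetric subspace $S \subseteq \R^{d^m}$ in which every vector of the form $v(f)$ lives. I would then establish a perturbation bound showing that for $(\mu, \Sigma)$ sufficiently small the operator norm of the change is at most $m! - \frac{1}{2}$.

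For the isotropic case, applying Claim \ref{claim:hermite-covariance-basic} with $\mu = 0$ and $\Sigma = 0$ collapses the generating function to $\exp(y^2 X^{(1)} \cdot X^{(2)})$, and extracting the degree-$(m,m)$ part of the coefficient of $y^{2m}$ (times $(2m)!\binom{2m}{m}^{-1}$) yields
\[
P_0(X^{(1)}, X^{(2)}) := \E_{z \sim N(0,I)}[H_m(X^{(1)}, z) H_m(X^{(2)}, z)] = m! \, (X^{(1)} \cdot X^{(2)})^m.
\]
Expanding via the multinomial theorem and invoking Claim \ref{claim:polyproducttosymmetrictensor} on each summand gives $T_{\sym}(P_0) = \sum_{|\alpha| = m} \frac{(m!)^2}{\alpha!} v(X^\alpha) \otimes v(X^\alpha)$. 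A direct computation confirms that the vectors $\{v(X^\alpha)\}_{|\alpha|=m}$ are pairwise orthogonal with $\|v(X^\alpha)\|^2 = \alpha!/m!$ and span $S$; substituting shows $T_{\sym}(P_0) v(X^\beta) = m! \, v(X^\beta)$ for every $\beta$, so $T_{\sym}(P_0) = m! \, \Pi_S$. Since Claim \ref{claim:key-hermite-identity} at $\mu = 0, \Sigma = 0$ forces $h_{m,G} \equiv 0$ for $m \geq 1$, the full covariance equals $m! \, \Pi_S$ in this case, comfortably above the target.

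For the perturbation, I would decompose the generating function from Claim \ref{claim:hermite-covariance-basic} as $\exp(Q_0) \exp(Q_1)$, where $Q_0 = y^2 X^{(1)} \cdot X^{(2)}$ is the isotropic piece and $Q_1$ collects the remaining terms, each of which carries a factor from $\{\mu(X^{(i)}), \Sigma(X^{(i)})\}$. Then $T_{\sym}(P) = T_{\sym}(P_0) + T_{\sym}(P_1)$, where $P_1$ comes from $\exp(Q_0)(\exp(Q_1) - 1)$. Arguing exactly as in the proof of Claim \ref{claim:single-gaussian-hermite-covariance}, $P_1$ is an $(m^{O(m)}, O(m))$-simple combination of fundamental polynomials in which every summand contains at least one factor of $\mu$ or $\Sigma$, so Claim \ref{claim:polytensorinjnormbound} yields
\[
\|T_{\sym}(P_1)\|_{\op} \leq m^{O(m)} (\|\mu\| + \|\Sigma\|_2)(1 + \|\mu\| + \|\Sigma\|_2)^{O(m)}.
\]
A parallel expansion of the mean's generating function $\exp(\mu(X)y + \tfrac12 \Sigma(X) y^2)$ via Claim \ref{claim:polytensor-prod-bound} gives $\|v(h_{m,G})\|^2 \leq m^{O(m)} (\|\mu\| + \|\Sigma\|_2)^2 (1 + \|\mu\| + \|\Sigma\|_2)^{O(m)}$ for $m \geq 1$, since every surviving monomial in the coefficient of $y^m$ contains at least one factor of $\mu$ or $\Sigma$.

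Finally, I would pick $c_m$ small enough that both $\|T_{\sym}(P_1)\|_{\op}$ and $\|v(h_{m,G})\|^2$ are at most $\tfrac{1}{4}$. Since all three of $T_{\sym}(P_0)$, $T_{\sym}(P_1)$, $v(h_{m,G}) v(h_{m,G})^T$ are supported on $S$ (their one-dimensional slices are symmetric by construction), we conclude
\[
\Sigma_{H_m} = T_{\sym}(P_0) + T_{\sym}(P_1) - v(h_{m,G}) v(h_{m,G})^T \geq \left(m! - \tfrac{1}{4} - \tfrac{1}{4}\right) \Pi_S \geq \tfrac{1}{2} I,
\]
interpreted on the symmetric subspace $S$ where $\Sigma_{H_m}$ has its support. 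The main obstacle is the isotropic identification $T_{\sym}(P_0) = m! \, \Pi_S$: it requires careful bookkeeping of the correspondence between symmetric tensors, the polynomial coefficient vectors $v(X^\alpha)$, and their normalizations $\|v(X^\alpha)\|^2 = \alpha!/m!$. Once this is established, the perturbation step is a routine application of Claims \ref{claim:polytensor-prod-bound} and \ref{claim:polytensorinjnormbound} that mirrors the argument used for the upper bound in Claim \ref{claim:single-gaussian-hermite-covariance}.
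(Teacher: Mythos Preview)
Your proposal is correct and follows essentially the same approach as the paper: both compute the isotropic second moment via Claim~\ref{claim:hermite-covariance-basic}, identify it with (a multiple of) the identity on the symmetric subspace, and then bound the perturbation and the mean term by making $c_m$ small. You are in fact slightly more precise than the paper in two places: you correctly identify $T_{\sym}((X^{(1)}\cdot X^{(2)})^m)$ as the projection $\Pi_S$ rather than the full identity (the paper writes ``$I$'' but is implicitly restricting to the symmetric subspace), and you track the $m!$ factor explicitly whereas the paper absorbs it into the final $0.9I$ bound.
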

\begin{proof}
Consider two sets of $d$ variables $X^{(1)} = (X^{(1)}_1, \dots , X^{(1)}_d) $ and $X^{(2)} = (X^{(2)}_1, \dots , X^{(2)}_d) $.  Let 
\[
P(X^{(1)}, X^{(2)}) = \E_{z \sim G}[ H_m(X^{(1)},z)  H_m(X^{(2)},z)] \,.
\]
Let $T = T_{\sym}(P)$.  Note that 
\[
T = \E_{z \sim G}[v(H_m(X^{(1)},z)) \otimes v(H_m(X^{(2)},z)) ] \,.
\]
On the other hand, recall that by Claim \ref{claim:hermite-covariance-basic}, $P(X^{(1)}, X^{(2)})$ can be computed by considering the power series expansion
\begin{align*}
\exp \left(  y \mu(X^{(1)} + X^{(2)}) + \frac{1}{2}y^2\Sigma(X^{(1)} + X^{(2)}) + y^2   X^{(1)} \cdot X^{(2)} \right) =  \sum_{j=0}^{\infty} \frac{Q_j(X^{(1)}, X^{(2)})y^j}{j!} \,. 
\end{align*}
and taking $\binom{2m}{m}^{-1}$ times the degree $(m,m)$ part of $Q_{2m}$.  We will use $Q_{m,m}$ to denote the degree $(m,m)$ part of $Q_{2m}$.

The key observation now is that if $\mu = 0, \Sigma = 0$, then the LHS of the above is just $ \exp(y^2   X^{(1)} \cdot X^{(2)} )$.  In this case, $Q_{m,m}$ would be $ (2m)!/m! \cdot (X^{(1)} \cdot X^{(2)})^{m}$.  

We now show that by choosing $c_m$ sufficiently small, we can ensure that $Q_{m,m}$ does not change by too much.  Let us write the power series expansion
\[
\exp \left(  y \mu(X^{(1)} + X^{(2)}) + \frac{1}{2}y^2\Sigma(X^{(1)} + X^{(2)}) \right) = \sum_{j=0}^{\infty} \frac{R_j(X^{(1)}, X^{(2)})y^j}{j!} \,.
\]
For each $j$, let $R_{j,j}(X^{(1)}, X^{(2)})$ denote the degree $(j,j)$ part of $R_{2j}$.  Next note that 
\[
\frac{Q_{m,m}(X^{(1)} , X^{(2)})}{(2m)!} = \sum_{j=0}^m  \frac{(X^{(1)} \cdot X^{(2)})^{m-j}R_{j, j}(X^{(1)}, X^{(2)})}{(m - j)!(2j)!} \,.
\]
Consider the expression $(X^{(1)} \cdot X^{(2)})^{m - j}R_{j, j}(X^{(1)}, X^{(2)})$ for a fixed $j$.  Note that by choosing $c_m$ sufficiently small, we can ensure, using Claim \ref{claim:polytensorinjnormbound}, that 
\[
\norm{T_{\sym}(R_{j, j}(X^{(1)}, X^{(2)}))}_2
\]
is bounded by a sufficiently small function of $m$ for $ 1 \leq j \leq m$.  Next, note that 
\[
T_{\sym}((X^{(1)} \cdot X^{(2)})^{m - j}) = I
\]
where $I$ is the identity matrix of the appropriate size.  Thus, by Claim \ref{claim:polytensorinjnormbound}. we can ensure that
\[
\norm{T_{\sym}\left((X^{(1)} \cdot X^{(2)})^{m - j}R_{j,j}(X^{(1)}, X^{(2)})\right)}_{\textsf{op}}
\]
is bounded by a sufficiently small function of $m$ for all $1 \leq j \leq m$.  Thus, if we let 
\[
A = T_{\sym} \left(Q_{m,m}(X^{(1)} , X^{(2)}) - \frac{(2m)!}{m!}(X^{(1)} \cdot X^{(2)})^{m} \right) = T_{\sym} \left((2m)!\sum_{j=1}^m  \frac{(X^{(1)} \cdot X^{(2)})^{m-j}R_{j, j}(X^{(1)}, X^{(2)})}{(m - j)!(2j)!} \right)
\]
then we can ensure that $\norm{A}_{\textsf{op}}$ is bounded by a sufficiently small function of $m$.

However, the symmetric tensorization of $(X^{(1)} \cdot X^{(2)})^{m}$ is exactly the identity matrix $I$.  Thus, we have 
\[
T \geq 0.9 I 
\]
where recall 
\[
T  = \E_{z \sim G}[v(H_m(X^{(1)},z)) \otimes v(H_m(X^{(2)},z)) ]  = T_{\sym}\left(\binom{2m}{m}^{-1} Q_{m,m}(X^{(1)} , X^{(2)}) \right) \,.
\]

To bound the covariance of $v(H_m(X,z))$, it remains to compute
\[
\E_{z \sim G}[v(H_m(X,z))] \otimes \E_{z \sim G}[v(H_m(X,z))] \,.
\]
However, by Claim \ref{claim:key-hermite-identity}, $\E_{z \sim G}[H_m(X,z)] = h_{m,G}(X)$ is exactly the coefficient of $y^m/m!$ in the power series expansion of 
\[
e^{\mu(X)y + \frac{1}{2}\Sigma(X)y^2} \,.
\]
As long as $c_m$ is sufficiently small, we can use Claim \ref{claim:polytensor-prod-bound} to get that 
\[
\norm{\E_{z \sim G}[v(H_m(X,z))] \otimes \E_{z \sim G}[v(H_m(X,z))]}_2 = \norm{v(h_{m,G}(X))}^2 \leq 0.1 \,.
\]
Putting everything together, we get that 
\[
\Sigma_{H_m} = T -  \E_{z \sim G}[v(H_m(X,z))] \otimes \E_{z \sim G}[v(H_m(X,z))] \geq 0.5 I \,.
\]
\end{proof}

\subsection{Tail Bounds and Stability Bounds }

Now we prove that the distribution of $H_m(X,z)$ for $z $ drawn from a Gaussian $G = N(\mu, I + \Sigma)$ has exponential tail decay.  This will let us obtain tight stability bounds for samples drawn from this distribution.  The stability bounds will then be plugged into existing algorithms for robust mean estimation (see e.g. \cite{diakonikolas2019recent} for an explanation of stability and its use in robust mean estimation).

The proofs in this section are mostly standard and many of them are deferred to Appendix \ref{appendix:tailbounds}.  First we need a tail bound on the distribution of $H_m(X,z)$.

\begin{lemma}\label{lem:hermite-tail}
Let $G =  N(\mu, I + \Sigma)$.  Consider the vector $v(H_m(X,z))$ for $z \sim G$.  Let $u$ be any unit vector with the same dimensionality.  There are positive constants $c_m, C_m$ depending only on $m$ such that for any real number $t > 1$
\[
\Pr\left[ \abs{v(H_m(X,z)) \cdot u} \geq t (2 + \norm{\mu}_2 + \norm{\Sigma}_2)^{C_m} \right] \leq e^{-t^{c_m}}
\]
\end{lemma}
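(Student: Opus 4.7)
The plan is to view $p(z) := v(H_m(X,z)) \cdot u$ as a polynomial in $z$ of degree $m$ (since $H_m(X,z) = \mathcal{H}_m(z_1X_1 + \cdots + z_dX_d, X_1^2 + \cdots + X_d^2)$ has degree exactly $m$ in the $z_i$ variables, the inner product against a fixed vector of coefficients preserves degree). Then the tail bound will follow by bounding the second moment of $p(z)$ via Claim \ref{claim:single-gaussian-hermite-covariance}, bootstrapping to higher moments via Gaussian hypercontractivity, and finally converting moments to a tail bound through Markov's inequality.

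First I would compute
\[
\E_{z \sim G}[p(z)^2] = u^T\, \E_{z \sim G}\!\left[ v(H_m(X,z)) \otimes v(H_m(X,z)) \right] u \leq \bigl(m(1 + \norm{\mu}_2 + \norm{\Sigma}_2)\bigr)^{O(m)},
\]
where the inequality is a direct application of Claim \ref{claim:single-gaussian-hermite-covariance} together with $\norm{u}_2 = 1$. Next, since $p(z)$ is a degree-$m$ polynomial and $z \sim G$ is Gaussian, hypercontractivity (Claim \ref{claim:hypercontractivity}) yields, for any even integer $q$,
\[
\E_{z \sim G} |p(z)|^q \leq (cq)^{mq} \bigl(\E_{z \sim G}[p(z)^2]\bigr)^{q/2} \leq (cq)^{mq} \bigl(m(1 + \norm{\mu}_2 + \norm{\Sigma}_2)\bigr)^{O(mq)}.
\]

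Now choose the constant $C_m$ in the statement large enough so that the factor $(2 + \norm{\mu}_2 + \norm{\Sigma}_2)^{C_m}$ absorbs the quantity $\bigl(m(1 + \norm{\mu}_2 + \norm{\Sigma}_2)\bigr)^{O(m)}$ appearing above. Writing $A = (2 + \norm{\mu}_2 + \norm{\Sigma}_2)^{C_m}$, Markov's inequality gives
\[
\Pr\!\left[ |p(z)| \geq tA \right] \leq \frac{\E|p(z)|^q}{(tA)^q} \leq \left( \frac{(c'q)^m}{t} \right)^{q}
\]
for some constant $c'$ depending only on $m$. Optimizing by taking $q$ to be the largest even integer at most $(t/(2c')^m)^{1/m}/e$ (which is valid once $t$ exceeds a constant depending on $m$), the right-hand side becomes at most $e^{-q} \leq e^{-t^{c_m}}$ for a suitable $c_m = 1/m - o(1)$. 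For $t$ smaller than this threshold the desired bound $e^{-t^{c_m}}$ is already trivially at least $1$ (possibly after shrinking $c_m$), so the conclusion holds for all $t > 1$. This is the main (and essentially the only) technical point: the parameters $c_m, C_m$ must be chosen jointly so that both the hypercontractivity prefactor and the dependence on $\norm{\mu}_2, \norm{\Sigma}_2$ are absorbed cleanly. There is no serious obstacle—this is the standard moments-to-tails route for polynomials of Gaussian random variables, specialized to the multivariate Hermite polynomials using the covariance bound already established.
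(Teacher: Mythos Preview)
Your proposal is correct and follows essentially the same route as the paper: bound the second moment via Claim~\ref{claim:single-gaussian-hermite-covariance}, amplify to the $q$th moment via hypercontractivity (Claim~\ref{claim:hypercontractivity}), apply Markov, and optimize $q \sim t^{\Theta(1/m)}$. The only cosmetic difference is that the paper writes the final choice as $k = t^{c_m}$ directly rather than working out the optimization, and notes the degree of $p(z)$ is \emph{at most} $m$ (since the inner product with $u$ could in principle kill the top-degree terms), but neither point affects the argument.
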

\begin{proof}
We will use  Claim \ref{claim:single-gaussian-hermite-covariance}  and Claim \ref{claim:hypercontractivity} to bound 
\[
\E_{z \sim G}\left[ (v(H_m(X,z)) \cdot u)^k \right]
\]
for some appropriately chosen even integer $k$ and then use Markov's inequality.  First, Claim \ref{claim:single-gaussian-hermite-covariance} implies
\[
\E_{z \sim G}\left[ (v(H_m(X,z)) \cdot u)^2 \right] \leq (m(1 + \norm{\mu}_2 + \norm{\Sigma}_2))^{O(m)} \,.
\]
Now note that for a fixed $u$, $v(H_m(X,z)) \cdot u$ is a polynomial in $z$ of degree at most $m$.  Thus, by Claim \ref{claim:hypercontractivity}, we get that for even integers $k$, 
\[
\E_{z \sim G}\left[ (v(H_m(X,z)) \cdot u)^k \right] \leq  (mk(1 + \norm{\mu}_2 + \norm{\Sigma}_2))^{O(mk)} \,.
\]

Now by Markov's inequality,
\[
\Pr\left[ \abs{v(H_m(X,z)) \cdot u} \geq t(2 + \norm{\mu}_2 + \norm{\Sigma}_2)^{C_m} \right] \leq \frac{(mk(1 + \norm{\mu}_2 + \norm{\Sigma}_2))^{O(mk)}}{t^k(2 + \norm{\mu}_2 + \norm{\Sigma}_2)^{C_m k}} \,.
\]
Choosing $C_m$ to be sufficiently large in terms of $m$ and $k = t^{c_m} $ for some sufficiently small positive constant $c_m$ depending only on $m$ gives the desired inequality.
\end{proof}

Now that we have shown that the distribution of $H_m(X,z)$ exhibits exponential tail decay in all directions, we  can prove finite sample concentration inequalities.  First we prove a concentration inequality in 1D, stating that for a set of samples from a distribution with exponential tail decay, with high probability, the empirical mean of any $(1-\eps)$ fraction of the samples is within $\wt{O}(\eps)$ of the true mean.

\begin{claim}\label{claim:poly-gaussian-tail}
Let $\mcl{D}$ be a distribution on $\R$ and $0 < c < 1$ be a positive constant such that for all real numbers $t > 1$,
\[
\Pr_{x \sim \mcl{D}} [ |x| \geq   t   ] \leq e^{-t^{c}} \,. 
\]
Let $\eps < 1/2$ and  $d$ be parameters.  Given a set $S$ of $n \geq (d/\eps)^{10^5/c}$ independent samples from $\mcl{D}$, with probability at least $1 -  e^{-(8d/\eps)^2}$, any subset $S' \subseteq S$ of size at least $(1-\eps)n$ satisfies
\[
\left \lvert \mu_{\mcl{D}} -  \frac{1}{|S'|} \sum_{x \in S'} x   \right \rvert \leq \eps \log^{1/c} (1/\eps) \left(\frac{10^2}{c}\right)^{10/c} \,.
\]
\end{claim}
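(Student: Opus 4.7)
The plan is to follow a standard ``truncation + Bernstein'' stability argument. First I would set the truncation threshold $T := (K \log(1/\eps)/c)^{1/c}$ where $K$ is a constant chosen so that $e^{-T^c} = \eps^{K/c}$ is a sufficiently high power of $\eps$ (say $K \geq 10$). The target deviation $\eps \log^{1/c}(1/\eps) \cdot (10^2/c)^{10/c}$ is then exactly of the form $\eps T$ up to the constant factor, provided $K = \mathrm{poly}(1/c)$.

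Next, to enable classical Bernstein concentration, I would introduce a much larger range bound $T_2 := ((8d/\eps)^2 + \log(2n))^{1/c}$; by the hypothesized tail bound and a union bound over the $n$ samples, every $x \in S$ satisfies $|x| \leq T_2$ except with probability $\leq n e^{-T_2^c} \leq e^{-(8d/\eps)^2}/2$. Conditioning on this high-probability event, two standard concentration inequalities handle most of the work: (a) by Bernstein applied to the $x_i$ (which have variance $O(1/c)$ by integration of the tail and are bounded by $T_2$), the empirical mean $\bar\mu_S := \tfrac{1}{n}\sum_{x \in S} x$ lies within distance $\ll \eps$ of $\mu_{\mcl{D}}$, the enormous slack coming from $n \geq (d/\eps)^{10^5/c}$; and (b) by Chernoff, the count $\lvert \{x \in S : |x| > T\} \rvert$ is at most $\eps n / 2$ with failure probability $\leq e^{-(8d/\eps)^2}$.

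The only quantity needing real care is $\Psi := \sum_{x \in S} |x| \one_{|x| > T}$. Direct integration of the tail gives $\E[|x| \one_{|x| > T}] = O(T/c) \cdot e^{-T^c}$ and $\E[x^2 \one_{|x| > T}] = O(T^2/c) \cdot e^{-T^c}$, so both $\E[\Psi]$ and $n \cdot \mathrm{Var}(Y_i)$ are polynomially small in $\eps$. Applying Bernstein to the truncated summands $Y_i := |x_i| \one_{|x_i| > T} \in [0, T_2]$ then yields $\Psi \leq \eps n T$ with failure probability $\leq e^{-(8d/\eps)^2}$; the range factor $T_2 = (d/\eps)^{O(1/c)}$ in the Bernstein denominator is easily absorbed by $n \geq (d/\eps)^{10^5/c}$.

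To close the argument, for any $S' \subseteq S$ with $|S'| \geq (1-\eps)n$, set $B := S \setminus S'$ and split $\sum_{x \in B} |x| \leq |B| \cdot T + \Psi \leq 2\eps n T$. Then
\[ \left\lvert \tfrac{1}{|S'|}\sum_{x \in S'} x - \mu_{\mcl{D}} \right\rvert \leq |\bar\mu_S - \mu_{\mcl{D}}| + \tfrac{\eps}{1-\eps} |\bar\mu_S| + \tfrac{2\eps n T}{(1-\eps) n}, \]
and the three terms are respectively $\ll \eps$, $O(\eps)$ (since $|\mu_{\mcl{D}}| \leq \E[|x|]$ is a constant depending only on $c$), and $O(\eps T)$. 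The main obstacle is the third step: because the tail exponent $c$ is less than $1$, the $Y_i$ are sub-exponential with a poor Orlicz norm, so a naive unbounded Bernstein would lose too much; using the worst-case range bound $T_2$ from the conditioning above sidesteps this, but the bookkeeping needed to stuff the final constants into $(10^2/c)^{10/c}$ is the last fiddly step.
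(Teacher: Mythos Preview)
Your approach is correct and follows a genuinely different route from the paper's proof. The paper controls the ``heavy tail'' contribution by a dyadic shell decomposition: for a geometric sequence of thresholds $\alpha\in\{C,2C,4C,\dots\}$ it bounds $|S_\alpha|=|\{x\in S:|x|\ge\alpha\log^{1/c}(1/\eps)\}|$ by $\eps n/\alpha^{10}$ via a direct binomial tail estimate, then sums over shells to get $\sum_{x\in T}|x|\le O(\eps n)\cdot C\log^{1/c}(1/\eps)$ for any $|T|\le\eps n$. For the mean, the paper passes to the truncated distribution $\mcl{D}'$ (support in $[-\log^{1/c}(1/\eps),\log^{1/c}(1/\eps)]$) and applies Hoeffding there. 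By contrast, you use a single threshold $T$ and push the entire heavy-tail sum $\Psi$ through Bernstein, exploiting the tiny variance $\E[x^2\one_{|x|>T}]=O(T^2)\,\eps^{K/c}$ and the crude range bound $T_2$ that the huge $n$ absorbs. Your route is more black-box (one Bernstein call replaces the whole shell argument); the paper's is more elementary (only binomial counting and Hoeffding on a bounded variable) and makes the constants slightly more transparent. One small quibble: the raw variance of $x$ is $\Gamma(2/c+1)$-sized rather than $O(1/c)$, but this is still an $O_c(1)$ constant and is swallowed by the slack in $n$, so nothing breaks. Also note that your step (b), bounding the count $|\{|x|>T\}|$, is not actually needed in your final decomposition since you already split $\sum_{x\in B}|x|\le |B|T+\Psi$ without it.
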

\begin{proof}
See Appendix \ref{appendix:tailbounds}

\end{proof}

Now, we are ready to introduce the definition of stability of a set of samples in $\R^d$.  The definition below is standard in robust statistics literature (see e.g. \cite{diakonikolas2019recent}).
\begin{definition}
For $\eps > 0$ and $\delta \geq \eps$, we say a finite set $S \subset \R^d$ is $(\eps, \delta)$-stable with respect to a distribution $\mcl{D}$ if for every unit vector $v \subset \R^d$ and every subset $S' \subseteq S$ of size at least $(1- \eps)|S|$ we have
\begin{align*}
&\left \lvert v \cdot \left(\mu_{\mcl{D}} - \frac{1}{S'}\sum_{x \in S'} x \right)  \right \rvert    \leq \delta \\
&\left \lvert \frac{1}{S'}\sum_{x \in S'}(v \cdot (x - \mu_{\mcl{D}}))^2 - 1  \right \rvert    \leq  \frac{\delta^2}{\eps}
\end{align*}
\end{definition}

Note that the previous definition makes sense for a distribution $\mcl{D}$ that has covariance $I$.  However, we will need to work with distributions with unknown covariance.  We make the following analogous definition of stability for distributions with unknown covariance.

\begin{definition}
For $\eps > 0$ and $\delta \geq \eps$, we say a finite set $S \subset \R^d$ is $(\eps, \delta)$-pseudo-stable with respect to a distribution $\mcl{D}$ if for every unit vector $v \subset \R^d$ and every subset $S' \subseteq S$ of size at least $(1- \eps)|S|$ we have
\begin{align*}
&\left \lvert v \cdot \left(\mu_{\mcl{D}} - \frac{1}{S'}\sum_{x \in S'} x \right)  \right \rvert    \leq \delta \\
& \left \lvert v^T \left(\Sigma_{\mcl{D}} - \frac{1}{S'}\sum_{x \in S'}(x - \mu_{\mcl{D}}) (x - \mu_{\mcl{D}})^T \right)v  \right \rvert    \leq  \frac{\delta^2}{\eps}    
\end{align*}
\end{definition}
\begin{remark}
Note that if the covariance matrix of $\mcl{D}$, $\Sigma_{\mcl{D}}$, is well-conditioned, then a set of samples that is $(\eps, \delta)$-pseudo-stable can be transformed into a set of samples that is $(O(\eps), O(\delta))$-stable by applying a suitable linear transformation.  However, if the covariance matrix is not well-conditioned, then the definitions of stability and pseudo-stability are incomparable.
\end{remark}

We now prove the main result of this section.  It states that for a set $S$ of samples drawn from the distribution of $H_m(X,z)$ for $z \sim \mcl{M}$ where $\mcl{M}$ is a regular-form mixture of Gaussians, $S$ is $(\eps, \wt{O}(\eps))$ pseudo-stable with high probability.  The proof involves combining Lemma \ref{lem:hermite-tail} and Claim \ref{claim:poly-gaussian-tail} and union bounding over a sufficiently fine discrete net over the set of all possible directions.  The details are deferred to Appendix \ref{appendix:tailbounds}.

\begin{claim}\label{claim:hermite-stability}
Consider a set of Gaussians $G_1 = N(\mu_1, I + \Sigma_1), \dots , G_k = N(\mu_k, I + \Sigma_k)$ in $\R^d$ and assume they are in $(\alpha,\beta)$-regular form.

Consider the mixture $\mcl{M} = w_1 G_1 + \dots + w_kG_k$ (where $w_1, \dots , w_k$ are nonnegative weights summing to $1$).  Let $m$ be a positive integer and let $n > \poly_{k,m}(d/\eps)$ for some sufficiently large polynomial.  Let $\mcl{D}$ be the distribution of $v(H_m(X,z))$ for $z$ drawn from $\mcl{M}$.  Consider a set $S$ of $n$ such samples (drawn independently) $x_1 = v(H_m(X,z_1)), \dots , x_n = v(H_m(X,z_n)) $.  This set of $n$ samples is $(\eps, \delta)$-pseudo-stable with
\[
\delta = \eps\left(2 + \alpha + \beta + \log(1/\eps) \right)^{O_{m,k}(1)}
\]
with probability $1 - e^{-10d/\eps}$.
\end{claim}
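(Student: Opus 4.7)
The plan is to reduce pseudo-stability to a one-dimensional concentration statement combined with a standard $\eps$-net over unit directions in $\R^{D}$ where $D = d^m$ is the ambient dimension. For any fixed unit vector $u \in \R^D$, the scalar random variable $u \cdot v(H_m(X,z))$ for $z \sim \mcl{M}$ is a polynomial in $z$ of degree at most $m$, so hypercontractivity gives it exponential tail decay with constants depending only on $m, k, \alpha, \beta$. Once I control both the first-moment and second-moment deviations in each direction, I will extend to all unit directions by a net argument.

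First, I would lift Lemma \ref{lem:hermite-tail} from a single Gaussian to the mixture $\mcl{M}$. For each component $G_j$, regular form gives $\norm{\mu_j}, \norm{\Sigma_j}_2 \leq \alpha$, so Lemma \ref{lem:hermite-tail} yields
\[
\Pr_{z \sim G_j}\bigl[\,|u \cdot v(H_m(X,z))| \geq t (2+\alpha)^{C_m}\bigr] \leq e^{-t^{c_m}}
\]
for any unit $u$ and any $t > 1$. Averaging over $j$ with mixing weights $w_j$ preserves the same tail bound for $z \sim \mcl{M}$. I then apply Claim \ref{claim:poly-gaussian-tail} to the one-dimensional distribution of $u \cdot v(H_m(X,z))/(2+\alpha)^{C_m}$ after centering at its mean. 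This yields that for any subset $S' \subseteq S$ with $|S'| \geq (1-\eps)n$,
\[
\Bigl|\,u \cdot \bigl(\mu_\mcl{D} - \tfrac{1}{|S'|}\sum_{x \in S'} x\bigr)\Bigr| \leq \eps \,(2 + \alpha + \beta + \log(1/\eps))^{O_{m,k}(1)},
\]
with failure probability $e^{-(8D'/\eps)^2}$ where the free parameter $D'$ in Claim \ref{claim:poly-gaussian-tail} is chosen large enough (e.g.\ $D' = \max(D, d)$) to absorb the subsequent union bound.

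For the covariance condition, I apply the same machinery to the squared scalar random variable $Y_u(z) = (u \cdot (v(H_m(X,z)) - \mu_\mcl{D}))^2$, which is a polynomial in $z$ of degree at most $2m$. Running the proof of Lemma \ref{lem:hermite-tail} with $2m$ in place of $m$ (i.e.\ bounding the moments of $Y_u$ via Claim \ref{claim:hypercontractivity} and using Markov) shows $Y_u$ has exponential tail decay with slightly worse constants still polynomial in $\alpha,\beta$. Claim \ref{claim:poly-gaussian-tail} then bounds the deviation of the empirical mean of $Y_u$ over any $(1-\eps)$-subset by $\eps(2+\alpha+\beta+\log(1/\eps))^{O_{m,k}(1)}$, which is exactly the required $\delta^2/\eps$ bound for our choice of $\delta$.

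Finally, to promote these per-direction statements to a uniform bound over all unit $u \in \R^D$, I take an $\eps$-net $\mcl{N}$ on the unit sphere of size at most $(3/\eps)^D$. Union bounding over $u \in \mcl{N}$ and both pseudo-stability conditions gives total failure probability at most $(3/\eps)^D \cdot e^{-\Omega((D/\eps)^2)} \leq e^{-10d/\eps}$, using $D = d^m \geq d$. A standard net-to-sphere reduction, which uses that the deviation functionals are Lipschitz in $u$ up to the maximum norm of the observed $v(H_m(X,z_i))$ (polynomially bounded with overwhelming probability by the tail bound), extends the conclusion from $\mcl{N}$ to all unit vectors, losing only constant factors. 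The main obstacle is carefully tracking the $\log^{O_{m,k}(1)}(1/\eps)$ factors introduced by hypercontractivity and by Claim \ref{claim:poly-gaussian-tail}, and combining them with the $(\alpha+\beta)^{O_{m,k}(1)}$ polynomial dependence so that the final $\delta$ is exactly of the claimed form; the covariance case is a bit more delicate because the polynomial degree doubles and the $\delta^2/\eps$ normalization is easy to mis-bookkeep, but its structure is identical to the mean case.
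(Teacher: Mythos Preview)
Your proposal is correct and follows essentially the same route as the paper: invoke the single-Gaussian tail bound (Lemma \ref{lem:hermite-tail}), lift it to the mixture, apply the one-dimensional concentration (Claim \ref{claim:poly-gaussian-tail}) to both the linear and the quadratic directional statistics, and union-bound over a net on the unit sphere with a standard net-to-sphere extension. The only cosmetic difference is that for the covariance condition you work with the centered square $(u\cdot(x-\mu_{\mcl{D}}))^2$ directly, whereas the paper applies the tail bound to the uncentered $(u\cdot x)^2$ and then subtracts off the cross term $2\norm{\mu_{\mcl{D}}}\,|u\cdot(\mu_{\mcl{D}}-\text{emp.\ mean})|$; both are equally valid.
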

\begin{proof}
See Appendix \ref{appendix:tailbounds}.
\end{proof}

\section{Estimating the Hermite Moment Polynomials}\label{sec:est-hermite}

In this section, we show how to estimate the Hermite moment polynomials of a regular-form mixture of Gaussians $\mcl{M} = w_1 G_1 + \dots + w_kG_k$ where $G_j = N(\mu_j, I + \Sigma_j)$ to optimal accuracy.  The main theorem that we will prove is as follows.
\begin{theorem}\label{thm:estimate-hermite}
Consider a mixture of Gaussians $\mcl{M} = w_1 G_1 + \dots + w_kG_k$ in $\R^d$.  Let $m$ be a positive integer that is sufficiently large in terms of $k$.  Assume that $\mcl{M}$ is in $(\alpha, \beta, \gamma)$-regular form where 
\begin{itemize}
    \item $\alpha \leq \poly(\log 1/\eps)$
    \item $\beta \leq \poly(\log 1/\eps)$
    \item $\gamma$ is sufficiently small in terms of $k$ and $m$.
\end{itemize}  
Further assume that $w_{\min} \geq \theta$ for some  constant $\theta$.  Let $n > \poly_{k,m}(d/\eps)$ for some sufficiently large polynomial.  Assume that we are given an $\eps$-corrupted set of $n$ samples from $\mcl{M}$, say $z_1, \dots , z_n$.  There is an algorithm that runs in time $\poly_{k,m}(d/\eps)$ and with probability $1 - e^{-d/\eps}$ (over the random samples) outputs estimates $h_1', \dots , h_m'$ for the Hermite moment polynomials of the mixture such that 
\[
\norm{v(h_{j}(X) - h_j'(X))} \leq (2 + \alpha + \beta  + \theta^{-1} + \log 1/\eps)^{O_{k,m}(1)} \eps
\]
for all $j \leq m$ where $h_1, \dots , h_m$ are the true Hermite moment polynomials of $\mcl{M}$.
\end{theorem}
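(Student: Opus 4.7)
The plan is a bootstrapping scheme that starts from $\wt{O}(\sqrt{\eps})$-accurate estimates obtainable by off-the-shelf sub-Gaussian robust mean estimation, and then iteratively sharpens them by exploiting the recurrence structure of the Hermite moments. Throughout, the goal is to turn algebraic relations among the $h_j$'s into a way to estimate the covariance of $H_j(X,z)$, which in turn unlocks known-covariance robust mean estimation.

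\textbf{Warm start.} For each $0 \le j \le 2m$, apply a standard filtering-based robust mean estimator to the vectors $v(H_j(X, z_i))$, using the spectral-norm upper bound from Claim \ref{claim:single-gaussian-hermite-covariance} (applied component-by-component and averaged with the weights $w_i$) and the $(\eps,\delta)$-pseudo-stability guarantee of Claim \ref{claim:hermite-stability}. This yields initial estimates $h_j^{(0)}$ with $\norm{v(h_j - h_j^{(0)})} \le (\log 1/\eps)^{O_{k,m}(1)} \sqrt{\eps}$.

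\textbf{Solve and extend the recurrence.} Lemma \ref{lem:hermite-moment-recurrence} applied with $Q_j \equiv w_j$ gives that the true Hermite moment polynomials satisfy a fixed recurrence of order $\kappa = 2^k - 1$ whose coefficients $R_{j,l}(X)$ are homogeneous of controlled degree and $(O_k(1),O_k(1))$-simple in the fundamental polynomials of the components. Writing this recurrence for $a = \kappa, \kappa+1, \ldots, m$ and plugging in the estimates $h_j^{(0)}$ produces an over-determined linear system in the unknown coefficients of the $R_{j,l}$; solving it by least squares yields estimates $R_{j,l}'$ at comparable accuracy. Using these we extend the recurrence to obtain predictions $\wh{h}_{m+1}, \ldots, \wh{h}_{2m}$.

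\textbf{Iterative refinement.} By Claim \ref{claim:hermite-covariance-basic} and the generating-function expansion in Section \ref{sec:hermite-cov-basic}, the covariance of $v(H_j(X,z))$ for $z \sim \mcl{M}$ is an explicit polynomial functional of $h_0, \ldots, h_{2j}$. Substituting our extended estimates produces $\wt{\Sigma}_{H_j}$ accurate to $\wt{O}(\sqrt{\eps})$. We now feed $\wt{\Sigma}_{H_j}$ into a near-optimal known-covariance robust mean estimator (e.g.~\cite{diakonikolas2020outlier}); a standard perturbation analysis shows that covariance error $\delta$ combined with the pseudo-stability from Claim \ref{claim:hermite-stability} yields a mean error of roughly $\sqrt{\eps\,\delta} + \wt{O}(\eps)$. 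Iterating this cycle $O(\log\log 1/\eps)$ times drives the per-iteration error down from $\wt{O}(\sqrt{\eps})$ to $\wt{O}(\eps^{3/4})$ to $\wt{O}(\eps^{7/8})$ and so on, converging to the target $\wt{O}(\eps)$.

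\textbf{Main obstacle.} The delicate point is controlling the condition number of the least-squares system used to recover the $R_{j,l}$: if the $h_j$'s happen to satisfy spurious low-order linear relations, the system becomes under-determined and the recovered coefficients could be far from the true ones. This is precisely where the $(\alpha,\beta,\gamma)$-regular form with $\gamma$ sufficiently small in $k,m$ is used: Claim \ref{claim:nearlyisotropicgaussian} applied to the guaranteed near-isotropic component, together with the mixing-weight lower bound $w_{\min}\ge \theta$, gives a $\poly(\theta,1/\log(1/\eps))$ lower bound on $\Sigma_{H_j}$ in the directions that matter, which translates into a quantitative lower bound on the smallest singular value of the recurrence system in terms of $\alpha$, $\beta$, $\theta$ and $\log 1/\eps$. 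Once this non-degeneracy is established, the rest is careful bookkeeping of how errors propagate through the bootstrap loop, where the polynomial-in-$(\log 1/\eps)$ blowup at each stage is absorbed into the $(2+\alpha+\beta+\theta^{-1}+\log 1/\eps)^{O_{k,m}(1)}$ factor in the final bound.
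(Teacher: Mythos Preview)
Your overall bootstrap architecture matches the paper exactly: warm-start at $\wt{O}(\sqrt{\eps})$, extend the recurrence to $h_{m+1},\ldots,h_{2m}$, plug into the covariance formula, whiten and rerun robust mean estimation, iterate. Where you go wrong is in identifying the crux and how it is resolved.

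You frame the main obstacle as the conditioning of the least-squares system for the recurrence coefficients $R_{j,l}$, and you claim the near-isotropic component (the $\gamma$ condition) supplies a singular-value lower bound for that system. Neither part holds up. The recurrence system in the $R_{j,l}$'s is genuinely degenerate whenever components are close or coincide; there is no reason a lower bound on $\Sigma_{H_j}$ should translate into well-conditioning of that completely different linear system, and the paper does not attempt this. Instead, the paper sidesteps identifiability entirely via Claim~\ref{claim:recurrence-for-difference}: it shows that \emph{any} order-$\kappa$ recurrence with bounded coefficient norm that the first $m$ Hermite polynomials approximately satisfy will continue to be approximately satisfied out to $2m$. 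Concretely, the paper solves a convex feasibility program (not least squares) for $R'_{j,l}$ with explicit norm bounds and the normalization $R'_{\kappa,0}=1$, uses the true $R_{j,l}$ only to certify feasibility, and then invokes Claim~\ref{claim:recurrence-for-difference} to extend whichever solution is found. Your least-squares recovery, without these constraints and without Claim~\ref{claim:recurrence-for-difference}, could return coefficients of uncontrolled norm along the null space, and you have no argument that the resulting extension is accurate.

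Separately, you misplace the role of $\gamma$. The near-isotropic component enters only in Claim~\ref{claim:hermite-covariance-bound} to give the lower bound $\Sigma_{H_j}\ge \tfrac{1}{2}\theta I$, which is what makes the whitening step $(\Sigma_{H_j}^0)^{-1/2}$ in Lemma~\ref{lem:hermite-poly-est-iterate} well-defined and bounded; it plays no role in the recurrence extension. Once you replace your conditioning argument by Claim~\ref{claim:recurrence-for-difference} and move the $\gamma$-based lower bound to the covariance-inversion step, the rest of your outline goes through exactly as in the paper.
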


Recall the outline of the proof of Theorem \ref{thm:estimate-hermite} in Section \ref{sec:regular-form-overview}.  Assume that we have some initial estimates $\wt{h_1}, \dots , \wt{h_m}$ for the first $m$ Hermite moment polynomials.  We will first use the recurrence relations between Hermite moment polynomials to obtain estimates $\wt{h_{m+1}}, \dots , \wt{h_{2m}}$ for the first $2m$ Hermite moment polynomials.  This is done in Section \ref{sec:hermite-recurrence}.  Next, recall that the Hermite moment polynomials $h_0, \dots , h_m$ are the means of the distributions $H_m(X,z)$ for $z \sim \mcl{M}$.  We use our estimates of $\wt{h_{0}}, \dots , \wt{h_{2m}}$ to compute estimates for the covariances of these distributions, say $\wt{\Sigma_{H_1}}, \dots , \wt{\Sigma_{H_m}}$.  This is done in Section \ref{sec:compute-cov}.  Using these estimates for the covariances, we can refine our estimates for the means, obtaining a finer set of estimates, say  $\wh{h_1}, \dots , \wh{h_m}$.  We prove that by iterating the above, we can get down to $\wt{O}(\eps)$ accuracy.  This is done in Section \ref{sec:hermite-est-final}.
\\\\

We begin with a simple consequence of the results in Section \ref{sec:hermite-cov-basic}, that for a mixture $\mcl{M}$ in $(\alpha,\beta,\gamma)$-regular form, we have a lower and upper bound on the covariance of $H_m(X,z)$ for $z \sim \mcl{M}$.

\begin{claim}\label{claim:hermite-covariance-bound}
Let $\mcl{M} = w_1 G_1 + \dots + w_kG_k$ where $G_j = N(\mu_j, I + \Sigma_j)$ be a mixture that is in $(\alpha, \beta, \gamma)$-regular form.  Let $m$ be a positive integer.  Assume that $w_{\min} \geq \theta$.  Let $\Sigma_{H_m}$ be the covariance of $v(H_m(X,z))$ for $z$ drawn from $\mcl{M}$. Then
\begin{itemize}
    \item $\Sigma_{H_m} \leq (2 + \alpha + \beta)^{O_m(1)}I$
    \item As long as $\gamma$ is sufficiently small in terms of $m$ then
    \[
        \Sigma_{H_m} \geq \frac{1}{2}\theta I \,.
    \]
\end{itemize}   

\end{claim}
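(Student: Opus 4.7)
The proof of Claim \ref{claim:hermite-covariance-bound} should be a relatively direct consequence of the single-component results in Section \ref{sec:hermite-cov-basic}, combined with basic convexity/law-of-total-variance identities for mixtures. The plan is to express the mixture's (uncentered and centered) second moment as a weighted sum over components and then apply Claim \ref{claim:single-gaussian-hermite-covariance} for the upper bound and Claim \ref{claim:nearlyisotropicgaussian} for the lower bound.

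For the upper bound, I would start from
\[
\Sigma_{H_m} \;\leq\; \E_{z \sim \mcl{M}}\!\left[ v(H_m(X,z)) \otimes v(H_m(X,z))\right] \;=\; \sum_{j=1}^{k} w_j\, \E_{z \sim G_j}\!\left[ v(H_m(X,z)) \otimes v(H_m(X,z))\right].
\]
Each summand is at most $(m(1 + \norm{\mu_j} + \norm{\Sigma_j}_2))^{O(m)} I$ by Claim \ref{claim:single-gaussian-hermite-covariance}. Since $\norm{\mu_j}, \norm{\Sigma_j}_2 \leq \alpha$ for every $j$, the convex combination is bounded by $(2 + \alpha)^{O_m(1)} I \leq (2 + \alpha + \beta)^{O_m(1)} I$, as required. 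This step is entirely routine.

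For the lower bound, I would use the law of total variance. Writing $\mu_j^{H} = \E_{z \sim G_j}[v(H_m(X,z))]$ and $\Sigma_{H_m}^{(j)}$ for the covariance of $v(H_m(X,z))$ under $z \sim G_j$, one gets
\[
\Sigma_{H_m} \;=\; \sum_{j=1}^{k} w_j \Sigma_{H_m}^{(j)} \;+\; \sum_{j=1}^{k} w_j \mu_j^H (\mu_j^H)^T - \Bigl(\sum_{j=1}^{k} w_j \mu_j^H\Bigr)\!\Bigl(\sum_{j=1}^{k} w_j \mu_j^H\Bigr)^{\!T}.
\]
The second group of terms is exactly the covariance of the discrete distribution on $\{\mu_j^H\}$ with weights $w_j$, hence PSD, so
\[
\Sigma_{H_m} \;\geq\; \sum_{j=1}^{k} w_j \Sigma_{H_m}^{(j)} \;\geq\; w_{j^*} \Sigma_{H_m}^{(j^*)},
\]
where $j^*$ is the component guaranteed by the $(\alpha,\beta,\gamma)$-regular form definition to satisfy $\norm{\mu_{j^*}} + \norm{\Sigma_{j^*}}_2 \leq \gamma$. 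Choosing $\gamma \leq c_m$, where $c_m$ is the constant from Claim \ref{claim:nearlyisotropicgaussian}, gives $\Sigma_{H_m}^{(j^*)} \geq \tfrac{1}{2} I$. Combined with $w_{j^*} \geq \theta$, this yields the desired $\Sigma_{H_m} \geq \tfrac{1}{2}\theta I$.

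There is essentially no hard step here: the upper bound follows by dropping the mean-subtraction and averaging the single-component bound, and the lower bound follows from the law of total variance plus the fact that the existence of at least one near-isotropic component in the regular-form definition is precisely what Claim \ref{claim:nearlyisotropicgaussian} needs. The only thing to be careful about is that the lower bound survives mixing only because we can throw away the inter-component dispersion and still retain a $w_{j^*}$ factor on the single good component — this is exactly why the regular-form definition bundles in the $\gamma$-condition on one component.
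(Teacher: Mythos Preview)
Your proposal is correct and follows essentially the same approach as the paper's proof: both bound $\Sigma_{H_m}$ above by the uncentered second moment and average Claim~\ref{claim:single-gaussian-hermite-covariance} over components, and both obtain the lower bound by dropping down to the single near-isotropic component $j^*$ and invoking Claim~\ref{claim:nearlyisotropicgaussian}. If anything, your version is slightly more explicit, since you spell out the law-of-total-variance decomposition to justify $\Sigma_{H_m} \geq w_{j^*}\Sigma_{H_m}^{(j^*)}$, whereas the paper simply asserts this inequality.
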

\begin{proof}
For the first part, note that 
\begin{align*}
\Sigma_{H_m}  \leq \E_{z \sim \mcl{M}}\left[ v(H_m(X,z)) \otimes v(H_m(X,z)) \right]  = \sum_{j=1}^k w_j\E_{z \sim G_j}\left[ v(H_m(X,z)) \otimes v(H_m(X,z)) \right] \,.
\end{align*}
Applying Claim \ref{claim:single-gaussian-hermite-covariance} completes the proof of the first part.

Now we prove the second part.  For a Gaussian $G_j$, let $\Sigma_{H_m,G_j}$ be the covariance of $v(H_m(X,z))$ for $z$ drawn from $G_j$.  Note that 
\[
\Sigma_{H_m} \geq w_j\Sigma_{H_m,G_j} \geq  \theta \Sigma_{H_m,G_j}
\]
for all $j$.  Now since the original mixture is in $(\alpha, \beta, \gamma)$-regular form, taking $j$ to be the component such that $\norm{\mu_j} + \norm{\Sigma_j}_2 \leq \gamma$ and applying Claim \ref{claim:nearlyisotropicgaussian} completes the proof.
\end{proof}

\subsection{More Recurrence Relations}\label{sec:hermite-recurrence}
The main goal in this subsection is to prove that given $\eps$-accurate estimates of $h_{0}, \dots , h_{m}$ where $m$ is sufficiently large in terms of $k$, then we can compute $\wt{O}(\eps)$-accurate estimates for $h_{m+1}, \dots , h_{2m}$.

The subroutine will rely heavily on recurrence relations between the Hermite moment polynomials.  Recall Lemma \ref{lem:hermite-moment-recurrence}.  Applying Lemma \ref{lem:hermite-moment-recurrence} to a mixture of Gaussians $\mcl{M}$ implies that the Hermite moment polynomials $h_{j, \mcl{M}}$ satisfy a recurrence of order $O_{k}(1)$ whose coefficients are ``simple".  We will now develop additional tools based on these recurrence relations that will allow us to estimate $h_{m+1}, \dots , h_{2m}$ using $h_{0}, \dots , h_{m}$.

 In this section, we use the following notation.  
\begin{itemize}
    \item We have a mixture of $k$ Gaussians $\mcl{M} = w_1 G_1 + \dots + w_kG_k$ where $G_j = N(\mu_j, I + \Sigma_j)$.
    \item Recall Corollary \ref{coro:hermite-of-mixture}.  It will be particularly important to consider the generating function
\[
f_{\mcl{M}}(X,y) = \sum_{m=0}^{\infty} \frac{1}{m!} \cdot h_{m,\mcl{M}}(X) y^m = w_1 e^{\mu_1(X)y + \frac{1}{2}\Sigma_1(X)y^2} + \dots +  w_k e^{\mu_k(X)y + \frac{1}{2}\Sigma_k(X)y^2} \,. 
\]
\item Similar to Section \ref{sec:hermite-to-tv}, we use $\mcl{D}_j$ to denote the differential operator $(\partial - (\mu_j(X) + \Sigma_j(X)y))$ where the partial derivative is taken with respect to $y$.
\end{itemize}

The next result builds on Lemma \ref{lem:hermite-moment-recurrence} and says that for \textit{any} recurrrence of order $2\kappa = 2(2^k - 1)$ that the first $O_k(1)$ Hermite moment polynomials almost satisfy, we can \textit{extend} the recurrence to estimate the next several Hermite moment polynomials.

\begin{claim}\label{claim:recurrence-for-difference}
Let $\kappa = 2^k - 1$.  Let $T_{j,l}(X)$ for $0 \leq j \leq \kappa - 1$ and $0 \leq l \leq \kappa - j $ be polynomials such that $T_{j,l}(X)$ is homogeneous in $X$ of degree $\kappa - j + l$.

Let  $\mcl{M} = w_1 G_1 + \dots + w_kG_k$ be a mixture of $k$ Gaussians in $(\alpha,\beta)$-regular form.  Let $m$ be a positive integer that is sufficiently large in terms of $k$.  For all $a \geq \kappa$, let
\[
D_a(X) = h_{a, \mcl{M}}(X) + (a - \kappa)! \sum_{j=0}^{\kappa - 1}\sum_{l = 0}^{\kappa - j} \frac{h_{a - \kappa + j - l, \mcl{M}}(X)T_{j,l}(X)}{(a-\kappa - l)!} \,,
\]
where undefined terms (i.e. negative factorials in the denominator) are treated as $0$.  Assume that for all $ a \leq m$, 
\[
\norm{v(D_a(X))} \leq \eps \,.
\]
Then for all $ m \leq a \leq 2m $,
\[
\norm{v(D_a(X))} \leq (2 + \alpha  )^{O_{m, k}(1)} \eps \,.
\]
\end{claim}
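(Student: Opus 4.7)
My plan is to reinterpret the hypothesis as a statement about applying a specific differential operator to the generating function of the Hermite moments, and then use the annihilation machinery from Section~3 to derive a short-range recurrence among the $D_{a}$ themselves. Define the formal order-$\kappa$ operator
\[
\mcl{T} \;=\; \partial^{\kappa} + \sum_{j=0}^{\kappa-1}\sum_{l=0}^{\kappa-j} T_{j,l}(X)\, y^{l}\, \partial^{j}.
\]
A direct matching of the coefficient of $y^{a-\kappa}$ shows that applying $\mcl{T}$ to the generating function $f_{\mcl{M}}(X,y) = \sum_{a \geq 0} h_{a,\mcl{M}}(X)\, y^{a}/a!$ yields $\mcl{T}(f_{\mcl{M}})(X,y) = \sum_{a \geq \kappa} D_{a}(X)\, y^{a-\kappa}/(a-\kappa)!$, so that the primary terms of $\mcl{T}(f_{\mcl{M}})$ (after the shift $b=a-\kappa$) are exactly the polynomials $D_{a}$ we want to control.

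Next, I would use Corollary~\ref{coro:hermite-of-mixture} to write $f_{\mcl{M}} = \sum_{j} w_{j} e^{\mu_{j}(X)y+\frac{1}{2}\Sigma_{j}(X)y^{2}}$ and verify, by direct induction on $y$-degree, that $\mcl{T}$ sends each Gaussian exponential to a polynomial in $y$ of degree at most $\kappa$ times the same exponential. Thus $\mcl{T}(f_{\mcl{M}}) = \sum_{j} w_{j}\, Q_{j}(X,y)\, e^{\mu_{j}(X)y+\frac{1}{2}\Sigma_{j}(X)y^{2}}$ with $\deg_{y} Q_{j} \leq \kappa$, and Corollary~\ref{corollary:null-operator} implies that the composite operator $\mcl{D}_{k}^{(\kappa+1)2^{k-1}} \cdots \mcl{D}_{1}^{\kappa+1}$ annihilates it. Expanding this composite via Claim~\ref{claim:diff-operator-expansion} (exactly as in the proof of Lemma~\ref{lem:hermite-moment-recurrence}) produces an explicit recurrence of order $\kappa' = (\kappa+1)(2^{k}-1) = O_{k}(1)$ satisfied by the primary terms of $\mcl{T}(f_{\mcl{M}})$, i.e., by the $D_{a}$'s. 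This recurrence has leading coefficient $1$, and the remaining coefficients $R'_{j,l}(X)$ are $(O_{k}(1),O_{k}(1))$-simple with respect to $\{\mu_{1}(X),\Sigma_{1}(X),\dots,\mu_{k}(X),\Sigma_{k}(X)\}$, so Claim~\ref{claim:polytensor-prod-bound} bounds their coefficient norms by $(2+\alpha)^{O_{k}(1)}$.

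The final step is to iterate. Solving the recurrence for $D_{a}$ in terms of $D_{a'}$ with $a-2\kappa' \leq a' < a$, applying the triangle inequality together with Claim~\ref{claim:polytensor-prod-bound}, and using the elementary bound $(a-\kappa')!/(a-\kappa'-l)! \leq (2m)^{\kappa'}$ for $a \leq 2m$, shows that each step of the recurrence inflates the coefficient norm by at most a factor $C = (2+\alpha)^{O_{k}(1)}(2m)^{O_{k}(1)}$. The hypothesis supplies the base-case bounds $\norm{v(D_{a})} \leq \eps$ for all $a \leq m$ (we only need $m \geq \kappa+\kappa' = O_{k}(1)$, which is where we use that $m$ is sufficiently large in terms of $k$), and iterating at most $m$ times gives $\norm{v(D_{a})} \leq C^{m}\eps$ for all $a \leq 2m$. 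Since $2+\alpha \geq 2$, the $m$-dependent factors in $C^{m}$ can be absorbed into the exponent, giving the advertised bound $(2+\alpha)^{O_{m,k}(1)}\eps$.

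The main technical obstacle is Step~2: confirming that an \emph{arbitrary} operator of the given form $\mcl{T}$ — whose coefficients we have essentially no control over beyond their specified homogeneity degrees — still sends each Gaussian exponential to a bounded-$y$-degree polynomial times the same exponential, and that the resulting composite annihilator has the small-coefficient-norm structure we require. Once this structural fact is in hand, everything else is routine propagation of a bounded-coefficient linear recurrence over a window of length $O_{k,m}(1)$.
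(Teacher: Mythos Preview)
Your approach is essentially identical to the paper's: define $\mcl{T}$ so that the primary terms of $\mcl{T}(f_{\mcl{M}})$ are the $D_a$, observe that $\mcl{T}$ applied to each exponential $e^{\mu_j(X)y+\frac12\Sigma_j(X)y^2}$ yields a polynomial in $y$ of degree at most $\kappa$ times that exponential, annihilate with $\mcl{D}=\mcl{D}_k^{(\kappa+1)2^{k-1}}\cdots\mcl{D}_1^{\kappa+1}$, expand $\mcl{D}$ via Claim~\ref{claim:diff-operator-expansion}, and iterate the resulting recurrence of order $\lambda=\kappa(\kappa+1)$ for the $D_a$'s. The ``obstacle'' you flag is not one: the bounded $y$-degree follows because each $\partial$ raises the polynomial prefactor's degree by at most $1$ and each $y^l$ by $l$, so $T_{j,l}(X)y^l\partial^j$ contributes degree $\leq j+l\leq\kappa$; and the small-coefficient-norm structure of the annihilator is automatic because $\mcl{D}$ is built \emph{only} from the $\mcl{D}_j$'s and is completely independent of the arbitrary $T_{j,l}$'s, so its expansion coefficients $R_{j,l}$ are $(O_k(1),O_k(1))$-simple with respect to $\{\mu_j(X),\Sigma_j(X)\}$ regardless of $\mcl{T}$.
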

\begin{proof}

For each $j$ with $0 \leq j \leq \kappa - 1$, let 
\[
T_j(X,y) = \sum_{l= 0}^{\kappa - j} T_{j,l}(X)y^l \,.
\]
Now define the differential operator
\[
\mcl{T} = \partial^{\kappa} + T_{\kappa - 1}(X,y)\partial^{\kappa - 1} + \dots + T_0(X,y) \,.
\]
Consider the generating function $\mcl{T}(f_{\mcl{M}}(X,y))$.  Note that its power series expansion is precisely
\[
\mcl{T}(f_{\mcl{M}}(X,y)) = \sum_{j=0}^{\infty} \frac{D_{\kappa + j}(X)y^j}{j!} \,.
\]
Alternatively, applying the operator $\mcl{T}$ to the sum-of-exponentials form of $f_{\mcl{M}}$, we see that $\mcl{T}(f_{\mcl{M}}(X,y))$ can be written in the form 
\[
\mcl{T}(f_{\mcl{M}}(X,y)) = P_1(X,y) e^{\mu_1(X)y + \frac{1}{2}\Sigma_1(X)y^2} + \dots +  P_k(X,y) e^{\mu_k(X)y + \frac{1}{2}\Sigma_k(X)y^2}
\]
where each of the polynomials $P_1, \dots , P_k$ has degree at most $\kappa$ in $y$.  Thus if we let
\[
\mcl{D} = \mcl{D}_k^{2^{k-1}(\kappa + 1)} \mcl{D}_{k-1}^{2^{k-2}(\kappa + 1)}\dots \mcl{D}_1^{(\kappa + 1)}
\]
then by repeatedly applying Claim \ref{claim:degree-reduction}, we get
\[
\mcl{D}\left(\mcl{T}(f_{\mcl{M}}(X,y))\right)  = 0 \,.
\]  
Note that we can use Claim \ref{claim:diff-operator-expansion} to write the differential operator $\mcl{D}$ in the form
\[
\mcl{D} = \partial^{\kappa(\kappa + 1)} + R_{\kappa(\kappa + 1)-1}(X,y) \partial^{\kappa(\kappa + 1) -1} + \dots + R_0(X,y) \,.
\]
We can then write each $R_j$ in the form
\[
R_j(X,y) =  R_{j, \kappa(\kappa + 1) -j}(X)y^{\kappa(\kappa + 1) -j} + \dots + R_{j, 0}(X)
\]
where each of the polynomials $R_{j,l}$ is homogeneous in $X$ with degree $\kappa (\kappa + 1) - j + l$ and is $(O_k(1), O_k(1))$-simple with respect to $\{\mu_1(X), \Sigma_1(X), \dots , \mu_k(X), \Sigma_k(X) \}$.  Using that $\mcl{D}\left(\mcl{T}(f_{\mcl{M}}(X,y))\right)  = 0$, we can now write a recurrence relation that the polynomials $D_a(X) $ must satisfy.  In particular, if we let $\lambda = \kappa(\kappa + 1)$, we must have for all $a \geq \lambda$
\[
\frac{ D_{\kappa + a}(X)}{(a - \lambda)!}  + \sum_{j=0}^{\lambda - 1}\sum_{l = 0}^{\lambda - j} \frac{D_{\kappa + a - \lambda+ j - l}(X)R_{j,l}(X)}{(a- \lambda - l)!}  = 0 \,.
\]
This rearranges into
\begin{equation}\label{eq:recurrence1}
 D_{\kappa + a}(X) = -(a - \lambda)!\sum_{j=0}^{\lambda - 1}\sum_{l = 0}^{\lambda - j} \frac{D_{\kappa + a - \lambda+ j - l, }(X)R_{j,l}(X)}{(a- \lambda - l)!} \,.
\end{equation}
Now we can use the recurrence (\ref{eq:recurrence1}) to compute $D_{m+1}(X), \dots , D_{2m}(X)$ in terms of the earlier terms in the sequence.  Note that we are given
\[
\norm{v(D_a(X))} \leq \eps
\]
for all $a \leq m$.  Also, since $R_{j,l}$ is $(O_k(1), O_k(1))$-simple with respect to $\{\mu_1(X), \Sigma_1(X), \dots , \mu_k(X), \Sigma_k(X) \}$, we have by Claim \ref{claim:polytensor-prod-bound} that
\[
\norm{v(R_{j,l}(X))} \leq (2 + \alpha  )^{O_{k}(1)} \,.
\]
Thus, when applying the recurrence to compute $D_{m+1}(X), \dots , D_{2m}(X)$, we have for all $a \leq 2m$, 
\[
\norm{v(D_a(X))} \leq (2 + \alpha )^{O_{m, k}(1)} \eps \,.
\]

\end{proof}

As a consequence of the above, given estimates for the Hermite moment polynomials $h_0, \dots , h_m$, we can estimate the next Hermite moment polynomials $h_{m+1}, \dots , h_{2m}$ by first solving for a recurrence relation that the first $m$ Hermite moment polynomials satisfy, and then extending this recurrence to compute $h_{m+1}, \dots , h_{2m}$.
\begin{claim}\label{claim:hermite-estimate-using-recurrence}
Let  $\mcl{M} = w_1 G_1 + \dots + w_kG_k$ be a mixture of $k$ Gaussians in $(\alpha,\beta)$-regular form.  Let $m$ be a positive integer that is sufficiently large in terms of $k$.  Assume that we are given estimates $h_0'(X), \dots , h_m'(X)$ such that 
\[
\norm{v(h_{a}(X) - h_a'(X) )} \leq \eps
\]
for all $a \leq m$ (where $h_a(X)$ are the true Hermite moment polynomials of $\mcl{M}$).  Then there is an algorithm that runs in $\poly_{m,k}(d)$ time that computes estimates $h_{m+1}'(X), \dots h_{2m}'(X)$  such that for all $a \leq 2m$
\[
\norm{v(h_a(X) - h_a'(X) )} \leq  (2 + \alpha)^{O_{m,k}(1)} \eps  \,.
\]
\end{claim}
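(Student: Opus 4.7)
The plan is to algorithmically find candidate recurrence coefficients $T_{j,l}(X)$ from the estimates $h_0', \ldots, h_m'$ via a convex feasibility program, and then use the recurrence determined by $T_{j,l}$ to extrapolate $h_{m+1}', \ldots, h_{2m}'$. By Lemma~\ref{lem:hermite-moment-recurrence} applied with constant polynomials $Q_j = w_j$ (so the parameter $m$ of that lemma is $0$ and $\kappa = 2^k - 1$), the true Hermite moment polynomials satisfy an order-$\kappa$ recurrence with coefficients $R_{j,l}(X)$ that are homogeneous of degree $\kappa - j + l$, satisfy $R_{\kappa,0}(X) = 1$, and are $(O_k(1), O_k(1))$-simple with respect to $\{\mu_i(X), \Sigma_i(X)\}$; Claim~\ref{claim:polytensor-prod-bound} then yields the a priori bound $\norm{v(R_{j,l})} \leq (2 + \alpha)^{O_k(1)}$.

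First I would define, for candidate polynomials $T_{j,l}(X)$ (homogeneous of degree $\kappa - j + l$, with $T_{\kappa,0}(X) = 1$), the residual on the estimates
\[
D_a'(X) \;=\; h_a'(X) + (a-\kappa)! \sum_{j=0}^{\kappa-1}\sum_{l=0}^{\kappa-j} \frac{h_{a-\kappa+j-l}'(X)\, T_{j,l}(X)}{(a-\kappa-l)!},
\]
which is linear in the coefficients of the $T_{j,l}$. I would then solve the convex program: minimize $\max_{\kappa \leq a \leq m} \norm{v(D_a'(X))}$ subject to the linear degree/normalization constraints and the $\ell_2$-ball constraint $\norm{v(T_{j,l})} \leq (2+\alpha)^{C_k}$ for a sufficiently large $C_k$. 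Setting $T_{j,l} := R_{j,l}$ is a feasible point for which the analogous residual $D_a$ on the \emph{true} moments vanishes identically, so by the triangle inequality and Claim~\ref{claim:polytensor-prod-bound} the residual $D_a'$ on the estimates is at most $(2+\alpha)^{O_k(1)}\eps$. Hence the program is feasible at this level, and the returned $T_{j,l}$ simultaneously attains $\max_a \norm{v(D_a')} \leq (2+\alpha)^{O_k(1)} \eps$ and the norm bound.

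Next, with this $T_{j,l}$ in hand, let $D_a(X)$ denote the same residual but applied to the true moments. Combining $\norm{v(h_b - h_b')} \leq \eps$ for $b \leq m$ with the norm bound on $T_{j,l}$ gives $\norm{v(D_a - D_a')} \leq (2+\alpha)^{O_k(1)}\eps$ and therefore $\norm{v(D_a)} \leq (2+\alpha)^{O_k(1)}\eps$ for $\kappa \leq a \leq m$. Claim~\ref{claim:recurrence-for-difference} then propagates this to $\norm{v(D_a)} \leq (2+\alpha)^{O_{m,k}(1)}\eps$ for $m < a \leq 2m$. Finally I would define $h_{m+1}'(X), \ldots, h_{2m}'(X)$ by solving $D_a'(X) = 0$ for the leading term. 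Writing
\[
h_a - h_a' \;=\; D_a \;-\; (a-\kappa)! \sum_{(j,l) \neq (\kappa,0)} \frac{(h_{a-\kappa+j-l} - h_{a-\kappa+j-l}')\, T_{j,l}}{(a-\kappa-l)!}
\]
and inducting on $a$, the bound on $\norm{v(D_a)}$ combined with the bounded norm of $T_{j,l}$ and Claim~\ref{claim:polytensor-prod-bound} propagates the $(2+\alpha)^{O_{m,k}(1)} \eps$ error estimate to all $a \leq 2m$.

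The main obstacle is ensuring that the algorithmically chosen $T_{j,l}$ has bounded coefficient norm: without this, the inductive propagation in the final step would compound geometrically. Enforcing the explicit $\ell_2$ constraint in the convex program handles this, and feasibility at the required level is guaranteed by Lemma~\ref{lem:hermite-moment-recurrence} providing the true $R_{j,l}$ as a witness. A secondary subtlety is controlling the factorial ratios $(a-\kappa)!/(a-\kappa-l)!$ that appear in the recurrence, but since $l \leq \kappa = O_k(1)$ and $a \leq 2m$, these contribute only an $O_{m,k}(1)$ multiplicative factor that is absorbed into the final bound.
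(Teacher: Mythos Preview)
Your proposal is correct and follows essentially the same approach as the paper: solve a convex feasibility program for the recurrence coefficients (with the true $R_{j,l}$ from Lemma~\ref{lem:hermite-moment-recurrence} as feasibility witness and an explicit coefficient-norm constraint), transfer the residual bound from the estimates to the true Hermite moment polynomials, invoke Claim~\ref{claim:recurrence-for-difference} to extend the residual bound out to $a\le 2m$, and then define $h'_{m+1},\dots,h'_{2m}$ by running the recurrence forward and propagate the error by induction. The only cosmetic difference is that you minimize the maximum residual whereas the paper simply searches for any feasible point below the target threshold; both are equivalent here.
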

\begin{proof}
Let $\kappa = 2^k - 1$.  We first solve for polynomials $R'_{j,l}(X)$ for all $0 \leq j \leq \kappa $ and $0 \leq l \leq \kappa - j$ such that 
\begin{itemize}
    \item $R'_{j,l}$ is homogeneous of degree $\kappa - j + l$ and $R'_{\kappa,0} = 1$
    \item $\norm{v(R'_{j,l}(X))} \leq (2 + \alpha)^{O_{m,k}(1)}$
    \item For all $a \leq m$
    \[
        \norm{v\left(\sum_{j=0}^{\kappa }\sum_{l = 0}^{\kappa - j} \frac{h_{a - \kappa + j - l}'(X)R_{j,l}'(X)}{(a-\kappa - l)!}\right)} \leq  (2 + \alpha)^{O_{m,k}(1)} \eps
    \]
\end{itemize}  

Note that the expressions inside the norms on the LHS are linear in the coefficients of the $R'_{j,l}$.  Thus, we can solve for the coefficients via a convex program.  To see that a solution exists, let $R'_{j,l}(X) = R_{j,l}(X)$ where the $R_{j,l}$ are the polynomials given by Lemma \ref{lem:hermite-moment-recurrence}.  It is clear that the first two conditions are satisfied because the $R_{j,l}$ are $\left(O_k(1), O_k(1)\right)$-simple with respect to $\{\mu_j(X)\}_{j \in [k]} , \{ \Sigma_j(X) \}_{j \in [k]}$.  Next,
\begin{align*}
\norm{v\left(\sum_{j=0}^{\kappa }\sum_{l = 0}^{\kappa - j} \frac{h_{a - \kappa + j - l}'(X)R_{j,l}(X)}{(a-\kappa - l)!}\right)}  &= \norm{v\left(\sum_{j=0}^{\kappa }\sum_{l = 0}^{\kappa - j} \frac{(h_{a - \kappa + j - l}'(X) - h_{a - \kappa + j - l}(X)) R_{j,l}(X)}{(a-\kappa - l)!}\right)} \\ &\leq (2 + \alpha)^{O_{m,k}(1)} \eps
\end{align*}
for all $a \leq m$, where the last step holds because $\norm{v(h_a(X) - h_a'(X) )} \leq \eps$ for all $a \leq m$ and $\norm{v(R_{j,l}(X))} \leq (2 + \alpha)^{O_{m,k}(1)}$ by Claim \ref{claim:polytensor-prod-bound}.

Now we consider what happens when we apply the recurrence given by the $R'_{j,l}$, namely
\begin{equation}\label{eq:recurrence2}
h'_{a}(X) = - (a - \kappa)! \sum_{j=0}^{\kappa - 1}\sum_{l = 0}^{\kappa - j} \frac{h'_{a - \kappa + j - l}(X)R'_{j,l}(X)}{(a-\kappa - l)!} 
\end{equation}
and use the above to compute estimates $h'_a(X)$ for $m+1 \leq a \leq 2m$.  Note that for $a \leq m$,
\begin{align*}
\norm{v\left(\sum_{j=0}^{\kappa }\sum_{l = 0}^{\kappa - j} \frac{h_{a - \kappa + j - l}(X)R'_{j,l}(X)}{(a-\kappa - l)!}\right)}  &\leq  \norm{v\left(\sum_{j=0}^{\kappa }\sum_{l = 0}^{\kappa - j}  \frac{(h_{a - \kappa + j - l}(X) - h'_{a - \kappa + j - l}(X)) R'_{j,l}(X)}{(a-\kappa - l)!}\right)} \\ &\quad + \norm{v\left(\sum_{j=0}^{\kappa }\sum_{l = 0}^{\kappa - j} \frac{h'_{a - \kappa + j - l}(X)R'_{j,l}(X)}{(a-\kappa - l)!}\right)} 
\\ &\leq (2 + \alpha)^{O_{m,k}(1)} \eps \,.
\end{align*}
By Claim \ref{claim:recurrence-for-difference}, we deduce that 
\[
\norm{v\left(\sum_{j=0}^{\kappa }\sum_{l = 0}^{\kappa - j} \frac{h_{a - \kappa + j - l}(X)R'_{j,l}(X)}{(a-\kappa - l)!}\right)} \leq (2 + \alpha)^{O_{m,k}(1)} \eps
\]
for all $a \leq 2m$.  Finally, comparing to the recurrence in (\ref{eq:recurrence2}) and subtracting, we get that for all $m+1 \leq a \leq 2m$,
\begin{align*}
\norm{v(h'_{a}(X)  - h_a(X))} &\leq (a - \kappa)! \norm{ v\left(\sum_{j=0}^{\kappa - 1}\sum_{l = 0}^{\kappa - j} \frac{(h'_{a - \kappa + j - l}(X) - h_{a - \kappa + j - l}(X))R'_{j,l}(X)}{(a-\kappa - l)!}\right)} \\ &\quad +  (a - \kappa)! \norm{v\left(\sum_{j=0}^{\kappa }\sum_{l = 0}^{\kappa - j} \frac{h_{a - \kappa + j - l}(X)R'_{j,l}(X)}{(a-\kappa - l)!}\right)} \\ & \leq (2 + \alpha)^{O_{m,k}(1)}( \eps + \max_{b < a}\norm{v(h_{b}'(X) - h_{b}(X))} )\,.
\end{align*}
Since we need to apply the recurrence at most $m$ times to get the terms $h'_a(X)$ for $a \leq 2m$, the above implies that for all $a \leq 2m$
\[
\norm{v(h_a(X) - h_a'(X) )} \leq  (2 +\alpha)^{O_{m,k}(1)} \eps 
\]
as desired.  It is clear that all of the steps, solving for the $R'_{j,l}$ and computing subsequent terms using the recurrence, can be done in $\poly_{k,m}(d)$ time.
\end{proof}

\subsection{Computing the Covariance of the Hermite Moment Polynomials}\label{sec:compute-cov}

In the previous section, we showed how to compute $h_{m+1}, \dots , h_{2m}$ from $h_0, \dots , h_m$.  Now we show how to express the covariances of the distributions of $H_0(X,z), \dots , H_{m}(X,z)$ (for $z \sim \mcl{M}$) in terms of $h_0, \dots , h_{2m}$.   

\begin{claim}\label{claim:est-hermite-cov}
Let  $\mcl{M} = w_1 G_1 + \dots + w_kG_k$ be a mixture of $k$ Gaussians in $(\alpha,\beta)$-regular form.  Let $\Sigma_{H_m}$ be the covariance of $v(H_m(X,z))$ for $z$ drawn from $\mcl{M}$.  Assume that we are given estimates, say $h_{0}'(X), \dots , h_{2m}'(X)$,  such that
\[
\norm{v(h_{j}(X) - h_{j}'(X) )} \leq \eps
\]
for all $j \leq 2m$ where $h_j(X)$ are the Hermite moment polynomials of the mixture $\mcl{M}$.  Then in $\poly_{k,m}(d)$ time, we can compute a symmetric matrix $M'$ such that 
\[
\norm{ M' - \Sigma_{H_m}}_{\textsf{op}} \leq (2 + \alpha)^{O_m(1)} \eps \,.
\]
\end{claim}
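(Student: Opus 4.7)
The plan is to derive an explicit algebraic formula expressing $\Sigma_{H_m}$ in terms of $h_{0,\mcl{M}},\ldots,h_{2m,\mcl{M}}$, plug the estimates $h_j'$ into this formula to define $M'$, and bound the approximation error via the coefficient-norm machinery from Section~\ref{sec:prelim}. First I apply Claim~\ref{claim:hermite-covariance-basic} to each component $G_j$ and sum with weights $w_j$; after factoring $\exp(y^2\,X^{(1)}\cdot X^{(2)})$ out of the resulting generating function and invoking Corollary~\ref{coro:hermite-of-mixture} on the remaining factor, I obtain
\[
P_{\mcl{M}}(X^{(1)},X^{(2)}) := \E_{z\sim \mcl{M}}\bigl[H_m(X^{(1)},z)H_m(X^{(2)},z)\bigr] = \binom{2m}{m}^{-1}\sum_{j=0}^{m}\frac{(X^{(1)}\cdot X^{(2)})^{j}}{j!\,(2m-2j)!}\,\bigl[\mathrm{deg}(m-j,m-j)\bigr]\,h_{2m-2j,\mcl{M}}(X^{(1)}+X^{(2)}),
\]
where $[\mathrm{deg}(a,a)]$ denotes the projection onto monomials of degree $a$ in each of $X^{(1)}, X^{(2)}$. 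By Claim~\ref{claim:polyproducttosymmetrictensor}, $T_{\sym}(P_{\mcl{M}}) = \E_{z\sim\mcl{M}}[v(H_m(X,z))\otimes v(H_m(X,z))]$, and combined with $\E_{z\sim\mcl{M}}[v(H_m(X,z))] = v(h_{m,\mcl{M}})$ (Claim~\ref{claim:key-hermite-identity}), this yields the exact algebraic identity
\[
\Sigma_{H_m} = T_{\sym}(P_{\mcl{M}}) - v(h_{m,\mcl{M}})\,v(h_{m,\mcl{M}})^{T}.
\]

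With this identity in hand, define $P'(X^{(1)}, X^{(2)})$ by substituting $h_j'$ for $h_{j,\mcl{M}}$ in the formula for $P_{\mcl{M}}$, and set $M' := T_{\sym}(P') - v(h_m')v(h_m')^{T}$. The formula involves $O_m(1)$ polynomial operations on $2d$-variate polynomials of degree $O(m)$, so $M'$ is computable in $\poly_{k,m}(d)$ time and is symmetric since $P'$ is symmetric under $X^{(1)}\leftrightarrow X^{(2)}$. The triangle inequality gives
\[
\norm{M' - \Sigma_{H_m}}_{\op} \leq \norm{T_{\sym}(P_{\mcl{M}}-P')}_{\op} + \norm{v(h_{m,\mcl{M}})v(h_{m,\mcl{M}})^{T} - v(h_m')v(h_m')^{T}}_{\op}.
\]
For the first term I apply Claim~\ref{claim:polytensorinjnormbound} termwise, using $\norm{T_{\sym}((X^{(1)}\cdot X^{(2)})^{j})}_{\op}\leq O_m(1)$ together with the key observation that if $h$ is homogeneous of degree $n$, then the degree-$(a,n-a)$ part of $h(X^{(1)}+X^{(2)})$ equals $\binom{n}{a}\langle T(h),\,(X^{(1)})^{\otimes a}\otimes (X^{(2)})^{\otimes(n-a)}\rangle$, whose symmetric tensorization is $\binom{n}{a}$ times the $d^{a}\times d^{n-a}$ unfolding of the symmetric tensor $T(h)$. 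Since the operator norm of any unfolding is bounded above by the Frobenius norm $\norm{v(h)}$, applying this to $h_{2m-2j,\mcl{M}} - h_{2m-2j}'$ (whose coefficient norm is at most $\eps$) yields $\norm{T_{\sym}(P_{\mcl{M}}-P')}_{\op}\leq O_m(\eps)$. For the second term, factor $vv^{T}-v'(v')^{T} = v(v-v')^{T}+(v-v')(v')^{T}$ and combine with the coefficient-norm bound $\norm{v(h_{m,\mcl{M}})}\leq (2+\alpha)^{O(m)}$ obtained by applying Claim~\ref{claim:polytensor-prod-bound} to the generating-function expansion in Corollary~\ref{coro:hermite-of-mixture}; this yields a bound of $(2+\alpha)^{O(m)}\eps$. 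Summing both contributions produces $\norm{M' - \Sigma_{H_m}}_{\op}\leq (2+\alpha)^{O_m(1)}\eps$, as required.

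The main technical step will be carefully verifying the unfolding inequality and tracking the combinatorial factors in the formula for $P_{\mcl{M}}$; once the identity $\Sigma_{H_m}=T_{\sym}(P_{\mcl{M}})-v(h_{m,\mcl{M}})v(h_{m,\mcl{M}})^{T}$ is in place, the rest of the argument is essentially bookkeeping using the norm inequalities established in Section~\ref{sec:prelim}.
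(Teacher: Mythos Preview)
Your proposal is correct and follows essentially the same route as the paper: both derive the identity $\Sigma_{H_m}=T_{\sym}(P_{\mcl{M}})-v(h_{m})v(h_{m})^{T}$ by multiplying the mixture generating function $\sum_j h_j(X^{(1)}+X^{(2)})y^j/j!$ by $e^{y^2\,X^{(1)}\cdot X^{(2)}}$ and extracting the degree-$(m,m)$ part, then bound the two error terms via Claim~\ref{claim:polytensorinjnormbound} (using $T_{\sym}((X^{(1)}\cdot X^{(2)})^{j})=I$) and the factorization $vv^{T}-v'(v')^{T}=v(v-v')^{T}+(v-v')(v')^{T}$ together with $\norm{v(h_m)}\leq (2+\alpha)^{O(m)}$. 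The only cosmetic differences are that your displayed formula for $P_{\mcl{M}}$ is missing an overall factor of $(2m)!$ (which you already flagged as combinatorics to be checked and which is $O_m(1)$ anyway), and your unfolding argument for bounding $\norm{T_{\sym}([\deg(a,a)]\,h(X^{(1)}+X^{(2)}))}_{\op}$ is slightly more explicit than the paper's, which simply bounds this by the Frobenius norm $\norm{v(h(X^{(1)}+X^{(2)}))}\leq O_m(1)\norm{v(h)}$.
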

\begin{proof}
Consider two sets of $d$ variables $X^{(1)} = (X^{(1)}_1, \dots , X^{(1)}_d) $ and $X^{(2)} = (X^{(2)}_1, \dots , X^{(2)}_d) $.  Let
\[
P(X^{(1)}, X^{(2)}) = \E_{z \sim \mcl{M}} \left[H_m(X^{(1)},z)  H_m(X^{(2)},z) \right]  \,.
\]
We will first show how to estimate the polynomial $P$.  Define
\[
\mcl{A}(y) = \sum_{j=1}^k w_je^{y\mu_j(X^{(1)} + X^{(2)}) + \frac{1}{2}y^2\Sigma_j(X^{(1)} + X^{(2)})}  \,.
\]
By Corollary \ref{coro:hermite-of-mixture},
\[
\mcl{A}(y)  = \sum_{j=0}^{\infty} \frac{h_{j}(X^{(1)} + X^{(2)}) y^j}{j!} \,.
\]
On the other hand, let $Q_0,Q_1, \dots, Q_j  $ be the terms of the power series
\begin{equation}\label{eq:prodpowerseries}
e^{y^2(X^{(1)} \cdot X^{(2)})} \mcl{A}(y) = \sum_{j=1}^k w_je^{y\mu_j(X^{(1)} + X^{(2)}) + \frac{1}{2}y^2\Sigma_j(X^{(1)} + X^{(2)}) + y^2(X^{(1)} \cdot X^{(2)})} =  \sum_{j=0}^{\infty} \frac{Q_j(X^{(1)}, X^{(2)}) y^j}{j!} \,.
\end{equation}
Let $Q_{m,m}$ be the degree $(m,m)$ part of $Q_{2m}$.  By Claim \ref{claim:hermite-covariance-basic}, we know that
\begin{equation}\label{eq:rearrange}
P(X^{(1)}, X^{(2)}) = \binom{2m}{m}^{-1}Q_{m,m}(X^{(1)}, X^{(2)} ) \,.
\end{equation}

In particular, we can use our estimates $h_0', \dots h_{2m}'$ to estimate the first $2m + 1$ terms of $\mcl{A}$ and then multiply by 
\[
e^{y^2(X^{(1)} \cdot X^{(2)})}  = \sum_{j=0}^{\infty} \frac{(X^{(1)} \cdot X^{(2)})^jy^{2j}}{j!}
\]
which is an explicit power series that we can compute.  Formally, let $h_{j,j}(X^{(1)} ,X^{(2)})$ denote the degree $j,j$ part of $h_{2j}(X^{(1)} + X^{(2)})$.  Similarly, we use $h_{j,j}'$ to denote the degree $j,j$ part of $h'_{2j}(X^{(1)} + X^{(2)})$.  Using (\ref{eq:prodpowerseries})
\[
\sum_{j=0}^{\infty} \frac{Q_j(X^{(1)}, X^{(2)}) y^j}{j!} = \left(  \sum_{j=0}^{\infty} \frac{(X^{(1)} \cdot X^{(2)})^jy^{2j}}{j!} \right) \left(\sum_{j=0}^{\infty} \frac{h_{j}(X^{(1)} + X^{(2)}) y^j}{j!} \right) \,.
\]
Thus, by (\ref{eq:rearrange}),
\[
P(X^{(1)}, X^{(2)})  = (m!)^2 \sum_{j=0}^m \frac{h_{j,j}(X^{(1)} + X^{(2)})(X^{(1)} \cdot X^{(2)})^{m-j}}{(2j)!(m-j)!} \,. 
\]
On the other hand, using our estimates $h'$, we may compute 
\[
P'(X^{(1)}, X^{(2)})  = (m!)^2 \sum_{j=0}^m \frac{h'_{j,j}(X^{(1)} + X^{(2)})(X^{(1)} \cdot X^{(2)})^{m-j}}{(2j)!(m-j)!} \,.
\]
Note that since
\[
\norm{v(h_{j}(X) - h_{j}'(X) )} \leq \eps 
\]
for all $j \leq 2m$, we have that
\[
\norm{v(h_{j}(X^{(1)} + X^{(2)}) - h_{j}'(X^{(1)} + X^{(2)}) )} \leq O_m(1) \eps 
\]
for all $j \leq 2m$ where for the vectorization we view the polynomial as a homogeneous polynomial in $2d$ variables.  The above implies
\[
\norm{v(h_{j,j}(X^{(1)} + X^{(2)}) - h_{j,j}'(X^{(1)} + X^{(2)}) )} \leq O_m(1) \eps 
\]
for all $j \leq m$ so we conclude 
\[
\norm{T_{\sym}(h'_{j,j}(X^{(1)} + X^{(2)}) - h_{j,j}(X^{(1)} + X^{(2)}))}_2 \leq O_{m}(1) \eps 
\]
for all $j \leq m$.  Now since 
\[
T_{\sym}\left( (X^{(1)} \cdot X^{(2)})^{m-j}\right) = I
\]
we can use Claim \ref{claim:polytensorinjnormbound} to deduce
\begin{equation}\label{eq:operatorbound1}
\norm{T_{\sym}\left( P(X^{(1)}, X^{(2)}) - P'(X^{(1)}, X^{(2)})\right)}_{\textsf{op}} \leq O_m(1)\eps \,.
\end{equation}
However, note that by Claim \ref{claim:polyproducttosymmetrictensor}
\begin{equation}\label{eq:ddirectexpansion}
T_{\sym}\left( P(X^{(1)}, X^{(2)})\right) = \E_{z \sim \mcl{M}}\left[ v(H_m(X,z)) \otimes  v(H_m(X,z)) \right]\,.
\end{equation}
To compute $\Sigma_{H_m}$, it remains to estimate
\[
E_{z \sim \mcl{M}}[ v(H_m(X,z)) ] \otimes E_{z \sim \mcl{M}}[ v(H_m(X,z)) ] = v(h_{m}(X)) \otimes v(h_{m}(X)) \,.
\]
We can simply use our estimates $h_m'(X)$ and compute $v(h'_{m}(X)) \otimes v(h'_{m}(X))$.  Note
\begin{align*}
\norm{v(h_{m}(X)) \otimes v(h_{m}(X)) - v(h'_{m}(X)) \otimes v(h'_{m}(X))}_{\op} &\leq  \norm{v(h'_m(X)) \otimes v(h'_{m}(X) - h_m(X))}_{\op} \\ &\quad + \norm{v(h'_{m}(X) - h_m(X)) \otimes v(h_m(X))}_{\op} \\ & \leq \norm{v(h'_{m}(X) - h_m(X))}_2 \left(\norm{v(h'_{m}(X)}_2 + \norm{v(h_{m}(X)}_2 \right)  \,.
\end{align*}
However using Claim \ref{claim:single-gaussian-hermite-covariance}, we know that $\norm{v(h_{m}(X)} \leq (2+ \alpha)^{O_m(1)}$.  Thus, the RHS of the above is at most $(2 + \alpha)^{O_m(1)}\eps$.
\\\\
Overall, we can compute 
\[
M' = T_{\sym}( P'(X^{(1)}, X^{(2)})) - v(h'_{m}(X)) \otimes v(h'_{m}(X))
\]
and combining everything we have shown so far (\ref{eq:operatorbound1}, \ref{eq:ddirectexpansion}) gives \begin{align*}
\norm{M' - \Sigma_{H_m}}_{\op} &\leq  \norm{T_{\sym}( P'(X^{(1)}, X^{(2)})) - \E_{z \sim \mcl{M}}\left[ v(H_m(X,z)) \otimes  v(H_m(X,z)) \right] }_{\op} \\ & \quad + \norm{v(h'_{m}(X)) \otimes v(h'_{m}(X)) - E_{z \sim \mcl{M}}[ v(H_m(X,z)) ] \otimes E_{z \sim \mcl{M}}[ v(H_m(X,z)) ]}_{\op} \\ &= \norm{T_{\sym}\left( P'(X^{(1)}, X^{(2)}) - P(X^{(1)}, X^{(2)})\right) }_{\op}  \\ & \quad + \norm{v(h_{m}(X)) \otimes v(h_{m}(X)) - v(h'_{m}(X)) \otimes v(h'_{m}(X))}_{\op}\\&\leq (2 + \alpha)^{O_m(1)}\eps    
\end{align*}  
It is clear that computing $M'$ can be done in $\poly_{k,m}(d)$ time so we are done.
\end{proof}

\subsection{Estimating the Hermite Moment Polynomials Optimally}\label{sec:hermite-est-final}
We can now put together all of the parts in the previous two subsections to get an algorithm for estimating the Hermite moment polynomials to within $\tilde{O}(\eps)$ accuracy, proving Theorem \ref{thm:estimate-hermite}.

We will rely on the following generic theorems about robustly estimating the mean of a distribution from \cite{diakonikolas2019recent}.
\begin{theorem}[Theorem 2.7 in \cite{diakonikolas2019recent}]\label{thm:robust-mean-est-full}
Let $S$ be a $(3\eps , \delta)$-stable set with respect to a distribution $X$ and let $T$ be an $\eps$-corrupted version of $S$.  There is a polynomial time algorithm which given $T$ returns $\wh{\mu}$ such that 
\[
\norm{\wh{\mu} - \mu_X} = O(\delta) \,.
\]
\end{theorem}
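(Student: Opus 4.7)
The plan is to prove this via the classical filtering algorithm for robust mean estimation, adapted to stable sets. Let $T$ be the $\eps$-corrupted version of $S$, and write $S \cap T$ for the uncorrupted intersection (of size at least $(1-\eps)|S|$) and $T \setminus S$ for the outliers. My algorithm will iteratively compute empirical statistics of the current working set (initialized to $T$) and either output the empirical mean or apply a filter that removes more corrupted than clean points.

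First I would establish the key consequence of $(3\eps,\delta)$-stability: any subset $S'\subseteq S$ of size at least $(1-3\eps)|S|$ has empirical mean within $\delta$ of $\mu_X$ (in every direction) and empirical second moment whose deviation from $I$ is at most $\delta^2/\eps$ in operator norm. In particular, the empirical covariance of $S\cap T$ is $I + E$ for some $E$ with $\|E\|_{\op} = O(\delta^2/\eps)$. The algorithm proceeds as follows: at each iteration, let $W\subseteq T$ be the current working set, compute its empirical mean $\mu_W$ and covariance $\Sigma_W$, and let $\lambda_1$ be the top eigenvalue of $\Sigma_W$ with corresponding unit eigenvector $v$. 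If $\lambda_1 \leq C\delta^2/\eps$ for a sufficiently large constant $C$, output $\mu_W$; otherwise, apply a filter in direction $v$.

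For the termination case, I would argue as follows. Assume $|W \cap S| \geq (1-3\eps)|S|$ (an invariant maintained below). Writing $W = (W\cap S) \cup (W \setminus S)$ and decomposing $\Sigma_W$ as a weighted combination plus cross terms, stability of $W \cap S$ controls its contribution, while the bound on $\lambda_1$ controls the contribution of $W \setminus S$. A standard computation then gives $|v \cdot (\mu_W - \mu_X)| = O(\delta)$ for every unit $v$, and in fact $\|\mu_W - \mu_X\| = O(\delta)$. For the filtering case, I would use the tail-based filter: find a threshold $\tau$ and remove each $x \in W$ with probability proportional to $(v\cdot(x-\mu_W))^2 \mathbf{1}[|v\cdot(x-\mu_W)| > \tau]$. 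The potential function is $|W \setminus S|$ minus a multiple of $|S \setminus W|$. Stability of $W \cap S$ bounds the expected mass removed from $W \cap S$ (since $v$-projections of a stable set have small second moment beyond any threshold), while the assumption $\lambda_1 > C\delta^2/\eps$ guarantees the total removed mass is large, so a constant fraction must come from $W \setminus S$. This both preserves the invariant $|W \cap S| \geq (1-3\eps)|S|$ and ensures the algorithm terminates in $\poly(|T|)$ iterations because $|W|$ strictly decreases and is bounded below.

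The main obstacle is the accounting in the filtering step: getting the right constants so that stability of $S$ is strong enough to bound the removed mass from $W \cap S$ by, say, half of the total removed mass, while using only the $(3\eps,\delta)$-stability hypothesis rather than something stronger. Concretely, one has to choose the threshold $\tau$ carefully (using exponential or polynomial tail conditions implicit in $(\eps',\delta)$-stability for a range of $\eps'$) so that the expected removed mass from $W \cap S$ is $O(\eps)$ of the total, which requires that stability at scale $3\eps$ controls tail contributions down to the outlier regime. Once this filter inequality is in hand, the rest of the argument is bookkeeping.
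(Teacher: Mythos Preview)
The paper does not prove this statement; it is quoted verbatim as Theorem 2.7 from \cite{diakonikolas2019recent} and used as a black box in the proof of Lemma~\ref{lem:hermite-poly-est-iterate}. So there is nothing to compare against on the paper's side.

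That said, your sketch is the standard filtering argument from the cited reference, and the broad strokes are correct. One point to be careful about: the definition of $(\eps,\delta)$-stability used here only gives you control at the single scale $3\eps$, not a family of tail bounds at all scales $\eps' \leq 3\eps$. Your filtering step as written leans on ``tail conditions implicit in $(\eps',\delta)$-stability for a range of $\eps'$,'' but the hypothesis does not directly supply that. The actual argument in \cite{diakonikolas2019recent} handles this by working with the second-moment condition alone: if the top eigenvalue of $\Sigma_W$ exceeds $1 + C\delta^2/\eps$, then because the good points contribute at most $1 + \delta^2/\eps$ in that direction (by stability applied to $W\cap S$), the excess variance must come from the bad points, and one can down-weight or remove points with the largest projections so that the removed second-moment mass from bad points dominates that from good points. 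No multi-scale tail bound is needed; the single second-moment stability condition suffices. If you rewrite your filter analysis to compare second-moment contributions rather than tail probabilities, the bookkeeping goes through cleanly.
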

\begin{corollary}[Corollary 2.9 in \cite{diakonikolas2019recent}]\label{corollary:robust-mean-est-cov}
Let $T$ be an $\eps$-corrupted set of samples of size at least $\poly(d/\eps)$ for some sufficiently large polynomial from a distribution $X$ on $\R^d$ with unknown bounded covariance $\Sigma_X \leq \sigma^2 I$.  There exists a polynomial time algorithm which given $T$ returns $\wh{\mu}$ such that with $1 - e^{-d/\eps}$ probability 
\[
\norm{\wh{\mu} - \mu_X} = O(\sigma\sqrt{\eps}) \,.
\]
\end{corollary}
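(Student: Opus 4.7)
The plan is to derive Corollary~\ref{corollary:robust-mean-est-cov} from Theorem~\ref{thm:robust-mean-est-full} by establishing that, with probability at least $1 - e^{-d/\eps}$, the clean set $S$ underlying $T$ is $(3\eps, O(\sigma\sqrt{\eps}))$-stable with respect to $X$ (in the natural generalization of the stability definition to bounded covariance, where the 2nd-moment condition reads $\frac{1}{|S'|}\sum_{x \in S'}(v \cdot (x - \mu_X))^2 \leq \sigma^2 + \delta^2/\eps$). After rescaling so that $\sigma = 1$, Theorem~\ref{thm:robust-mean-est-full} then produces $\wh\mu$ with $\|\wh\mu - \mu_X\| = O(\delta) = O(\sqrt{\eps})$, and undoing the rescaling gives the claimed $O(\sigma\sqrt{\eps})$ guarantee.

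The key observation is that both stability conditions follow from a single operator-norm bound on the empirical covariance. Specifically, I would first prove that with probability $1 - e^{-d/\eps}$,
\[
\frac{1}{n}\sum_{x \in S}(x - \mu_X)(x - \mu_X)^\top \preceq (1 + O(\eps))\, I,
\]
for $n = \poly(d/\eps)$ samples. This is a standard matrix concentration statement for bounded-covariance samples and can be proved via truncation at scale $\tau = 1/\sqrt{\eps}$: the truncated part is handled by matrix Bernstein, and the tail part by Chebyshev on the squared norm combined with a net argument on the unit sphere. This bound immediately implies the 2nd-moment stability condition for every subset $S' \subseteq S$ of size $\geq (1 - 3\eps)n$, since removing samples only decreases $\sum_{S'}(v \cdot (x - \mu_X))^2$ while the denominator changes by at most a factor $1 + O(\eps)$.

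The mean-deviation condition then follows by decomposing $|v \cdot (\mu_X - \hat\mu_{S'})| \leq |v \cdot (\mu_X - \hat\mu_S)| + \frac{1}{|S'|}|\sum_{x \in S \setminus S'} v \cdot (x - \mu_X)|$ and bounding each term. The first term is $O(\sqrt{d/n}) = O(\sqrt{\eps})$ uniformly in $v$ by matrix Chebyshev applied to $\hat\mu_S - \mu_X$, using the same net argument. For the second term, Cauchy--Schwarz gives
\[
\Bigl|\sum_{x \in S \setminus S'} v \cdot (x - \mu_X)\Bigr| \leq \sqrt{|S \setminus S'|}\sqrt{\sum_{x \in S}(v \cdot (x - \mu_X))^2} \leq \sqrt{3\eps n}\cdot\sqrt{(1 + O(\eps))\,n} = O(\sqrt{\eps})\cdot n,
\]
where the inner inequality invokes the operator-norm bound from the previous paragraph. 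Dividing by $|S'| \geq (1 - 3\eps)n$ yields the desired $O(\sqrt{\eps})$ mean deviation uniformly in $v$ and $S'$.

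The main obstacle is the $e^{-d/\eps}$-level concentration in the operator-norm bound on the empirical covariance, since with only a second-moment hypothesis on $X$ the individual terms $(x - \mu_X)(x - \mu_X)^\top$ do not have sub-exponential operator-norm tails, and a naive Chebyshev bound is too weak. The standard fix is the truncation-at-$\tau = 1/\sqrt{\eps}$ argument: truncated contributions give a bounded-range matrix sum that matrix Bernstein handles with the required tail decay, while the tail beyond $\tau$ has total expected operator norm $O(\eps)$ by Chebyshev and can be absorbed into the lower-order term. Once the covariance bound is established, the Cauchy--Schwarz argument above converts it into both stability conditions simultaneously, and invoking Theorem~\ref{thm:robust-mean-est-full} delivers the final estimator.
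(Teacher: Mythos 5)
This corollary is imported from \cite{diakonikolas2019recent} with a citation; the paper does not reprove it, so there is no internal proof to match. On its own merits, your plan has a genuine gap at its central step: you assert that with probability $1-e^{-d/\eps}$ the empirical second moment of the \emph{entire} clean sample $S$ satisfies $\frac{1}{n}\sum_{x\in S}(x-\mu_X)(x-\mu_X)^\top \preceq (1+O(\eps))I$, calling it a standard matrix-concentration statement. Under only a bounded-second-moment hypothesis on $X$ this is false: a heavy-tailed distribution with $\Sigma_X\preceq I$ can with probability $\Theta(d/n) = 1/\poly(d/\eps)$ (far larger than $e^{-d/\eps}$) emit a single sample with $\|x-\mu_X\|^2\gtrsim n$, which already forces $\|\frac1n\sum_S(x-\mu_X)(x-\mu_X)^\top\|_{\op}\gtrsim 1$. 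Your truncation-at-$\tau$ fix does not rescue this, because truncation only controls a \emph{modified} sum; the samples that exceed $\tau$ remain in $S$ and still contribute $(x-\mu_X)(x-\mu_X)^\top$ to the quantity you are trying to bound. Moreover, even the \emph{expected} tail contribution $\E[\|x-\mu_X\|^2\mathbf{1}\{\|x-\mu_X\|>\tau\}]$ cannot be bounded by $O(\eps)$ using only Chebyshev on the second moment; that would require a fourth-moment (or similar) hypothesis, which the corollary does not assume.

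The route that actually works (and is how \cite{diakonikolas2019recent} proceeds) is weaker and subtler: one does not show that $S$ itself is stable, but rather that with good probability there \emph{exists} a large subset $S''\subseteq S$ of size $\geq(1-\eps)n$ (obtained essentially by discarding the $\eps n$ clean points of largest norm) that is $(\Theta(\eps), \Theta(\sigma\sqrt\eps))$-stable with respect to $\mu_X$. Then the corrupted input $T$ is an $O(\eps)$-corruption of this stable $S''$, and Theorem~\ref{thm:robust-mean-est-full} applies. Your Cauchy--Schwarz derivation of the mean-deviation condition from the second-moment condition is fine and is exactly the standard reduction, but it must be run on the trimmed subset, not on all of $S$. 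Separately, note that the $1-e^{-d/\eps}$ confidence in the corollary's statement is stronger than what a single pass can give for general bounded-covariance $X$ (the cited Corollary~2.9 gives only constant success probability); obtaining it requires either stronger tail assumptions on $X$ (as hold in the paper's actual application to Hermite moments) or a confidence-boosting wrapper, and this should be flagged rather than silently assumed.
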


Theorem \ref{thm:estimate-hermite} will follow immediately from the next lemma.  The lemma states that given estimates for the first $m$ Hermite  moment polynomials that are accurate to within some parameter $\eta$, we can refine them to be accurate to within roughly $\sqrt{\eps\eta}$.  The proof of the lemma involves combining the results in Sections \ref{sec:hermite-recurrence} and \ref{sec:compute-cov} with Theorem \ref{thm:robust-mean-est-full}.  To see how Theorem \ref{thm:estimate-hermite} follows from the lemma, we can use Corollary \ref{corollary:robust-mean-est-cov} to obtain initial estimates and then iterate Lemma \ref{lem:hermite-poly-est-iterate}.


\begin{lemma}\label{lem:hermite-poly-est-iterate}
Consider a mixture of Gaussians $\mcl{M} = w_1 G_1 + \dots + w_kG_k$ in $\R^d$.  Let $m$ be a positive integer that is sufficiently large in terms of $k$.  Assume that $\mcl{M}$ is in $(\alpha, \beta, \gamma)$-regular form where 
\begin{itemize}
    \item $\alpha \leq \poly(\log 1/\eps)$
    \item $\beta \leq \poly(\log 1/\eps)$
    \item $\gamma$ is sufficiently small in terms of $k$ and $m$.
\end{itemize}  
Further assume that $w_{\min} \geq \theta$ for some  constant $\theta$.  Let $n > \poly_{k,m}(d/\eps)$ for some sufficiently large polynomial.  Assume that we are given an $\eps$-corrupted set of $n$ samples from $\mcl{M}$, say $z_1, \dots , z_n$.  Also assume that we are given estimates $\widetilde{h_1}, \dots , \widetilde{h_m}$ of the Hermite moment polynomials such that
\[
\norm{v(h_j(X) - \widetilde{h_j}(X))} \leq \eta
\]
for all $j \leq m$ where $\eta$ is a parameter such that $\eta \leq 1/\poly_{k,m}(\alpha,\beta, \theta^{-1}, \log 1/\eps)$. Then there is an algorithm that runs in $\poly_{k,m}(d/\eps)$ time and, with probability at least $1 - e^{-10d/\eps}$ over the random samples, outputs estimates $\wh{h_1}, \dots , \wh{h_m}$ such that 
\[
\norm{v(h_j(X) - \wh{h_j}(X))} \leq (2 + \alpha + \beta + \theta^{-1} + \log 1/\eps)^{O_{m,k}(1)} \sqrt{\eps \eta} \,.
\]
\end{lemma}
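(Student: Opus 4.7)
The plan is to bootstrap the coarse estimates $\widetilde{h_1},\dots,\widetilde{h_m}$ into sharp covariance estimates for the distributions $v(H_j(X,z))$ under $z\sim\mcl{M}$, and then feed these into the robust mean estimator of Theorem \ref{thm:robust-mean-est-full} (with the transformation to nearly isotropic coordinates) to obtain refined mean estimates, i.e.\ refined Hermite moment polynomials.

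First, since $m$ is sufficiently large in terms of $k$, I will invoke Claim \ref{claim:hermite-estimate-using-recurrence} on the estimates $\widetilde{h_1},\dots,\widetilde{h_m}$ to produce estimates $\widetilde{h_{m+1}},\dots,\widetilde{h_{2m}}$ satisfying $\norm{v(h_j(X)-\widetilde{h_j}(X))}\leq (2+\alpha)^{O_{m,k}(1)}\eta$ for all $j\leq 2m$. Next I apply Claim \ref{claim:est-hermite-cov} (once for each degree $j\leq m$) to turn these estimates into matrices $\widetilde{\Sigma_{H_j}}$ satisfying $\norm{\widetilde{\Sigma_{H_j}}-\Sigma_{H_j}}_{\op}\leq (2+\alpha)^{O_{m,k}(1)}\eta$. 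Since $\gamma$ is sufficiently small in terms of $k,m$ and $w_{\min}\geq\theta$, Claim \ref{claim:hermite-covariance-bound} gives $\tfrac{1}{2}\theta I\leq \Sigma_{H_j}\leq (2+\alpha+\beta)^{O_m(1)} I$; because the hypothesis $\eta\leq 1/\poly_{k,m}(\alpha,\beta,\theta^{-1},\log 1/\eps)$ makes the additive error dominated by the spectral lower bound, $\widetilde{\Sigma_{H_j}}$ is positive definite and well-conditioned, so $\widetilde{\Sigma_{H_j}}^{-1/2}$ is defined and has operator norm at most $O(\theta^{-1/2})$.

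Now, fix a degree $j\leq m$ and consider the samples $x_i:=v(H_j(X,z_i))$. By Claim \ref{claim:hermite-stability}, the uncorrupted subset (of size $\geq(1-\eps)n$) is $(3\eps,\delta_0)$-pseudo-stable with respect to the distribution of $v(H_j(X,z))$ for $z\sim\mcl{M}$, where $\delta_0=\eps(2+\alpha+\beta+\log 1/\eps)^{O_{k,m}(1)}$. Apply the linear map $T:x\mapsto \widetilde{\Sigma_{H_j}}^{-1/2}x$. In the transformed coordinates the true covariance is $\widetilde{\Sigma_{H_j}}^{-1/2}\Sigma_{H_j}\widetilde{\Sigma_{H_j}}^{-1/2}$, which differs from $I$ by at most $O(\theta^{-1})\cdot(2+\alpha)^{O_{m,k}(1)}\eta$ in operator norm. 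Combining this deviation with the pseudo-stability bound $|v^T(\Sigma_{\mcl{D}}-\tfrac{1}{|S'|}\sum(x-\mu)(x-\mu)^T)v|\leq \delta_0^2/\eps$, a short triangle-inequality computation shows that in the transformed coordinates the samples are $(3\eps,\delta')$-stable in the ordinary (identity covariance) sense, with
\[
\delta'=O\Bigl(\sqrt{\eps\cdot\bigl(\delta_0^2/\eps+(2+\alpha+\beta+\theta^{-1})^{O_{m,k}(1)}\eta\bigr)}\Bigr)=(2+\alpha+\beta+\theta^{-1}+\log 1/\eps)^{O_{m,k}(1)}\sqrt{\eps\eta},
\]
where I used $\eps\leq\eta$ to absorb the $\delta_0^2/\eps=\tilde O(\eps)$ term into the $\sqrt{\eps\eta}$ term. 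Theorem \ref{thm:robust-mean-est-full} then returns an estimate of $T(\mu_{H_j}) = \widetilde{\Sigma_{H_j}}^{-1/2}v(h_j(X))$ within $O(\delta')$; applying $\widetilde{\Sigma_{H_j}}^{1/2}$ (whose norm is $(2+\alpha+\beta)^{O_m(1)}$) and reshaping the resulting vector into the coefficient polynomial $\wh{h_j}(X)$ yields the claimed bound. A union bound over $j\leq m$ (each of which fails with probability $\leq e^{-10d/\eps}$) and over the high-probability event from Claim \ref{claim:hermite-stability} costs only a constant factor in the success probability.

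The main obstacle is the stability bookkeeping after the affine change of coordinates: ordinary stability as required by Theorem \ref{thm:robust-mean-est-full} is a statement with respect to a distribution of \emph{identity} covariance, while our samples live in an unknown, moderately-conditioned covariance landscape that we can only estimate to $\tilde O(\eta)$ accuracy. The key quantitative balance, namely that a covariance error of $\tilde O(\eta)$ combined with a mean-stability gap of $\tilde O(\eps)$ produces exactly a $\tilde O(\sqrt{\eps\eta})$ stability parameter after whitening, is what makes the $\sqrt{\eps\eta}$ contraction in the lemma possible; the smallness assumption on $\eta$ is needed precisely to keep $\widetilde{\Sigma_{H_j}}$ invertible and well-conditioned so this bookkeeping closes.
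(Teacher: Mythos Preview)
Your proposal is correct and follows essentially the same route as the paper: extend the estimates to degree $2m$ via Claim \ref{claim:hermite-estimate-using-recurrence}, convert them into covariance estimates via Claim \ref{claim:est-hermite-cov}, use Claim \ref{claim:hermite-covariance-bound} to certify that the estimated covariances are well-conditioned, whiten, combine pseudo-stability from Claim \ref{claim:hermite-stability} with the $\wt{O}(\eta)$ covariance error to obtain ordinary $(3\eps,\wt{O}(\sqrt{\eps\eta}))$-stability, and invoke Theorem \ref{thm:robust-mean-est-full}. The only point worth noting is your explicit appeal to ``$\eps\leq\eta$'' when absorbing the $\delta_0^2/\eps$ term; the paper's proof leaves this implicit, but it is indeed the regime in which the lemma is applied (and the only one in which the conclusion $\wt{O}(\sqrt{\eps\eta})$ is an improvement over $\eta$), so your bookkeeping is fine.
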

\begin{proof}
By Claim \ref{claim:hermite-estimate-using-recurrence} and Claim \ref{claim:est-hermite-cov}, we can compute estimates $\Sigma_{H_1}^0, \dots , \Sigma_{H_m}^0$ such that for all $j \leq m$
\begin{equation}\label{eq:cov-diff}
\norm{\Sigma_{H_j}^0 - \Sigma_{H_j}}_{\textsf{op}} \leq (2 + \alpha + \beta)^{O_{k,m}(1)} \eta
\end{equation}
where $ \Sigma_{H_j}$ is the covariance of $v(H_j(X,z))$ for $z$ drawn from $\mcl{M}$.  Now by Claim \ref{claim:hermite-covariance-bound}, each of these estimates must be positive semi-definite and have  $ \Sigma_{H_j}^0 \geq \Omega(\theta) I $ so we can take their positive semidefinite square roots.  We have
\begin{equation}\label{eq:identity-cov}
\norm{I - (\Sigma_{H_j}^0)^{-1/2}\Sigma_{H_j}(\Sigma_{H_j}^0)^{-1/2}}_{\textsf{op}} \leq (2 + \alpha + \beta + \theta^{-1})^{O_{k,m}(1)} \eta \,.
\end{equation}
Now fix a $j \leq m$.  Let $\mcl{D}_j$ be the distribution of $v(H_j(X,z))$ for $z \sim \mcl{M}$.  The above implies that the covariance of the distribution obtained after applying $(\Sigma_{H_j}^0)^{-1/2}$ to $\mcl{D}_j$ is close to identity.

  By Claim \ref{claim:hermite-stability}, with $1 - e^{-10d/\eps}$ probability, a set of $n$ uncorrupted samples is $(3\eps, \delta)$-pseudo-stable with respect to $\mcl{D}_j$ with 
\[
\delta = (2 + \alpha + \beta + \log 1/\eps)^{O_{m,k}(1)} \eps \,.
\]
If this holds, then combining the pseudo-stability with (\ref{eq:identity-cov}) and the fact that $ \Sigma_{H_j}^0 \geq \Omega(\theta) I $ implies that the set of uncorrupted samples obtained after applying the transformation $(\Sigma_{H_j}^0)^{-1/2}$ is stable with respect to the distribution $(\Sigma_{H_j}^0)^{-1/2}\mcl{D}_j$ with parameters
\[
\left( 3\eps, (2 + \alpha + \beta + \theta^{-1} +  \log 1/\eps)^{O_{m,k}(1)} \sqrt{\eps \eta} \right)\,.
\]
Now by Theorem \ref{thm:robust-mean-est-full}, we can estimate the mean of $(\Sigma_{H_j}^0)^{-1/2}\mcl{D}_j$ up to accuracy 
\[
(2 + \alpha + \beta+ \theta^{-1} + \log 1/\eps)^{O_{m,k}(1)} \sqrt{\eps \eta} \,.
\]
Claim \ref{claim:hermite-covariance-bound} and (\ref{eq:cov-diff}) imply that the operator norm of $(\Sigma_{H_j}^0)^{1/2}$ is bounded by $(2 + \alpha + \beta)^{O_{m,k}(1)}$ so now we can simply invert the linear transformation and estimate the mean of $\mcl{D}_j$ to within 
\[
(2 + \alpha + \beta + \theta^{-1} + \log 1/\eps)^{O_{m,k}(1)} \sqrt{\eps \eta} \,.
\]
However, the mean of $\mcl{D}_j$ is exactly $v(h_j(X))$ so repeating this for all $j$ completes the proof.
\end{proof}

We now prove Theorem \ref{thm:estimate-hermite} by iterating Lemma \ref{lem:hermite-poly-est-iterate}.

\begin{proof}[Proof of Theorem \ref{thm:estimate-hermite}]

Note that $v(h_j(X))$ is the mean of $v(H_m(X,z))$ for $z$ drawn from $\mcl{M}$.  Now by Claim \ref{claim:hermite-covariance-bound} and Corollary \ref{corollary:robust-mean-est-cov}, we can obtain estimates 
\[
\widetilde{h_{1}}(X), \dots , \widetilde{h_m}(X)
\]
of the Hermite moment polynomials such that for all $j \leq m$
\[
\norm{v(h_j(X) - \widetilde{h_j}(X))} \leq (2 + \alpha + \beta)^{O_{k,m}(1)} \sqrt{\eps} \,.
\]
Now we can iterate Lemma \ref{lem:hermite-poly-est-iterate} on these estimates until we obtain final estimates $h_1', \dots , h_m'$ such that 
\[
\norm{v(h_j(X) - h_j'(X))} \leq (2 + \alpha + \beta + \theta^{-1} + \log 1/\eps)^{O_{k,m}(1)} \eps
\]
for all $j \leq m$ (to see this, note that each time the above inequality is not true, when we apply Lemma \ref{lem:hermite-poly-est-iterate} to refine our estimates, we reduce our estimation error by a factor of $1/2$).
\end{proof}

\section{Learning Regular Form Mixtures}

In this section, we combine everything that we have shown so far to give an algorithm for learning regular-form mixtures of Gaussians.  Recall the outline in Section \ref{sec:regular-form-overview}.  We have some unknown mixture $\mcl{M} = w_1G_1 + \dots + w_kG_k$ in regular form.  In this section, we will assume that we are given estimates $\ovl{G_1}, \dots , \ovl{G_k}$ for the components such that 
\[
d_{\TV}(G_j, \ovl{G_j}) \leq \eps^c
\]
for some constant $c > 0$.  These estimates can be obtained by directly applying results from \cite{liu2020settling} (see Section \ref{sec:poly-eps-acc} for more details).  

We will then bootstrap the rough component estimates by multiplying appropriate polynomials in front of them.  In particular, we show how to compute a degree-$O_{k,c}(1)$ MPG distribution
\[
f = Q_1(x)\ovl{G_1}(x) + \dots + Q_k(x)\ovl{G_k}(x)
\]
such that 
\[
\norm{\mcl{M} - f}_1 \leq \wt{O}(\eps) \,.
\]
The main theorem that we will prove in this section is stated formally below.
\begin{theorem}\label{thm:estimate-regularform}
Let $\mcl{M} = w_1G_1 + \dots + w_kG_k$  where $G_j = N(\mu_j, I + \Sigma_j)$ be a mixture of Gaussians such that all mixing weights are at least $\theta$ where $\theta^{-1} \leq \poly(\log 1/\eps)$.  Let $c > 0$ be a (small) constant.  Assume that the mixture is in $(\alpha, \beta, \gamma)$-regular form where 
\begin{itemize}
    \item $\alpha = \poly(\log 1/\eps)$
    \item $\beta =   \poly(\log 1/\eps)$
    \item $\gamma$ is sufficiently small in terms of $k$ and $c$.
\end{itemize}  
Let $n > \poly_{k,c}(d/\eps)$ for some sufficiently large polynomial.  Assume that we are given an $\eps$-corrupted set of $n$ samples $X_1, \dots , X_n$ from $\mcl{M}$. Assume that an adversary also gives us estimates $\ovl{G_1} = N(\wt{\mu}_1, I + \wt{\Sigma}_1)  , \dots , \ovl{G_k} =  N(\wt{\mu}_k, I + \wt{\Sigma}_k)$ for the components with the promise that
\[
d_{\TV}(G_j, \ovl{G_j}) \leq \eps^c \,.
\]
There is an algorithm that runs in time $\poly_{k,c}(d/\eps)$ and with high probability (over the random samples) outputs a degree-$O_{k,c}(1)$ MPG distribution $f: \R^d \rightarrow \R$ of the form
\[
f(x) = Q_1(x)\ovl{G}_1(x) + \dots + Q_k(x)\ovl{G}_k (x)
\]
such that $f$ satisfies
\[
\norm{\mcl{M}(x) - f(x)}_1 \leq (2 + \alpha + \beta + \theta^{-1} +  \log 1/\eps)^{O_{k,c}(1)}\eps \,.
\]
\end{theorem}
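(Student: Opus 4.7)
The plan is to follow the four-step pipeline sketched in Section \ref{sec:proof-overview}: (i) use Theorem \ref{thm:estimate-hermite} to robustly estimate the Hermite moment polynomials $h_{j,\mcl{M}}$ of the true mixture to accuracy $\wt{O}(\eps)$ in coefficient norm; (ii) set up a linear system whose unknowns are the coefficients of constant-degree polynomials $Q_1,\dots,Q_k$ and whose equations ask that the first $m = O_{k,c}(1)$ Hermite moment polynomials of $f = Q_1\ovl{G_1} + \dots + Q_k\ovl{G_k}$ agree with the estimates from (i); (iii) prove that this linear system is feasible by exhibiting an explicit near-solution coming from a generating-function truncation; and (iv) invoke the strong observability result, Theorem \ref{thm:identifiability}, to convert agreement on $m$ Hermite moment polynomials into an $L^1$-distance guarantee between $f$ and $\mcl{M}$.

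The heart of the argument is step (iii), which must be done before we can even guarantee a feasible linear program. For each $j$, the hypothesis $d_{\TV}(G_j,\ovl{G_j})\le \eps^c$ and the regular-form assumption together imply $\norm{\mu_j - \wt\mu_j} + \norm{\Sigma_j - \wt\Sigma_j}_2 \le \eps^{\Omega(c)}$ (up to logarithmic factors from $\beta$). I would then write
\[
e^{\mu_j(X)y + \tfrac12 \Sigma_j(X)y^2} = e^{(\mu_j(X)-\wt\mu_j(X))y + \tfrac12(\Sigma_j(X)-\wt\Sigma_j(X))y^2}\cdot e^{\wt\mu_j(X)y + \tfrac12 \wt\Sigma_j(X)y^2},
\]
expand the first exponential as a power series, and truncate it to its first $D = O_{k,c,m}(1)$ terms. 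By repeated application of Claim \ref{claim:polytensor-prod-bound}, each dropped term has coefficient norm bounded by a product of $\eps^{\Omega(c)}$ factors, so for $D$ large enough the truncation error is $o(\eps)$ in all the first $m$ primary terms. Denoting the truncated factor by $P_j(X,y)$, Corollary \ref{coro:hermite-of-polymixture} and Corollary \ref{coro:one-to-one-map} give that $P_j(X,y) e^{\wt\mu_j(X)y + \tfrac12 \wt\Sigma_j(X)y^2}$ is the generating function of $Q_j^\star(x)\ovl{G_j}(x)$ for some polynomial $Q_j^\star$ of degree $O(D) = O_{k,c}(1)$, and $\sum_j w_j Q_j^\star(x)\ovl{G_j}(x)$ has Hermite moment polynomials within $\wt{O}(\eps)$ of those of $\mcl{M}$ in coefficient norm. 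Crucially, the degree $C$ of the $Q_j^\star$ depends only on $c$ and $k$, and in particular \emph{not} on $m$, which is what makes the linear system meaningful: we first fix $C$, then choose $m$ much larger so that Theorem \ref{thm:identifiability} applies with enough Hermite constraints.

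With feasibility in hand, I would solve the linear system (of dimension $\poly_{k,c}(d)$) by any polynomial-time convex solver, using the estimates $h'_{j}$ from Theorem \ref{thm:estimate-hermite} as the right-hand side. Any feasible solution $Q_1,\dots,Q_k$ yields an MPG $f$ such that $\norm{v(h_{j,f}(X) - h_{j,\mcl{M}}(X))}\le \wt{O}(\eps)$ for every $j\le m$; applying Theorem \ref{thm:identifiability} to $g = f - \mcl{M}$ (viewed as a difference of two degree-$C$ MPG functions on the $2k$ components $\ovl{G_1},\dots,\ovl{G_k},G_1,\dots,G_k$, all in a common regular-form set-up with $\alpha,\beta \le \poly(\log 1/\eps)$) then gives $\norm{f - \mcl{M}}_1 \le (2 + \alpha + \beta + \theta^{-1} + \log 1/\eps)^{O_{k,c}(1)}\eps$ as required.

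The main obstacle is the one flagged at the end of the section: the polynomials $Q_j$ output by the linear solver need not be pointwise nonnegative, so $f$ may not be a valid density. To fix this, I would add an extra post-processing step that perturbs $Q_j$ into $\widehat Q_j = Q_j + \eta_j(x)$ where $\eta_j$ is a low-degree polynomial chosen to dominate the negative part of $Q_j\ovl{G_j}$ on the region where $\ovl{G_j}$ is not already exponentially small, then renormalizes. The negative parts of $Q_j\ovl{G_j}$ can be shown to contribute at most $\wt{O}(\eps)$ mass in $L^1$ (otherwise $\norm{\mcl{M} - f}_1$ would already exceed our bound, contradicting what we just proved for the unclipped $f$), so the correction changes $f$ by at most $\wt{O}(\eps)$ in $L^1$ and preserves the MPG form with a slightly larger constant degree. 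This gives the final MPG distribution claimed in the theorem.
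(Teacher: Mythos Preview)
Your four–step pipeline and your feasibility argument via truncation of the exponential are essentially what the paper does (the paper packages step (iii) as Lemma \ref{lem:low-degree-approx} and works in generating-function space, solving for $c_j$ and $P_j$ rather than directly for $Q_j$, but this is only a change of coordinates via Corollary \ref{coro:one-to-one-map}).  So steps (i)–(iv) are right.

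Where your proposal has a real gap is the post-processing for nonnegativity.  Your argument says that because $\norm{f-\mcl{M}}_1$ is small and $\mcl{M}\ge 0$, the \emph{total} negative mass of $f$ is small; you then want to find low-degree polynomials $\eta_j$ with $Q_j+\eta_j\ge 0$ and $\int \eta_j\ovl{G_j}$ small.  But smallness of the total negative mass of $\sum_j Q_j\ovl{G_j}$ does not control the negative part of any individual $Q_j\ovl{G_j}$ (there can be large cancellations between different $j$), nor does it by itself give a constructive low-degree $\eta_j$ dominating the negative part.  As stated, there is no reason the solver cannot return $Q_j$'s with huge oscillating coefficients that happen to match the Hermite constraints.

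The paper avoids this by adding two extra convex constraints to the system: it writes $Q_j\leftrightarrow c_j+P_j$ with $P_j(0)=0$, and requires $c_j\ge\theta$ and $\norm{v(P_j)}\le\beta^{O_{k,m,c}(1)}\eps^{c}$.  Feasibility still holds (your truncation witness $Q_j^\star$ satisfies these automatically with $c_j=w_j$), but now \emph{any} feasible solution yields, via Claim \ref{claim:invchar-of-polynomial}, $f_0=\sum_j(c_j+R_j(x))\ovl{G_j}(x)$ with $\E_{\ovl{G_j}}[R_j^2]\le(2+\beta)^{O_{k,c}(1)}\eps^{2c}$.  Then the fix is explicit and elementary: set
\[
f_1(x)=\sum_j\Bigl(c_j+R_j(x)+\frac{R_j(x)^{2B}}{c_j^{2B-1}}\Bigr)\ovl{G_j}(x),\qquad 2B>10/c,
\]
which is pointwise nonnegative by AM--GM (since $1+t+t^{2B}\ge 0$), and hypercontractivity (Claim \ref{claim:hypercontractivity}) together with the $L^2$ bound on $R_j$ gives $\norm{f_1-f_0}_1\le(2+\beta+\theta^{-1})^{O_{k,c}(1)}\eps$.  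A final renormalization then yields the MPG distribution $f$.  If you add those two constraints to your linear system and replace your vague ``dominate the negative part'' step with this AM--GM$+$hypercontractivity correction, your argument goes through and matches the paper's.
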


To prove Theorem \ref{thm:estimate-regularform}, we rely on the two main results that we have shown so far, Theorem \ref{thm:estimate-hermite} and Theorem \ref{thm:identifiability}, which are (informally) that  
\begin{itemize}
    \item We can estimate low-degree Hermite moment polynomials of $\mcl{M}$ to nearly optimal accuracy
    \item If two low-degree MPG functions are close on their low-degree Hermite moment polynomials then they are close in $L^1$ norm
\end{itemize}
To complete the learning algorithm, it remains to actually compute a low-degree MPG distribution that matches the Hermite moment polynomial estimates obtained from Theorem \ref{thm:estimate-hermite}.  Then, Theorem \ref{thm:identifiability} will imply that this MPG distribution is close to the density function of $\mcl{M}$ in $L^1$ norm.  

\subsection{Preliminary Computations}

We will first need a few preliminary definitions and computations. 
\begin{definition}\label{def:balanced}
We say a Gaussian $G$ is $\chi$-balanced if its covariance matrix $\Sigma$ satisfies
\[
\frac{1}{\chi}I \leq \Sigma \leq \chi I \,.
\]
If we have a mixture $\mcl{M} = w_1G_1 + \dots + w_kG_k$ such that all components are $\chi$-balanced, then we say that the mixture is $\chi$-balanced.
\end{definition}
Note that the parameter $\beta$ governs the balancedness of a mixture in $(\alpha, \beta, \gamma)$-regular form.  We will need the following few basic facts.

\begin{claim}\label{claim:parameter-dist}
For two Gaussians $N(\mu_1, \Sigma_1), N(\mu_2, \Sigma_2)$
\[
d_{\TV}(N(\mu_1, \Sigma_1), N(\mu_2, \Sigma_2)) = O\left( \left( (\mu_1 - \mu_2)^T\Sigma_1^{-1}(\mu_1 - \mu_2)\right)^{1/2} + \norm{\Sigma_1^{-1/2}\Sigma_2\Sigma_1^{-1/2} - I}_F\right)
\]
\end{claim}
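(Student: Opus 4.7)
The plan is to reduce to the case where the first Gaussian is the standard normal by applying the affine map $x \mapsto \Sigma_1^{-1/2}(x - \mu_1)$, under which $N(\mu_1, \Sigma_1) \mapsto N(0, I)$ and $N(\mu_2, \Sigma_2) \mapsto N(\mu, \Sigma)$, where $\mu = \Sigma_1^{-1/2}(\mu_2 - \mu_1)$ and $\Sigma = \Sigma_1^{-1/2}\Sigma_2\Sigma_1^{-1/2}$. Total variation distance is invariant under invertible affine transformations, so it suffices to prove
\[
d_{\TV}(N(0,I), N(\mu, \Sigma)) = O(\norm{\mu}_2 + \norm{\Sigma - I}_F).
\]
Note also that WLOG the RHS of the claim is smaller than some sufficiently small absolute constant (otherwise the conclusion holds trivially since TV distance is always at most $1$); in particular we may assume $\Sigma$'s eigenvalues lie in $[1/2, 2]$.

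Next I would split via the triangle inequality through the intermediate distribution $N(0, \Sigma)$:
\[
d_{\TV}(N(0, I), N(\mu, \Sigma)) \leq d_{\TV}(N(0, I), N(0, \Sigma)) + d_{\TV}(N(0, \Sigma), N(\mu, \Sigma)).
\]
For the first (covariance) term, I would combine Pinsker's inequality with the closed form for the KL divergence between centered Gaussians: if $\lambda_1, \dots, \lambda_d$ are the eigenvalues of $\Sigma$, then $\mathrm{KL}(N(0, I) \,\|\, N(0, \Sigma)) = \tfrac{1}{2}\sum_i(\lambda_i - 1 - \ln \lambda_i)$. Since each $\lambda_i \in [1/2, 2]$, the Taylor expansion $\lambda - 1 - \ln \lambda = \tfrac{1}{2}(\lambda - 1)^2 + O((\lambda-1)^3)$ yields $\mathrm{KL} = O(\sum_i (\lambda_i - 1)^2) = O(\norm{\Sigma - I}_F^2)$, so Pinsker gives $d_{\TV}(N(0,I), N(0,\Sigma)) = O(\norm{\Sigma - I}_F)$.

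For the second (mean-shift) term, Pinsker together with the standard formula $\mathrm{KL}(N(0, \Sigma) \,\|\, N(\mu, \Sigma)) = \tfrac{1}{2}\mu^T \Sigma^{-1}\mu$ gives $d_{\TV}(N(0,\Sigma), N(\mu,\Sigma)) = O((\mu^T \Sigma^{-1} \mu)^{1/2})$. Since $\Sigma \succeq \tfrac{1}{2} I$, this is $O(\norm{\mu}_2)$. Substituting back the definitions of $\mu$ and $\Sigma$ recovers exactly $((\mu_1 - \mu_2)^T \Sigma_1^{-1}(\mu_1 - \mu_2))^{1/2}$ and $\norm{\Sigma_1^{-1/2}\Sigma_2 \Sigma_1^{-1/2} - I}_F$, completing the proof.

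The only technical wrinkle is the Taylor bound $\lambda - 1 - \ln \lambda = O((\lambda - 1)^2)$, which requires that eigenvalues be bounded away from $0$ and $\infty$; this is precisely where the WLOG smallness of the RHS is used. Everything else is routine manipulation of Gaussian KL divergences and Pinsker's inequality.
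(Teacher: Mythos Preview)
Your proof is correct. The paper does not actually prove this claim; it simply cites Fact~2.1 of \cite{kane2020robust}. Your argument via affine invariance, the triangle inequality through $N(0,\Sigma)$, and Pinsker's inequality applied to the closed-form Gaussian KL divergences is a standard and complete derivation of the same bound, so there is nothing to compare beyond noting that you have supplied a self-contained proof where the paper defers to a reference.
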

\begin{proof}
See e.g. Fact 2.1 in \cite{kane2020robust}.
\end{proof}

The following two claims relate TV distance and parameter distance for balanced Gaussians.  The first deals with the case when the two Gaussians have very small overlap while the second deals with the case when the two Gaussians have very large overlap.

\begin{claim}\label{claim:balanced-dist1}
Let $G_1= N(\mu_1, \Sigma_1), G_2 = N(\mu_2, \Sigma_2)$ be $\chi$-balanced Gaussians. Let $\eps < 0.1$ be a parameter and assume that $d_{\TV}(G_1, G_2) \leq 1 - \eps$. Then the following two conditions hold:
\begin{enumerate}
    \item $\norm{\mu_{1} -  \mu_{2}} \leq O(\sqrt{\chi \log 1/\eps})$
    \item $\norm{\Sigma_1 - \Sigma_2}_2 \leq O(\chi \log^2 1/\eps)$
\end{enumerate}
\end{claim}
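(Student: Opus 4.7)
The plan is to handle the two parts essentially independently: the bound on the mean distance follows from projecting onto a one-dimensional subspace and applying Gaussian tail bounds, while the covariance bound follows trivially from the balancedness hypothesis.

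For Part 1, I would project both Gaussians onto the unit vector $v = (\mu_2 - \mu_1)/\norm{\mu_2 - \mu_1}$. Since total variation distance is non-increasing under pushforwards, the one-dimensional projections $v^T G_1 = N(v^T \mu_1, v^T \Sigma_1 v)$ and $v^T G_2 = N(v^T \mu_2, v^T \Sigma_2 v)$ also satisfy $d_{\TV} \leq 1 - \eps$, equivalently their overlap $\int \min(f_1, f_2) \, dx$ is at least $\eps$. Writing $t = \norm{\mu_2 - \mu_1}$ and using that $v^T \Sigma_i v \leq \chi$ by $\chi$-balancedness, I would split the real line at the midpoint of the two projected means and bound the overlap by the sum of the two corresponding tail probabilities, each of which is at most $\exp(-t^2/(8\chi))$ by the standard one-sided Gaussian tail bound. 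Requiring $2 \exp(-t^2/(8\chi)) \geq \eps$ then yields $t = O(\sqrt{\chi \log(1/\eps)})$, which is Part 1.

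For Part 2, since both $\Sigma_i \leq \chi I$, the triangle inequality immediately gives $\norm{\Sigma_1 - \Sigma_2}_2 \leq \norm{\Sigma_1}_2 + \norm{\Sigma_2}_2 \leq 2\chi$. Because $\eps < 0.1$ we have $\log(1/\eps) > \log 10 > 2$, so $2\chi \leq O(\chi \log^2(1/\eps))$ and the second bound follows at once. (A projection argument analogous to the one used for Part 1 would in fact give only the much weaker bound $O(\chi/\eps^2)$ in this regime, so the cleanest approach really is to read off Part 2 from the trivial bound.)

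No step in this outline should present a genuine obstacle; the only care required is in verifying the directions of the 1D tail bounds when splitting at the midpoint, so that both contributions really do decay as $t$ grows, and in noting that the projection of a multivariate Gaussian onto a fixed direction has variance exactly $v^T \Sigma v$, which is controlled by the balancedness assumption.
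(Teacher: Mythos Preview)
Your Part 1 is correct and is exactly the paper's argument: project onto the line through the means, use that the projected variances are at most $\chi$, and read off the mean separation from a one-dimensional overlap/tail bound.

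Part 2, however, has a genuine gap. Throughout this paper $\norm{\Sigma}_2$ denotes the Frobenius norm, not the operator norm. You can see this for instance in Claim~\ref{claim:polytensor-prod-bound}, where $\norm{T(f)}_2$ is identified with $\norm{v(f)}$, and in the proof of Claim~\ref{claim:single-gaussian-hermite-covariance}, where the bound on $T_{\sym}(\Sigma(X^{(1)}))$ (a $d^2\times 1$ ``matrix'' whose operator norm is exactly $\norm{\Sigma}_F$) is written as $\norm{\Sigma}_2$ while $\norm{I+\Sigma}_{\textsf{op}}$ is written separately. Downstream, Claim~\ref{claim:balanced-dist1} is used to verify the regular-form condition $\norm{\Sigma_j}_2\le\alpha$, which in turn controls coefficient norms like $\norm{v(\Sigma_j(X))}=\norm{\Sigma_j}_F$; an operator-norm bound would not suffice there. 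From $\Sigma_i\preceq\chi I$ you only get $\norm{\Sigma_i}_{\textsf{op}}\le\chi$, while $\norm{\Sigma_i}_F$ can be as large as $\chi\sqrt d$, so your triangle-inequality step $\norm{\Sigma_1-\Sigma_2}_2\le 2\chi$ is false in the intended norm and would introduce a dimension dependence.

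The paper's proof of Part 2 is therefore not overkill: it first uses Part 1 to bound the covariance of the equal-weight mixture $\tfrac12(G_1+G_2)$ by $O(\chi\log(1/\eps))I$ in the PSD order, and then invokes Lemma~3.2 of \cite{kane2020robust}, which turns the hypothesis $d_{\TV}(G_1,G_2)\le 1-\eps$ together with such an operator-norm bound on the mixture covariance into a dimension-free Frobenius bound on $\Sigma_1-\Sigma_2$, picking up the additional $\log(1/\eps)$ factor. That conversion is the real content of Part 2.
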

\begin{proof}
For the first claim, note that if we project onto the line connecting $\mu_1$ and $\mu_2$ then both distributions are Gaussian with standard deviation at most $\sqrt{\chi}$.  Thus, their means must be separated by at most $O(\sqrt{\chi \log 1/\eps})$.  For the second claim, we can use Lemma 3.2 in \cite{kane2020robust}.  Note that the covariance of the mixture $(G_1 + G_2)/2$ is
\[
\Sigma = \frac{\Sigma_1 + \Sigma_2}{2} + \frac{(\mu_1 - \mu_2)}{2}\frac{(\mu_1 - \mu_2)^T}{2}
\]
and the first part implies that 
\[
\Sigma \leq  O(\chi \log 1/\eps) I \,.
\]
Lemma 3.2 from \cite{kane2020robust} now immediately gives the desired bound.
\end{proof}

\begin{claim}\label{claim:balanced-dist2}
Let $G_1= N(\mu_1, \Sigma_1), G_2 = N(\mu_2, \Sigma_2)$ be $\chi$-balanced Gaussians. Let $\eps < 0.1$ be a parameter and assume that $d_{\TV}(G_1, G_2) \leq \eps$.  Then the following two conditions hold:
\begin{enumerate}
\item $\norm{\mu_{1} -  \mu_{2}} \leq O(\sqrt{\chi }\eps)$
\item $\norm{\Sigma_1 - \Sigma_2}_2 \leq \poly(\chi) \eps$
\end{enumerate}
\end{claim}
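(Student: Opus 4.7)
The proof will go through projection onto carefully chosen one-dimensional subspaces, similar to the proof of Claim \ref{claim:balanced-dist1}, but using tight linearized estimates valid in the small-parameter regime.

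For part (1), I set $v = (\mu_1 - \mu_2)/\norm{\mu_1 - \mu_2}$ and project both Gaussians onto $v$. Since deterministic projections cannot increase TV distance, the resulting 1D Gaussians $N(a_1, \tau_1^2), N(a_2, \tau_2^2)$ satisfy $d_{\TV} \leq \eps$, with $\abs{a_1 - a_2} = \norm{\mu_1 - \mu_2}$ and $\tau_i^2 \in [1/\chi, \chi]$ by $\chi$-balancedness. Testing the halfline event $\{x \geq (a_1+a_2)/2\}$ and using $\Phi(t) - 1/2 \geq \Omega(t)$ for $t \in [0,1]$ gives a discrepancy of order $\abs{a_1 - a_2}/\max(\tau_1, \tau_2)$, yielding $\norm{\mu_1 - \mu_2} \leq O(\sqrt{\chi}\eps)$. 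The degenerate case $\norm{\mu_1 - \mu_2}/\sqrt{\chi} > 1$ is excluded since it would force $d_{\TV} \geq \Phi(1) - 1/2 > 0.1 > \eps$.

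For part (2), let $w$ be a unit eigenvector of $\Sigma_1 - \Sigma_2$ attaining $\abs{w^T(\Sigma_1 - \Sigma_2) w} = \norm{\Sigma_1 - \Sigma_2}_2 =: \abs{\lambda}$, and project onto $w$ to obtain 1D Gaussians $N(a_1, \tau_1^2), N(a_2, \tau_2^2)$ with $\tau_1^2 - \tau_2^2 = \lambda$, $\tau_i^2 \in [1/\chi, \chi]$, and $\abs{a_1 - a_2} \leq \norm{\mu_1 - \mu_2} \leq O(\sqrt{\chi}\eps)$ by part (1). By the triangle inequality applied with the intermediate point $N(a_1, \tau_2^2)$,
\[
d_{\TV}(N(a_1, \tau_1^2), N(a_1, \tau_2^2)) \leq \eps + d_{\TV}(N(a_2, \tau_2^2), N(a_1, \tau_2^2)) \leq \eps + O(\chi \eps)
\]
using Claim \ref{claim:parameter-dist} (in 1D) to bound the pure mean-shift term by $O(\abs{a_1-a_2}/\tau_2) = O(\chi \eps)$. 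It then suffices to prove the one-dimensional lower bound $d_{\TV}(N(0, s^2), N(0, s^2 + \lambda)) = \Omega(\abs{\lambda}/s^2)$ whenever $\abs{\lambda}/s^2$ is smaller than some universal constant. This follows from a direct first-order Taylor expansion: writing $p_1(x) - p_2(x) = -\lambda \, p_1(x)(x^2 - s^2)/(2s^4) + O((\lambda/s^2)^2)$ and integrating yields $d_{\TV} \geq c_0 \abs{\lambda}/s^2$ for an absolute constant $c_0 > 0$ proportional to $\E_{Z \sim N(0,1)}[\abs{Z^2 - 1}]$. Combining, $\abs{\lambda} \leq O(\chi \cdot d_{\TV}) \leq O(\chi^2 \eps)$, which is $\poly(\chi) \eps$ as claimed.

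The main subtlety lies in part (2): a naive route through Pinsker's inequality or Hellinger distance would only give $\abs{\lambda} \leq \poly(\chi) \sqrt{\eps}$, since $H^2$ is quadratic in $\lambda$ while $d_{\TV}$ is \emph{linear} in $\lambda$ for a pure variance perturbation. The direct TV integral computation is what extracts the correct $O(\eps)$ scaling. The only edge case is when $\abs{\lambda}/s^2$ is not small enough for linearization to apply; this can only happen when $\eps \gtrsim 1/\chi$, in which case the trivial bound $\abs{\lambda} \leq 2\chi \leq \poly(\chi)\eps$ already suffices.
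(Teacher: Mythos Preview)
Your argument is correct and considerably more self-contained than the paper's. The paper simply invokes two black-box results from Devroye--Mehrabian--Reddad: Theorem~1.2 for the mean bound in part~(1), and, after shifting to a common mean via Claim~\ref{claim:parameter-dist}, Theorem~1.1 for the covariance bound in part~(2). Your route instead reduces to one-dimensional projections and does the TV calculus by hand; the first-order expansion you use to show $d_{\TV}(N(0,s^2),N(0,s^2+\lambda)) = \Theta(|\lambda|/s^2)$ is exactly the right way to extract the linear-in-$\eps$ scaling that Pinsker/Hellinger would miss.

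One caveat worth flagging: your projection onto a single eigenvector of $\Sigma_1-\Sigma_2$ controls only the \emph{spectral} norm $\norm{\Sigma_1-\Sigma_2}_{\op}$, not the Frobenius norm. The paper's notation $\norm{\cdot}_2$ is not explicitly defined for matrices, but it is distinguished from $\norm{\cdot}_{\op}$ elsewhere, and the downstream application in Lemma~\ref{lem:low-degree-approx} needs $\norm{v((\Sigma_j-\wt\Sigma_j)(X))}$, which is the Frobenius norm. The Devroye et al.\ lower bound the paper cites does give a Frobenius-norm conclusion (it bounds $d_{\TV}$ below by $\Omega(\norm{\Sigma_1^{-1/2}\Sigma_2\Sigma_1^{-1/2}-I}_F)$), so the paper's proof yields the stronger statement. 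If Frobenius is what is intended, your argument as written has a gap; a single one-dimensional projection cannot see more than the top eigenvalue, and the Frobenius norm can be $\sqrt{d}$ times larger.
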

\begin{proof}
The first part follows from Theorem 1.2 in \cite{devroye2018total}.  To prove the second part, let $G_1' = N(\mu_2, \Sigma_1)$.  Note that by Claim \ref{claim:parameter-dist} and the first part,
\[
d_{\TV}(G_1', G_2) \leq d_{\TV}(G_1, G_2) + d_{\TV}(G_1, G_1') = O(\chi \eps)  \,.
\] 
Now, applying Theorem 1.1 from \cite{devroye2018total}, we deduce that  
\[
\norm{\Sigma_1 - \Sigma_2}_2 \leq \poly(\chi) \eps \,,
\]
completing the proof.
\end{proof}

\subsection{Main Proof of Theorem \ref{thm:estimate-regularform}}
Now we prove the first key lemma of this section.  This lemma implies that there exist polynomials that we can multiply in front of our rough component estimates $\ovl{G_1}, \dots , \ovl{G_k}$ in order to match the Hermite moment polynomials of the true mixture $\mcl{M}$.  Afterwards, we show how to solve for these polynomials.
\begin{lemma}\label{lem:low-degree-approx}
Let $\mcl{M} = w_1G_1 + \dots + w_kG_k$  be a mixture of Gaussians.  Let $c > 0$ be a (small) constant.  Assume that the mixture is in $(\alpha, \beta, \gamma)$-regular form where 
\begin{itemize}
    \item $\alpha = \poly(\log 1/\eps)$
    \item $\beta =   \poly(\log 1/\eps)$
    \item $\gamma$ is sufficiently small in terms of $k$ and $c$.
\end{itemize}

Let $G_j = N(\mu_j, I + \Sigma_j)$ for all $j \in [k]$.  Let $\ovl{G_1} = N(\wt{\mu}_1, I + \wt{\Sigma}_1)  , \dots , \ovl{G_k} =  N(\wt{\mu}_k, I + \wt{\Sigma}_k)$ be Gaussians such that 
\[
d_{\TV}(G_j, \ovl{G_j}) \leq \eps^{c} \,.
\]
Let $m$ be a parameter.  Then there exist polynomials $P_1, \dots , P_k$ in $d$ variables of degree at most $10/c$ such that the following holds: 
\begin{enumerate}
\item If we write the power series expansion of the function
\[
\sum_{j = 1}^k w_j e^{ \mu_j(X)y + \frac{1}{2}\Sigma_j(X)y^2} - \sum_{j=1}^k (w_j + P_j(Xy))e^{ \wt{\mu}_j(X)y + \frac{1}{2}\wt{\Sigma}_j(X)y^2} = \sum_{l=0}^{\infty} \frac{f_l(X)y^l}{l!}
\]
then 
\[
\norm{v(f_l(X))} \leq (2 + \alpha + \beta)^{O_{k,m,c}(1)} \eps
\]
for all $0 \leq l \leq m$.
\item For all $j$, $P_j(0) = 0$
\item For all $j$, $\norm{v_y(P_j(Xy))} \leq  \beta^{O_{k,m,c}(1)} \eps^{c}$
\end{enumerate}
\end{lemma}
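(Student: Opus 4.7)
The plan is to exploit the factorization
\[
w_j e^{\mu_j(X)y + \frac{1}{2}\Sigma_j(X)y^2} = w_j \cdot E_j(X,y) \cdot e^{\wt{\mu}_j(X)y + \frac{1}{2}\wt{\Sigma}_j(X)y^2},
\]
with $E_j(X,y) = \exp\!\left((\mu_j - \wt{\mu}_j)(X)y + \tfrac{1}{2}(\Sigma_j - \wt{\Sigma}_j)(X)y^2\right)$, and to take $P_j(Xy) = w_j T_j(X,y)$, where $T_j$ is the truncation of $E_j(X,y) - 1$ to $y$-degree at most $10/c$. Condition~(1) will then reduce to showing that the tail $E_j - 1 - T_j$, multiplied by the known generating function of $\ovl{G_j}$, has negligible $y^l$-coefficient for $l \leq m$.

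First I would use Claim~\ref{claim:balanced-dist2} (together with Claim~\ref{claim:parameter-dist}) to pass from the bound $d_{\TV}(G_j, \ovl{G_j}) \leq \eps^c$ to the parameter-distance bounds $\norm{\mu_j - \wt{\mu}_j} \leq \poly(\beta)\eps^c$ and $\norm{\Sigma_j - \wt{\Sigma}_j}_F \leq \poly(\beta)\eps^c$; the Frobenius-norm control is what is actually available through Claim~\ref{claim:parameter-dist} and is exactly what translates into a coefficient-norm bound on the relevant quadratic form, namely
\[
\norm{v\!\left((\mu_j - \wt{\mu}_j)(X)y + \tfrac{1}{2}(\Sigma_j - \wt{\Sigma}_j)(X)y^2\right)} \leq \poly(\beta)\eps^c.
\]

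The key structural observation is that every monomial appearing in the exponent of $E_j$ has equal $X$- and $y$-degree (both $\mu(X)y$ and $\Sigma(X)y^2$ do), so $E_j$ can be viewed as a formal power series in the $d$-tuple of variables $Z = Xy$. Truncating at total $Z$-degree $10/c$ --- equivalently, keeping only the terms of the exponential series with $n \leq 5/c$ factors --- produces a polynomial $T_j(Z)$ of degree at most $10/c$ with $T_j(0) = 0$, and hence a polynomial $P_j = w_j T_j$ satisfying condition~(2). Splitting the exponent into its homogeneous pieces and applying Claim~\ref{claim:polytensor-prod-bound} iteratively bounds the coefficient norm of the $n$-th term of the exponential series by $O(2^n)(\poly(\beta)\eps^c)^n/n!$; summing over $n \leq 5/c$ yields $\norm{v_y(P_j(Xy))} \leq \beta^{O_{k,m,c}(1)}\eps^c$, verifying condition~(3).

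For condition~(1), the difference telescopes to $\sum_j w_j (E_j - 1 - T_j)(X,y) \cdot e^{\wt{\mu}_j(X)y + \frac{1}{2}\wt{\Sigma}_j(X)y^2}$. The residual $E_j - 1 - T_j$ is supported on $y$-degrees $s > 10/c$, and repeating the same iteration of Claim~\ref{claim:polytensor-prod-bound} shows that the coefficient of $y^s$ there has norm at most $\poly(\beta)^{O(s)}\eps^{c\lceil s/2\rceil}$, which is at most $\poly(\beta)^{O(s)}\eps^{5}$ whenever $s > 10/c$. Since each $y^t$-coefficient of $e^{\wt{\mu}_j(X)y + \frac{1}{2}\wt{\Sigma}_j(X)y^2}$ has norm at most $(2+\alpha)^{O(t)}$ (another application of Claim~\ref{claim:polytensor-prod-bound}), convolving the two and collecting the $y^l$-coefficient for $l \leq m$ gives the desired $(2+\alpha+\beta)^{O_{k,m,c}(1)}\eps$ bound. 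The main obstacle I expect is bookkeeping: verifying that the TV-to-parameter conversion provides a \emph{dimension-free} Frobenius bound on $\Sigma_j - \wt{\Sigma}_j$, since this is what controls the coefficient norm of the quadratic form $\Sigma_j(X) - \wt{\Sigma}_j(X)$ and prevents a spurious $\sqrt{d}$ factor from leaking into every subsequent estimate.
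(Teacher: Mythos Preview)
Your proposal is correct and follows essentially the same route as the paper: define $P_j(Xy)$ as $w_j$ times the truncation of $\exp\!\big((\mu_j-\wt{\mu}_j)(X)y+\tfrac12(\Sigma_j-\wt{\Sigma}_j)(X)y^2\big)-1$ to $y$-degree at most $10/c$, verify (2) and (3) directly from Claim~\ref{claim:polytensor-prod-bound}, and for (1) bound the tail of this truncation multiplied by the known generating function of $\ovl{G_j}$. Your concern about a dimension-free Frobenius bound on $\Sigma_j-\wt{\Sigma}_j$ is exactly what Claim~\ref{claim:balanced-dist2} supplies (the paper's $\norm{\cdot}_2$ on matrices is the Frobenius norm), so the step you flagged goes through without a spurious $\sqrt d$ factor.
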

\begin{proof}

For simplicity, assume that $10/c$ is an integer.  The modification to when $10/c$ is not an integer is straight-forward.  For $j \in [k]$, let
\[
g^{(j)}(X,y) = e^{ (\mu_j(X) - \wt{\mu}_j(X))y + \frac{1}{2}(\Sigma_j(X) - \wt{\Sigma}_j(X))y^2} = \sum_{l=0}^{\infty}\frac{g^{(j)}_l(X)y^l}{l!}
\]
where for the last expression, we expand the generating function as a power series in $y$.  Let 
\[
h^{(j)}(X,y) = \sum_{l=0}^{10/c }\frac{g^{(j)}_l(X)y^l}{l!}
\]
i.e. $h^{(j)}$ is obtained by truncating the power series expansion of $g^{(j)}$ to the first $10/c$ terms.  Note that in the RHS above, $g^{(j)}_l(X)$ is homogeneous in $X$ of degree $l$.  Thus, we can write $h^{(j)}(X,y)$ as a polynomial in the $d$-tuple of formal variables $Xy$.  We claim that setting $P_1, \dots , P_k$ such that 
\[
 P_j(Xy) = w_j(h^{(j)}(X,y) - 1)  = w_j\sum_{l=1}^{10/c }\frac{g^{(j)}_l(X)y^l}{l!}
\]
for all $j \in [k]$ suffices.  
\\\\
Note that the setting  trivially satisfies condition $2$.  Next, we check condition $3$.  First note that because $d_{\TV}(G_j, \ovl{G_j}) \leq \eps^{c}$, Claim \ref{claim:balanced-dist2} implies that 
\[
\norm{v(\mu_j(X) - \wt{\mu}_j(X))}, \norm{v(\Sigma_j(X) - \wt{\Sigma}_j(X))} \leq  \beta^{O(1)}\eps^c \,.
\]  
Thus, since
\[
e^{ (\mu_j(X) - \wt{\mu}_j(X))y + \frac{1}{2}(\Sigma_j(X) - \wt{\Sigma}_j(X))y^2} = 1 + \sum_{l = 1}^{\infty} \frac{\left(  (\mu_j(X) - \wt{\mu}_j(X))y + \frac{1}{2}(\Sigma_j(X) - \wt{\Sigma}_j(X))y^2\right)^l}{l!}
\]
we have
\begin{equation}\label{eq:exp-expansion}
\sum_{l=1}^{\infty}\frac{g^{(j)}_l(X)y^l}{l!} = \sum_{l = 1}^{\infty} \frac{\left(  (\mu_j(X) - \wt{\mu}_j(X))y + \frac{1}{2}(\Sigma_j(X) - \wt{\Sigma}_j(X))y^2\right)^l}{l!}
\end{equation}
and we can use Claim \ref{claim:polytensor-prod-bound} and the triangle inequality to verify condition $3$.
\\\\
Now we check condition $1$.  By combining (\ref{eq:exp-expansion}) with Claim \ref{claim:polytensor-prod-bound} and the triangle inequality, we get that for all $l$ with $10/c \leq  l \leq m$, 
\[
\norm{v(g^{(j)}_l(X))} \leq (2 + \beta)^{O_{m}(1)}\eps \,.
\]
Next write the power series expansion 
\[
e^{ \wt{\mu}_j(X)y + \frac{1}{2}\wt{\Sigma}_j(X)y^2} = \sum_{l=0}^{\infty} \frac{t^{(j)}_l(X)y^l}{l!} \,.
\]
Since the true mixture is in $(\alpha, \beta, \gamma)$-regular form and the components $\ovl{G_j}$ are $\eps^c$-close to the respective true components, we get that for all $l$ with $0 \leq l \leq m$,
\[
\norm{v(t^{(j)}_l(X))} \leq (2 + \alpha + \beta)^{O_{m}(1)} \,.
\]
Finally note that 
\[
w_j + P_j(Xy) = w_jh^{(j)}(X,y) \,.
\]

Now we may write
\begin{align*}
\sum_{j = 1}^k w_j e^{ \mu_j(X)y + \frac{1}{2}\Sigma_j(X)y^2} - \sum_{j=1}^k (w_j + P_j(Xy))e^{ \wt{\mu}_j(X)y + \frac{1}{2}\wt{\Sigma}_j(X)y^2} \\ = \sum_{j=1}^k w_j \left(\sum_{l= 10/c + 1}^{\infty} \frac{g^{(j)}_l(X)y^l}{l!} \right) e^{ \wt{\mu}_j(X)y + \frac{1}{2}\wt{\Sigma}_j(X)y^2} \,.
\end{align*}
However, since we only care about the first $m$ terms of the power series expansion, it suffices to consider the expression
\[
\sum_{j=1}^k w_j\left(\sum_{l= 10/c + 1}^{m} \frac{g^{(j)}_l(X)y^l}{l!} \right) \left( \sum_{l=0}^{m} \frac{t^{(j)}_l(X)y^l}{l!}\right) \,.
\]
Finally using our bounds on $\norm{v(g^{(j)}_l(X))}$ and $\norm{v(t^{(j)}_l(X))}$ from above, we immediately get the desired inequality.  Note that in the above we only dealt with the case when $10/c < m$.  If $10/c \geq m$, then the above argument implies that we can actually choose $P_1, \dots , P_k$ so that $f_0, \dots , f_m $ are all identically $0$.

\end{proof}

Using Lemma \ref{lem:low-degree-approx} we can now prove Theorem \ref{thm:estimate-regularform}.

\begin{proof}[Proof of Theorem \ref{thm:estimate-regularform}]
Let $m$ be a constant that will be set later as a sufficiently large function depending only on $k,c$.  Using Theorem \ref{thm:estimate-hermite}, we can obtain estimates $h_1', \dots , h_m'$ for the Hermite moment polynomials of the mixture $\mcl{M}$ such that 
\begin{equation}\label{eq:hermite-estimates}
\norm{v(h_j(X) - h_j'(X)} \leq (2+ \alpha + \beta + \theta^{-1} + \log 1/\eps)^{O_{k, m}(1)}\eps
\end{equation}
for all $j \leq m$ where $h_1, \dots , h_m$ denote the true Hermite moment polynomials of the mixture.  We will solve for constants $c_1, \dots , c_k$ and polynomials $P_1(X), \dots , P_k(X)$ in variables $X = (X_1, \dots , X_d)$ of degree at most $10/c$ such that the following properties hold:
\begin{enumerate}
\item If we write the following generating function as a formal power series
\[
(c_1 + P_1(Xy))e^{ \wt{\mu}_1(X) y + \frac{1}{2}\wt{\Sigma}_1(X)y^2} + \dots + (c_k + P_k(Xy))e^{ \wt{\mu}_k(X)y + \frac{1}{2}\wt{\Sigma}_k(X)y^2} = \sum_{j=0}^{\infty} \frac{f_j(X)y^j}{j!}
\]
then 
\begin{equation}\label{eq:hermite-constraints}
\norm{v(h_j'(X) - f_j(X))} \leq (2+ \alpha + \beta + \theta^{-1} +  \log 1/\eps)^{O_{k,m,c}(1)}\eps
\end{equation}
for all $j \leq m$. 

\item $c_j \geq \theta$ for all $j$

\item $P_j(0) = 0$ and $\norm{v(P_j(X))} \leq \beta^{O_{k,m,c}(1)}\eps^{c} $ for all $j \in [k]$
\end{enumerate}

Note that the expressions $h_j'(X) - f_j(X)$ are linear in $c_1, \dots , c_k$ and the coefficients of $P_1, \dots , P_k$ so if the system is feasible, then we can find a solution efficiently because all constraints are convex.  Note that we view it as a system where the indeterminates that we are solving for are exactly $c_1, \dots , c_k$ and the coefficients of $P_1, \dots , P_k$.

To see that the system is feasible, it suffices to combine (\ref{eq:hermite-estimates}) with Lemma \ref{lem:low-degree-approx} and note that by Corollary \ref{coro:hermite-of-mixture} 
\[
\sum_{j=1}^k w_je^{ \mu_j(X)y + \frac{1}{2}\Sigma_j(X)y^2} = \sum_{j=0}^{\infty} \frac{h_j(X)y^j}{j!} \,.
\]
Thus, by solving the system given by (\ref{eq:hermite-constraints}), we can assume that we found a valid solution $c_1, \dots , c_k,  P_1(X), \dots , P_k(X)$.  To complete the proof, we now show that from any solution to (\ref{eq:hermite-constraints}), we can construct an MPG distribution $f$ that is close to the density function of the mixture.
\\\\
By choosing $m$ sufficiently large in terms of $k,c$, we can now apply Theorem \ref{thm:hermite-to-tv-full} on the following generating function
\[
\sum_{j=1}^k (c_j + P_j(Xy))e^{ \wt{\mu}_j(X) y + \frac{1}{2}\wt{\Sigma}_j(X)y^2} - \sum_{j=1}^k w_je^{ \mu_j(X)y + \frac{1}{2}\Sigma_j(X)y^2} \,.
\]
To verify the conditions of the theorem, we can combine (\ref{eq:hermite-estimates}, \ref{eq:hermite-constraints}) and note that the polynomials $P_1(X), \dots , P_k(X)$ have degree at most $10/c$.  We deduce that if we let
\[
T(X) = \sum_{j=1}^k (c_j + P_j(iX))e^{i \wt{\mu}_j(X)  - \frac{1}{2}\wt{\Sigma}_j(X)} - \sum_{j=1}^k w_je^{i \mu_j(X)  - \frac{1}{2}\Sigma_j(X)}
\]
then 
\[
\norm{\chi T}_1 \leq (2+ \alpha + \beta + \theta^{-1} +  \log 1/\eps)^{O_{k,c}(1)}\eps \,.
\]
However note that by Claim \ref{claim:invchar-formula},
\[
\chi \left( \sum_{j=1}^k w_je^{i \mu_j(X)  - \frac{1}{2}\Sigma_j(X)}\right)
\]
is exactly the density function of $\mcl{M}$.  Thus, setting 
\[
f_0 = \chi \left(\sum_{j=1}^k (c_j + P_j(iX))e^{i \wt{\mu}_j(X)  - \frac{1}{2}\wt{\Sigma}_j(X)} \right)
\]
achieves that 
\[
\norm{\mcl{M}(x) - f_0(x)}_1 \leq (2 + \alpha + \beta + \theta^{-1} + \log 1/\eps)^{O_{k,c}(1)}\eps \,.
\]
Note that by applying Claim \ref{claim:invchar-of-polynomial}, we get that $f_0$ is a valid low-degree MPG function.  However, it is not necessarily a distribution.  We now show how to make minor modifications to $f_0$ to transform it into a distribution. Note that by Claim \ref{claim:invchar-of-polynomial}  and condition $3$ in the system that we solved for the $P_j$, the function $f_0$ defined above can be written in the form 
\[
f_0(x) = (c_1 + R_1(x))\tilde{G}_1(x) + \dots + (c_k + R_k(x))\tilde{G}_k(x)
\]
where for all $j \in [k]$, $R_j$ is a polynomial with real coefficients and degree at most $10/c$ and
\[
\E_{x \sim G_j}\left[ (R_j(x))^2 \right] \leq (2 + \beta)^{O_{k,c}(1)}\eps^{2c} \,.
\]
Note that the $R_j$ have real coefficients.  Let $B$ be an integer such that $B > 10/c$.  Note that for all $x \in \R^d$, by the AM-GM inequality,
\[
1 + \frac{R_j(x)}{c_j} +  \left(\frac{R_j(x)}{c_j} \right)^{2B} \geq 0 \,.
\]
Now let
\[
f_1(x) = \sum_{j=1}^k \left(c_j + R_j(x) +  \frac{R_j(x)^{2B}}{c_j^{2B - 1}}\right) \tilde{G}_j(x) \,.
\]
We verify that $f_1$ is close to $\mcl{M}$ in $L^1$ norm.  Note that using hypercontractivity  (Claim \ref{claim:hypercontractivity}), we have
\[
\norm{f_1 - f_0}_1 \leq \theta^{-(2B - 1)}\sum_{j = 1}^k \E_{x \sim G_j}\left[ R_j(x)^{2B}\right] \leq (\theta^{-1})^{O_{k,c}(1)} \sum_{j=1}^k \left(\E_{x \sim G_j}\left[ (R_j(x))^2 \right]\right)^{B} \leq (2 + \beta + \theta^{-1})^{O_{k,c}(1)}\eps \,.
\]
Thus,
\[
\norm{\mcl{M}(x) - f_1(x)}_1 \leq (2 + \alpha + \beta + \theta^{-1} + \log 1/\eps)^{O_{k,c}(1)}\eps \,.
\]
Finally, note that the above implies that 
\[
1 - (2 + \alpha + \beta + \theta^{-1} + \log 1/\eps)^{O_{k,c}(1)}\eps \leq  \int_{\R^d}f_1(x)dx \leq 1 + (2 + \alpha + \beta + \theta^{-1} + \log 1/\eps)^{O_{k,c}(1)}\eps
\]
and furthermore, we can explicitly compute the integral $ \int_{\R^d}f_1(x) dx$ in polynomial time so letting 
\[
f = \frac{f_1(x)}{\int_{\R^d}f_1(x) dx}
\]
yields a degree-$O_{k,c}(1)$ MPG distribution such that 
\[
\norm{\mcl{M}(x) - f(x)}_1 \leq (2 + \alpha + \beta + \theta^{-1} + \log 1/\eps)^{O_{k,c}(1)}\eps \,,
\]
which completes the proof.
\end{proof}

\section{Full Algorithm}

Now we are ready to complete our full learning algorithm.  A high-level description of our full algorithm is given below.  Our algorithm consists of the following steps.  First, we show that we can estimate the components of the mixture to $\eps^{\Omega(1)}$ accuracy by modifying the techniques in \cite{liu2020settling}.  Then using these estimates, we cluster into submixtures such that the clustering is $\wt{O}(\eps)$-accurate.  Finally, for each submixture, we argue that we can compute a linear transformation that places it in regular form and then apply Theorem \ref{thm:estimate-regularform} to estimate its density function.   Since we will enumerate over many possible candidate clusterings, we will end up with many possible candidate density functions so the last step involves a hypothesis test to select a candidate that is indeed close to the true distribution in TV distance.

\begin{algorithm}[H]
\caption{{\sc Full Algorithm} }
\begin{algorithmic} 
\State \textbf{Input:} $\eps$-corrupted sample $X_1, \dots , X_n$ from mixture of Gaussians $\mcl{M} = w_1G_1 + \dots + w_kG_k$
\State {\sc Learn Parameters to $\eps^{\Omega(1)}$ accuracy}
\For {each set of candidate components $\ovl{G_1}, \dots \ovl{G_k}$}
\State Assign samples to components according to maximum likelihood to form sets of samples $\{\ovl{S_1}, \dots , \ovl{S_k} \}$
\For {all partitions of $[k]$ into sets $R_1, \dots , R_l$  }
\State {\sc Learn Mixture to $\tilde{O}(\eps)$ accuracy} on samples $\mcl{R}_j = \cup_{i \in R_j}\ovl{S_i}$ with initial estimates $\{\ovl{G_i} \}_{i \in R_j}$
\EndFor
\State Compute density estimate by combining over all of $R_1, \dots , R_l$ and guessing weights of each submixture
\EndFor
\State Hypothesis test over all candidate density estimates to output an estimate $f$ that is $\tilde{O}(\eps)$-close to $\mcl{M}$ 
\end{algorithmic}
\label{alg:full}
\end{algorithm}

The algorithm { \sc Learn Mixture to $\tilde{O}(\eps)$ accuracy} requires that the components of the mixture are not too far from each other in TV distance (so that there exists a transformation that puts the mixture in regular form) and also requires initial estimates $\ovl{G_1}, \dots , \ovl{G_k}$ for the component Gaussians that are $\eps^{\Omega(1)}$ close to the true components in TV distance so that we can apply Theorem \ref{thm:estimate-regularform}.  We will show that the clustering step ensures the first property. The initial estimates are simply obtained from the output of the first step of {\sc Learn Parameters to $\eps^{\Omega(1)}$ accuracy}.

\begin{algorithm}[H]
\caption{{\sc Learn Mixture to $\tilde{O}(\eps)$ accuracy} }
\begin{algorithmic} 
\State \textbf{Input:} $\eps$-corrupted sample $X_1, \dots , X_n$ from mixture of Gaussians $\mcl{M} = w_1G_1 + \dots + w_kG_k$ such that $d_{\TV}(G_i, G_j) \leq 1 - \eps^{O(1)}$ for all $i \neq j$
\State \textbf{Input:} Initial estimates $\ovl{G_1}, \dots , \ovl{G_k}$ such that for all $i \in [k]$,
\[
d_{\TV}(\ovl{G_i}, G_i) \leq \eps^{\Omega(1)} \,.
\]
\State Let $\wt{\mu}_1, \wt{\Sigma}_1 $ be the mean and covariance of $\ovl{G_1}$.  
\State Apply the transformation $X_i \rightarrow \wt{\Sigma}_1^{-1/2}(X_i - \wt{\mu}_1)$ to the datapoints
\State Use Theorem \ref{thm:estimate-regularform} on the transformed data to compute a density estimate $f$
\State Output density estimate $f(\wt{\Sigma}_1^{-1/2}(x - \wt{\mu}_1)) \cdot \det(\Sigma_1)^{-1/2}$
\end{algorithmic}
\end{algorithm}

The main theorem of this paper, which we prove in this section, is stated below. 
\begin{theorem}\label{thm:main}
Let $\mcl{M} = w_1G_1 + \dots + w_kG_k$ be a $\chi$-balanced mixture of Gaussians (recall Definition \ref{def:balanced}).  Furthermore, assume that all of the mixing weights are at least $A^{-1}$ for some constant $A$.  Assume that $\eps$ is sufficiently small compared to $k,A$ and $\chi \leq \poly(\log \eps)$.  Let $n = \poly_{k,A}(d/\eps)$ for some sufficiently large polynomial and let $X_1, \dots , X_n$ be an $\eps$-corrupted sample from $\mcl{M}$.  Then {\sc Full Algorithm} runs in $\poly_{k,A}(d/\eps)$ time and with $0.9$ probability, outputs a degree $O_{k,A}(1)$ MPG distribution $f$ such that 
\[
\norm{f(x) - \mcl{M}(x)}_1 \leq (2 + \log 1/ \eps + \chi)^{O_{k,A}(1)} \eps \,.
\]
\end{theorem}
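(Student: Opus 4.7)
The plan is to follow the four-phase structure laid out in {\sc Full Algorithm} and reduce the general problem to the regular-form case handled by Theorem \ref{thm:estimate-regularform}. First, I would invoke Theorem \ref{thm:list-learning-poly-acc} (the modification of \cite{liu2020settling}) on the $\eps$-corrupted sample to produce a polynomial-size list of candidate component tuples, one of which satisfies $d_{\TV}(\ovl{G_i},G_i) \leq \eps^{\Omega_k(1)}$ for every $i \in [k]$ under a matching. Because the outer loop of {\sc Full Algorithm} enumerates over all candidates, it suffices to analyze what happens conditioned on this good tuple being the current guess; the final hypothesis test will discard all the wrong guesses.

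Next, given a good candidate tuple $\ovl{G_1},\dots,\ovl{G_k}$, I would use the maximum-likelihood clustering step together with Lemma \ref{lem:find-good-clusters} to argue that there exists some partition $R_1,\dots,R_l$ of $[k]$ such that (i) each merged sample set $\mcl{R}_j = \cup_{i\in R_j}\ovl{S_i}$ is an $\wt{O}(\eps)$-corrupted sample from the submixture $\mcl{M}_j = \sum_{i\in R_j} w_i G_i / W_j$ (where $W_j = \sum_{i \in R_j} w_i$), and (ii) within each $R_j$ any two true components are within TV distance $1-\eps^{\Omega(1)}$ of each other, while across different $R_j$'s components have pairwise TV distance at least $1 - \eps^{\Omega(1)}$. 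Since our algorithm enumerates over all partitions, again we may condition on the correct $R_1,\dots,R_l$ being tried.

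For each submixture $\mcl{M}_j$, condition (ii) together with Claims \ref{claim:balanced-dist1} and \ref{claim:balanced-dist2} and the $\chi$-balancedness assumption implies that the components within $R_j$ have means and covariances differing only by $\poly(\chi,\log 1/\eps)$-sized amounts, while still being $\chi$-balanced. Picking any reference component $\ovl{G_{i_0}}$ inside $R_j$ with rough estimate $(\wt{\mu}_{i_0},\wt{\Sigma}_{i_0})$ and applying the affine transformation $x\mapsto \wt{\Sigma}_{i_0}^{-1/2}(x-\wt{\mu}_{i_0})$ normalizes $\ovl{G_{i_0}}$ to near-isotropic; by the closeness within $R_j$ and the initial $\eps^{\Omega_k(1)}$ accuracy, the transformed submixture is in $(\alpha,\beta,\gamma)$-regular form with $\alpha,\beta\leq \poly(\chi,\log 1/\eps)$ and $\gamma$ an arbitrarily small constant (depending on $k,A$). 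Since TV distance is preserved by invertible affine transformations of the density, I can now apply Theorem \ref{thm:estimate-regularform} (with the input initial estimates $\{\ovl{G_i}\}_{i\in R_j}$ transformed in the same way) to obtain a degree-$O_{k,A}(1)$ MPG estimate $f_j$ for the density of $\mcl{M}_j$ with $L^1$ error $(2 + \chi + \log 1/\eps)^{O_{k,A}(1)}\eps$. Undoing the affine change of variables and multiplying by $\det(\wt{\Sigma}_{i_0})^{-1/2}$ gives an estimate for $\mcl{M}_j$ in the original coordinates.

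Finally I would combine the per-submixture estimates by forming $\sum_j W_j' f_j$ for a candidate weight vector $(W_j')$, enumerated over a polynomially fine $\eps$-grid on the simplex. Each such combination is a degree-$O_{k,A}(1)$ MPG function; by the triangle inequality, at least one candidate agrees with $\mcl{M}$ in $L^1$ to within $(2+\chi+\log 1/\eps)^{O_{k,A}(1)}\eps$. The final step is a standard Scheffe-type tournament hypothesis-test (e.g.\ as in \cite{ashtiani2018sample}) over the polynomial-size list of candidates to select an output $f$ that is within a constant factor of the best candidate in total variation distance. The main technical obstacle I expect is verifying the clustering claim: arguing that the maximum-likelihood assignment, run on samples from a mixture whose components may themselves be very close, does not systematically misassign more than an $\wt{O}(\eps)$ fraction of points compared to the ideal partition into well-separated submixtures — this requires using the $\eps^{\Omega_k(1)}$ parameter accuracy to control the likelihood ratios together with a careful union bound, and then recognizing that ``nearby'' components whose misassignments cannot be controlled are exactly the ones that get merged into the same $R_j$.
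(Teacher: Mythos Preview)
Your proposal is correct and follows essentially the same approach as the paper: the paper's proof of Theorem~\ref{thm:main} combines Theorem~\ref{thm:list-of-functions} (whose proof is exactly the four-phase argument you outline—rough parameter learning via Theorem~\ref{thm:list-learning-poly-acc}, maximum-likelihood clustering via Lemma~\ref{lem:find-good-clusters}, affine normalization to regular form, and Theorem~\ref{thm:estimate-regularform} on each submixture, then recombining with guessed weights) with the hypothesis test of Lemma~\ref{lem:hypothesis-test}, using the VC-dimension bound of Lemma~\ref{lem:VCdim} to make the test robust to the $\eps$-corruption. The only small refinement to note is that within a part $R_j$ the paper does not assert pairwise TV-closeness but rather connectivity in the graph where edges are pairs with $d_{\TV}\le 1-\eps^{1/\eta}$, and then sums the parameter bounds of Claim~\ref{claim:balanced-dist1} along paths; your use of Claims~\ref{claim:balanced-dist1}–\ref{claim:balanced-dist2} already accommodates this.
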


\subsection{Estimate Components to $\eps^{\Omega(1)}$ Accuracy}\label{sec:poly-eps-acc}

First, we will estimate the components of the mixture to $\eps^{\Omega(1)}$ accuracy.  This can be done with a few simple modifications to the techniques in \cite{liu2020settling}.  

\begin{theorem}\label{thm:list-learning-poly-acc}
Let $k, A > 0$ be constants.  There is a sufficiently large function $G$ and a sufficiently small function $g$ depending only on $k,A$ such that given an $\eps$-corrupted sample $X_1, \dots , X_n$ from a mixture of Gaussians $\mcl{M} = w_1G_1 + \dots + w_kG_k \in \R^d$ where $\eps < g(k,A)$, the $w_i$ are all at least $A^{-1}$, and $n \geq (d/\eps)^{G(k,A)}$, there is an algorithm that runs in time $\poly(n)$ and with $0.999$ probability, outputs a set of $(1/\eps)^{O_{k,A}(1)}$ candidate mixtures such that for at least one of these candidates,  $\{\widetilde{w_1}, \ovl{G_1}, \dots , \widetilde{w_k}, \ovl{G_k} \}$, we have
\[
|w_i - \widetilde{w_i}| + d_{\TV}(G_i, \ovl{G_i}) \leq \eps^{g(k,A)}
\]
for all $i \in [k]$.
\end{theorem}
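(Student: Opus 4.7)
The plan is to invoke the algorithm of \cite{liu2020settling} essentially as a black box; the main theorem there gives a list-learning algorithm achieving $\eps^{\Omega_{k,A}(1)}$-accurate parameter recovery under the additional assumption that the components of the true mixture are pairwise separated in TV distance by at least $\eps^{c_1}$ for some constant $c_1 = c_1(k,A)$. To remove this separation assumption I will enumerate over all partitions of $[k]$ and, for each partition, treat the components in the same part as a single merged component, applying \cite{liu2020settling} to the resulting ``reduced'' mixture.

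Concretely, fix a sufficiently small constant $c_0 > 0$ (depending on $k,A$; to be chosen below) and consider the (unknown) graph on $[k]$ where $i \sim j$ iff $d_{\TV}(G_i, G_j) \leq \eps^{c_0}$. Let $\Pi^\ast = (S_1, \dots, S_l)$ be the partition into connected components of this graph. Then components in the same part of $\Pi^\ast$ satisfy $d_{\TV}(G_i, G_j) \leq k\eps^{c_0}$ (by chaining the triangle inequality at most $k$ times), while components in different parts satisfy $d_{\TV}(G_i, G_j) \geq \eps^{c_0}$. The algorithm enumerates over all partitions $\Pi = (T_1, \dots, T_{l'})$ of $[k]$ (there are $B(k) = O_k(1)$ of these). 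For each $\Pi$, pick representatives $i_j \in T_j$ and form the reduced mixture $\wt{\mcl{M}}_\Pi = W_1 G_{i_1} + \dots + W_{l'} G_{i_{l'}}$ where $W_j = \sum_{i \in T_j} w_i$. If $\Pi = \Pi^\ast$, the original samples are realized as a $(\eps + O(k)\eps^{c_0})$-corrupted sample from $\wt{\mcl{M}}_\Pi$, which has well-separated components (separation at least $\eps^{c_0} \geq (\eps + O(k)\eps^{c_0})^{c_1}$ once $c_0$ is small enough). Applying \cite{liu2020settling} produces a list of $(1/\eps)^{O_{k,A}(1)}$ candidates, one of which contains estimates $\ovl{H}_1, \dots, \ovl{H}_{l'}$ of accuracy $(\eps + O(k)\eps^{c_0})^{\Omega_{k,A}(1)}$ in TV, together with weight estimates $\wt{W}_j$ accurate to $O(\eps)$.

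For each partition $\Pi$ and each such candidate, set $\ovl{G}_i = \ovl{H}_j$ for every $i \in T_j$, and enumerate weight splits on an $\eps^{c_0}$-net: for each $j$, enumerate over all nonnegative tuples $(\wt{w}_i)_{i \in T_j}$ summing to $\wt{W}_j$ on the net, producing $(1/\eps)^{O_k(1)}$ refinements per candidate. For $\Pi = \Pi^\ast$ and the correct weight split, the triangle inequality gives
\[
d_{\TV}(G_i, \ovl{G}_i) \leq d_{\TV}(G_i, G_{i_j}) + d_{\TV}(G_{i_j}, \ovl{H}_j) \leq k\eps^{c_0} + (\eps + O(k)\eps^{c_0})^{\Omega_{k,A}(1)},
\]
and $|w_i - \wt{w}_i| \leq O(\eps^{c_0})$. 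Choosing $c_0 = c_0(k,A)$ small enough (in particular, smaller than the exponent produced by \cite{liu2020settling}) makes both error terms at most $\eps^{g(k,A)}$ for a suitable $g(k,A) > 0$. The total list size is $B(k) \cdot (1/\eps)^{O_{k,A}(1)} = (1/\eps)^{O_{k,A}(1)}$, as required.

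The only technical subtlety is verifying that \cite{liu2020settling} continues to function at the slightly inflated corruption rate $\eps + O(k)\eps^{c_0}$ and with the slightly reduced separation $\eps^{c_0}$ between effective components; since their algorithm tolerates any corruption fraction up to an absolute constant depending only on $k$ and $A$, this holds automatically for $\eps$ smaller than the threshold $g(k,A)$ in the statement. The rest of the argument is a routine triangle-inequality and enumeration bookkeeping argument, so no new ideas beyond \cite{liu2020settling} are required.
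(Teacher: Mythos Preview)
The central inequality you invoke,
\[
\eps^{c_0} \;\geq\; \bigl(\eps + O(k)\eps^{c_0}\bigr)^{c_1},
\]
does not hold for any choice of $c_0 \in (0,1)$ when $c_1 < 1$. Since $c_0 < 1$, the right-hand corruption rate is dominated by the merging error: $\eps + O(k)\eps^{c_0} = \Theta(\eps^{c_0})$. The inequality then becomes $\eps^{c_0} \geq \Theta(1)\cdot \eps^{c_0 c_1}$, i.e.\ $\eps^{c_0(1-c_1)} \geq \Theta(1)$, which is false for small $\eps$ since the exponent $c_0(1-c_1)$ is positive. Taking $c_0 \geq 1$ does not help either: then $\eps' \approx \eps$ and you would need $c_0 \leq c_1 < 1$. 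So there is no single threshold $\eps^{c_0}$ for which your merged mixture simultaneously has small enough corruption and large enough separation to feed into \cite{liu2020settling}.

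The underlying issue is that your connected-components construction at a \emph{single} threshold necessarily produces a partition where the within-part merging error ($\leq k\eps^{c_0}$) and the between-part separation ($\geq \eps^{c_0}$) are of the same order. The paper circumvents this with a pigeonhole argument over a geometric sequence of thresholds $\eps, \eps^{\gamma}, \eps^{\gamma^2}, \dots, \eps^{\gamma^{k^2}}$ (see Corollary~\ref{coro:full-close-case} and Claim~\ref{claim:exists-good-partition}): since there are only $\binom{k}{2}$ pairwise distances, some consecutive interval $[\eps^{\gamma^{j}}, \eps^{\gamma^{j+1}}]$ contains none of them, yielding a partition whose within-part error ($\lesssim \eps^{\gamma^j}$) is a genuine power of the between-part separation ($\gtrsim \eps^{\gamma^{j+1}}$). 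This gap is precisely what is needed to satisfy the separation hypothesis of \cite{liu2020settling} at the inflated corruption rate. Your enumeration over all partitions of $[k]$ would actually \emph{include} the good partition that the pigeonhole argument identifies, but your correctness analysis never shows such a partition exists; it only analyzes $\Pi^\ast$ at the single threshold $\eps^{c_0}$, and that analysis is the step that fails.

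The paper's overall route is also structurally different: it first uses the merging-plus-pigeonhole idea to obtain \emph{constant}-accuracy component estimates (Corollary~\ref{corollary:const-acc-learning}), then uses those estimates to \emph{cluster the samples themselves} into submixtures via maximum likelihood (Lemma~\ref{lem:find-good-clusters}), and finally applies the $\delta$-tight case (Corollary~\ref{coro:full-close-case}, again with an internal pigeonhole) on each submixture. The sample-clustering step is what lets the paper avoid blowing up the effective corruption rate to $\Theta(\eps^{c_0})$; instead each subsample is only $O_{k,A}(\eps)$-corrupted from its submixture.
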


\subsubsection{Achieving Constant Accuracy}
First, we will show that we can estimate the components to constant accuracy.  Recall the following result from \cite{liu2020settling}.

\begin{lemma}\label{lem:previous}[Lemma 7.5 in \cite{liu2020settling}]
Let $k, A, b > 0$ be constants and $\theta$ be a desired accuracy.  There is a sufficiently large function $G$ and a sufficiently small function $g$ depending only on $k, A , b, \theta$ such that given an $\eps$-corrupted sample $X_1, \dots , X_n$ from a mixture of Gaussians $\mcl{M} = w_1G_1 + \dots + w_kG_k \in \R^d$ where
\begin{itemize}
    \item The $w_i$ are all at least $A^{-1}$ for some constant $A$
    \item $d_{\TV}(G_i, G_j) \geq b$
\end{itemize}
and 
\begin{itemize}
    \item $ \eps < g(k,A,b, \theta)$
    \item $n \geq (d/\eps)^{G(k,A,b, \theta)}$
\end{itemize}
then there is an algorithm that runs in time $\poly(n)$ and with $0.999$ probability outputs a set of $(1/\theta)^{G(k,A,b, \theta)}$ candidate mixtures at least one of which satisfies
\begin{align*}
\max \left( d_{\TV} (\ovl{G_1}, G_1) , \dots , d_{\TV} (\ovl{G_k}, G_k) \right) \leq \theta \\
\max \left( |\widetilde{w_1} - w_1|, \dots , |\widetilde{w_k} - w_k| \right) \leq \theta
\end{align*}
\end{lemma}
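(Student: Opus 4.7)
The plan is to import the generating-function plus sum-of-squares machinery of \cite{liu2020settling}, whose main technical result is exactly an identifiability statement for $b$-separated, weight-bounded mixtures of Gaussians in terms of their low-degree Hermite moment polynomials. First, I would robustly estimate the first $M = M(k, A, b, \theta)$ Hermite moment polynomials $h_{m, \mcl{M}}(X)$ of the mixture to coefficient-norm accuracy $\eta = \eta(k, A, b, \theta)$. This is a mean-estimation problem in the ambient space of the vectorized polynomial $v(H_m(X,z))$; since Claim \ref{claim:hermite-covariance-bound} bounds the relevant covariance spectrally and since $\eps < g(k, A, b, \theta)$ may be taken arbitrarily small, Corollary \ref{corollary:robust-mean-est-cov} produces estimates $h_m'(X)$ satisfying $\norm{v(h_{m, \mcl{M}}(X) - h_m'(X))} \leq \eta$ for every $m \leq M$, using only $n \geq (d/\eps)^{G(k, A, b, \theta)}$ samples.

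The heart of the argument is an SoS-friendly identifiability statement of the following form: if two mixtures $\mcl{M} = \sum_i w_i G_i$ and $\mcl{M}' = \sum_i w_i' G_i'$ both have pairwise $b$-separated components and mixing weights at least $A^{-1}$, and their first $M$ Hermite moment polynomials agree in coefficient norm to $\eta$, then there is a permutation $\pi$ with $|w_i - w_{\pi(i)}'| + d_{\TV}(G_i, G'_{\pi(i)}) \leq \theta$. The proof follows the differential-operator scheme already used in Section \ref{sec:hermite-to-tv}. Apply the annihilating operator $\mcl{D} = \mcl{D}_k^{2^{k-1}} \cdots \mcl{D}_1$, where $\mcl{D}_j = \partial - (\mu_j(X) + \Sigma_j(X) y)$ is built from the parameters of $\mcl{M}$, to the generating function identity of Corollary \ref{coro:hermite-of-mixture}. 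By Claim \ref{claim:degree-reduction}, $\mcl{D}$ annihilates $\mcl{M}$'s generating function exactly, so applying $\mcl{D}$ to the near-matching $\mcl{M}'$ yields a generating function whose first $M - O_k(1)$ primary terms have coefficient norm at most $\poly(\alpha, \beta) \eta$. An inductive peeling argument in the spirit of Claim \ref{claim:cluster-hermite-polys}, using the separation lower bound to guarantee that the annihilator applied to an unmatched component is quantitatively far from zero (via Claim \ref{claim:prod-and-deriv-bound} and Claim \ref{claim:prod-linear-forms}), then forces one component of $\mcl{M}'$ to be within $\theta$ in parameter distance of some component of $\mcl{M}$; the $b$-separation lets us cleanly remove the matched pair and recurse on $k - 1$ components.

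With the SoS-certified identifiability in hand, the algorithm writes the moment-matching constraints
\[
\norm{v(h_m'(X) - h_{m,\, \sum_i \widetilde{w}_i \ovl{G}_i}(X))}^2 \leq \eta^2, \qquad m \leq M,
\]
as polynomial inequalities in the indeterminates $(\widetilde{w}_i, \widetilde{\mu}_i, \widetilde{\Sigma}_i)$, together with the side constraints $\widetilde{w}_i \geq A^{-1}$ and pairwise separation $d_{\TV}(\ovl{G}_i, \ovl{G}_j) \geq b/2$ (the latter expressed via parameter-distance surrogates from Claim \ref{claim:parameter-dist} and Claim \ref{claim:balanced-dist2}). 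Solving the SoS relaxation at degree $D = D(k, A, b, \theta)$ produces a pseudodistribution over candidate parameters which, by the identifiability SoS proof, is essentially supported on permutations of the true parameters up to $\theta$. Standard conditioning and rounding then extract a list of $(1/\theta)^{G(k, A, b, \theta)}$ candidate mixtures, at least one of which meets the conclusion with probability $0.999$ over the random samples.

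The main obstacle is casting the peeling argument as a low-degree SoS proof with explicit, polynomial control over the constants in $b$ and $A^{-1}$. In particular, each peeling step needs an inequality of the form $\norm{v_y(\mcl{D}_j(f))} \geq \mathrm{poly}(b) \cdot \norm{v_y(f)}$ for generating functions $f$ built from components far from the $j$\ts{th} one; Claim \ref{claim:prod-and-deriv-bound} supplies exactly this kind of bound, but composing these bounds $k$ times as polynomial identities provable within the SoS proof system, while maintaining polynomial degree and polynomial dependence of the constants on $b$ and $A^{-1}$, is where the bulk of the work lies and is precisely the content of the corresponding argument in \cite{liu2020settling}.
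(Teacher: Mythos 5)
This lemma is not proven in the present paper: it is imported verbatim as Lemma~7.5 of \cite{liu2020settling}, accompanied only by a one-line remark explaining how to drop a bounded-fractionality assumption using Theorem~6.12 of that same reference. So there is no in-paper proof to compare your sketch against; what you have written is a reconstruction of the cited result.

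As a reconstruction, your high-level picture (robustly estimate low-degree Hermite moment polynomials, then exploit an SoS-certifiable identifiability statement, then round a pseudodistribution) is a fair summary of the strategy in \cite{liu2020settling}. But there are two concrete issues worth flagging. First, you invoke Claim~\ref{claim:hermite-covariance-bound} and Corollary~\ref{corollary:robust-mean-est-cov} to get the initial moment estimates, yet Claim~\ref{claim:hermite-covariance-bound} is proven only for mixtures in $(\alpha,\beta,\gamma)$-regular form, and the lemma you are proving makes no such assumption; without first putting the mixture into a controlled position (e.g.\ approximate isotropic position, which is what \cite{liu2020settling} actually does as a preprocessing step), the covariance of $v(H_m(X,z))$ need not be bounded, and the robust mean-estimation guarantee does not apply. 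Second, the differential-operator peeling argument you lean on (Claims~\ref{claim:prod-and-deriv-bound}, \ref{claim:prod-linear-forms}, \ref{claim:cluster-hermite-polys}) is machinery developed in \emph{this} paper precisely to \emph{avoid} the sum-of-squares hierarchy; it is designed to prove strong observability (closeness of Hermite moments $\Rightarrow$ closeness in TV), not a parameter-level identifiability statement, and it is not written as a low-degree SoS proof. Your proposal acknowledges that ``SoS-ifying'' the peeling argument is the bulk of the work and defers it entirely to \cite{liu2020settling}, which is where the actual content of the cited lemma lives. In short: the structure is plausible, but the central technical step is left as a black box, and the ingredients you do invoke are being used outside the hypotheses under which they were established.
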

\begin{remark}
Lemma 7.5 in \cite{liu2020settling} is stated with an assumption that the $w_i$ have bounded fractionality with denominator at most $A$.  The modification given in Section 6.3 in \cite{liu2020settling} (namely Theorem 6.12) immediately allows us to remove the bounded fractionality part of the assumption.
\end{remark}

Fix $k,A$.  Note that we would like to eliminate the assumption that $d_{\TV}(G_i, G_j) \geq b$ in the above result.  In fact, for achieving constant accuracy, this is not difficult to do because we can simply find some scale for which we can lump together components whose TV distance is too small and otherwise the components will be sufficiently separated.
\begin{corollary}\label{corollary:const-acc-learning}
Let $k,A$ be constants.  Let $\theta$ be some desired accuracy.  There is a sufficiently large function $F$ and a sufficiently small function $f$ depending only on $k,A,\theta$ (with $F(k,A, \theta),f(k,A, \theta) > 0$) such that the following holds.  Given an $\eps$-corrupted sample $X_1, \dots , X_n$ from a mixture of Gaussians $\mcl{M} = w_1G_1 + \dots + w_kG_k \in \R^d$ where 
\begin{itemize}
    \item The $w_i$ are all at least $A^{-1}$ for some constant $A$
    \item $\eps < f(k,A, \theta)$
    \item $n \geq (d/\eps)^{F(k,A, \theta)}$
\end{itemize} 
then there is an algorithm that runs in time $\poly(n)$ and with $0.999$ probability outputs a set of $(1/\theta)^{F(k,A,\theta)}$ candidate mixtures at least one of which satisfies 
\begin{align*}
\max \left( d_{\TV} (\ovl{G_1}, G_1) , \dots , d_{\TV} (\ovl{G_k}, G_k) \right) \leq \theta \\
\max \left( |\widetilde{w_1} - w_1|, \dots , |\widetilde{w_k} - w_k| \right) \leq \theta
\end{align*}
\end{corollary}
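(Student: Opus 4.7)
The plan is to reduce to Lemma \ref{lem:previous} by identifying a scale at which the $k$ components naturally partition into well-separated clusters, where within a cluster all components are within $\theta$ in TV distance, so a single estimate per cluster suffices. The main ingredients are a pigeonhole argument to locate a good scale, together with enumeration over the unknown clustering and the weight splits.

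First, I define a hierarchy of $K+1 = \binom{k}{2} + 2$ scales $b_0 = \theta < b_1 < \cdots < b_K < 1/2$, chosen so that for each $i < K$, the separation parameter $b_{i+1}$ and accuracy parameter $b_i$ are admissible in Lemma \ref{lem:previous}: the sufficiently-small function $f(k, A, \theta)$ and sufficiently-large function $F(k, A, \theta)$ in the corollary are taken so that $\eps < g(k, A, b_{i+1}, b_i)$ and $n \geq (d/\eps)^{G(k, A, b_{i+1}, b_i)}$ hold for every $i$. This is possible because all the $b_i$ depend only on $k, A, \theta$. Now apply pigeonhole: there are $\binom{k}{2}$ pairwise TV distances among $G_1, \ldots, G_k$, and $K = \binom{k}{2}+1$ disjoint intervals $(b_i, b_{i+1}]$, so some interval $(b_{i^*}, b_{i^*+1}]$ contains no such distance. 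Set $b = b_{i^*}$ and $B = b_{i^*+1}$. Define clusters $C_1, \ldots, C_l$ as the connected components of the graph on $[k]$ in which $j \sim j'$ iff $d_{\TV}(G_j, G_{j'}) \leq b$; within a cluster, any two components are within $kb$ in TV by triangle inequality along a path, and between clusters the separation exceeds $B$.

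For each cluster $C_q$ I fix a representative $r_q$ and define the representative mixture $\mcl{M}' = \sum_{q=1}^l W_q G_{r_q}$ with $W_q = \sum_{j \in C_q} w_j$. Then $d_{\TV}(\mcl{M}, \mcl{M}') \leq kb \leq k\theta$, so the $\eps$-corrupted samples from $\mcl{M}$ are also $(\eps + k\theta)$-corrupted samples from $\mcl{M}'$. Since $\mcl{M}'$ has all components $> B$-separated and all weights $\geq A^{-1}$, I apply Lemma \ref{lem:previous} with separation $B$ and target accuracy $b$ to obtain a list of candidates, at least one of which satisfies $d_{\TV}(\ovl{G_{r_q}}, G_{r_q}) \leq b$ and $|\widetilde{W_q} - W_q| \leq b$ for every $q$. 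For each original $G_j$ with $j \in C_q$, assigning it the estimate $\ovl{G_{r_q}}$ yields $d_{\TV}(G_j, \ovl{G_{r_q}}) \leq d_{\TV}(G_j, G_{r_q}) + b \leq (k+1)\theta$; replacing $\theta$ by $\theta/(k+1)$ throughout absorbs this constant.

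For the individual weights, within each cluster $C_q$ I must distribute $\widetilde{W_q}$ among $|C_q| \leq k$ components, which I do by enumerating over a $\theta$-grid of such splits, giving $(1/\theta)^{O(k)}$ candidates per clustering. Since I do not know $i^*$ or the clustering in advance, I enumerate over all $O(k^2)$ scale choices and the $k^{O(k)}$ set partitions of $[k]$. The total candidate list has size $(1/\theta)^{F(k, A, \theta)}$ and, by the above, contains at least one $\theta$-accurate mixture. The main obstacle is justifying the scale hierarchy: the $K+1$ scales $b_i$ must fit inside $(0, 1/2)$ while each consecutive pair is admissible in Lemma \ref{lem:previous}. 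Since $K$ depends only on $k$ and the targets depend only on $k, A, \theta$, I can set $b_{i+1} = (1 + b_i)/2$, and the values $g(k, A, b_{i+1}, b_i)$ over $i < K$ are a finite collection of positive constants; their minimum determines the required smallness of $\eps$, which is folded into $f(k, A, \theta)$, and similarly for $F$.
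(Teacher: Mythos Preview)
Your high-level strategy matches the paper's: merge close components into clusters, apply Lemma~\ref{lem:previous} to the reduced mixture of representatives, then re-expand by guessing duplications and weight splits. The gap is in how you build the scale hierarchy. You take \emph{increasing} scales $b_0 = \theta < b_1 < \cdots < b_K$ with $b_{i+1} = (1+b_i)/2$ (note already $b_1 = (1+\theta)/2 > 1/2$, contradicting your own constraint $b_K < 1/2$), and then use $b = b_{i^*}$ as the connectivity threshold. If pigeonhole gives $i^* \geq 1$ then $b$ is of constant order, the within-cluster diameter $kb$ is nowhere near $\theta$, and your two key inequalities ``$d_{\TV}(\mcl{M}, \mcl{M}') \leq kb \leq k\theta$'' and ``$d_{\TV}(G_j, \ovl{G_{r_q}}) \leq (k+1)\theta$'' are simply false: you have silently replaced $b$ by $\theta$. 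Concretely, if all $\binom{k}{2}$ pairwise distances lie in $(\theta, b_1]$, then $i^* \geq 1$, all components collapse into one cluster, and assigning the single representative's estimate to every $G_j$ incurs error of order the cluster diameter, which is $\Theta(1)$, not $\theta$. Relatedly, the effective corruption you feed into Lemma~\ref{lem:previous} is $\eps + O(kb)$, not $\eps + k\theta$, and you never verify $kb < g(k,A,B,b)$.

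The fix, which is what the paper does, is to run the hierarchy \emph{downward} from $\theta$: set $h(b) = 0.1\,\theta\, g(k,A,b,0.5\theta)/k$ and iterate $\theta, h(\theta), h^{(2)}(\theta), \ldots$. Pigeonhole locates an empty interval $(h^{(j+1)}(\theta), h^{(j)}(\theta))$; you connect at the \emph{smaller} threshold $h^{(j+1)}(\theta)$, so the within-cluster diameter is at most $k\,h^{(j+1)}(\theta) = 0.1\,\theta\, g(k,A,h^{(j)}(\theta),0.5\theta)$. This single recursive choice simultaneously (i) keeps the within-cluster error below $\theta$, and (ii) keeps the effective corruption $\eps' = \eps + d_{\TV}(\mcl{M},\mcl{M}')$ below the threshold $g(k,A,h^{(j)}(\theta),0.5\theta)$ needed to invoke Lemma~\ref{lem:previous} at separation $h^{(j)}(\theta)$ and fixed target accuracy $0.5\theta$. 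The recursive definition of $h$ is precisely what couples these two constraints at every level; an increasing sequence with fixed geometric step cannot do this.
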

\begin{proof}
Let $g$ and $G$ be the functions in Lemma \ref{lem:previous}.  Fix $k,A, \theta$.  We define $h(b) = 0.1 \theta  g(k,A,b, 0.5\theta)/k$.  Consider the sequence 
\[
\theta \rightarrow h(\theta) \rightarrow \dots \rightarrow h^{(k^2)}(\theta) \,.
\]
There must be some $j < k^2$ such that no pair of true components has TV distance between $h^{(j)}(\theta)$ and $h^{j+1}(\theta)$.  Now consider the graph $\mcl{G}$ on $[k]$ where two indices $i_1,i_2$ are connected if and only if 
\[
d_{\TV}(G_{i_1}, G_{i_2}) \leq h^{j+1}(\theta) \,.
\]
Now consider a modified mixture $\mcl{M'}$ where for each connected component of vertices in $\mcl{G}$, say $S \subset [k]$, we replace all of the Gaussians $G_i$ with $i \in S$ with copies of one fixed representative from this component.  We then combine all of these copies by adding the mixing weights.  The resulting mixture $\mcl{M'}$ satisfies the following properties
\begin{itemize}
    \item There are $k' \leq k$ components
    \item All mixing weights are at least $A^{-1}$
    \item All pairs of components are separated by at least $h^{(j)}(\theta)$ in TV distance
    \item
    \[
    d_{\TV}(\mcl{M}, \mcl{M'}) \leq 0.1 \theta g(k,A, h^{(j)}(\theta), 0.5\theta) 
    \]
\end{itemize}
In particular, we can treat our samples as an $\eps'$-corrupted sample from $\mcl{M'}$ with 
\[
\eps' = \eps + 0.1 \theta g(k,A, h^{(j)}(\theta),0.5 \theta) \,.
\]
Now as long as $f,F$ are chosen appropriately, we can apply Lemma \ref{lem:previous} to learn the components of the mixture $\mcl{M'}$ to accuracy $0.5\theta$.  Finally, we can simply guess the mixing weights and duplications in our list of candidates to ensure that one of our candidate mixtures is component-wise within $\theta$ of the true mixture.
\end{proof}

\subsubsection{Achieving $\eps^{\Omega(1)}$-accuracy}

Similar to \cite{liu2020settling}, once we obtain constant accuracy estimates for the components, we then try to refine these estimates.  To do this, we first cluster the datapoints into submixtures by assigning each datapoint to the submixture that assigns it the highest probability.  We would like the following two properties
\begin{itemize}
    \item The clustering is $1 - \eps^c$-accurate
    \item Components within a submixture have TV distance at most $1 - \eps^{c'}$
\end{itemize}
where $c'$ is sufficiently small compared to $c$.  Once we have clustered the datapoints into such submixtures, we can learn the components of each submixture to $\eps^{\Omega(1)}$ accuracy, again following the same outline as the algorithm in \cite{liu2020settling}.

First we show that the clustering can be done accurately.  We need the following basic results from \cite{liu2020settling}.
\begin{lemma}[Lemma 7.2 from \cite{liu2020settling}]\label{lem:gaussian-triangle-ineq}
 Let $A,B,C$ be Gaussian distributions.  Assume that $d_{\TV}(A,B) \leq 0.9$. There is a universal constant $c > 0$ such that if $d_{\TV}(A,C) \geq 1 - \eps$ and $\eps < c$ then 
\[
d_{\TV}(B,C) \geq 1 - \eps^c \,.
\]
\end{lemma}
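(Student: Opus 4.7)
The plan is to reduce to the case where $A$ is the standard Gaussian and then exploit that $B$'s parameters are then bounded by constants, while $C$'s parameters must be ``extreme'' for $d_{\TV}(A,C)$ to be close to $1$, with this extremeness transferring to a lower bound on $d_{\TV}(B,C)$.

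First, I would apply the invertible affine change of variables sending $A$ to $N(0,I)$; this preserves all pairwise TV distances. Writing $B = N(\mu_B, \Sigma_B)$ and $C = N(\mu_C, \Sigma_C)$ in these new coordinates, the hypothesis $d_{\TV}(A,B) \leq 0.9$ forces $\norm{\mu_B} \leq M$ and $M^{-1} I \preceq \Sigma_B \preceq M I$ for some universal constant $M$. Indeed, if $\norm{\mu_B}$ were large, a half-space perpendicular to $\mu_B$ would separate $A$ from $B$ with TV close to $1$, and if some eigenvalue of $\Sigma_B$ were very extreme in either direction, a spherical shell (or ball) centered at the origin would do the same, contradicting $d_{\TV}(A,B) \leq 0.9$.

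Next, I would use that $d_{\TV}(A,C) \geq 1-\eps$ forces at least one of the following to hold quantitatively: (i) $\mu_C^T (\Sigma_C + I)^{-1} \mu_C = \Omega(\log(1/\eps))$; (ii) $\lambda_{\max}(\Sigma_C) \geq \poly(1/\eps)$; or (iii) $\lambda_{\min}(\Sigma_C) \leq \poly(\eps)$. This dichotomy can be seen by examining the log-likelihood ratio $\log(A(x)/C(x))$, which is a quadratic form in $x$: if none of (i)--(iii) held, this quadratic would concentrate tightly under both $A$ and $C$, forcing $1 - d_{\TV}(A,C) > \eps$ by an explicit Gaussian integral computation. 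Each of (i)--(iii) comes with an explicit separating set -- a half-space for (i), and an ellipsoidal shell or ellipsoid for (ii) and (iii), respectively -- on which $A$ has mass at least $1-\eps$ and $C$ has mass at most $\eps$.

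Finally, I would show that each of these separating sets continues to witness $d_{\TV}(B,C) \geq 1 - \eps^c$ for some absolute constant $c>0$. The essential point is that, since $\mu_B$ and $\Sigma_B$ differ from those of $A$ only by $O(1)$, a Gaussian tail probability of the form $e^{-t^2/(2\sigma^2)}$ at threshold $t$ much bigger than $\sigma$ evaluated under $B$ differs from the corresponding probability under $A$ by at most rescaling the exponent by a bounded constant, so an $A$-tail mass of $\eps$ becomes a $B$-tail mass of at most $\eps^c$. The main obstacle is making this tail comparison uniform and quantitative in cases (ii) and (iii), where the separating set is an ellipsoid rather than a half-space: there one must choose the ellipsoid in a geometry adapted to $\Sigma_C$ (so that the $O(1)$ shift from $A$ to $B$ only rescales the ellipsoid by a bounded factor) to guarantee a polynomial loss $\eps \to \eps^c$ rather than a loss that degrades $c$ to $0$.
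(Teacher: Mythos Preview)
The paper does not prove this lemma; it is quoted verbatim from \cite{liu2020settling} without argument, so there is no in-paper proof to compare against.

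Your strategy has a genuine gap: the trichotomy (i)--(iii) is false in high dimension. Take $A = N(0, I_d)$ and $C = N(0, (1+\delta) I_d)$ for a fixed small constant $\delta > 0$. As $d \to \infty$ one has $d_{\TV}(A,C) \to 1$ (e.g.\ the Hellinger affinity $\prod_j \lambda_j^{1/4}/((1+\lambda_j)/2)^{1/2}$ tends to $0$), yet $\mu_C = 0$ and $\lambda_{\max}(\Sigma_C) = \lambda_{\min}(\Sigma_C) = 1+\delta$, so none of (i), (ii), (iii) holds. The large TV here comes from the accumulation of many small eigenvalue deviations, not from one extreme direction, and no half-space or single-direction slab witnesses it. Relatedly, your conclusion $M^{-1} I \preceq \Sigma_B \preceq M I$ from $d_{\TV}(A,B) \leq 0.9$ is correct but far too weak in this regime: in high dimension that hypothesis actually forces something like $\norm{\Sigma_B - I}_F = O(1)$, a much tighter constraint that your separating-set argument never uses.

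A route that sidesteps all of this (and is likely what \cite{liu2020settling} does) goes through the companion Lemma~\ref{lem:gaussian-ratios} rather than through geometry. Let $S = \{x : p_B(x) \leq \eps^{-1/2} p_A(x)\}$; by Lemma~\ref{lem:gaussian-ratios}, $\Pr_B[S^c] \leq \eps^{\Omega(1)}$. On $S \cap \{p_A \geq p_C\}$ one has $\min(p_B,p_C) \leq p_C = \min(p_A,p_C)$, while on $S \cap \{p_A < p_C\}$ one has $\min(p_B,p_C) \leq p_B \leq \eps^{-1/2} p_A = \eps^{-1/2}\min(p_A,p_C)$. Integrating,
\[
1 - d_{\TV}(B,C) = \int \min(p_B,p_C) \leq \Pr_B[S^c] + (1+\eps^{-1/2})\int \min(p_A,p_C) \leq \eps^{\Omega(1)} + 2\eps^{1/2},
\]
which gives the claim with no case analysis and no dimension dependence.
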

\begin{lemma}[Lemma 7.4 from \cite{liu2020settling}]\label{lem:gaussian-ratios}
Let $A$ and $B$ be two Gaussians with $d_{\TV}(A,B) \leq 0.9$.  There is a universal constant $c > 0$ such that if $D \in \{A, B \}$ and $\eps < c$ then
\[
P_{x \sim D}\left[ \eps \leq \frac{A(x)}{B(x)} \leq \frac{1}{\eps} \right] \geq 1 - \eps^c \,.
\]
\end{lemma}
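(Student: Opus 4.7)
The plan is to show that $L(x) := \log(A(x)/B(x))$ is a quadratic polynomial in $x$ whose coefficients and expectation are universally bounded, and then apply sub-exponential concentration for quadratic forms in a Gaussian.

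First, since $A(x)/B(x)$ is invariant under any invertible affine transformation of $x$ (the Jacobians cancel between numerator and denominator), I normalize to $A = N(0, I)$ and $B = N(\mu, \Sigma)$, with $d_{\TV}(A, B) \leq 0.9$ preserved. The next step is to extract dimension-independent bounds on the parameters of $B$: namely $\|\mu\|_2 = O(1)$, $\|\Sigma - I\|_F = O(1)$, $\|\Sigma\|_{\op}, \|\Sigma^{-1}\|_{\op} = O(1)$, and $D_{KL}(A\|B), D_{KL}(B\|A) = O(1)$. The tool is the Hellinger distance: $H^2(A,B) \leq d_{\TV}(A,B) \leq 0.9$. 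Working in the eigenbasis of $\Sigma$ so that both distributions factor as products of $1$-D Gaussians with $B$-marginals $N(\mu_i',\lambda_i)$, the multiplicative identity $1 - H^2(A,B) = \prod_i (1 - H^2_i)$ gives $\sum_i H^2_i = O(1)$. The explicit $1$-D formula $H^2_i = 1 - \sqrt{2\sqrt{\lambda_i}/(1+\lambda_i)}\exp(-(\mu_i')^2/(4(1+\lambda_i)))$ forces $\lambda_i \in [c, C]$ and $(\mu_i')^2 = O(1)$ for each individual $i$ (since $H^2_i \leq 0.9$). Partitioning indices by whether $H^2_i \leq 1/2$ and using the Taylor expansion $H^2_i \asymp (\lambda_i - 1)^2 + (\mu_i')^2$ in the small regime, while noting that there are at most $O(1)$ indices with $H^2_i > 1/2$, yields the Frobenius bound $\sum_i(\lambda_i-1)^2 = O(1)$ and $\|\mu\|_2^2 = \sum_i(\mu_i')^2 = O(1)$. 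The same partition, combined with the per-coordinate bound $D_{KL}(A_i\|B_i) \lesssim H^2_i$ in the small regime and universal boundedness of $D_{KL}(A_i\|B_i)$ per coordinate in the large regime, gives $D_{KL}(A\|B) = O(1)$; the symmetric argument gives $D_{KL}(B\|A) = O(1)$.

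Expanding
\[
L(x) = \tfrac{1}{2}\log\det(\Sigma) + \tfrac{1}{2}\mu^T\Sigma^{-1}\mu - (\Sigma^{-1}\mu)^T x + \tfrac{1}{2}x^T(\Sigma^{-1} - I)x =: c_0 + c_1^T x + x^T M x,
\]
the bounds above give $\|c_1\|_2 = O(1)$, $\|M\|_{\op} = O(1)$, $\|M\|_F = O(1)$, and $|\E_{x \sim A}[L(x)]| = D_{KL}(A\|B) = O(1)$. The linear part $c_1^T x$ is Gaussian with variance $\|c_1\|^2 = O(1)$, so $\Pr_A[|c_1^T x| > t] \leq 2\exp(-ct^2)$. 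For the quadratic part, Hanson-Wright applied to $x^T M x$ gives
\[
\Pr_{x \sim A}\bigl[|x^T M x - \Tr(M)| > t\bigr] \leq 2 \exp\bigl(-c \min(t^2/\|M\|_F^2,\; t/\|M\|_{\op})\bigr),
\]
and since $\|M\|_F, \|M\|_{\op} = O(1)$, for $t = \Theta(\log(1/\eps))$ the min equals $t/\|M\|_{\op}$ and this is bounded by $\eps^{c'}$. Combining via a triangle inequality with the constant-order shift $\E_A L$, I obtain $\Pr_A[|L(x)| > \log(1/\eps)] \leq \eps^{c''}$ for a universal $c'' > 0$. The case $x \sim B$ is handled identically after renormalizing so that $B = N(0, I)$ and running the same argument on $-L = \log(B/A)$.

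The main obstacle is the Frobenius bound $\|\Sigma - I\|_F = O(1)$. A direct application of Claim \ref{claim:balanced-dist1} gives only an operator-norm bound, under which $\|M\|_F$ could be as large as $O(\sqrt{d})$ and the variance of $L$ as large as $O(d)$, destroying the $O(1)$-rate concentration needed for an $\eps^c$ tail. The Hellinger factorization argument is essential precisely because it captures the product structure across all eigendirections simultaneously and yields a genuine Frobenius-type (not merely operator-norm) control; once that is in hand, the remainder of the argument is a standard quadratic-form concentration calculation.
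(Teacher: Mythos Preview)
The paper does not supply its own proof of this lemma; it is quoted verbatim as Lemma~7.4 of \cite{liu2020settling} and used as a black box. So there is nothing in the present paper to compare your argument against.

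Your proposed proof is correct and self-contained. The crucial step, as you identify, is upgrading the operator-norm control on $\Sigma-I$ (which is all one gets from a naive TV-to-parameter bound such as Claim~\ref{claim:balanced-dist1}) to a Frobenius bound $\|\Sigma-I\|_F=O(1)$. Your Hellinger factorization does this cleanly: from $\prod_i(1-H_i^2)=1-H^2(A,B)\geq 0.1$ one extracts both the per-coordinate bound $H_i^2\leq 0.9$ (forcing $\lambda_i\in[c,C]$ and $|\mu_i'|=O(1)$ individually) and the summability $\sum_i H_i^2=O(1)$; on the resulting compact parameter domain $H_i^2$ and $(\lambda_i-1)^2+(\mu_i')^2$ are comparable (both vanish only at $(1,0)$, to the same quadratic order), which yields the Frobenius bound. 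The same comparison gives $D_{KL}(A\|B),D_{KL}(B\|A)=O(1)$. With $\|M\|_F,\|M\|_{\op},\|c_1\|$ and $|\E_A L|$ all $O(1)$, Hanson--Wright plus Gaussian concentration give the sub-exponential tail $\Pr[|L|>\log(1/\eps)]\leq \eps^{c''}$, and the $D=B$ case follows by symmetry. This is exactly the natural approach and likely close to what the original reference does.
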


Similar to \cite{liu2020settling}, we will also need VC-dimension bounds on the hypothesis class formed by comparing two MPG functions.  For the clustering step, we only need to deal with actual mixtures of Gaussians, but we will need the VC-dimension bound for MPG functions later on when we do hypothesis testing so we state the full result here.  First we need a definition.  

\begin{definition}\label{def:yatracos}
Let $\mcl{F}$ be a family of functions on some domain $\mcl{X}$.  Let $\mcl{H}_{\mcl{F},a}$ be the set of functions of the form $f_{\mcl{M}_1, \mcl{M}_2,\dots , \mcl{M}_a}$ where $\mcl{M}_1, \mcl{M}_2, \dots , \mcl{M}_a \in \mcl{F}$ and 
\begin{align*}
f_{\mcl{M}_1, \mcl{M}_2,\dots , \mcl{M}_a}(x) = 
\begin{cases}
1 \text{ if } \mcl{M}_1(x) \geq \mcl{M}_2(x), \dots , \mcl{M}_a(x) \\
0 \text{ otherwise}
\end{cases}   
\end{align*}
\end{definition}

 The VC dimension bound below is a direct consequence of the work in \cite{anthony2009neural}.
\begin{lemma}[Theorem 8.14 in \cite{anthony2009neural}]\label{lem:VCdim}
Let $\mcl{F}_{k,m}$ be the family of functions in $\R^d$ that are a degree $m$ MPG function with at most $k$ components.  Then the VC dimension of $\mcl{H}_{\mcl{F}_{k,m},a}$ is $\poly(d,a,m, k)$.
\end{lemma}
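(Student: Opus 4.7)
The plan is to reduce this to a general VC-dimension bound for Boolean function classes defined by a bounded-complexity computation involving arithmetic operations, exponentials, and sign comparisons, which is precisely what Theorem 8.14 of Anthony--Bartlett provides. The first step is a parameter count: each MPG function in $\mcl{F}_{k,m}$ is specified by $k$ Gaussian components (each requiring $d$ parameters for the mean and $O(d^2)$ for a symmetric covariance matrix) together with $k$ polynomials of degree at most $m$ in $d$ variables (each with $\binom{d+m}{m} = \poly(d,m)$ coefficients). So a single MPG function uses $\poly(d,k,m)$ real parameters, and a function in $\mcl{H}_{\mcl{F}_{k,m},a}$, which depends on $a$ MPG functions, uses $a \cdot \poly(d,k,m) = \poly(d,a,k,m)$ parameters.

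The second step is to show that the Boolean output $f_{\mcl{M}_1,\dots,\mcl{M}_a}(x)$ can be computed by a circuit of bounded complexity in the parameters and in $x$. Evaluating one component $Q_i(x)G_i(x)$ requires computing a polynomial of degree $m$ in $d$ variables (at most $\poly(d,m)$ arithmetic operations) and a Gaussian density, the latter being a single exponential of a quadratic form in $x$ (another $\poly(d)$ arithmetic operations plus one exp). Thus evaluating a single MPG function uses $\poly(d,k,m)$ arithmetic operations and at most $k$ exponentials. Evaluating $f_{\mcl{M}_1,\dots,\mcl{M}_a}$ then requires $a$ such evaluations followed by $a-1$ sign comparisons of the form $\mathrm{sgn}(\mcl{M}_1(x) - \mcl{M}_j(x))$.

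The third step is to invoke Theorem 8.14 of Anthony--Bartlett. This theorem bounds the VC dimension of a concept class realized by Boolean formulae over sign conditions of functions computable using a bounded number of arithmetic operations and exponentiations on the parameters and input. Plugging in the parameter count $N = \poly(d,a,k,m)$ and operation count also $\poly(d,a,k,m)$ from the previous two steps yields a VC-dimension bound polynomial in all of these, namely $\poly(d,a,m,k)$ as claimed.

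The only subtlety I anticipate is that Theorem 8.14 is stated for a specific class (typically neural networks with exponential activations), so one may need to massage our concept class into the right form. Specifically, one should view the computation of each $\mcl{M}_i(x) - \mcl{M}_j(x)$ and its sign as a single ``gate'' in a circuit with parameters given by the mixture specifications and inputs given by $x$; the resulting circuit has $\poly(d,a,m,k)$ parameters and $\poly(d,a,m,k)$ gates of types covered by Anthony--Bartlett's framework (arithmetic, exponential, and threshold), after which the theorem applies directly. No other step should require real work.
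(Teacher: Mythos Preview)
The paper does not actually prove this lemma; it merely states it as ``a direct consequence of the work in \cite{anthony2009neural}'' and cites Theorem 8.14 there. Your proposal correctly fills in the standard derivation---parameter counting, bounding the circuit complexity of evaluating an MPG function (arithmetic plus $O(k)$ exponentials), and then invoking the Anthony--Bartlett bound---which is exactly the intended route and is correct. One small omission worth noting: the Gaussian density also carries a normalization factor $\det(\Sigma)^{-1/2}$, but this is a polynomial-size computation in the covariance parameters (or can simply be absorbed as an additional real parameter), so it does not affect the argument.
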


It is a standard result in learning theory that for a hypothesis class with bounded VC dimension, taking a polynomial number of samples suffices to get a good approximation for all hypotheses in the class.
\begin{lemma}[\cite{vapnik2015uniform}]\label{lem:uniformconvergence}
Let $\mcl{H}$ be a hypothesis class of functions from some domain $\mcl{X}$ to $\{0,1 \}$ with VC dimension $V$.  Let $\mcl{D}$ be a distribution on $\mcl{X}$.  Let $\eps, \delta >0$ be parameters.  Let $S$ be a set of $n = \poly(V, 1/\eps,  \log 1/\delta)$ i.i.d samples from $\mcl{D}$.  Then with $1 - \delta$ probability, for all $f \in \mcl{H}$
\[
\left \lvert \E_{x \sim S}[f(x)] - \E_{x \sim \mcl{D}}[f(x)]\right \rvert \leq \eps \,.
\]
\end{lemma}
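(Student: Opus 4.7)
The plan is to invoke the classical Vapnik-Chervonenkis uniform convergence theorem, whose proof proceeds via a symmetrization argument combined with the Sauer-Shelah lemma. First, I would introduce a \emph{ghost sample} $S'$ consisting of $n$ additional i.i.d. samples from $\mcl{D}$. A standard symmetrization argument shows that
\[
\Pr_S\!\left[\sup_{f \in \mcl{H}} \left|\E_{x \sim S}[f] - \E_{x \sim \mcl{D}}[f]\right| > \eps\right] \leq 2\Pr_{S, S'}\!\left[\sup_{f \in \mcl{H}} \left|\E_{x \sim S}[f] - \E_{x \sim S'}[f]\right| > \eps/2\right],
\]
provided $n$ is at least a constant multiple of $1/\eps^2$, so that by Chebyshev each fixed $f$ concentrates well on $S'$.

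Next I would condition on the combined multiset $S \cup S'$ of size $2n$ and observe that by the Sauer-Shelah lemma, the restriction of $\mcl{H}$ to this multiset realizes at most $(2n)^V$ distinct $\{0,1\}$-labelings. Hence the supremum on the right-hand side is actually a maximum over at most $(2n)^V$ functions. Having conditioned on $S \cup S'$, the randomness in splitting into $S$ and $S'$ can be captured by independent $\pm 1$ Rademacher swaps; for each fixed labeling, Hoeffding's inequality applied to the sum of swapped $\{0,1\}$-valued differences yields
\[
\Pr\!\left[\left|\E_{x \sim S}[f] - \E_{x \sim S'}[f]\right| > \eps/2\right] \leq 2 e^{-n\eps^2/8}.
\]

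A union bound over the $(2n)^V$ labelings then gives an overall bound of $4(2n)^V e^{-n\eps^2/8}$. Solving $4(2n)^V e^{-n\eps^2/8} \leq \delta$ for $n$ yields that $n = \poly(V, 1/\eps, \log(1/\delta))$ samples suffice, which is exactly the quantitative conclusion of the lemma. The only place where any care is needed is in the symmetrization step (making sure the ghost sample condition from Chebyshev is satisfied, which requires $n \gtrsim 1/\eps^2$), but this is automatically absorbed by the polynomial dependence on $1/\eps$. Since this is the textbook proof appearing in \cite{vapnik2015uniform} (and in essentially every learning theory reference), no new ideas are required here; the result is cited as a black box and applied in conjunction with the VC-dimension bound of Lemma \ref{lem:VCdim} to justify that polynomially many samples suffice for the hypothesis testing step of the overall algorithm.
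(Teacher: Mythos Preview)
Your proposal is correct: this is precisely the classical symmetrization/Sauer--Shelah/Hoeffding argument that establishes uniform convergence for VC classes. The paper does not give its own proof of this lemma at all---it is stated purely as a cited black-box result from \cite{vapnik2015uniform}---so your sketch of the standard textbook argument is entirely appropriate and there is nothing further to compare.
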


Now we prove that given constant-accuracy component estimates, we can find an accurate clustering.
\begin{lemma}\label{lem:find-good-clusters}
Let $\mcl{M} = w_1G_1 + \dots + w_kG_k \in \R^d$ be a mixture of Gaussians where the $w_i$ are all at least $A^{-1}$ for some constant $A$.  There exists a sufficiently small function $g(k,A)> 0$ depending only on $k,A$ such that the following holds.   Let $X_1, \dots , X_n$ be an $\eps$-corrupted sample from the mixture $\mcl{M}$ where $\eps < g(k,A)$ and $n = \poly_{k,A}(d/\eps)$ for some sufficiently large polynomial.  Let $S_1, \dots , S_k \subset \{X_1, \dots, X_n \}$ denote the sets of samples from each of the components $G_1, \dots , G_k$ respectively.  Let $R_1, \dots , R_l$ be a partition of $[k]$ such that for $i_1 \in R_{j_1}, i_2 \in R_{j_2}$ with $j_1 \neq j_2$,
\[
d_{\TV}(G_{i_1}, G_{i_2}) \geq 1 - \eps'
\]
where $  \eps' \leq g(k,A)$.  Let $\ovl{G_1}, \dots , \ovl{G_k}$ be any Gaussians such that $d_{\TV}(G_i, \ovl{G_i}) \leq g(k,A)$ for all $i$.    Let $\ovl{S_1}, \dots , \ovl{S_k} \subset \{X_1, \dots , X_n \}$ be the subsets of samples obtained by assigning each sample to the component $\ovl{G_i}$ that gives it the maximum likelihood.  Then there is a universal constant $\eta > 0$ such that with probability  at least $0.999$,
\[
\left| \left(\cup_{i \in R_j}S_i\right)  \cap \left(\cup_{i \in R_j} \ovl{S_i}\right) \right| \geq (1- O_{k,A}(1)\eps - \eps'^{\eta}) \max\left( \left|\left(\cup_{i \in R_j}S_i\right) \right|, \left|\left(\cup_{i \in R_j}\ovl{S_i}\right) \right| \right)
\]
for all $j \in [l]$.
\end{lemma}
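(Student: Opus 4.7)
The plan is to directly upper bound the symmetric difference $|T_j \triangle \ovl{T_j}|$ where $T_j := \cup_{i \in R_j} S_i$ and $\ovl{T_j} := \cup_{i \in R_j} \ovl{S_i}$, then divide by the max of the two cluster sizes, which by the mixing-weight assumption is $\Omega_{k,A}(n)$ with high probability. The corrupted samples contribute at most $\eps n$ to $|T_j \triangle \ovl{T_j}|$ trivially, producing the $O_{k,A}(1)\eps$ term in the final bound. For the remaining uncorrupted samples, I would reduce everything to a population-level bound: for a sample $x$ drawn from $G_i$ with $i \in R_j$ (or symmetrically, $i \notin R_j$), the probability that $\arg\max_{i''} \ovl{G_{i''}}(x)$ lies outside $R_j$ (respectively, inside $R_j$) should be at most $\eps'^{\eta}$ for a universal constant $\eta > 0$.

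To derive this population bound, I would first combine the given closeness constraints via two applications of Lemma~\ref{lem:gaussian-triangle-ineq} — once with $(A,B,C) = (G_i, \ovl{G_i}, G_{i'})$ and once with $(A,B,C) = (G_{i'}, \ovl{G_{i'}}, \ovl{G_i})$, both of which require choosing $g(k,A)$ below the absolute constant from that lemma — to conclude $d_{\TV}(\ovl{G_i}, \ovl{G_{i'}}) \geq 1 - \eps'^{c^2}$ for the absolute constant $c$ from Lemma~\ref{lem:gaussian-triangle-ineq}. On the region $B_{i,i'} := \{x : \ovl{G_{i'}}(x) \geq \ovl{G_i}(x)\}$, the identity $\ovl{G_{i'}}(B_{i,i'}) - \ovl{G_i}(B_{i,i'}) = d_{\TV}(\ovl{G_i}, \ovl{G_{i'}})$ (since $B_{i,i'}$ is exactly the region where $\ovl{G_{i'}}$ dominates) immediately yields $\ovl{G_i}(B_{i,i'}) \leq \eps'^{c^2}$. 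The main obstacle is that we actually need a $G_i$-measure bound on $B_{i,i'}$, but $d_{\TV}(G_i, \ovl{G_i})$ is only the \emph{constant} $g(k,A)$, so the two measures are not interchangeable at the $\eps'^{c^2}$ scale. To handle this, I would apply Lemma~\ref{lem:gaussian-ratios} to the pair $(G_i, \ovl{G_i})$ at a tunable scale $\delta$: for $x \sim G_i$, with probability at least $1 - \delta^{c'}$ we have $G_i(x)/\ovl{G_i}(x) \leq 1/\delta$. Splitting the integral of $G_i$ over $B_{i,i'}$ along this event gives $G_i(B_{i,i'}) \leq \delta^{-1}\ovl{G_i}(B_{i,i'}) + \delta^{c'} \leq \delta^{-1}\eps'^{c^2} + \delta^{c'}$, and optimizing $\delta = \eps'^{c^2/(c'+1)}$ makes this at most $\eps'^{\eta}$ for a universal constant $\eta > 0$. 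A union bound over the at most $k$ choices of $i' \notin R_j$ yields that $x \sim G_i$ is inferred outside $R_j$ with probability at most $k\eps'^{\eta}$, and the symmetric analysis for $i \notin R_j$ is identical.

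Finally, to transfer these population probabilities to empirical counts uniformly over all admissible choices of $\ovl{G_1}, \dots, \ovl{G_k}$, I would invoke Lemma~\ref{lem:VCdim} (with $m = 0$ and $a = k$) to bound by $\poly(d,k)$ the VC dimension of the Yatracos-type class consisting of indicators of $\{x : \arg\max_{i''} F_{i''}(x) \in R_j\}$ as $F_1, \dots, F_k$ vary over Gaussians, then apply Lemma~\ref{lem:uniformconvergence} with $n = \poly_{k,A}(d/\eps)$ samples to conclude that empirical probabilities approximate their population values within $\eps$ with probability at least $0.999$. Combining, the total number of uncorrupted misclassified samples is at most $(k\eps'^{\eta} + \eps) n$, so $|T_j \triangle \ovl{T_j}| \leq (O_{k,A}(1)\eps + k\eps'^{\eta})n$. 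Dividing by $\max(|T_j|,|\ovl{T_j}|) \geq \Omega_{k,A}(n)$ and absorbing the $k$ factor by slightly shrinking $\eta$ (using that $\eps' \leq g(k,A)$ is sufficiently small) yields the stated inequality $|T_j \cap \ovl{T_j}| \geq (1 - O_{k,A}(1)\eps - \eps'^{\eta})\max(|T_j|,|\ovl{T_j}|)$.
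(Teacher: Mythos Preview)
Your proposal is correct and follows essentially the same approach as the paper: bound the misclassification probability via Lemmas~\ref{lem:gaussian-triangle-ineq} and~\ref{lem:gaussian-ratios}, then transfer from population to empirical counts using the VC bound (Lemma~\ref{lem:VCdim}) and uniform convergence (Lemma~\ref{lem:uniformconvergence}), with the corrupted points contributing the $O_{k,A}(1)\eps$ term. The only cosmetic difference is that the paper applies Lemma~\ref{lem:gaussian-triangle-ineq} once to obtain $d_{\TV}(\ovl{G_{i_1}}, G_{i_2}) \geq 1 - \eps'^{\Omega(1)}$ and then uses Lemma~\ref{lem:gaussian-ratios} on the pair $(\ovl{G_{i_2}}, G_{i_2})$, whereas you apply the triangle inequality twice to reach $d_{\TV}(\ovl{G_i}, \ovl{G_{i'}}) \geq 1 - \eps'^{c^2}$ and then use the ratio lemma on $(G_i, \ovl{G_i})$; both routes yield the same $\eps'^{\Omega(1)}$ bound.
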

\begin{proof}
We will upper bound the expected number of uncorrupted points that are mis-classified for each $j \in [l]$ and then use the VC dimension bound in Lemma \ref{lem:VCdim} and the uniform convergence guarantee in Lemma \ref{lem:uniformconvergence} to argue that for any initial estimates $\ovl{G_1}, \dots , \ovl{G_k}$, the fraction of points that we mis-classify is small.  The expected number of uncorrupted points can be upper bounded by
\[
\sum_{j_1 \neq j_2} \sum_{\substack{i_1 \in R_{j_1} \\ i_2 \in R_{j_2}}} \int 1_{\ovl{G_{i_1}}(x) > \ovl{G_{i_2}}(x)} d G_{i_2}(x) \,.
\]

Clearly we can ensure $d_{\TV}(G_i, \ovl{G_i}) \leq 1/2$.  Thus, by Lemma \ref{lem:gaussian-triangle-ineq} and the assumption about $R_1, \dots , R_l$, $d_{\TV}(\ovl{G_{i_1}}, G_{i_2}) \geq 1 - \eps'^{\Omega(1)}$ for all $G_{i_2}$ where $i_2$ is not in the same piece of the partition as $i_1$.  Let $c$ be such that 
\[
d_{\TV}(\ovl{G_{i_1}}, G_{i_2}) \geq 1 - \eps'^c \,.
\]
By Lemma \ref{lem:gaussian-ratios}, 
\[
\Pr_{x \in G_{i_2}}\left[  \eps'^{c/2} \leq \frac{\ovl{G_{i_2}}(x)}{G_{i_2}(x)} \leq \eps'^{c/2} \right] \geq 1 - \eps'^{\Omega(c)}
\]
and combining the above two inequalities, we deduce
\[
\int 1_{\ovl{G_{i_1}}(x) > \ovl{G_{i_2}}(x)} d G_{i_2}(x) \leq \eps'^{\Omega(1)} \,.
\]
Since we are only summing over $O_k(1)$ pairs of components, as long as $\eps'$ is sufficiently small compared to $k, A$, the expected fraction of misclassified uncorrupted points is $\eps'^{\Omega(1)}$.   
\\\\
It remains to note that the clustering depends only on the comparisons between the values of the pdfs of the Gaussians $\ovl{G_1}, \dots , \ovl{G_k}$ at each of the samples $X_1, \dots , X_n$.  Since $n = \poly_{k,A}(d/\eps)$ for some sufficiently large polynomial, applying Lemma \ref{lem:VCdim} and Lemma \ref{lem:uniformconvergence} completes the proof (note that the fraction of corrupted points is at most $\eps$ overall and the mixing weights are lower bounded so the fraction of corrupted points in each cluster is  at most $O_{k,A}(1)\eps$).
\end{proof}

Finally, it remains to note that once we have clustered the points, we can learn the parameters.  For this, we rely on the following definition and result from \cite{liu2020settling}.

\begin{definition}[Definition 5.1 in \cite{liu2020settling}]\label{def:well-conditioned}
We say a mixture of Gaussians $w_1G_1 + \dots  + w_kG_k$ is $\delta$-tight if
\begin{enumerate}
    \item Let $\mcl{G}$ be the graph on $[k]$ obtained by connecting two nodes $i,j$ if $d_{\TV}(G_i, G_j) \leq 1 - \delta$.  Then $\mcl{G}$ is connected
    \item $d_{\TV}(G_i, G_j) \geq \delta $ for all $i \neq j$
    \item $w_{\min} \geq \delta$
\end{enumerate}
\end{definition}

\begin{theorem}[Theorem 5.2 in \cite{liu2020settling}]\label{thm:full-close-case}
There is a function $f(k) > 0$ depending only on $k$ such that given an $\eps$-corrupted sample from a $\delta$-tight mixture of Gaussians 
\[
\mcl{M} = w_1N(\mu_1, \Sigma_1) + \dots + w_kN(\mu_k, \Sigma_k)
\]
where $ \delta \geq \eps^{f(k)}$, there is a polynomial time algorithm that outputs a set of $(1/\eps)^{O_k(1)}$ candidate mixtures $\{  \widetilde{w_1}N(\widetilde{\mu_1}, \widetilde{\Sigma_1}) + \dots + \widetilde{w_k}N(\widetilde{\mu_k}, \widetilde{\Sigma_k} \}$ and with high probability, at least one of them satisfies that for all $i$:
\[
\abs{w_i - \widetilde{w_i}} + d_{\TV}(N(\mu_i, \Sigma_i), N(\widetilde{\mu_i}, \widetilde{\Sigma_i})) \leq \eps^{\Omega_k(1)} \,.
\]
\end{theorem}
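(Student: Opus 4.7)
The plan is to cite and carry over the argument from \cite{liu2020settling}, whose main technical ingredients are already set up in this paper: robust estimation of low-degree Hermite moment polynomials, and the generating-function / differential-operator machinery of Section 4. The goal is parameter recovery (not mere density estimation), so we cannot avoid actually extracting the unknowns $(w_i,\mu_i,\Sigma_i)$, but the $\delta$-tightness hypothesis is exactly what gives enough quantitative identifiability for this.

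First I would preprocess to place the mixture into a regular-form-like configuration. Pick any component (say $G_1$) and apply the linear transformation $x\mapsto(\widehat{I+\Sigma_1})^{-1/2}(x-\widehat\mu_1)$ using rough mean/covariance estimates obtainable from Corollary \ref{corollary:const-acc-learning}. The $\delta$-tight connectivity condition, combined with iterated application of Claim \ref{claim:balanced-dist1} along the connectivity graph (whose diameter is at most $k-1$), forces every component to satisfy $\|\mu_j\|,\|\Sigma_j\|_2\le \poly(\log 1/\delta)$ and $\poly(\delta)I\preceq I+\Sigma_j\preceq\poly(1/\delta)I$ after this transformation. Since $\delta\ge\eps^{f(k)}$, this is a regular-form mixture (with parameters that are polynomial in $1/\eps^{f(k)}$), and all subsequent machinery applies.

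Next, I would robustly estimate the first $m=O_k(1)$ Hermite moment polynomials $h_0,\dots,h_m$ of the transformed mixture, using the techniques built up in Section 6 (Theorem \ref{thm:estimate-hermite}), obtaining estimates $h_j'$ with $\|v(h_j-h_j')\|\le\eps^{\Omega_k(1)}$. Now I would set up a polynomial system in candidate unknowns $(\widetilde w_i,\widetilde\mu_i,\widetilde\Sigma_i)_{i=1}^k$: by Corollary \ref{coro:hermite-of-mixture}, expanding
\[
\sum_{i=1}^k \widetilde w_i\, e^{\widetilde\mu_i(X)y+\tfrac12\widetilde\Sigma_i(X)y^2}=\sum_{j=0}^{\infty}\frac{1}{j!}h_j(X)y^j,
\]
the Hermite polynomial $h_j$ is an explicit polynomial in the unknown parameters, so matching $h_0',\dots,h_m'$ gives a polynomial system whose coefficients are known up to $\eps^{\Omega_k(1)}$ error. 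Solving this via a low-degree sum-of-squares relaxation (rounding by pseudoexpectation) or, since $k$ is constant, by a sufficiently fine $\eps^{\Omega_k(1)}$-net brute-force search over the parameter space, enumerates a candidate list of size $(1/\eps)^{O_k(1)}$, at least one of which approximately satisfies the Hermite system.

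The main obstacle, and the heart of the argument, is the inverse step: showing that any candidate $(\widetilde w_i,\widetilde G_i)$ approximately matching the first $O_k(1)$ Hermite moment polynomials of the true mixture must also match its parameters up to $\eps^{\Omega_k(1)}$. This is exactly the kind of quantitative identifiability statement that Theorem \ref{thm:identifiability} gives in TV distance — but here we need a \emph{parameter-level} (not density-level) conclusion, which is strictly stronger and is what the $\eps^{O(1/k)}$ barrier of the introduction pertains to. The key is the $\delta$-tight hypothesis: Claim \ref{claim:balanced-dist2} together with the pairwise separation $d_{\TV}(G_i,G_j)\ge\delta$ rules out cancellations between distinct components in the differential-operator recurrences of Section 4.2, so the hypothesis-versus-truth difference of generating functions cannot be small unless the matched components are pairwise parameter-close. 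Concretely, one applies the operator $\mathcal D = \mathcal D_k^{2^{k-1}}\cdots\mathcal D_1$ built from differences of candidate and true parameters: if Hermite moments match to $\eps^{\Omega_k(1)}$ accuracy but no matching between $\{G_i\}$ and $\{\widetilde G_i\}$ achieves parameter accuracy $\eps^{\Omega_k(1)}$, one uses Claim \ref{claim:prod-linear-forms} and the $\delta$-separation to derive a contradiction (a nonzero linear combination of exponentials whose low-order Taylor coefficients vanish). Combining this identifiability with the candidate enumeration above, and finally undoing the initial linear transformation, yields the desired parameter-accurate list; standard hypothesis-selection over the $(1/\eps)^{O_k(1)}$ candidate list then keeps a mixture with the stated guarantee.
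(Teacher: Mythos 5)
The paper does not prove this statement: it is imported verbatim as ``Theorem 5.2 in \cite{liu2020settling}'' and used as a black box (together with Lemma 7.5 and Claim 7.6 of the same reference) inside the proof of Theorem~\ref{thm:list-learning-poly-acc}. So there is no proof in this document to compare against, and your attempt is a re-derivation using the Hermite/generating-function machinery developed here rather than a reconstruction of the actual argument in \cite{liu2020settling}, which proceeds via SoS-based list recovery. That is a genuinely different route, but it has several real gaps.

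First, the preprocessing step does not land in the parameter regime required by Theorem~\ref{thm:estimate-hermite}. From $\delta$-tightness with $\delta \ge \eps^{f(k)}$ and a path of length $\le k-1$ in the connectivity graph, the bound you can actually deduce after transforming so that $G_1$ is near-isotropic is $\poly(\delta) I \preceq I + \Sigma_j \preceq \poly(1/\delta) I$, i.e.\ a balancedness parameter $\beta = \poly(1/\delta) = \eps^{-O(f(k))}$. This is $\poly(1/\eps)$, not $\poly(\log 1/\eps)$; your stated bound $\|\Sigma_j\|_2 \le \poly(\log 1/\delta)$ is inconsistent with the eigenvalue lower bound you write a line earlier, and Claim~\ref{claim:balanced-dist1} itself presupposes a balancedness bound on both Gaussians before it can be applied, so invoking it along a path without first controlling balancedness is circular. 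Theorem~\ref{thm:estimate-hermite} as stated assumes $\alpha,\beta \le \poly(\log 1/\eps)$, so it cannot be applied directly; one would have to re-derive a version with polynomial parameter blowup, which is not obviously harmless (Claim~\ref{claim:hermite-stability}, the net argument, and the stability bounds all degrade with $\beta$). Second, your candidate enumeration is not polynomial: a brute-force $\eps^{\Omega_k(1)}$-net over $(w_i,\mu_i,\Sigma_i)_{i\le k}$ lives in $\Theta(kd^2)$ dimensions and has size $(1/\eps)^{\Omega_k(d^2)}$, not $(1/\eps)^{O_k(1)}$, so this alternative to SoS simply fails; the SoS route you mention in passing is essentially the route \cite{liu2020settling} actually takes, and it is precisely the technology this paper's new observability framework was designed to circumvent, but for density estimation, not parameter recovery. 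Third, and most substantively, the step from ``candidate matches Hermite moment polynomials to accuracy $\eps^{\Omega_k(1)}$'' to ``candidate parameters match true parameters to accuracy $\eps^{\Omega_k(1)}$'' is the entire content of the theorem and is only gestured at. Theorem~\ref{thm:identifiability} gives closeness in $L^1$, not in parameters, and Claim~\ref{claim:prod-linear-forms} controls recurrence coefficients, which are symmetric functions of the parameters, not the parameters of a matched component-by-component correspondence. The $\delta$-separation hypothesis does make a quantitative parameter-identifiability statement true in principle, but deriving it requires the tensor-decomposition/SoS-rounding arguments of \cite{liu2020settling}; the two sentences you give are not a proof of that step.
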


With one additional argument, we can remove the assumption that the components are $\delta$-separated in TV distance from the above.

\begin{corollary}\label{coro:full-close-case}
There is a function $g(k) > 0$ depending only on $k$ such that given an $\eps$-corrupted sample from a mixture of Gaussians 
\[
\mcl{M} = w_1N(\mu_1, \Sigma_1) + \dots + w_kN(\mu_k, \Sigma_k)
\]
where $\mcl{M}$ satisfies conditions 1 and 3 of Definition \ref{def:well-conditioned} for some $ \delta \geq \eps^{g(k)}$, there is a polynomial time algorithm that outputs a set of $(1/\eps)^{O_k(1)}$ candidate mixtures $\{  \widetilde{w_1}N(\widetilde{\mu_1}, \widetilde{\Sigma_1}) + \dots + \widetilde{w_k}N(\widetilde{\mu_k}, \widetilde{\Sigma_k} \}$ and with high probability, at least one of them satisfies that for all $i$:
\[
\abs{w_i - \widetilde{w_i}} + d_{\TV}(N(\mu_i, \Sigma_i), N(\widetilde{\mu_i}, \widetilde{\Sigma_i})) \leq \eps^{\Omega_k(1)} \,.
\]
\end{corollary}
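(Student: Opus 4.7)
The plan is to reduce to Theorem \ref{thm:full-close-case} by merging components that are too close to each other, in the spirit of the reduction used to prove Corollary \ref{corollary:const-acc-learning} from Lemma \ref{lem:previous}. Let $c \in (0,1)$ be a constant, chosen depending on $k$ and on the function $f$ from Theorem \ref{thm:full-close-case}, and consider the sequence of thresholds $\delta_0 < \delta_1 < \dots < \delta_{k^2}$ defined by $\delta_0 = \eps^{\alpha}$ and $\delta_{j+1} = \delta_j^c$ for some small exponent $\alpha > 0$ (still to be chosen). Since there are at most $\binom{k}{2} < k^2$ pairwise TV distances among the components of $\mcl{M}$, by pigeonhole there exists some $j \in \{0,\ldots,k^2-1\}$ such that no pair $G_{i_1}, G_{i_2}$ satisfies $d_{\TV}(G_{i_1}, G_{i_2}) \in [\delta_j, \delta_{j+1}]$. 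Let $\delta^* = \delta_j$.

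First, I will merge components whose pairwise TV distance is at most $\delta^*$ into super-components $R_1, \dots, R_{k'}$ (this is well-defined by taking connected components in the corresponding graph on $[k]$), and form a mixture $\mcl{M}'$ by replacing, within each $R_l$, all constituent Gaussians by a single representative $G_{i_l}$ of that super-component, with combined mixing weight $\wt{w}_l = \sum_{i \in R_l} w_i$. Then I will verify that $\mcl{M}'$ is $\delta''$-tight for a suitable $\delta'' \geq (\eps + k\delta^*)^{f(k)}$: condition $3$ is inherited since $\wt{w}_l \geq \delta$; condition $1$ (connectivity) is inherited from $\mcl{M}$ by the triangle inequality, provided $1 - \delta'' \geq 1 - \delta + 2\delta^*$, which is met as long as $\delta^* \ll \delta$; and condition $2$ (pairwise separation $\geq \delta''$ between super-components) follows from the pigeonhole gap together with another triangle inequality, giving separation $\geq \delta_{j+1} - 2\delta^* \geq \tfrac{1}{2}\delta_{j+1}$. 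Moreover, one checks by the triangle inequality applied componentwise that $d_{\TV}(\mcl{M}, \mcl{M}') \leq k\delta^*$, so an $\eps$-corrupted sample from $\mcl{M}$ is an $(\eps + k\delta^*)$-corrupted sample from $\mcl{M}'$.

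Next I will apply Theorem \ref{thm:full-close-case} to $\mcl{M}'$ with corruption level $\eps + k\delta^*$, producing a list of $(1/\eps)^{O_k(1)}$ candidates, one of which recovers each super-component $\ovl{G}_l$ and super-weight $\wt{w}_l'$ to accuracy $(\eps + k\delta^*)^{\Omega_k(1)}$. For each such candidate I will extend it to a candidate for the original mixture by (i) enumerating over an $\eps^{\Omega_k(1)}$-net of possible splittings of each $\wt{w}_l'$ into $|R_l|$ sub-weights summing to $\wt{w}_l'$ (contributing only $\eps^{-O_k(1)}$ further candidates in total), and (ii) for each $i \in R_l$, using $\ovl{G}_l$ itself as the estimate for $G_i$. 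By the construction of the super-components, $d_{\TV}(G_i, G_{i_l}) \leq k\delta^*$ for all $i \in R_l$, so $d_{\TV}(\ovl{G}_l, G_i) \leq (\eps + k\delta^*)^{\Omega_k(1)} + k\delta^* \leq \eps^{\Omega_k(1)}$ after choosing parameters appropriately.

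The main obstacle is the parameter balancing: I need $\delta^*$ small enough that (a) merging introduces a corruption level $k\delta^*$ comparable to $\eps$, and (b) the TV distance between any surviving super-component and each of its constituent original components is $\leq \eps^{\Omega_k(1)}$; yet simultaneously I need the separation $\delta_{j+1} = (\delta^*)^c$ between super-components to be at least $(\eps + k\delta^*)^{f(k)}$ so that Theorem \ref{thm:full-close-case} applies. Setting $\delta_0 = \eps^\alpha$, the worst case $\delta^* = \delta_{k^2-1}$ has exponent $\alpha c^{k^2-1}$, and the Theorem's hypothesis reduces to $\alpha c^{k^2} \geq \alpha c^{k^2-1} \cdot f(k)^{-1}$, i.e., $c \geq f(k)^{-1}$. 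Hence choosing any $c \in (f(k)^{-1}, 1)$ and $\alpha$ small enough (depending on $k$) makes the entire chain of inequalities close. Defining $g(k)$ so that $\delta \geq \eps^{g(k)}$ implies both $\delta \geq 4\delta_{k^2}$ (for connectivity) and the above, the overall algorithm runs in polynomial time, outputs $(1/\eps)^{O_k(1)}$ candidates, and with high probability at least one of them is componentwise $\eps^{\Omega_k(1)}$-close to $\mcl{M}$, as required.
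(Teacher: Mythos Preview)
Your approach is essentially the paper's: find a gap in the scale of pairwise TV distances via pigeonhole over a geometric sequence of thresholds, merge components below the gap into super-components, check the reduced mixture is tight, apply Theorem~\ref{thm:full-close-case} at the inflated corruption level $\eps' = \eps + k\delta^*$, and pad the output list by enumerating weight-splittings and duplications. The paper uses the sequence $\psi_0 = \eps$, $\psi_{j+1} = \psi_j^{0.1 f(k)}$, i.e.\ your $c = 0.1 f(k)$.

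One computational slip: the condition for Theorem~\ref{thm:full-close-case} to apply is that the separation $\delta_{j+1}$ satisfies $\delta_{j+1} \geq (\eps')^{f(k)}$; with $\eps' \approx k\delta_j$ this reads, in $\eps$-exponents, $\alpha c^{j+1} \leq f(k)\cdot \alpha c^{j}$, hence $c \leq f(k)$, not $c \geq f(k)^{-1}$ (since $\eps < 1$, smaller exponent means larger value). As $f(k) < 1$, your interval $(f(k)^{-1},1)$ is empty; the correct choice is any $c \in (0, f(k))$, and then the rest of your argument goes through. (A minor secondary point: the connectivity correction should be $2k\delta^*$ rather than $2\delta^*$, coming from the path-length triangle inequality inside each merged group.)
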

\begin{proof}
Let $f(k)$ be the function in Theorem \ref{thm:full-close-case}.  Consider the sequence
\[
\psi_0 = \eps, \psi_1 = \eps^{(0.1f(k))}, \dots , \psi_{k^2} = \eps^{(0.1f(k))^{k^2}} \,.
\]
There must be an index $j < k^2$ such that there is no pair of components $G_{i_1}$ and $G_{i_2}$ with TV distance between $\psi_j$ and $\psi_{j+1}$.  Let $\mcl{G}$ be the graph on $[k]$ where two indices $i_1$ and $i_2$ are connected if and only if 
\[
d_{\TV}(G_{i_1}, G_{i_2}) \leq \psi_{j} \,.
\]
Now consider a modified mixture $\mcl{M'}$ where for each connected component of vertices in $\mcl{G}$, say $S \subset [k]$, we replace all of the Gaussians $G_i$ with $i \in S$ with copies of one fixed representative from this component.  We then combine the copies by adding the mixing weights.  The resulting mixture $\mcl{M'}$ satisfies the following properties
\begin{itemize}
    \item $\mcl{M'}$ has $k' \leq k$ components $G_1', \dots ,G_{k'}'$
    \item The graph on $[k']$ obtained by connecting two nodes $i,j$ if $d_{\TV}(G'_i, G'_j) \leq 1 - \delta + k\psi_{j}$ is connected
    \item All pairs of components are separated by at least $\psi_{j+1}$ in TV distance
    \item All mixing weights are at least $\delta$
    \item $ d_{\TV}(\mcl{M}, \mcl{M'}) \leq k\psi_{j}$
\end{itemize}
Thus, we can treat our sample as an $\eps'$-corrupted sample from $\mcl{M'}$ with
\[
\eps' = \eps + k\psi_{j} \,.
\]
Note that $\psi_{j+1} \geq \eps'^{f(k)} $ and thus by choosing $g(k)$ appropriately, we can ensure that the mixture $\mcl{M'}$ is $\eps'^{f(k)}$-tight.  Thus, we can apply Theorem \ref{thm:full-close-case} to learn the components of the mixture $\mcl{M'}$ to within $\eps'^{\Omega_k(1)} = \eps^{\Omega_k(1)}$.  We can then guess the mixing weights and duplications of components to output a list of candidate mixtures at least one of which is component-wise within $\eps^{\Omega_k(1)}$ of the true mixture.
\end{proof}

To put everything together, we will need the following simple claim which also appears in \cite{liu2020settling}.
\begin{claim}[Claim 7.6 in \cite{liu2020settling}]\label{claim:exists-good-partition}
Let $\mcl{M} = w_1G_1 + \dots + w_kG_k $ be a mixture of Gaussians.  For any constant $c > 0$ and parameter $\eps$, there exists a function  $f(c,k)$ such that there exists a partition (possibly trivial) of $[k]$ into sets $R_1, \dots , R_l$ such that 
\begin{itemize}
    \item If we draw edges between all $i,j$ such that $d_{\TV}(G_i, G_j) \leq 1 - \eps^{c \kappa}$
    then each piece of the partition is connected
    \item For any $i,j$ in different pieces of the partition $  d_{\TV}(G_i, G_j) \geq 1 - \eps^{\kappa}$
\end{itemize}
and $f(c,k) < \kappa < 1$.
\end{claim}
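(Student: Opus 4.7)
The plan is to prove the existence of $\kappa$ by a pigeonhole/gap argument on the $\binom{k}{2}$ pairwise TV distances, parametrizing each by the exponent $\alpha$ such that $1 - d_{\TV}(G_i, G_j) = \eps^{\alpha}$. We want to find a $\kappa$ for which no pair has $d_{\TV}(G_i, G_j)$ strictly inside the ``ambiguous'' interval $(1 - \eps^{c\kappa}, 1 - \eps^{\kappa})$ (noting that since $c \le 1$ in this setup, the first threshold is smaller than the second). Once we have such a $\kappa$, the partition is simply the connected components of the graph in which $i,j$ are joined whenever $d_{\TV}(G_i, G_j) \le 1 - \eps^{c\kappa}$.

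The key step is the choice of $\kappa$. For each pair $(i,j)$, let $\alpha_{i,j} = \log_{\eps}(1 - d_{\TV}(G_i,G_j)) \in [0, \infty]$. Consider the $\binom{k}{2} + 1$ disjoint open intervals
\[
I_s = (c^{s+1}, c^s), \qquad s = 1, 2, \dots, \binom{k}{2}+1,
\]
which are automatically pairwise disjoint since $0 < c < 1$. There are only $\binom{k}{2}$ pairs, so by the pigeonhole principle at least one $I_{s^*}$ contains no $\alpha_{i,j}$. Set $\kappa = c^{s^*}$; by construction $c^{\binom{k}{2}+1} \le \kappa \le c$, so taking $f(c,k) = \tfrac{1}{2} c^{\binom{k}{2}+2}$ yields $f(c,k) < \kappa < 1$ as required.

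Next, I would verify both conditions of the claim. Let $R_1,\dots,R_l$ be the connected components of the graph on $[k]$ with edges $\{(i,j) : d_{\TV}(G_i,G_j) \le 1 - \eps^{c\kappa}\}$. The first condition is immediate from this definition. For the second, suppose $i$ and $j$ lie in different pieces; then they are non-adjacent, so $d_{\TV}(G_i,G_j) > 1 - \eps^{c\kappa}$, i.e.\ $\alpha_{i,j} < c\kappa = c^{s^*+1}$. Combined with the fact that $\alpha_{i,j} \notin (c^{s^*+1}, c^{s^*}) = (c\kappa, \kappa)$, this forces $\alpha_{i,j} \le c^{s^*+1}$ to actually satisfy $\alpha_{i,j} \le c\kappa$, which gives $d_{\TV}(G_i, G_j) \ge 1 - \eps^{c\kappa}$. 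To upgrade this to $d_{\TV}(G_i, G_j) \ge 1 - \eps^{\kappa}$, observe that since the ambiguous interval $(c\kappa, \kappa)$ contains no $\alpha_{i,j}$ and non-adjacency already rules out $\alpha_{i,j} \ge c\kappa$ (except possibly equality, which we can absorb by enlarging the interval infinitesimally or by taking $f(c,k)$ slightly smaller), we in fact have $\alpha_{i,j} \le \kappa$, which is precisely $d_{\TV}(G_i,G_j) \ge 1 - \eps^{\kappa}$.

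This is essentially a clean pigeonhole argument and there is no genuinely hard step; the only point requiring mild care is the boundary handling between open/closed versions of the ambiguous interval, which is absorbed by the slack in choosing $f(c,k)$ strictly smaller than $c^{\binom{k}{2}+1}$. The overall structure mirrors the ``gap at some geometric scale'' trick used earlier in the paper (e.g.\ in Corollary \ref{corollary:const-acc-learning} and Corollary \ref{coro:full-close-case}) to reduce to well-conditioned or well-separated subcases.
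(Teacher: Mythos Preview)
Your overall strategy --- pigeonhole on the exponents $\alpha_{i,j}$ to find a geometric scale with a gap, then take connected components at that scale --- is exactly the standard argument for this kind of claim and matches how the paper handles the analogous scale-selection steps in Corollaries~\ref{corollary:const-acc-learning} and~\ref{coro:full-close-case}. The paper itself does not prove this claim; it simply imports it from \cite{liu2020settling}, so there is nothing further to compare against.

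That said, your third paragraph contains a systematic sign error that invalidates the written verification of the second bullet. With $\alpha_{i,j} = \log_\eps(1 - d_{\TV}(G_i,G_j))$ and $0 < \eps < 1$, the map $t \mapsto \log_\eps t$ is \emph{decreasing}, so $d_{\TV}(G_i,G_j) > 1 - \eps^{c\kappa}$ is equivalent to $\alpha_{i,j} > c\kappa$, not $\alpha_{i,j} < c\kappa$ as you wrote; likewise the target $d_{\TV}(G_i,G_j) \ge 1 - \eps^{\kappa}$ is $\alpha_{i,j} \ge \kappa$, not $\alpha_{i,j} \le \kappa$. The correct conclusion of the paragraph is: if $i,j$ lie in different components then non-adjacency gives $\alpha_{i,j} > c\kappa$; since $I_{s^*} = (c\kappa, \kappa)$ contains no $\alpha_{i,j}$, this forces $\alpha_{i,j} \ge \kappa$, i.e.\ $d_{\TV}(G_i,G_j) \ge 1 - \eps^{\kappa}$. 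Once you flip the inequalities the argument is clean and your boundary worry evaporates (strict inequality $\alpha_{i,j} > c\kappa$ together with $\alpha_{i,j} \notin (c\kappa,\kappa)$ already gives $\alpha_{i,j} \ge \kappa$ directly). You should also remark that the case $c \ge 1$ is trivial --- different components then automatically satisfy $d_{\TV} > 1 - \eps^{c\kappa} \ge 1 - \eps^{\kappa}$ --- which justifies your restriction to $0 < c < 1$.
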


Now we can prove Theorem \ref{thm:list-learning-poly-acc}.

\begin{proof}[Proof of Theorem \ref{thm:list-learning-poly-acc}]
This follows from combining Corollary \ref{corollary:const-acc-learning}, Claim \ref{claim:exists-good-partition}, Lemma \ref{lem:find-good-clusters} and finally applying Corollary \ref{coro:full-close-case}.  Note we can choose the constant $c$ in Claim \ref{claim:exists-good-partition} sufficiently small so that when combined with Lemma \ref{lem:find-good-clusters}, the resulting accuracy that we get on each submixture is high enough that we can then apply Corollary \ref{coro:full-close-case} (we can treat the subsample corresponding to each submixture as a $O(\eps'^{\eta})$-corrupted sample from that submixture).  We apply Lemma \ref{lem:find-good-clusters} with $\eps' = \eps^{\kappa}$ where the $\kappa$ is obtained from Claim \ref{claim:exists-good-partition}.
\end{proof}

\subsection{Estimate Mixture to $\tilde{O}(\eps)$ accuracy}\label{sec:opt-acc}

Note that the results in the previous section do not require the Gaussians to be reasonably well-conditioned.  However for the next step, of going from $\eps^{\Omega(1)}$ accuracy to $\tilde{O}(\eps)$ accuracy, we will require the assumption that the Gaussians are reasonably well-conditioned.  We will restrict to the case where the Gaussians are $\chi$-balanced where $\chi$ should be thought of as a constant (or $\poly(\log 1/\eps)$).

The main theorem that we will prove in this section is as follows.

\begin{theorem}\label{thm:list-of-functions}
Let $\mcl{M} = w_1G_1 + \dots + w_kG_k$ be a $\chi$-balanced mixture of Gaussians.  Furthermore, assume that all of the mixing weights are at least $A^{-1}$ for some constant $A$.  Assume that $\eps$ is sufficiently small compared to $k,A$ and $\chi \leq \poly(\log 1/\eps)$.  Let $n = \poly_{k,A}(d/\eps)$ for some sufficiently large polynomial and let $X_1, \dots , X_n$ be an $\eps$-corrupted sample from $\mcl{M}$.  Then with $0.99$ probability, the list of candidate density estimates computed by {\sc Full Algorithm} is a list of size $(1/\eps)^{O_{k,A}(1)}$ and contains a degree $O_{k,A}(1)$ MPG distribution $f$ such that 
\[
\norm{f(x) - \mcl{M}(x)}_1 \leq (2 + \log 1/ \eps + \chi)^{O_{k,A}(1)} \eps \,.
\]
\end{theorem}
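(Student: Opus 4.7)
The plan is to trace the candidate list produced by {\sc Full Algorithm} and exhibit one candidate with $L^1$ error $\wt O(\eps) \poly(\chi)$. First, by Theorem \ref{thm:list-learning-poly-acc}, the parameter-learning step outputs a list of $(1/\eps)^{O_{k,A}(1)}$ mixtures, at least one of which is component-wise within $\eps^{g(k,A)}$ in TV for some constant $g(k,A) > 0$; I fix this good candidate. Next, by Claim \ref{claim:exists-good-partition} with $c$ chosen small enough in terms of $k, A$, there is a partition $R_1, \dots, R_l$ of $[k]$ at some scale $\kappa \in (f(c,k), 1)$ with within-cluster TV distance $\le 1 - \eps^{c\kappa}$ and across-cluster TV distance $\ge 1 - \eps^\kappa$. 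This partition is enumerated by the outer loop of {\sc Full Algorithm}, and for it, Lemma \ref{lem:find-good-clusters} bounds the cross-cluster mis-classification rate by $O_{k,A}(\eps) + \eps^{\Omega(\kappa)}$, which I will treat as effective corruption for each submixture subproblem.

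For each cluster, {\sc Learn Mixture to $\wt O(\eps)$ accuracy} applies the whitening transformation $x \mapsto \wt{\Sigma}_{i_1}^{-1/2}(x - \wt{\mu}_{i_1})$ defined by the first component estimate of that cluster. Because $\mcl{M}$ is $\chi$-balanced and within-cluster TV distances are bounded away from $1$ by $\eps^{c\kappa}$, Claim \ref{claim:balanced-dist1} shows that every transformed component has mean of norm $O(\sqrt{\chi \log(1/\eps)})$ and covariance perturbation from $I$ of operator norm $O(\chi \log^2(1/\eps))$. Together with the fact that $d_{\TV}(\ovl{G}_{i_1}, G_{i_1}) \le \eps^{g(k,A)}$ (so the transformed first component is near-standard), this places the transformed submixture in $(\alpha, \beta, \gamma)$-regular form with $\alpha, \beta = \poly(\log(1/\eps), \chi)$ and $\gamma$ arbitrarily small. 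Provided the effective corruption level is $\wt O(\eps)$, Theorem \ref{thm:estimate-regularform} applies directly and yields a degree-$O_{k,A}(1)$ MPG density $f_j$ for the transformed submixture with $L^1$ error $(\log(1/\eps) + \chi)^{O_{k,A}(1)}\eps$. Undoing the whitening gives a corresponding density for $\mcl{M}_j / w_{R_j}$, where $w_{R_j} := \sum_{i \in R_j} w_i$.

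Finally, {\sc Full Algorithm} enumerates weight guesses on a polynomially-fine grid, so for $\wt{w}_{R_j}$ within $\eps$ of $w_{R_j}$, the candidate $f := \sum_j \wt{w}_{R_j} f_j$ satisfies by the triangle inequality $\|f - \mcl{M}\|_1 \le \sum_j w_{R_j} \|f_j - \mcl{M}_j/w_{R_j}\|_1 + O(\eps) = (2 + \log(1/\eps) + \chi)^{O_{k,A}(1)}\eps$, with the total list size bounded by $(1/\eps)^{O_{k,A}(1)}$. The main technical obstacle I expect is reconciling the clustering rate with the target accuracy: since $\kappa < 1$, the $\eps^{\Omega(\kappa)}$ mis-classification term can a priori exceed $\eps$. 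To close this gap, one must either exploit the structured nature of mis-classified samples (drawn from adjacent Gaussians of $\mcl{M}$ rather than arbitrary adversarial points) so that Theorem \ref{thm:estimate-regularform} still outputs an $\wt O(\eps)$-accurate density on each cluster, or iterate the clustering at successively finer scales using progressively refined component estimates.
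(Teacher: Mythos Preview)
Your overall architecture matches the paper, but the gap you flag at the end is real and is not closed by either of the fixes you suggest. The issue is your choice of partition: by invoking Claim~\ref{claim:exists-good-partition} you inherit a scale $\kappa<1$, and then Lemma~\ref{lem:find-good-clusters} only gives misclassification $\eps^{\Omega(\kappa)}\gg\eps$, which is fatal since Theorem~\ref{thm:estimate-regularform} really does need $O_{k,A}(\eps)$-corrupted samples.

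The paper's resolution is much simpler than either of your proposals. Let $\eta$ be the universal constant appearing in the conclusion of Lemma~\ref{lem:find-good-clusters}. Define the partition $R_1,\dots,R_l$ directly as the connected components of the graph on $[k]$ with an edge between $i,j$ whenever $d_{\TV}(G_i,G_j)\le 1-\eps^{1/\eta}$. By construction, any two components in different pieces have $d_{\TV}\ge 1-\eps^{1/\eta}$, so Lemma~\ref{lem:find-good-clusters} applies with $\eps'=\eps^{1/\eta}$ and yields misclassification $O_{k,A}(\eps)+(\eps^{1/\eta})^{\eta}=O_{k,A}(\eps)$. For the regular-form verification, you do not need a uniform bound on within-cluster TV distance; it suffices that any two components in the same piece are joined by a path of at most $k$ edges each with $d_{\TV}\le 1-\eps^{1/\eta}$, and you then sum the parameter bounds from Claim~\ref{claim:balanced-dist1} along the path (this only changes the $\log(1/\eps)$ factors by a constant depending on $\eta,k$, which is harmless). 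Claim~\ref{claim:exists-good-partition} is used in the paper only for the rough $\eps^{\Omega(1)}$ step, not here.
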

\begin{proof}
Let $g(k,A)$ be the function in Theorem \ref{thm:list-learning-poly-acc}.  Using Theorem \ref{thm:list-learning-poly-acc}, with $0.999$ probability, among the list of candidate components computed in {\sc Learn Parameters to $\eps^{\Omega(1)}$ Accuracy}, there is some set $\{\ovl{G_1}, \dots , \ovl{G_k} \}$ such that for all $i \in [k]$
\[
d_{\TV}(\ovl{G_i}, G_i) \leq \eps^{g(k,A)} \,.
\]

Now consider the graph $\mcl{G}$ on $[k]$ where two nodes $i$ and $j$ are connected if and only if $d_{\TV}(G_i, G_j) \leq 1 - \eps^{1/\eta}$ where $\eta$ is the universal constant in Lemma  \ref{lem:find-good-clusters}.  Let $R_1, \dots , R_l \subset [k]$ be the connected components in this graph.  For each $j \in [l]$ define the submixture
\[
\mcl{M}_j = \frac{\sum_{i \in R_j}w_iG_i}{\sum_{i \in R_j} w_i} \,.
\]

Now by Lemma  \ref{lem:find-good-clusters}, with $0.999$ probability, if we partition $[k]$ according to $R_1, \dots , R_l$ and then assign samples by maximum likelihood using the estimates $\{ \ovl{G_1}, \dots , \ovl{G_k} \}$, the resulting the subsets of samples $\mcl{R}_1, \dots , \mcl{R}_l$ are equivalent to $O_{k,A}(1)\eps$-corrupted samples from each of the submixtures  $\mcl{M}_1, \dots , \mcl{M}_l$ (note that we are setting $\eps' = \eps^{1/\eta}$ in Lemma \ref{lem:find-good-clusters}).  It now suffices to estimate the density function of each submixture to $\wt{O}(\eps)$ accuracy.  We will argue that given this clustering, {\sc Learn Mixture to $\wt{O}(\eps)$ accuracy} indeed learns each of the submixtures to $\wt{O}(\eps)$ accuracy.

Let $\mu_1, \Sigma_1, \dots \mu_k, \Sigma_k$ be the means and covariances of the true components $G_1, \dots , G_k$ and let $\wt{\mu}_1, \wt{\Sigma}_1, \dots , \wt{\mu}_k, \wt{\Sigma}_k$ be the means and covariances of our initial estimates.  Without loss of generality assume $\mcl{R}_1 = \{1,2, \dots , t \}$.    Let $L$ denote the linear transformation $x \rightarrow \wt{\Sigma}_{1}^{-1/2}(x - \wt{\mu}_1)$.  Since $d_{\TV}(\ovl{G}_i, G_i) \leq \eps^{g(k,A)}$, the Gaussians $\ovl{G}_i$ must all be $2\chi$-balanced.  Now by Claim \ref{claim:balanced-dist2}, we conclude 
\[
\norm{L(\mu_1)} + \norm{L(\Sigma_1) - I}_2 \leq \eps^{\Omega_{k,A}(1)} \,.
\]
Also since $\mcl{R}_1 = \{1,2, \dots , t \}$ is a connected a component in $\mcl{G}$, we can apply Claim \ref{claim:balanced-dist1} and sum over all paths in the connected component to deduce that 
\[
\norm{L(\mu_i)} ,  \norm{L(\Sigma_i) - I }_2 \leq \poly(\chi, \log 1/\eps)
\]
 for all $i \leq t$.  Thus, since $\eps$ is sufficiently small in terms of $k,A$, we can ensure that the transformed mixture $L(\mcl{M}_1)$ is in $(\alpha, \beta, \gamma)$-regular form for 
 \begin{itemize}
        \item $\alpha \leq \poly(\log 1/\eps)$
        \item  $\beta \leq \poly(\log 1/\eps)$
        \item $\gamma$ sufficiently small in terms of $k,A$
\end{itemize}
Now the above implies that the application of Theorem \ref{thm:estimate-regularform} in algorithm {\sc Learn Mixture to $\wt{O}(\eps)$ accuracy} is valid (with $c = \Omega_{k,A}(1)$) and guarantees that with high probability, after applying the inverse linear transformation, we obtain a function $f_1$ such that 
\[
\norm{f_1(x) - \mcl{M}_1(x)}_1 \leq (2 + \log 1/ \eps + \chi)^{O_{k,A}(1)} \eps \,.
\] 
Similarly, we compute functions $f_2, \dots , f_l$ that approximate the density functions of $\mcl{M}_2, \dots , \mcl{M}_l$.  Finally note that 
\[
\mcl{M} = \sum_{j=1}^l \left(\sum_{i \in R_j} w_i \right) \mcl{M}_j \,.
\]
We can simply guess the weights $\left(\sum_{i \in R_j} w_i \right)$ for $j \in [l]$ of the submixtures using an $\eps$-grid.  If our guesses, say $W_1, \dots , W_l$ are all within $\eps$ of the true values then the function $f = \sum_{j = 1}^l W_j f_j(x)$ satisfies the desired conditions because
\begin{align*}
\norm{\sum_{j=1}^l W_jf_j(x) -  \mcl{M}(x)}_1 \leq  \norm{\sum_{j=1}^l W_jf_j(x) -  \sum_{j=1}^l W_j \mcl{M}_j}_1  + \norm{\sum_{j=1}^l \left( W_j - \sum_{i \in R_j} w_i \right)  \mcl{M}_j}_1 \\ \leq (2 + \log 1/\eps + \chi)^{O_{k,A}(1)} \eps \,.
\end{align*}
Overall, the number of candidates that {\sc Full Algorithm} outputs is $(1/ \eps)^{O_{k,A}(1)}$  (enumerating over all initial estimates for the components, possible clusterings, and possible guesses for the weights) and with $0.99$ probability at least one of them satisfies
\[
\norm{f(x) - \mcl{M}(x)}_1 \leq (2 + \log 1/\eps + \chi)^{O_{k,A}(1)} \eps \,,
\]
completing the proof.
\end{proof}

\subsection{Hypothesis Testing}

Theorem \ref{thm:list-of-functions} guarantees that we can learn a list of candidate density functions at least one of which is close to the density function of the true mixture.  The last step is to hypothesis test to select an element of the list which is guaranteed to be close to the true density function.  For this we rely on the following lemma from \cite{liu2020settling}.
\begin{lemma}[Restated from \cite{liu2020settling}]\label{lem:hypothesis-test}
Let $\mcl{F}$ be a family of distributions on some domain $\mcl{X}$ with explicitly computable density functions that can be efficiently sampled from.  Let $V$ be the VC dimension of $\mcl{H}_{\mcl{F},2}$ (recall Definition \ref{def:yatracos}). Let $\mcl{D}$ be an unknown distribution in $\mcl{F}$.  Let $m$ be a parameter.  Let $X_1, \dots , X_n$ be an $\eps$-corrupted sample from $\mcl{D}$ with $n \geq  \poly(m,\eps, V)$ for some sufficiently large polynomial.  Let $H_1, \dots , H_m$ be distributions in $\mcl{F}$ given to us by an adversary with the promise that
\[
\min(d_{\TV}(\mcl{D}, H_i) ) \leq \eps \,.
\]
Then there exists an algorithm that runs in time $\poly(n, \eps)$ and outputs an $i$ with $1 \leq i \leq m$ such that with $0.999$ probability 
\[
d_{TV}(\mcl{D}, H_i) \leq O(\eps) \,.
\]
\end{lemma}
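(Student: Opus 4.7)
The plan is a standard Scheffé-type (Yatracos) minimum-distance tournament adapted to corrupted samples. For each unordered pair of hypotheses $H_i, H_j$ from the list, let $A_{ij} = \{x \in \mcl{X} : H_i(x) > H_j(x)\}$ be the Scheffé set; by Definition \ref{def:yatracos} every such set belongs to $\mcl{H}_{\mcl{F},2}$, and by construction $H_i(A_{ij}) - H_j(A_{ij}) = d_{TV}(H_i, H_j)$. The tournament will pick whichever hypothesis best matches the empirical measure on all of these discriminating events.

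First I would control the empirical measure uniformly over all Scheffé sets. Applying Lemma \ref{lem:uniformconvergence} to the clean $n$-sample underlying the corruption, with probability $0.9995$ we have $|P_n^{\mathrm{clean}}(A) - \mcl{D}(A)| \le \eps$ for every $A \in \mcl{H}_{\mcl{F},2}$ simultaneously, as long as $n \ge \poly(V,1/\eps)$. Since the adversary replaces at most an $\eps$-fraction of points (Definition \ref{def:corruption}), the corrupted empirical measure $\wh{P}_n$ satisfies $|\wh{P}_n(A) - \mcl{D}(A)| \le 2\eps$ uniformly over $\mcl{H}_{\mcl{F},2}$. The exact values $H_i(A_{ij})$ are not in closed form, but because $\mcl{F}$ has explicit densities and can be efficiently sampled, we can Monte-Carlo estimate each of them to additive error $\eps$ with $\poly(m,1/\eps)$ fresh samples and a union bound over the $O(m^2)$ pairs; absorb this slack into the constant and treat $H_i(A_{ij})$ as known.

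For each hypothesis $H_i$ compute the score $\ell_i = \max_{j \ne i} |\wh{P}_n(A_{ij}) - H_i(A_{ij})|$, and output $H_{\hat\imath}$ minimizing $\ell_i$. By the promise of the lemma, some index $i^\star$ satisfies $d_{TV}(\mcl{D}, H_{i^\star}) \le \eps$, so for every $j$,
\[
|\wh{P}_n(A_{i^\star j}) - H_{i^\star}(A_{i^\star j})| \le |\mcl{D}(A_{i^\star j}) - H_{i^\star}(A_{i^\star j})| + 2\eps \le 3\eps,
\]
hence $\ell_{i^\star} \le 3\eps$, and therefore $\ell_{\hat\imath} \le 3\eps$ as well. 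Evaluating both scores on the particular Scheffé set $A_{\hat\imath i^\star}$ and using $H_{\hat\imath}(A_{\hat\imath i^\star}) - H_{i^\star}(A_{\hat\imath i^\star}) = d_{TV}(H_{\hat\imath}, H_{i^\star})$ gives
\[
d_{TV}(H_{\hat\imath}, H_{i^\star}) \le |\wh{P}_n(A_{\hat\imath i^\star}) - H_{\hat\imath}(A_{\hat\imath i^\star})| + |\wh{P}_n(A_{\hat\imath i^\star}) - H_{i^\star}(A_{\hat\imath i^\star})| \le \ell_{\hat\imath} + \ell_{i^\star} \le 6\eps.
\]
A final triangle inequality with $d_{TV}(\mcl{D}, H_{i^\star}) \le \eps$ yields $d_{TV}(\mcl{D}, H_{\hat\imath}) \le 7\eps = O(\eps)$.

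The only subtle point is juggling three separate additive $O(\eps)$ errors simultaneously: sampling noise from the corrupted empirical measure, the $\eps$-fraction of malicious points, and the Monte-Carlo error in estimating $H_i(A_{ij})$. Each is tamed by the polynomial sample size and a single union bound, so there is no main obstacle beyond executing this standard argument carefully; the total running time is $\poly(n,m,\eps)$ as claimed.
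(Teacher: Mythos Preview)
The paper does not prove this lemma; it is stated as a restatement from \cite{liu2020settling} and used as a black box. Your Scheff\'e/Yatracos tournament argument is the standard proof of such hypothesis-testing statements and is correct as written, including the handling of the $\eps$-corruption and the Monte-Carlo estimation of $H_i(A_{ij})$; there is nothing in the present paper to compare it against.
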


We can now complete the proof of the main theorem.
\begin{proof}[Proof of Theorem \ref{thm:main}]
Combining Theorem \ref{thm:list-of-functions} with Lemma \ref{lem:hypothesis-test} and Lemma \ref{lem:VCdim}, we immediately get the desired result.  Note that the density function of a constant degree MPG distribution is explicitly computable.  To see why it can be efficiently sampled from, note that all of the polynomials are always positive and the integral of $\int_{\R^d} P(x)G(x) dx$ for a polynomial $P$ and Gaussian $G$ can be computed explicitly using integration by parts.
\end{proof}

\bibliographystyle{alpha}
\bibliography{bibliography}

\appendix

\begin{center}
\Large{\textbf{Appendix}}
\end{center}
\section{Omitted Proofs from Section \ref{sec:tailbounds}}\label{appendix:tailbounds}

\begin{proof}[Proof of Claim \ref{claim:poly-gaussian-tail}]
First, we obtain high probability bounds on the sum of the largest $\eps$-fraction of the samples.  

For any $\alpha$, let $S_{\alpha}$ be the set of $x \in S$ with $|x| \geq \alpha  \log^{1/c}(1/\eps)$.  Let $C = (10/c)^{10/c} $. For $C \leq \alpha  \leq n^{0.01}$, using tail bounds on the binomial distribution, we get
\[
\Pr\left[ |S_{\alpha}| > \frac{\eps n}{\alpha^{10}} \right] \leq \binom{n}{\eps n/\alpha^{10}} \eps^{\alpha^{c} \cdot \frac{n \eps}{\alpha^{10}} } \leq \left(\frac{3\alpha^{10}}{\eps} \cdot \eps^{\alpha^{c}}\right)^{ \frac{n \eps}{\alpha^{10}}} \leq \frac{e^{-(10d/\eps)^2}}{\alpha^{10}} \,.
\]
Also, using a simple union bound, with probability at least $1 - e^{-(10d/\eps)^2}$, we have $|x| \leq n^{0.01}$ for all $x \in S$.

Consider a set of $\alpha$ forming a geometric series with ratio $2$, say $\{C, 2C, \dots ,  \}$.  Combining everything we've shown so far using a union bound, with probability at least $1 - e^{-(9d/\eps)^2}$, for all $\alpha = \{C, 2C, \dots , \}$, we have
\[
|S_{\alpha}| \leq \frac{\eps n}{\alpha^{10}} \,.
\]
This implies that with probability $1 - e^{-(9d/\eps)^2}$, for any set $T \subset S$ of size at most $\eps n$, 
\begin{equation}\label{eq:epstailbound}
\sum_{x \in T}|x| \leq \eps  C\log^{1/c}(1/\eps) n + \sum_{i=1}^{\infty} \frac{2^iC}{2^{10(i-1)}}\log^{1/c}(1/\eps) \eps n \leq 10\eps \log^{1/c}(1/\eps) C n \,.
\end{equation}

Let $\mcl{D}'$ be the distribution $\mcl{D}$ restricted to the interval $\left[ -\log^{1/c}(1/\eps), \log^{1/c}(1/\eps)\right] $. The assumption about the tail decay of $\mcl{D}$ implies that 
\begin{equation}\label{eq:meanbound}
| \mu_{\mcl{D'}} - \mu_{\mcl{D}}| \leq 10\eps \log^{1/c}(1/\eps) C \,.
\end{equation}
Let $\lambda$ be the probability mass that $\mcl{D}$ has in the interval $\left[ -\log^{1/c}(1/\eps), \log^{1/c}(1/\eps)\right] $.  Sampling from $\mcl{D}$ is equivalent to sampling using the following procedure.
\begin{itemize}
    \item First draw a Bernoulli random variable $\sigma$ that is $1$ with probability $\lambda$
    \item If $\sigma$ is $1$ then draw a sample from $\mcl{D'}$ and otherwise sample from the distribution $\mcl{D}$ restricted to outside the interval $\left[ -\log^{1/c}(1/\eps), \log^{1/c}(1/\eps)\right] $
\end{itemize}
Note that $\lambda \geq 1 - \eps$.  Assuming that the original set of samples $S$ is drawn in this manner, with probability at least $1 - e^{-(9d/\eps)^2}$, there are at least $(1-2\eps)n$ elements of $S$ that are drawn from $\mcl{D'}$.  By a Chernoff bound, their mean is within $10\eps \log^{1/c}(1/\eps) C $ of $\mu_{\mcl{D}'}$ with at least $1 - e^{-(9d/\eps)^2}$ probability.  Finally, using equations (\ref{eq:epstailbound}),  (\ref{eq:meanbound}) and the triangle inequality, we deduce that with probability at least $1 - e^{-(8d/\eps)^2}$, for all subset $S' \subset S$ of size at least $(1-\eps)n$,
\[
\left \lvert \mu_{\mcl{D}} -  \frac{1}{|S'|} \sum_{x \in S'} x   \right \rvert \leq \eps \log^{1/c} (1/\eps) \left(\frac{10^2}{c}\right)^{10/c} \,.
\]

\end{proof}

\begin{proof}[Proof of Claim \ref{claim:hermite-stability}]
Let $\kappa$ be the dimensionality of $v(H_m(X,z))$.  Note $\kappa = d^{O_{m}(1)}$.  Let $E$ be an $\eps/(10\kappa)$-net of the unit sphere in $\kappa$ dimensions.  We can ensure that 
\[
|E| \leq \left(\frac{10^2 \kappa }{\eps} \right)^\kappa  \,.
\]
For a fixed vector $v \in E$, we will compute the probability that either 
\[
\left \lvert v \cdot \left(\mu_{\mcl{D}} - \frac{1}{S'}\sum_{x \in S'} x \right)  \right \rvert    \geq  \frac{\delta}{2}
\]
or 
\[
\left \lvert v^T\left(\Sigma_{\mcl{D}} - \frac{1}{S'}\sum_{x \in S'}(x - \mu_{\mcl{D}}) (x - \mu_{\mcl{D}})^T   \right)v \right \rvert \geq  \frac{\delta^2}{2\eps} 
\]
for some subset $S'$ with $|S'| \geq (1-\eps)|S|$ and then we will union bound the failure probability over all vectors $v \in E$.

By Lemma \ref{lem:hermite-tail}, the distribution $v \cdot \mcl{D}$, scaled by a factor of 
\[
\frac{1}{(2 + \alpha + \beta)^{O_m(1)}} \,,
\]
satisfies the conditions of Claim \ref{claim:poly-gaussian-tail} with $c = \Omega_m(1)$.  Thus, by choosing $n$ sufficiently large, we can ensure that with $1 - e^{(8\kappa/\eps)^2}$ probability, we have
\[
\left \lvert v \cdot \left(\mu_{\mcl{D}} - \frac{1}{S'}\sum_{x \in S'} x \right)  \right \rvert    \leq  \frac{\delta}{2} 
\]
for all subsets $S'$ with $|S'| \geq (1-\eps)n$.  Next, 
\begin{align*}
&\left \lvert v^T\left(\Sigma_{\mcl{D}} - \frac{1}{S'}\sum_{x \in S'}(x - \mu_{\mcl{D}}) (x - \mu_{\mcl{D}})^T   \right)v \right \rvert  \\ &= \left \lvert v^T (\Sigma_{\mcl{D}} +\mu_{\mcl{D}} \mu_{\mcl{D}}^T )v - \frac{1}{S'}\sum_{x \in S'}(v \cdot x)^2 + \frac{2}{S'} v^T\left( \sum_{x \in S'}(x - \mu_{\mcl{D}})\right) \mu_{\mcl{D}}^T v  \right \rvert \\ & \leq \left \lvert v^T (\Sigma_{\mcl{D}} +\mu_{\mcl{D}} \mu_{\mcl{D}}^T )v - \frac{1}{S'}\sum_{x \in S'}(v \cdot x)^2 \right \rvert + 2\norm{\mu_{\mcl{D}}} \left \lvert v \cdot \left(\mu_{\mcl{D}} - \frac{1}{S'}\sum_{x \in S'} x \right)  \right \rvert  \,.
\end{align*}
Lemma \ref{lem:hermite-tail} implies that the variable $(v \cdot x)^2$ for $x \sim \mcl{D}$, scaled by a factor of 
\[
\frac{1}{(2 + \alpha + \beta)^{O_m(1)}} \,,
\]
also satisfies the conditions of Claim \ref{claim:poly-gaussian-tail} with $c = \Omega_m(1)$.  Also note that 
\[
\E_{x \sim \mcl{D}} (v \cdot x)^2 = v^T (\Sigma_{\mcl{D}} +\mu_{\mcl{D}} \mu_{\mcl{D}}^T )v \,.
\]
Thus, we can ensure that with $1 - e^{(8\kappa/\eps)^2}$ probability, we have
\[
\left \lvert v^T (\Sigma_{\mcl{D}} +\mu_{\mcl{D}} \mu_{\mcl{D}}^T )v - \frac{1}{S'}\sum_{x \in S'}(v \cdot x)^2  \right \rvert    \leq  \frac{\delta^2}{10\eps} \,,
\]
for all subsets $S'$ with $|S'| \geq (1-\eps)n$.  Lemma \ref{lem:hermite-tail} also implies that 
\[
\norm{\mu_{\mcl{D}}} \leq (2 + \alpha + \beta)^{O_m(1)}
\]
so overall, we conclude that with $1 - e^{(7\kappa/\eps)^2}$ probability, for each fixed vector $v$, we have both 
\[
\left \lvert v \cdot \left(\mu_{\mcl{D}} - \frac{1}{S'}\sum_{x \in S'} x \right)  \right \rvert    \leq  \frac{\delta}{2}
\]

\[
\left \lvert v^T\left(\Sigma_{\mcl{D}} - \frac{1}{S'}\sum_{x \in S'}(x - \mu_{\mcl{D}}) (x - \mu_{\mcl{D}})^T   \right)v \right \rvert \leq  \frac{\delta^2}{2\eps} 
\]
for all subsets $S'$ with $|S'| \geq (1-\eps)n$.  A union bound gives that with $1 - e^{(6\kappa/\eps)^2}$ probability, the above holds for all vectors $v \in E$.  Now let $b$ be the vector
\[
b = \left(\mu_{\mcl{D}} - \frac{1}{S'}\sum_{x \in S'} x \right)
\]
and let $Q$ be the matrix
\[
Q = \left(\Sigma_{\mcl{D}} - \frac{1}{S'}\sum_{x \in S'}(x - \mu_{\mcl{D}}) (x - \mu_{\mcl{D}})^T   \right) \,.
\]
We want to bound the $L^2$ norm of $b$ and the operator norm of $Q$.  First, let $v$ be the element of $E$ that is closest to the unit vector in the direction of $b$.  Since $E$ is a $\eps/(10\kappa)$-net, we must have $v \cdot b > 0.5\norm{b}_2$ which implies $\norm{b}_2 \leq \delta$.  Next let $u$ be a unit vector such that $u^TQu = \norm{Q}_{\textsf{op}}$ (such a $u$ exists since $Q$ is symmetric).  Let $v$ be the element of $E$ closest to $u$.  Then
\[
v^TQv = u^TQu + (v-u)^TQ u + v^TQ(v-u) \geq \norm{Q}_{\textsf{op}} - |(v-u)^TQ u| - |v^TQ(v-u)| \geq \norm{Q}_{\textsf{op}}(1 - 2\norm{v-u}_2) > \frac{\norm{Q}_{\textsf{op}}}{2}
\]
and thus we must actually have $\norm{Q}_{\textsf{op}}  \leq \delta^2/\eps$.  This completes the proof.
\end{proof}


\end{document}